\typeout{KR2023 Instructions for Authors}

% These are the instructions for authors for KR-23.

\documentclass{article}
\pdfpagewidth=8.5in
\pdfpageheight=11in

\usepackage{kr}

% Use the postscript times font!
\usepackage{times}
\usepackage{soul}
\usepackage{url}
\usepackage[hidelinks]{hyperref}
\usepackage[utf8]{inputenc}
\usepackage[small]{caption}
\usepackage{graphicx}
\usepackage{amsmath}
\usepackage{amsthm}
\usepackage{booktabs}
\usepackage{algorithm}
\usepackage{algorithmic}
\urlstyle{same}

% the following package is optional:
%\usepackage{latexsym}

% See https://www.overleaf.com/learn/latex/theorems_and_proofs
% for a nice explanation of how to define new theorems, but keep
% in mind that the amsthm package is already included in this
% template and that you must *not* alter the styling.
\newtheorem{example}{Example}
\newtheorem{theorem}{Theorem}
\newtheorem{proposition}{Proposition}
\newtheorem{lemma}{Lemma}
\newtheorem{claim}{Claim}

\newtheorem{definition}{Definition}
\newtheorem{remark}{Remark}

% Following comment is from ijcai97-submit.tex:
% The preparation of these files was supported by Schlumberger Palo Alto
% Research, AT\&T Bell Laboratories, and Morgan Kaufmann Publishers.
% Shirley Jowell, of Morgan Kaufmann Publishers, and Peter F.
% Patel-Schneider, of AT\&T Bell Laboratories collaborated on their
% preparation.

% These instructions can be modified and used in other conferences as long
% as credit to the authors and supporting agencies is retained, this notice
% is not changed, and further modification or reuse is not restricted.
% Neither Shirley Jowell nor Peter F. Patel-Schneider can be listed as
% contacts for providing assistance without their prior permission.

% To use for other conferences, change references to files and the
% conference appropriate and use other authors, contacts, publishers, and
% organizations.
% Also change the deadline and address for returning papers and the length and
% page charge instructions.
% Put where the files are available in the appropriate places.
%PDF Info Is REQUIRED.
\pdfinfo{
	/TemplateVersion (KR.2022.0, KR.2023.0)
}

% ADD NON-TEMPLATE PACKAGES AND COMMANDS HERE
\usepackage{thmtools}
\usepackage{thm-restate}
\usepackage{amsfonts}
\usepackage{xspace}
\usepackage{pifont}
\usepackage{color}
\usepackage{amsmath, upgreek, stmaryrd}
\usepackage{amssymb}
\usepackage{misc}   %custom for logic
\usepackage{xargs}
\usepackage[pdftex,dvipsnames]{xcolor}
\usepackage{graphicx}
\usepackage{mathtools}
\usepackage[capitalise]{cleveref}
\usepackage{enumitem}

% Following line makes todonotes works properly. I hope it doesn't play with the *actual* margins.
%\setlength{\marginparwidth}{1.4cm}
%\usepackage[colorinlistoftodos,prependcaption,textsize=tiny,textwidth=1.6cm]{todonotes}
%%\usepackage[colorinlistoftodos,prependcaption,textsize=tiny]{todonotes}
%\newcommandx{\unsure}[2][1=]{\todo[linecolor=red,backgroundcolor=red!25,bordercolor=red,#1]{#2}}
%\newcommandx{\change}[2][1=]{\todo[linecolor=blue,backgroundcolor=blue!25,bordercolor=blue,#1]{#2}}
%\newcommandx{\info}[2][1=]{\todo[linecolor=OliveGreen,backgroundcolor=OliveGreen!25,bordercolor=OliveGreen,#1]{#2}}
%\newcommandx{\improvement}[2][1=]{\todo[linecolor=Plum,backgroundcolor=Plum!25,bordercolor=Plum,#1]{#2}}

%\usepackage{apxproof}
%\renewcommand{\appendixbibliographystyle}{kr.bst}
%\renewcommand{\appendixprelim}{\clearpage\twocolumn}
% \newtheoremrep{example}{Example}
% \newtheoremrep{theorem}{Theorem}
% \newtheoremrep{lemma}{Lemma}
% \newtheoremrep{claim}{Claim}
% \newtheoremrep{corollary}{Corollary}
% \newtheoremrep{definition}{Definition}
% \newtheoremrep{remark}{Remark}
% \newtheoremrep{observation}{Observation}

\usepackage{compat-quentin}

\newlist{properties}{enumerate}{1}
\setlist[properties,1]{label=(\alph*)}

\creflabelformat{propertiesi}{#2#1#3}

\crefname{propertiesi}{Property}{Properties}

\creflabelformat{observationi}{#2#1#3}

\crefname{observationi}{Observation}{Observations}

% Title

\title{Querying Circumscribed Description Logic Knowledge Bases}

% Multiple author syntax
\author{%
Carsten Lutz$^1$\and
Quentin Mani\`ere$^{1, 2}$\and
Robin Nolte$^3$\\
\affiliations
$^1$Department of Computer Science, Leipzig University, Germany \\
$^2$Center for Scalable Data Analytics and Artificial Intelligence (ScaDS.AI), Dresden/Leipzig, Germany \\
$^3$University of Bremen, Digital Media Lab, Germany \\
\emails
\{carsten.lutz,
quentin.maniere\}@uni-leipzig.de,
nolte@uni-bremen.de
}

\begin{document}

\maketitle

%\todo[inline]{Quentin: Check your affiliations and mails}

\begin{abstract}
  Circumscription is one of the main approaches for defining
  non-monotonic description logics (DLs) and the decidability and
  complexity of traditional reasoning tasks, such as satisfiability of
  circumscribed DL knowledge bases (KBs), are well understood. For
  evaluating conjunctive queries (CQs) and unions thereof (UCQs), in
  contrast, not even decidability has been established.  In this
  paper, we prove decidability of (U)CQ evaluation on circumscribed DL
  KBs and obtain a rather complete picture of both the combined
  complexity and the data complexity for DLs ranging from \ALCHIO via
  \EL to various versions of DL-Lite. We also study the much simpler
  atomic queries (AQs).
\end{abstract}

\section{Introduction}
\label{sec-introduction}
%!TEX root = main.tex

While standard description logics (DLs), such as those underlying the OWL 2 ontology language, do not support non-monotonic reasoning, it is generally acknowledged that extending DLs with non-monotonic features is very useful. Concrete examples of
applications include ontological modeling in the biomedical domain \cite{DBLP:conf/psb/Rector04,DBLP:journals/ijmms/StevensAWSDHR07} and the formulation of access control policies \cite{DBLP:conf/dagstuhl/BonattiS03}. Circumscription is one of the traditional AI approaches to non-monotonicity, and it provides an important way to define non-monotonic DLs. In contrast to other approaches, such as default rules, it does not require the adoption of strong syntactic restrictions to preserve decidability. DLs with circumscription are closely related to several other approaches to non-monotonic DLs, in particular to DLs with defeasible inclusions and typicality operators \cite{DBLP:journals/jair/BonattiFS11,DBLP:journals/jair/CasiniS13,DBLP:journals/ai/GiordanoGOP13,DBLP:journals/ai/BonattiFPS15,DBLP:journals/ijar/PenselT18}.

The main feature of circumscription is that selected predicate symbols can be minimized, that is, the extension of these predicates must be minimal regarding set inclusion. Other predicates may vary freely or be declared fixed.  In addition, a preference order can be declared on the minimized predicates.  In DLs, minimizing or fixing role names causes undecidability of reasoning, and consequently, only concept names may be minimized or fixed \cite{Bonatti2009}.  The traditional AI use of circumscription is to introduce and minimize abnormality predicates such as \mn{AbnormalBird}, which makes it possible to formulate defeasible implications such as `birds fly, unless they are abnormal, which shouldn't be assumed unless there is a reason to do so.' Circumscription is also closely related to the closure of predicates symbols as studied, for instance, in \cite{DBLP:conf/ijcai/LutzSW13,Ngo2016,DBLP:journals/lmcs/LutzSW19}; in fact, this observation goes back to \cite{DBLP:conf/adbt/Reiter77,DBLP:journals/ai/Lifschitz85}. While DLs usually assume open-world semantics and represent incomplete knowledge, such closed predicates are interpreted under a closed-world assumption, reflecting that complete knowledge is available regarding those predicates. Circumscription may then be viewed as a soft form of closing concept names: there are no other instances of a minimized concept name except the explicitly asserted ones unless we are forced to introduce (a minimal set of) additional instances to avoid inconsistency.

A primary application of DLs is ontology-mediated querying, where an ontology is used to enrich data with domain knowledge. Surprisingly, little is known about ontology-mediated querying with DLs that support circumscription. For the important conjunctive queries (CQs) and unions of CQs (UCQs), in fact, not even decidability has been established.  This paper aims to close this gap by studying the decidability and precise complexity of ontology-mediated querying for DLs with circumscription, both w.r.t.\ combined complexity and data complexity.  We consider the expressive DL $\ALCHIO$, the tractable (without circumscription) DL \EL, and several members
of tge DL-Lite family. %,  tailored specifically towards ontology-mediated querying.
These may be viewed as logical cores of the profiles OWL 2 DL, OWL 2~EL, and OWL~2 QL of the OWL~2 ontology language \cite{profiles}.

One of our main results is that UCQ evaluation is decidable in all these DLs when circumscription is added.  It is \TwoExpTime-complete in \ALCHIO w.r.t.\ combined complexity, and thus not harder than query evaluation without circumscription. W.r.t.\ data complexity, however, there is a significant increase from 
%\coNP-complete to $\Uppi^\textrm{p}_2$-complete.
\coNP- to $\Uppi^\textrm{P}_2$-completeness.
For \EL, the combined and data complexity turns out to be identical to that of \ALCHIO. This improves lower bounds from \cite{DBLP:journals/jair/BonattiFS11}. 
All these lower bounds already hold for CQs. Remarkably, the $\Uppi^\textrm{P}_2$ lower bound for data complexity already holds when there is only a single minimized concept name (and thus no preference relation) and without fixed predicates.
The complexities for DL-Lite are lower, though still high.  A summary can be found in Table~\ref{tab-results-UCQ}.  Evaluation is `only' \coNP-complete w.r.t.\ data complexity for all considered versions of DL-Lite, except when role disjointness constraints are added (this case is not in the table). The combined complexity remains at \TwoExpTime when role inclusions are present and drops to \coNExpTime without them. The lower bounds already apply to very basic versions of DL-Lite that are positive
in the sense that they do not provide concept disjointness constraints, and the upper bounds apply to expressive versions that include all Boolean operators.
%A summary can be found in Table~\ref{tab-results-UCQ}. 

We also study the evaluation of the basic yet important atomic queries (AQs), conjunctive queries of the form $A(x)$ with $A$ a concept name. Also here, we obtain a rather complete picture of the complexity landscape. It is known from \cite{Bonatti2009} that AQ evaluation in \ALCHIO is $\coNExp^{\NPclass}$-complete w.r.t.\ combined complexity. 
We show that the lower bound holds already for \EL. Moreover, our $\Uppi^\textrm{P}_2$-lower bound for the data complexity of (U)CQ-evaluation in \EL mentioned above only requires an AQ, and thus AQ evaluation in both \ALCHIO and \EL are $\Uppi^\textrm{P}_2$-complete w.r.t.\ data complexity. For DL-Lite, the data complexity drops to \PTime in all considered versions, and the combined complexity ranges from \coNExpTime-complete to $\Uppi^\textrm{P}_2$-complete, depending on which Boolean operators are admitted. A summary can be found in Table~\ref{tab-results-instance}.

Proof details are in the appendix.

%!TEX root = main.tex

\newcommand{\stylingref}[1]{\textsuperscript{#1}}
\newcommand{\complete}{-c.}
\begin{table*}
	\centering
	\begin{tabular}{ccccc}
		& $\EL$, $\ALCHIO$ 
		& $\dllitecoreh$, $\dlliteboolh$ 
		& $\dllitebool$ 
		& $\dllitecore$, $\dllitehorn$
		\\ \midrule
		\it Combined complexity
		& \TwoExp\complete 
		\stylingref{(Thm.\;\ref{thm-combined-upper-alchi}, \ref{thm:combined-lower-el})}
		& \TwoExp\complete $^{(\dagger)}$ 
		\stylingref{(Thm.\;\ref{thm-combined-upper-alchi}, \ref{thm:combined-lower-dlliter})}
		& \coNExp\complete 
		\stylingref{(Thm.\;\ref{thm-combined-upper-dllitebool}, \ref{thm-combined-lower-bool-instance})}
		& \coNExp\complete $^{(\dagger)}$ 
		\stylingref{(Thm.\;\ref{thm-combined-upper-dllitebool}, \ref{thm-combined-lower-dllitepos})}
		\smallskip\\
		\it Data complexity
		& $\Uppi^\textrm{P}_2$\complete 
		\stylingref{(Thm.\;\ref{thm-data-upper-alchi}, \ref{thm-data-lower-el})} 
		& \coNP\complete 
		\stylingref{(Thm.\;\ref{thm-data-upper-hornh}, \ref{thm-data-lower-dllitepos})}
		& \coNP\complete 
		\stylingref{(Thm.\;\ref{thm-data-upper-hornh}, \ref{thm-data-lower-dllitepos})}
		& \coNP\complete 
		\stylingref{(Thm.\;\ref{thm-data-upper-hornh}, \ref{thm-data-lower-dllitepos})} 
		\\ \bottomrule
	\end{tabular}
	\caption{Complexity of (U)CQ evaluation on
		circumscribed KBs. $\quad$ $\cdot^{(\dagger)}$ indicates that
		UCQs are needed for lower bound.}
	\label{tab-results-UCQ}
\end{table*}

\medskip
\noindent
{\bf Related Work.} \ A foundational paper on description logics with circumscription is \cite{Bonatti2009}, which studies concept satisfiability and knowledge base consistency in the \ALCHIO family of DLs; these problems are interreducible with AQ evaluation in polynomial time (up to complementation). The same problems have been considered in \cite{DBLP:journals/jair/BonattiFS11} for \EL and DL-Lite and in \cite{DBLP:conf/birthday/BonattiFLSW14} for DLs without the finite model property, including a version of DL-Lite.  The recent \cite{Bonatti2021} is the only work we are aware of that considers ontology-mediated querying in the context of circumscription. It provides lower bounds for \EL and DL-Lite, which are both improved in the current paper, but no decidability results / upper bounds.  A relaxed version of circumscription that enjoys lower complexity has recently been studied in \cite{DBLP:conf/dlog/Stefano0S22}. We have already mentioned connections
to DLs with defeasible inclusions and typicality operators,
see above for references. A connection between circumscription and the
complexity class $\Uppi^\textrm{P}_2$ was first observed in \cite{DBLP:journals/tcs/EiterG93}, and this complexity shows up in our data
complexity results. Our proofs, however, are quite different.

\section{Preliminaries}
\label{sec-preliminaries}
%!TEX root = main.tex

Let \NC, \NR, and \NI be countably infinite sets of \emph{concept names}, \emph{role names}, and \emph{individual names}.
An \emph{inverse role} takes the form $r^-$ with $r$ a role name, and a \emph{role} is a role name or an inverse role. If $r=s^-$ is an inverse role, then $r^-$ denotes $s$.
An \emph{\ALCIO concept} $C$ is built according to the rule
$
C,D ::= \top \mid A \mid \{ a \} \mid \neg C \mid C \sqcap D \mid \exists r . D
$
where $A$ ranges over concept names, $a$ over individual names, and $r$ over roles. 
A concept of the form $\{a\}$ is called a \emph{nominal}. 
%As usual, 
We write $\bot$ as abbreviation for $\neg \top$, $C \sqcup D$ for $\neg(\neg C \sqcap \neg D)$, and $\forall r . C$ for $\neg \exists r. \neg C$. 
An \emph{\ALCI concept} is a nominal-free \ALCIO concept.
%that does not use nominals. 
An \emph{\EL concept} is an \ALCI concept that uses neither negation nor inverse roles.

An \emph{\ALCHIO TBox} is a finite set of \emph{concept inclusions
  (CIs)} $C \sqsubseteq D$, where $C,D$ are \ALCIO concepts, and
\emph{role inclusions (RIs)} $r \sqsubseteq s$, where $r,s$ are
roles. In an \emph{\EL TBox}, only \EL concepts may be used in CIs,
and RIs are disallowed. An \emph{ABox} is a finite set of
\emph{concept assertions} $A(a)$ and \emph{role assertions} $r(a,b)$
where $A$ is a concept name, $r$ a role name, and $a,b$ are individual
names. We use $\Ind(\Amc)$ to denote the set of individual names used
in \Amc. An \emph{\ALCHIO knowledge base (KB)} takes the form
$\Kmc=(\Tmc,\Amc)$ with \Tmc an \ALCHIO TBox and \Amc an ABox.
 {\ALCHI TBoxes and KBs} are defined analogously but may not use nominals.

The semantics is defined as usual in terms of interpretations
$\Imc=(\Delta^\Imc,\cdot^\Imc)$ with $\Delta^\Imc$ the (non-empty)
\emph{domain} and $\cdot^\Imc$ the \emph{interpretation function}, we
refer to \cite{Baader2017} for full details.  An interpretation
satisfies a CI $C \sqsubseteq D$ if $C^\Imc \subseteq D^\Imc$ and
likewise for RIs. It satisfies an assertion $A(a)$ if $a \in A^\Imc$
and $r(a,b)$ if $(a,b) \in r^\Imc$; we thus make the \emph{standard
  names assumption}.  An interpretation \Imc is a \emph{model} of a
TBox \Tmc, written $\Imc \models \Tmc$, if it satisfies all inclusions
in it. Models of ABoxes and KBs are defined likewise. For an
interpretation \Imc and $\Delta \subseteq \Delta^\Imc$, we use
$\Imc|_\Delta$ to denote the restriction of \Imc to subdomain $\Delta$.

A \emph{signature} is a set of concept and role names referred to as
\emph{symbols}. For any syntactic object $O$ such as a TBox or an
ABox, we use $\mn{sig}(O)$ to denote the symbols used in $O$ and
$|O|$ to denote the \emph{size} of $O$, meaning the encoding of $O$
as a word over a suitable alphabet.

We next introduce several more restricted DLs of the DL-Lite family. A
\emph{basic concept} is of the form $A$ or $\exists r . \top$.  A
\dllitecoreh TBox is a finite set of concept inclusions
$C \sqsubseteq D$, \emph{(concept) disjointness assertions}
$C \sqsubseteq \neg D$, and role inclusions $r\sqsubseteq s$  where $C,D$ are basic
concepts and  $r,s$
roles. % and \emph{role
%   disjointness assertions} $r \sqcap s \sqsubseteq \bot$ where in both
%   cases $r,s$ are roles.
We drop superscript $\cdot^\Hmc$ if no role inclusions are admitted,
%add subscript $\cdot_{\mn{pos}}$ if no disjointness assertions are
%admitted, 
	use subscript $\cdot_{\mn{horn}}$ to indicate that the concepts
	$C, D$ in concept inclusions may be conjunctions of basic concepts,
	and subscript $\cdot_{\mn{bool}}$ to indicate that $C,D$ may be
	Boolean combinations of basic concepts, that is, built from basic
	concepts using $\neg$, $\sqcap$, $\sqcup$.
%add subscript $\cdot_{\mn{horn}}$ to indicate that the concepts
%$C, D$ in concept inclusions may be conjunctions of basic concepts,
%and subscript $\cdot_{\mn{bool}}$ to indicate that $C,D$ may be
%Boolean combinations of basic concepts, that is, built from basic
%concepts using $\neg$, $\sqcap$, $\sqcup$.
%
% Variations are \dllitermin where concept and
% role disjointness constraints are
% dropped, and \dllitecore where positive and negative role inclusions
% are forbidden. {\color{blue}Let's see whether we really want to
%   formally introduce bool.}

% For an interpretation \Imc and a set $\Delta' \subseteq \Delta^\Imc$,
% we denote with $\Imc|_{\Delta'}$ the restriction of \Imc to $\Delta'$.

%We next introduce circumscription.
A \emph{circumscription pattern}
is a tuple $\CP = (\prec , \Msf, \Fsf, \Vsf)$, where $\prec$ is a
strict partial order on $\Msf$ called the \emph{preference relation},
and $\Msf$, $\Fsf$ and $\Vsf$ are a partition of $\NC$. 
The elements
of $\Msf$, $\Fsf$ and $\Vsf$ are the \emph{minimized}, \emph{fixed}
and \emph{varying} concept names. 
Role names always vary to avoid
undecidability \cite{Bonatti2009}.
%
%By $\preceq$, we denote the reflexive closure of $\prec$.
%
The preference relation $\prec$ on $\Msf$ induces a preference relation
$<_\CP$ on interpretations by setting $\Jmc <_\CP \Imc$ if the
following conditions hold:
\begin{enumerate}
	\item
	$\Delta^\Jmc = \Delta^\Imc$,
	\item
	for all $A \in \Fsf$, $A^\Jmc = A^\Imc$,
	\item
	for all $A \in \Msf$ with $A^\Jmc \not \subseteq A^\Imc$, there
        is a $B \in \Msf$,
	$B \prec A$, such that $B^\Jmc \subsetneq B^\Imc$,
	\item
	there exists an $A \in \Msf$ such that $A^\Jmc \subsetneq A^\Imc$
        and for all $B \in \Msf$, $B \prec A$ implies $B^\Jmc = B^\Imc$.
\end{enumerate}
A \emph{circumscribed KB (cKB)} takes the form $\Circ(\Kmc)$ where
\Kmc is a KB and \CP a circumscription pattern.
A model \Imc of \Kmc is a \emph{model} of $\Circ(\Kmc)$, denoted $\Imc
\models \Circ(\Kmc)$, if no $\Jmc <_\CP \Imc$ is a model of \Kmc.
A cKB $\Circ(\Kmc)$ is \emph{satisfiable} if
it has a model.
%
% We require w.l.o.g.\ that all concept names from $\CP$ occur in
% $\Tmc$.\todo{Elaborate; give lemma?}

A \emph{conjunctive query (CQ)} takes the form
$q(\bar x) = \exists \bar y \, \varphi(\bar x, \bar y)$
where $\bar x$ and $\bar y$ are tuples of variables and $\varphi$
% (\bar x, \bar y)$
is a
conjunction of \emph{atoms} $A(z)$ and $r(z,z')$, with
\mbox{$A \in \NC$}, $r \in \NR$, and $z,z'$ variables from
$\bar x \cup \bar y$.
%\todo{introduce $\Imc \models \Kmc$}
%
%require that all variables in $\bar x$ are used in $\varphi$,
The variables in $\bar x$ are the \emph{answer variables},
%,
and $\mn{var}(q)$ denotes $\bar x \cup \bar y$.
We take the liberty to view
$q$ as a set of atoms, writing, e.g., $\alpha \in
q$ to indicate that $\alpha$ is an atom in
$q$. We may also write $r^-(x,y) \in q$ in place of
\mbox{$r(y,x) \in q$}.
A CQ $q$ gives rise to an interpretation
$\Imc_q$ with $\Delta^{\Imc_q}=\mn{var}(q)$, $A^{\Imc_q} = \{ x \mid
A(x) \in q \}$, and $r^{\Imc_q}=\{(x,y) \mid r(x,y) \in
q\}$ for all $A \in \NC$ and $r \in \NR$.
A {\em union of conjunctive queries (UCQ)} $q(\bar
x)$ is a disjunction of CQs that all have the same answer variables
$\bar x$. The \emph{arity} of $q$ is the length of $\bar x$, and $q$ is \emph{Boolean} if it is of arity zero. An \emph{atomic query (AQ)}
is a CQ of the simple form $A(x)$, with $A$ a concept name.

%
%often called the canonical database for $q$,
% obtained by viewing the variables in $q$ as individual names and the
% atoms as assertions.
% For $V \subseteq \mn{var}(q)$,
% we use $q|_V$ to denote the restriction of
% $q$ to the atoms that use only  variables in~$V$.
%
% We can see a query as an (undirected) multigraph $G_q =(V_q,E_q,\epsilon_q)$
% with vertices $V_q = \mn{var}(q)$,
% edges $E_q = \{ rvv' \mid r(v,v') \in q \text{ for some } r \in \NR\}$
% and ends $\epsilon_q(rvv')= \{v,v'\}$.
% A query $q$ is \emph{forest-shaped} (\emph{tree-shaped}) if $G_q$ is acyclic (connected and acyclic).

% A \emph{homomorphism} from interpretation $\Imc_1$ to interpretation
% $\Imc_2$ is a function $h : \Delta^{\Imc_1} \to \Delta^{\Imc_2}$ such
% that $d \in A^{\Imc_1}$ implies $h(d) \in A^{\Imc_2}$ and
% $(d,e) \in r^{\Imc_1}$ implies $(h(d), h(e)) \in r^{\Imc_2}$ for all
% $d,e \in \Delta^{\Imc_1}$, $A \in \NC$, and $r \in \NR$.
%
With a homomorphism from a CQ $q$ to an
interpretation \Imc, we mean a homomorphism from $\Imc_q$ to \Imc
(defined as usual).
A tuple $\bar d \in (\Delta^\Imc)^{|\bar x|}$
% (with $|\bar x|$ the length of tuple $\bar x$)
is an \emph{answer} to a UCQ $q(\bar x)$ on an interpretation
\Imc, written $\Imc \models q(\bar d)$,
if there is a homomorphism $h$ from a CQ $p$ in $q$ to \Imc with
$h(\bar x)=\bar d$. % We use $q(\Imc)$ to denote the set of all answers
% to $q$ on \Imc.
A tuple $\bar a \in \mn{ind}(\Amc)$ is an \emph{answer} to $q$
on a cKB $\Circ(\Kmc)$ with $\Kmc=(\Tmc,\Amc)$, written
$\Circ(\Kmc) \models q(\bar a)$, if $\Imc \models q(\bar a)$ for all models
\Imc of $\Circ(\Kmc)$.
\begin{example}
  Consider a database about universities. The TBox contains domain
  knowledge such as
  $$
  \begin{array}{rcl}
    \mn{University} &\sqsubseteq& \mn{Organization} \\[1mm]
    \mn{Organization} &\sqsubseteq& \mn{Public} \sqcup \mn{Private}
    \\[1mm]
    \mn{Public} \sqcap \mn{Private} &\sqsubseteq&\bot.
  \end{array}
  $$
  Circumscription can be used to express defeasible inclusions.
  For example, from a European
  perspective, universities are usually public:
  $$
  \begin{array}{rcl}
     \mn{University} & \sqsubseteq & \mn{Public} \sqcup \mn{Ab}_U 
  \end{array}
  $$
  where $\mn{Ab}_U$ is a fresh `abnormality' concept name that is
  minimized. If the ABox contains
  $$
  \mn{University}(\mn{leipzigu}), \mn{University}(\mn{mit}),
  \mn{Private}(\mn{mit})
  $$
  and we pose the CQ $q(x)=\mn{Organization}(x) \wedge
  \mn{Public}(x)$, then the answer is \mn{leipzigu}.
  
  We may also use circumscription to implement a soft closed-world assumption, similar in spirit to soft constraints in constraint satisfaction. Assume that the ABox additionally contains a database of nonprofit corporations:
  $$
  \begin{array}{rl}
    \mn{NPC}(\mn{greenpeace}) & \mn{NPC}(\mn{wwf}) 
  \end{array}
  $$
  and that this database is \emph{essentially
    complete}, expressed by minimizing \mn{NPC}. If we
  also know that
  $$
  \begin{array}{@{}c@{}}
  \begin{array}{rcl}
     \mn{IvyLeagueU} & \sqsubseteq& \exists \mn{ownedBy} . (\mn{NPC}
                                      \sqcap \mn{Rich}) \\[1mm]
    \mn{DonationBased} &\sqsubseteq& \neg \mn{Rich}
  \end{array} ~\\[3mm]
  \begin{array}{rcl}
    \mn{DonationBased}(\mn{greenpeace}) & \mn{DonationBased}(\mn{wwf}) \\[1mm]
    \mn{IvyLeagueU}(\mn{harvard})
  \end{array}
  \end{array}
  $$
  then we are forced to infer that our list of NPCs was not actually complete as all explicitly known NPCs are not rich, but a rich NPC must exist. A strict closed-world assumption would instead result in an inconsistency. % Note that while auxiliary concept name are
  % minimized for defeasible inclusions, soft closure requires
  % minimizing `primary' predicates.
\end{example}
Let $\Lmc$ be a description logic such as \ALCHIO or \EL and let \Qmc
be a query language such as UCQ, CQ, or AQ.  With \emph{\Qmc
  evaluation on circumscribed \Lmc KBs}, we mean the problem to
decide, given an \Lmc cKB $\Circ(\Kmc)$ with $\Kmc = (\Tmc,\Amc)$, a
query $q(\bar x)$ from \Qmc, and a tuple
$\bar a \in \Ind(\Amc)^{|\bar x|}$, whether
$\Circ(\Kmc) \models q(\bar a)$. When studying the combined complexity
of this problem, all of $\Circ(\Kmc)$, $q$, and $\bar a$ are treated
as inputs. For data complexity, in contrast, we assume $q$, \Tmc, and
\CP to be fixed and thus of constant size.

%
%We associate $q$ with
% the undirected graph $G_q$, defined as $G_{\Imc_q}$.
%
% A \emph{contraction} of a CQ $p$ is a CQ that can be obtained from $q$
% by identifying variables. The identification of two answer variables
% is not permitted and the identification of an answer variable $x$ with a
% quantified variable $y$ results in $x$.

% An \emph{\ELI-query (ELIQ)} is a unary CQ $q(x)$ such that $\Dmc_q$ is
% a connected tree and a \emph{Boolean \ELI-query (BELIQ)} is a Boolean
% CQ $q()$ such that $\Dmc_q$ is a connected tree. One can alternatively
% define ELIQs as being of the form $C(x)$ with $C$ an \ELI-concept, and
% BELIQs as being of the form $\exists u . C$ with $C$ an
% $\ELI$-concept and we may thus use ELIQs as \ELI-concepts and BELIQs as
% $\ELI^u$-concepts, and vice versa.  For uniformity, we use
% \emph{bELIQ} to refer to a CQ that is either an ELIQ or a BELIQ.  An
% \emph{atomic query (AQ)} is an ELIQ of the form $A(x)$, $A$ a concept
% name.

\section{Between $\ALCHIO$ and \EL}
\label{section-alchi}

We study the complexity of query evaluation on circumscribed KBs for
DLs between \ALCHIO and \EL. In fact, we prove \TwoExpTime-completeness
in combined complexity and $\Uppi^\mathrm{P}_2$-completeness in data complexity
for all these DLs. 

% We fix an $\ALCHI$ KB $\kb = (\tbox, \abox)$, a circumscription pattern $\CP$ and a UCQ $\Query$.
% Section~\ref{subsection-alchi-interlacing} shows that deciding UCQ entailment over $\circkb$ can be performed by restricting to a family of almost forest-shaped models of $\circkb$.
% This is achieved by transforming any model $\I$ into its ``interlacing'', a model with the desired shape that embeds in the original model $\I$. 
% In particular, the latter property ensures that if $\I$ is a countermodel for $\Query$, then so is its interlacing.
% To ensure the interlacing of $\I$ still complies with $\CP$, we exhibit a sufficient condition to guarantee modelhood, based on similarity with initial model $\I$, that proves useful beyond the case of interlacings.
% Those well-behaved models further underlie the completeness of a tiling-based decision procedure in Section~\ref{subsection-alchi-combined}, providing an optimal algorithm w.r.t.\ combined complexity, and the construction of models with polynomial size w.r.t.\ data complexity, in Section~\ref{subsection-alchi-data}.

\subsection{Fundamental Observations}
\label{subsect:fundamental}
\label{subsection-alchi-interlacing}
%!TEX root = main.tex

We start with some fundamental observations that underlie the subsequent
proofs, first observing a 
reduction from UCQ evaluation on circumscribed \ALCHIO KBs to UCQ
evaluation on circumscribed \ALCHI KBs. Note that a nominal may be
viewed as a (strictly) closed concept name with a single
instance. This reduction is a simple version of a reduction
from query evaluation with closed concept names to query evaluation on
cKBs in the proof of Theorem~\ref{thm:combined-lower-el}
below.
\begin{proposition}
  \label{prop:nonom}
  UCQ evaluation on circumscribed \ALCHIO KBs can be reduced in
  polynomial time to UCQ evaluation on circumscribed \ALCHI KBs.
\end{proposition}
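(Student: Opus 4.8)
The plan is to replace every nominal $\{a\}$ occurring in $\Tmc$ by a fresh concept name $N_a$, add the assertion $N_a(a)$, and declare $N_a$ minimized: under the standard names assumption the assertion forces $a\in N_a^\Imc$, while minimization drives $N_a$ towards the singleton $\{a\}$. The only thing that can fail is that $N_a$ might be \emph{forced} to be strictly larger than $\{a\}$ (e.g.\ if $\Tmc$ contains $C\sqsubseteq\{a\}$ and some element other than $a$ is unavoidably in $C$), a situation in which the original \ALCHIO KB is inconsistent but its $N_a$-relaxation need not be; I would detect this and fold it into the query. In detail: let $N$ be the set of individuals occurring in a nominal in $\Tmc$, and for $a\in N$ take fresh concept names $N_a$ and $I_a$, plus further fresh names $\mn{Bad}$ and $\mn{E}$. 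Let $\Tmc'$ be obtained from $\Tmc$ by replacing every $\{a\}$ with $N_a$ and adding the CIs $N_a\sqcap\neg I_a\sqsubseteq\mn{Bad}$ ($a\in N$); this is an \ALCHI TBox. Let $\Amc':=\Amc\cup\{N_a(a),I_a(a)\mid a\in N\}\cup\{\mn{E}(c)\mid c\in\Ind(\Amc)\}$. Let $\CP'$ have minimized concepts $\Msf\cup\{N_a,I_a\mid a\in N\}$, fixed concepts $\Fsf$ (so $\mn{Bad}$, $\mn{E}$, and the remaining names vary), and preference relation $\prec'$ obtained from $\prec$ by in addition putting every $N_a$ and every $I_a$ strictly below every concept of $\Msf$ (so the $N_a$, $I_a$ form a pairwise-incomparable bottom block). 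Finally replace $q(\bar x)$ by the UCQ $q'(\bar x):=q(\bar x)\;\vee\;\exists y\bigl(\mn{Bad}(y)\wedge\bigwedge_i\mn{E}(x_i)\bigr)$, with $x_1,\dots$ the answer variables. This is all computable in polynomial time, and the remaining claim is $\Circ_\CP(\Kmc)\models q(\bar a)\iff\Circ_{\CP'}(\Kmc')\models q'(\bar a)$ for all $\bar a\in\Ind(\Amc)^{|\bar x|}$.

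I would prove this via one auxiliary claim and a two-way model correspondence. The auxiliary claim is that $I_a^\Imc=\{a\}$ for every model $\Imc$ of $\Circ_{\CP'}(\Kmc')$ and every $a\in N$: indeed $a\in I_a^\Imc$ by the assertion $I_a(a)$, and if $I_a^\Imc$ contained anything else, the interpretation obtained from $\Imc$ by shrinking $I_a$ to $\{a\}$ and enlarging $\mn{Bad}$ so as to still contain $N_a^\Imc\setminus\{a\}$ is again a model of $\Kmc'$ and is $<_{\CP'}$-below $\Imc$ (only one minimized concept changed, $I_a$, and it strictly shrank; $I_a$ has no $\prec'$-predecessor), contradicting minimality. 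In particular $\mn{Bad}^\Imc\supseteq N_a^\Imc\setminus\{a\}$ in every model of $\Circ_{\CP'}(\Kmc')$. The model correspondence then reads: (i) if $\Imc\models\Circ_\CP(\Kmc)$, its expansion $\Imc^+$ with $N_a^{\Imc^+}=I_a^{\Imc^+}=\{a\}$, $\mn{Bad}^{\Imc^+}=\emptyset$, $\mn{E}^{\Imc^+}=\Ind(\Amc)$ is a model of $\Circ_{\CP'}(\Kmc')$; and (ii) if $\Imc\models\Circ_{\CP'}(\Kmc')$ with $N_a^\Imc=\{a\}$ for all $a\in N$, then the reduct of $\Imc$ forgetting $N_a,I_a,\mn{Bad},\mn{E}$ and reading $\{a\}$ for $N_a$ is a model of $\Circ_\CP(\Kmc)$. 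Each of (i), (ii) is a routine but slightly tedious check of conditions~(1)--(4) of the preference relation; it works because in the interpretations being compared the $N_a$ and $I_a$ are pinned to $\{a\}$, so they neither break nor help conditions~(3)--(4) restricted to $\Msf$, while their being $\prec'$-below $\Msf$ forces any $<_{\CP'}$-smaller competitor of $\Imc^+$ (resp.\ of $\Imc$) to keep them at $\{a\}$, hence to be readable back as an \ALCHIO interpretation.

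Given these, the equivalence follows. For ``$\Leftarrow$'' (i.e.\ $\Circ_\CP(\Kmc)\models q(\bar a)$ implies $\Circ_{\CP'}(\Kmc')\models q'(\bar a)$): take any model $\Imc$ of $\Circ_{\CP'}(\Kmc')$; if all $N_a^\Imc=\{a\}$ then by~(ii) the reduct models $\Circ_\CP(\Kmc)$, hence $q(\bar a)$, so $\Imc\models q(\bar a)$; otherwise $\mn{Bad}^\Imc\neq\emptyset$ and $\mn{E}^\Imc\supseteq\Ind(\Amc)$, so there is a homomorphism from the second disjunct of $q'$ into $\Imc$ mapping the answer variables to $\bar a$; either way $\Imc\models q'(\bar a)$. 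For ``$\Rightarrow$'': if $\Circ_\CP(\Kmc)\not\models q(\bar a)$, choose a model $\Imc$ of $\Circ_\CP(\Kmc)$ with $\Imc\not\models q(\bar a)$; then $\Imc^+\models\Circ_{\CP'}(\Kmc')$ by~(i), and $\Imc^+$ satisfies neither $q(\bar a)$ ($q$ uses no fresh symbol, so its interpretation is unchanged) nor the second disjunct of $q'$ ($\mn{Bad}^{\Imc^+}=\emptyset$), so $\Circ_{\CP'}(\Kmc')\not\models q'(\bar a)$. The step I expect to be the main obstacle is exactly the initial observation that minimization alone does not pin $N_a$ to $\{a\}$, together with the resulting need to channel ``$N_a$ was forced to be too large'' through the auxiliary concepts $\mn{Bad},\mn{E}$ and the extra query disjunct; once that is in place, the rest is careful but unsurprising bookkeeping with the circumscription conditions. (Alternatively one could first decide satisfiability of $\Circ_\CP(\Kmc)$ --- known to be decidable --- and emit a trivially entailing instance when it fails; but that would not yield a polynomial-time many-one reduction, which is why I would use the above.)
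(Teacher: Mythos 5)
Your proposal is correct and is essentially the paper's own reduction: a fresh concept replacing each nominal, a companion concept ($I_a$, the paper's $B_a$) asserted at $a$ and minimized with top preference to pin it to $\{a\}$, a "detector" concept fed by $N_a\sqcap\neg I_a$ whose non-emptiness is folded into an extra UCQ disjunct, and the same two-way model correspondence. The only differences are cosmetic — you use one shared $\mn{Bad}$ instead of per-nominal $D_a$'s, additionally (and harmlessly) minimize $N_a$, and pad the answer variables with asserted $\mn{E}$-atoms where the paper uses $\top\sqsubseteq T$.
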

\begin{proof}
  Let $\Circ(\Kmc)$ be an \ALCHIO cKB, with
  $\Kmc=(\Tmc,\Amc)$, and let $q(\bar x)$ be a UCQ. Let $N$ be the set
  of individual names $a$ such that the nominal $\{a\}$ is used in
  \Tmc. Introduce fresh concept names $A_a,B_a,D_a$ for every
  $a \in N$. We obtain $\Tmc'$ from \Tmc by replacing every $a \in N$
  with $A_a$ and adding the CI $A_a \sqcap \neg B_a \sqsubseteq D_a$,
  $\Amc'$ from \Amc by adding $A_a(a)$ and $B_a(a)$ for every
  $a \in N$, and $q'$ from $q$ by adding the disjunct
  $\exists y \, D_a(y)$ for every $a \in N$.\footnote{Strictly
    speaking, we need to adjust $\exists y \, D_a(y)$ so that it has
    the same answer variables as the other CQs in $q$. This is easy by
    adding to $\Tmc'$ a CI $\top \sqsubseteq T$ for a fresh concept
    name $T$ and extending $\exists y \,
    D_a(y)$ with atom $T(x)$ for every answer variable $x$.} Set $\Kmc'=(\Tmc',\Amc')$. The
  circumscription pattern $\CP'$ is obtained from \CP by minimizing
  the concept names $B_a$ with higher preference than any other
  concept name (and with no preferences between them). We show in the
  appendix that $\Circ(\Kmc) \models q(\bar a)$ iff
  $\mn{Circ}_{\mn{CP}'}(\Kmc') \models q'(\bar a)$ for all
  $\bar a \in \mn{ind}(\Amc)^{|\bar x|}$.
\end{proof}
We are thus left with $\ALCHI$ KBs. We generally assume that TBoxes
are in \emph{normal form}, meaning that every concept inclusion in
$\tbox$ has one of the following shapes:
	\[
	\begin{array}{r@{\qquad}l@{\qquad}r}
		\axtop 
		&
		\axexistsright
		&
		\axexistsleft
		\smallskip\\
		\axand
		&
		\axnotright
		&
		\axnotleft
	\end{array}
	\]
	where $\cstyle{A, A_1, A_2, B}$ range over \NC and
        $r$ ranges over
        roles. 
	The set of concept names in $\tbox$ is denoted $\cnames(\tbox)$.
%\end{definition}
%
	\begin{restatable}{lemma}{lemnormalform}
          \label{lem:normalform}
          Every $\ALCHI$ TBox $\tbox$ can be transformed in linear
          time into an \ALCHI TBox $\tbox'$ in normal form such that
          for all cKBs $\Circ(\Tmc,\Amc)$, UCQs $q(\bar x)$ that do
          not use symbols from
          $\mn{sig}(\Tmc') \setminus \mn{sig}(\Tmc)$, and
          $\bar a \in \mn{ind}(\Amc)^{|\bar x|}$:
          $\Circ(\Tmc,\Amc) \models q(\bar a) \text{ iff }
          \Circ(\Tmc',\Amc) \models q(\bar a).$
	\end{restatable}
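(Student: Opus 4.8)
The plan is to perform the usual normalization of a DL TBox, but carried out carefully enough that it is transparent over circumscribed KBs. The standard trick is: repeatedly pick a compound subconcept, replace it by a fresh concept name, and add a defining inclusion. For instance, a CI $C \sqsubseteq D$ where $D = \exists r . E$ with $E$ compound is rewritten by introducing a fresh $X_E$, replacing the occurrence of $E$ by $X_E$, and adding $E \sqsubseteq X_E$; similarly for nested conjunctions, negations, and existentials on the left. Iterating this brings every CI into one of the six listed shapes in linear time. The subtle point specific to our setting is what happens to the fresh concept names in the circumscription pattern: I will put all of them into the \emph{varying} set $\Vsf$ of $\tbox'$'s pattern, leaving the partition of the old symbols into $\Msf$, $\Fsf$, $\Vsf$ (and the preference relation $\prec$) untouched.

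First I would establish, for a single normalization step (say, introducing $X$ for subconcept $E$ together with $E \sqsubseteq X$, where we only replace \emph{positive} occurrences so a one-directional inclusion suffices), a model-theoretic correspondence between $\Circ(\Tmc,\Amc)$ and $\Circ(\Tmc^\ast,\Amc)$ where $\Tmc^\ast$ is the one-step rewriting. Concretely: (i) every model $\Imc$ of $\Circ(\Tmc,\Amc)$ extends uniquely to a model $\Imc^\ast$ of $(\Tmc^\ast,\Amc)$ by setting $(X)^{\Imc^\ast} := E^\Imc$, and this $\Imc^\ast$ is in fact a model of $\Circ(\Tmc^\ast,\Amc)$; and (ii) conversely, every model $\Jmc$ of $\Circ(\Tmc^\ast,\Amc)$ satisfies $X^\Jmc \supseteq E^\Jmc$ but may have $X^\Jmc$ strictly larger — however, since $X$ varies and appears only positively, shrinking $X^\Jmc$ down to $E^\Jmc$ yields a model of $(\Tmc^\ast,\Amc)$; hence by minimality any model of $\Circ(\Tmc^\ast,\Amc)$ already has $X^\Jmc = E^\Jmc$, so its reduct to $\mn{sig}(\Tmc)$ is a model of $\Circ(\Tmc,\Amc)$. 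The heart of the argument is checking that the preference relation $<_\CP$ transfers across this correspondence: since $X \in \Vsf$, conditions (2)--(4) in the definition of $<_\CP$ never mention $X$, and condition (1) (equal domains) is preserved; so $\Jmc <_{\CP'} \Imc^\ast$ iff $\Jmc|_{\mn{sig}(\Tmc)} <_\CP \Imc$, which is exactly what is needed to match up ``being a model of the circumscription'' on both sides.

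With the one-step lemma in hand, I would iterate: normalization terminates after linearly many steps (each step strictly decreases a suitable size measure of the TBox), and the signature correspondence composes, giving the claim for the fully normalized $\tbox'$. Finally, since $q$ uses no symbols from $\mn{sig}(\Tmc')\setminus\mn{sig}(\Tmc)$, a homomorphism from a CQ of $q$ into a model $\Imc$ of $\Circ(\Tmc,\Amc)$ lifts verbatim to $\Imc^\ast$, and conversely any homomorphism into a model $\Jmc$ of $\Circ(\Tmc',\Amc)$ lands (after the reduct) in a model of $\Circ(\Tmc,\Amc)$; this yields $\Circ(\Tmc,\Amc)\models q(\bar a)$ iff $\Circ(\Tmc',\Amc)\models q(\bar a)$ for all $\bar a \in \mn{ind}(\Amc)^{|\bar x|}$.

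The main obstacle I anticipate is not the combinatorics of normalization — that is routine — but getting the polarity bookkeeping exactly right so that each introduced auxiliary name can be axiomatized by a \emph{single} inclusion ($E\sqsubseteq X$ or $X\sqsubseteq E$ depending on whether the replaced occurrence is positive or negative) rather than an equivalence. Using only one-directional definitions is what makes the ``shrink/grow $X$ to $E$'' argument go through and hence what lets the fresh names be harmlessly varying; if one naively used equivalences, a varying $X$ would be pinned exactly and the argument would still work, but one must then double-check that the added CI $X \sqsubseteq E$ respects the normal form (it may itself need further decomposition). Handling the left-hand existential shape $\exists r . A \sqsubseteq B$ — which has no analogue among purely positive replacements — requires the negative-polarity case, so both directions of the correspondence must be carried through, but in all cases the freshness of $X$ and its membership in $\Vsf$ is what keeps $<_\CP$ inert.
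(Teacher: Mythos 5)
The paper does not spell out a proof of this lemma (it is dismissed as ``entirely standard''), and your overall architecture --- structural transformation with fresh names axiomatized by one-directional inclusions whose direction depends on polarity, fresh names declared varying so that conditions (2)--(4) of $<_\CP$ are inert on them, and a two-way model correspondence that preserves the preference relation --- is exactly the standard route. However, two local steps are wrong as written. First, your worked example has the polarity backwards: in $C \sqsubseteq \exists r.E$ the subconcept $E$ occurs \emph{positively}, so the defining inclusion must be $X_E \sqsubseteq E$, not $E \sqsubseteq X_E$. With $E \sqsubseteq X_E$ a model of the rewritten TBox may send an $r$-edge from an instance of $C$ to an element of $X_E \setminus E$, and its reduct then violates $C \sqsubseteq \exists r.E$; the rule you state in your final paragraph (``$E\sqsubseteq X$ or $X\sqsubseteq E$ depending on whether the replaced occurrence is positive or negative'') is the right one, but you instantiated it the wrong way round.

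Second, the claim that ``by minimality any model of $\Circ(\Tmc^\ast,\Amc)$ already has $X^\Jmc = E^\Jmc$'' is false: $X$ is \emph{varying}, and $<_\CP$ compares models only on minimized and fixed names, so two models differing only in a varying $X$ are incomparable and minimality imposes no constraint on $X^\Jmc$ whatsoever. Fortunately this step is also unnecessary: with the correct polarity of the defining inclusion, monotonicity alone shows that the reduct of \emph{any} model of $\Tmc^\ast$ to $\mn{sig}(\Tmc)$ is a model of $\Tmc$, and minimality of the reduct follows because any competitor $\Imc_0 <_\CP \Jmc|_{\mn{sig}(\Tmc)}$ lifts (by setting $X^{\Imc_0^\ast} := E^{\Imc_0}$ and copying the extensions of all fixed names outside $\mn{sig}(\Tmc')$) to a competitor $\Imc_0^\ast <_\CP \Jmc$, precisely because $X \in \Vsf$. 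With these two repairs the argument is sound; you should also note that the statement as phrased quantifies over all circumscription patterns, so one must (as you implicitly do) read it with the convention that the fresh names are chosen from $\Vsf$.
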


% \begin{toappendix}
% 	To ensure the normalization procedure does not affect the outcome of the reasoning tasks we consider in this thesis, it is sufficient to ensure that the normalized TBox is a conservative extension of the initial TBox.
	
% %	\begin{definition}
% 		A TBox $\tbox'$ is a \emph{conservative extension} of a TBox $\tbox$ if the three following conditions are satisfied:
% 		\begin{itemize}
% 			\item
% 			$\signatureof[\tbox] \subseteq \signatureof[\tbox']$;
% 			\item 
% 			Every model of $\tbox'$ is a model of $\tbox$;
% 			\item
% 			For every model $\I$ of $\tbox$, there exists a model $\I'$ of $\tbox'$ such that the restriction of $\cdot^{\I'}$ to $\signatureof[\tbox]$ is $\cdot^{\I}$.
% 		\end{itemize}
% %	\end{definition}
	
% 	The desired properties of the normalization procedure can now be summarized as follows.
	
% 	\begin{theorem}
% 		Every $\ALCHI$ TBox $\tbox$ can be transformed in linear time into a conservative extension $\tbox'$ of $\tbox$ such that $\tbox'$ is in normal form and has linear size w.r.t.\ the size of $\tbox$.
% 	\end{theorem}
	
% 	In particular, this transformation does not affect the outcome of upcoming reasoning tasks, nor the associated complexity results.
	
% \end{toappendix}
%
% We next introduce several notions that play a central role in our
% decision procedures.
        \noindent Let $\Circ(\Kmc)$ be a cKB with $\Kmc=(\Tmc,\Amc)$.
        A \emph{type} is a set of concept names
        $t \subseteq \cnames(\tbox)$. For an interpretation \Imc and $d \in \Delta^\Imc$, 
        we define
$\typeinof{\I}{d} \coloneqq \{ \cstyle{A} \in \cnames(\tbox) \mid
d \in \cstyle{A}^\Imc \}.$
For a subset $\Delta \subseteq \Delta^\Imc$, we set
$\typeinof{\I}{\Delta} = \{ \typeinof{\I}{d} \mid d \in \Delta\}$.
We further write $\types(\I)$ for $\typeinof{\I}{\Delta^\Imc}$.
Finally, we set
$$\types(\tbox) \coloneqq\bigcup_{\Imc \text{ model of } \Tmc}
\types(\I).$$
%
%         such that for some model \Imc of
%         \Tmc 
% $$t = \typeinof{\I}{d} \coloneqq \{ \cstyle{A} \in \cnames(\tbox) \mid
% d \in \cstyle{A}^\Imc \}.$$
% 	%
% The set of all types for $\Tmc$ is denoted $\types(\tbox)$.  For an
% interpretation \Imc and $\Delta \subseteq \Delta^\Imc$, we set
% $\typeinof{\I}{\Delta} = \{ \typeinof{\I}{d} \mid d \in \Delta\}$ and
% write $\types(\I)$ for $\typeinof{\I}{\Delta^\Imc}$.  Note that
% % $\types(\I)$ depends on \Tmc and it shall always be clear from the
% % context which \Tmc this is.  In particular,
% if $\I$ is a model of~$\tbox$, then $\types(\I) \subseteq \types(\tbox)$.  }
%
%\iffalse
%\begin{definition}
%	\label{def-type}
%	The set of concepts that occur in $\Tmc$, closed under subconcept and single negation, is denoted $\mn{cl}(\Tmc)$.
%	%
%	A \emph{type for $\Tmc$} is a subset
%	$t \subseteq \mn{cl}(\Tmc)$ such that there is a model \Imc of
%	$\Circ(\Kmc)$ and a $d \in \Delta^\Imc$ with
%	$t=\mn{tp}_\Imc(d) \coloneqq \{ C \in \mn{cl}(\Tmc) \mid d \in
%	C^\Imc \}$.
%	%
%	The set of all types for $\Tmc$ is denoted $\mn{TP}(\Tmc)$.
%	If $\I$ is an interpretation, we use $\mn{TP}(\Imc)$ to denote the set of types $t \in \mn{TP}(\Tmc)$ such that $\mn{tp}_\Imc(d)=t$ for some $d \in \Delta^\Imc$.
%\end{definition}
%\fi
%
%
%\input{def-type}
%
% When manipulating models, we often need to connect elements by a
% role $\rstyle{R}$ without affecting their types.  To this end,
For a role $r$, we write $t \rightsquigarrow_\rstyle{R} t'$ if
for all $\cstyle{A,B} \in \cnames$:
\begin{itemize}
	\item
	$\cstyle{B} \in t'$ and $\tbox \models \exists \rstyle{R}.\cstyle{B} \incl \cstyle{A}$
	implies $\cstyle{A} \in t$ and
	\item $\cstyle{B} \in t$ and $\tbox \models \exists \rstyle{R}^-.\cstyle{B} \incl \cstyle{A}$
	implies
	$\cstyle{A} \in t'$.
\end{itemize}
%
%
%Intuitively, the meaning of $t \legitsucc_\rstyle{r} t'$ is that elements of type $t'$ are stable $\rstyle{R}$-successors for those of type $t$.
%
We next show how to identify a `core' part of a model $\I$
of~\Kmc. These cores will play an important role in dealing with
circumscription in our upper bound proofs.
\begin{definition}
	\label{def-core}
Let $\I$ be a model of \Kmc.
We use $\mn{TP}_{\mn{core}}(\Imc)$ to denote the set of
all types $t \in \mn{TP}(\Imc)$ such that
$$
|\{ d \in \Delta^\Imc \setminus \mn{ind}(\Amc) \mid \mn{tp}_\Imc(d)=t \}| < |\mn{TP}(\Tmc)|
$$
and set $\mn{TP}_{\overline{\mn{core}}}(\Imc)=\mn{TP}(\Imc) \setminus
\mn{TP}_{\mn{core}}(\Imc)$ and
$$\Delta^\Imc_\mn{core} = \{  d \in \Delta^\Imc \mid \mn{tp}_\Imc(d) \in \mn{TP}_{\mn{core}}(\Imc) \}.$$
\end{definition}%
So the core consists of all elements whose types are realized not too
often, except possibly in the ABox. A good way of thinking about cores
is that if a model \Imc of \Kmc is minimal w.r.t.\ $<_\CP$, then all
instances of minimized concept names are in the core. This is, however not strictly true since we may have $A \sqsubseteq B$ where $A$ is $\top$ or fixed, and $B$ is minimized.

The following crucial lemma provides a sufficient condition for
a model \Jmc of \Kmc to be minimal w.r.t.\ $<_\CP$,  relative
to a model \Imc of \Kmc that is known to be minimal w.r.t.~$<_\CP$.
\begin{restatable}[Core Lemma]{lemma}{lemmafive}
	\label{lem-modelOfK}
	\label{lem-lemma5}
	Let \Imc be a model of $\Circ(\Kmc)$ and let \Jmc be a model of \Kmc with
	$\Delta^\Imc_\mn{core} \subseteq \Delta^\Jmc$.
	If 
	\begin{enumerate}
		\item
		$\mn{tp}_\Imc(d)=\mn{tp}_\Jmc(d)$
		for all $d \in \Delta^\Imc_\mn{core}$ and
		\item
		$\mn{tp}_\Jmc(\Delta^\Jmc \setminus \Delta^\Imc_\mn{core}) = \mn{TP}_{\overline{\mn{core}}}(\Imc)$,
	\end{enumerate}
	then \Jmc is a model of $\Circ(\Kmc)$.
\end{restatable}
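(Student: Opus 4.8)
The plan is to prove the contrapositive: assuming $\Jmc$ is *not* a model of $\Circ(\Kmc)$, i.e. there is a model $\Jmc'$ of $\Kmc$ with $\Jmc' <_\CP \Jmc$, I will manufacture a model $\Imc'$ of $\Kmc$ with $\Imc' <_\CP \Imc$, contradicting the assumption that $\Imc \models \Circ(\Kmc)$. The key idea is that, because $\Jmc$ and $\Imc$ agree on the core (condition 1) and the non-core parts of $\Jmc$ only realize types already present outside the core of $\Imc$ (condition 2), one can ``transplant'' the improvement witnessed by $\Jmc'$ over $\Jmc$ back onto $\Imc$. Concretely, I first note that $\Jmc$ and $\Jmc'$ share the same domain (condition 1 of $<_\CP$). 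I would then build $\Imc'$ on domain $\Delta^\Imc$ by keeping the core of $\Imc$ but replacing each element's concept-name memberships on $\Delta^\Imc \setminus \Delta^\Imc_\mn{core}$ according to how $\Jmc'$ changed $\Jmc$ on a correspondingly-typed element, and transplanting role edges accordingly. The care needed here is to set this up so that $\Imc'$ is still a model of $\tbox$ and of $\Amc$.

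The main technical device will be a type-respecting correspondence between $\Delta^\Imc \setminus \Delta^\Imc_\mn{core}$ and $\Delta^\Jmc \setminus \Delta^\Imc_\mn{core}$. By Definition~\ref{def-core}, every type in $\mn{TP}_{\overline{\mn{core}}}(\Imc)$ is realized at least $|\mn{TP}(\Tmc)|$ times in $\Imc$ outside $\mn{ind}(\Amc)$, and by condition 2 these are exactly the types realized by $\Jmc$ outside $\Delta^\Imc_\mn{core}$; since $\Jmc'$ and $\Jmc$ share a domain, and an improvement step of $<_\CP$ only shrinks extensions of minimized names while fixing the fixed ones, the element-wise ``type change'' induced by passing from $\Jmc$ to $\Jmc'$ can be pulled back to $\Imc$. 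The normal-form assumption (Lemma~\ref{lem:normalform}) is what makes satisfaction of $\tbox$ a local, per-element-and-per-edge condition, so that checking $\Imc' \models \tbox$ reduces to checking the transplanted neighborhoods — which are, by construction, isomorphic (as far as concept membership and the relation $\rightsquigarrow_\rstyle{R}$ go) to neighborhoods in $\Jmc'$, a model of $\tbox$. The abundance bound $|\mn{TP}(\Tmc)|$ gives enough ``copies'' of each non-core type to absorb the reshuffling without running out of elements.

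Having built $\Imc'$, I would verify the four conditions of $\Jmc' <_\CP \Jmc$ transfer to $\Imc' <_\CP \Imc$: same domain is immediate; the fixed concept names are untouched on the core and, on the non-core part, are preserved because $\Jmc'$ preserves them over $\Jmc$; conditions 3 and 4 about minimized names follow because the core of $\Imc$ is held fixed (so any name whose extension grew somewhere must have grown in the non-core part, mirroring $\Jmc$ vs.\ $\Jmc'$) and the preference-order bookkeeping is inherited verbatim from the $\Jmc'$-vs-$\Jmc$ witness. The remark in the text — that minimized instances need not all lie in the core when $\top \sqsubseteq B$ or a fixed name forces $B$ — is exactly the subtlety I must handle: such ``forced'' instances of a minimized $B$ outside the core are present in $\Imc$, $\Jmc$, and $\Jmc'$ alike by the same TBox axioms, so they contribute identically on both sides and cause no mismatch. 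I expect the main obstacle to be precisely the definition of the transplant map and the verification that $\Imc' \models \tbox$: one needs the counting argument to line up the multiplicities of non-core types and must be careful that role assertions of $\Amc$ (which connect ABox individuals, all in the core) are untouched, while transplanted role edges never create TBox violations — this is where the normal form and the $\rightsquigarrow_\rstyle{R}$ machinery carry the weight.
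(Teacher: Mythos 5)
Your overall strategy is the paper's: pass to the contrapositive, take a witness $\Jmc' <_\CP \Jmc$, and transplant the improvement onto $\Delta^\Imc$ via a type-respecting surjection from the non-core elements of $\Imc$ onto the $\Jmc'$-types (using $|D_t| \geq |\mn{TP}(\Tmc)| \geq |S_t|$), with the TBox verified locally through the normal form and $\rightsquigarrow_r$. However, there is a genuine gap in how you treat the core. You propose to \emph{freeze} $\Imc$ on $\Delta^\Imc_\mn{core}$ and only re-type the non-core part. But the strict shrinkage required by condition~4 of $<_\CP$ may be witnessed exclusively at core elements: e.g.\ if $\Jmc'$ differs from $\Jmc$ only by removing a single $d \in \Delta^\Imc_\mn{core}$ from a minimized $A$, then your $\Imc'$ coincides with $\Imc$ on the core, and (by condition~2 of the lemma) the non-core part of $\Jmc'$ realizes exactly the types already realized in the non-core part of $\Imc$, so $A^{\Imc'} = A^{\Imc}$ for every concept name and $\Imc' <_\CP \Imc$ fails --- no contradiction is obtained. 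The paper avoids this by assigning to each core element (and each ABox individual) its $\Jmc'$-type, i.e.\ $\pi(d) = \mn{tp}_{\Jmc'}(d)$ on $\Delta^\Imc_\mn{core} \cup \mn{ind}(\Amc)$, so that improvements occurring on the core are carried over; the key claim is then that $A^{\Imc'} \odot A^{\Imc}$ iff $A^{\Jmc'} \odot A^{\Jmc}$ for $\odot \in \{\subseteq,\supseteq\}$, from which $\Imc' <_\CP \Imc$ follows.

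A second, smaller error: you assert that the role assertions of $\Amc$ connect ABox individuals ``all in the core.'' That is false --- $\Delta^\Imc_\mn{core}$ is determined by how often a type is realized \emph{outside} $\mn{ind}(\Amc)$, so an ABox individual can well be non-core. The ABox individuals must therefore be excluded from the sets $D_t$ over which the surjections $\pi_t$ are chosen (as the paper does) and handled like core elements, receiving their $\Jmc'$-types; otherwise the reshuffling could reassign an ABox individual a type violating a concept assertion, and your argument that $\Amc$ is ``untouched'' breaks down.
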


We give a sketch of the proof of (the contrapositive of)
Lemma~\ref{lem-modelOfK}.  Assume that \Jmc is not a model of
$\Circ(\Kmc)$.  Then there must be a model $\Jmc'$ of \Kmc with
$\Jmc' <_\CP \Jmc$ and to obtain a contradiction it suffices to
construct from $\Jmc'$ a model $\Imc'$ of \Kmc with
$\Imc' <_\CP \Imc$.  Note that \Imc and \Jmc may have domains of
different sizes.  The elements of $\Delta^\Imc_{\mn{core}}$ receive
the same type in $\Imc'$ as in $\Jmc'$.  For each non-core type $t$ in
$\Imc$, we consider the set $S_t$ of types in $\Jmc'$ of those
elements that have type $t$ in $\Jmc$.  Since $t$ is realized in \Imc at
least $|\mn{TP}(\Tmc)|$ many times, we have enough room to realize in
$\Imc'$ exactly the types from $S_t$ among those elements that had
type $t$ in \Imc, that is, within $(\type_{\I})^{-1}( t)$. It is
easy to see that $\I'$ is a model of $\kb$: it satisfies \Tmc as it
realizes the same types as $\Jmc'$ and it satisfies \Amc since
$\Imc'|_{\mn{ind}(\Amc)}=\Jmc'|_{\mn{ind}(\Amc)}$.  By construction
and since $\Jmc' <_\CP \Jmc$, it further satisfies $\I' <_\CP \I$.

We next use the core lemma to show that if
$\Circ(\Kmc) \not\models q(\bar a)$ for any CQ $q$ and
$\bar a \in \Ind(\Amc)^{|\bar x|}$, then this is witnessed by a
countermodel \Imc that has a regular shape. Here and in what follows,
a \emph{countermodel} is a model \Imc of $\Circ(\Kmc)$ with
\mbox{$\Imc\not\models q(\bar a)$}. By regular shape, we mean that
there is a `base part' that contains the ABox, the core of \Imc, as
well as some additional elements; all other parts of \Imc are
tree-shaped with their root in the base part, and potentially with
edges that go back to the core (but not to other parts of the
base!). We next make this precise.

Let \Imc be a model of $\circkb$. Set
$\Omega = \{ rA \mid B \sqsubseteq \exists r . A \in \Tmc \}$ and fix
a function $f$ that chooses, for every $d \in \Delta^\Imc$ and
$rA \in \Omega$ with $d \in (\exists r . A)^\Imc$, an element
$f(d,rA) = e\in A^\Imc$ such that $(d,e) \in r^\Imc$. Further choose,
for every $t \in \noncoretypesof{\I}$, a representative $e_t \in \Delta^\Imc$
with $\mn{tp}_\Imc(e_t)=t$.
We inductively define the set $\Pmc$ of \emph{paths through \Imc} along with
a mapping $h$ assigning to each $p \in \Pmc$ an element of~$\Delta^\Imc$:
\begin{itemize}

\item each element $d$ of the set
$$\basedomainof{\I} := \indsof{\abox} \cup \coredomainof{\I} \cup \{
e_t \mid t \in \noncoretypesof{\I} \}$$
is a path in \Pmc and
  $h(d)=d$;

\item if $p \in \Pmc$ with $h(p)=d$ and $rA \in \Omega$ with
  $f(d,rA)$ defined and not from $\Delta^\Imc_{\mn{core}}$, then
  $p'=p rA$ is a path in \Pmc and $h(p')=f(d,rA)$.
  
\end{itemize}
For every role $r$, define
$$
\begin{array}{@{}r@{\;}c@{\;}l}
  R_r &=& \{ (a,b) \mid a,b \in \Ind(\Amc), r(a,b) \in \Amc%\Kmc \models r(a,b)
                \} \, \cup \\[1mm]
  && \{ (d,e) \mid d,e \in  \coredomainof{\I}, (d,e) \in r^\Imc \} \,
     \cup \\[1mm]
            && \{ (p,p') \mid p'=prA \in \Pmc %, \Tmc
            % \models s \sqsubseteq r
               \} \, \cup\\[1mm]
   %          && \{ (p',p) \mid p'=pr^-A \in \Pmc %,
   %          % \Tmc \models s \sqsubseteq r
   %             \} \,
   % \cup \\[1mm]
               && \{ (p,e) \mid %p=p'rA \in \Pmc,
                  e=f(h(p),rA) \in \Delta^\Imc_{\mn{core}} %, \Tmc
                  % \models s \sqsubseteq r
                  \}. %  \,
  %  \cup \\[1mm]
  % && \{ (e,p) \mid p=p'r^-A \in \Pmc,
  %                 e=f(h(p),rA) \in \Delta^\Imc_{\mn{core}} %, \Tmc
  %                 % \models s \sqsubseteq r
  %    \}
\end{array}
$$
Now the \emph{unraveling} of \Imc is defined by setting
$$
\begin{array}{r@{\;}c@{\;}l}
\Delta^{\Imc'} &=& \Pmc \qquad %\[1mm]
  A^{\Imc'} \ = \ \{ p \in \Pmc \mid h(p) \in A^\Imc \} \\[1mm]
  r^{\Imc'} &=& \displaystyle\bigcup_{\Tmc \models s
                \sqsubseteq r } ( R_s \cup \{ (e,d) \mid (d,e) \in R_{s^-} \})
\end{array}
$$
for all concept names $A$ and role names $r$. It is easy to verify
that $h$ is a homomorphism from $\Imc'$ to \Imc. This version of
unraveling is inspired by constructions from \cite{maniere:thesis} where
the resulting model is called an \emph{interlacing}.

% \clearpage

% %
% We are now ready to define the interlacing of a model $\I$ of $\circkb$.
% The first step is a form of unraveling which starts from
% $\basedomainof{\I} := \indsof{\abox} \cup \coredomainof{\I} \cup \{ r_t \mid t \in \noncoretypesof{\I} \}$,
% where each $r_t$ is a chosen representative $r_t \in \domain{\I}$ such that $\typeinof{\I}{r_t} = t$.
% We then unravel only one element for each existential RHS occurring in the TBox.
% To this end, we use the alphabet $\Omega$ consisting of all $\rolestyle{R}.\cstyle{A}$ such that $\existsrole{R}.\cstyle{A}$ is the RHS of some axiom in $\tbox$.
% Furthermore, for every $\rstyle{R}.\cstyle{A} \in \Omega$, we assume chosen a function $\successor[\I]_{\rstyle{R}.\cstyle{A}}$
% that maps every element $e \in (\existsrole{R}.\cstyle{A})^\I$ to an element $e' \in \domain{\I}$ such that $(e, e') \in \rolestyle{R}^\I$ and $e' \in \cstyle{A}^\I$.
% %
% %
% \input{def-existential-extraction}
% %
% In a second step, the interlacing $\interlacingof{\I}$ is defined by putting elements unraveled from $\deltastar$ back in place.
% This is achieved by applying the following mapping on $\exexof{\I}$ and considering the interpretation $\interlacingof{\I}$ inherited from $\I$.
% %
% \input{def-interlacing}
%
% The unraveling $\Imc'$ is of the form described above. Moreover, it is
% still a model of $\circkb$ and compared to \Imc, it does not admit any
% additional answers to any UCQ.
%
%\input{lem-interlacing-is-countermodel}
%!TEX root = main.tex
%
%
\begin{restatable}{lemma}{leminterlacingiscountermodel}
  \label{lem_interlacing_is_countermodel}
  Let $q(\bar x)$ be a UCQ and $\bar a \in \Ind(\Amc)$. 
					If $\I$ is a countermodel
                                        against $\circkb \models
                                        q(\bar a)$, then so is its
                                        unraveling $\interlacing$.
\end{restatable}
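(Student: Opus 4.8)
The plan is to verify the two defining conditions of a countermodel for $\interlacing$: first, that $\interlacing$ is a model of $\circkb$, and second, that $\interlacing \not\models q(\bar a)$. For the second part I would exploit the homomorphism $h$ from $\interlacing$ to $\Imc$ already observed to exist: if $\interlacing \models q(\bar a)$ via a homomorphism $g$ from some disjunct $p$ of $q$ into $\interlacing$, then $h \circ g$ is a homomorphism from $p$ into $\Imc$; since the elements of $\bar a$ are ABox individuals and $h$ is the identity on $\indsof{\abox}$ (indeed on all of $\basedomainof{\I}$), we get $h(g(\bar x)) = \bar a$, so $\Imc \models q(\bar a)$, contradicting that $\Imc$ is a countermodel. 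This direction is essentially immediate once the homomorphism property of $h$ is in hand, and it does not even use that $\Imc$ is $<_\CP$-minimal.

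The real work is the first part: showing $\interlacing \models \circkb$. I would split this into (i) $\interlacing \models \Kmc$ and (ii) no $<_\CP$-smaller model of $\Kmc$ exists, the latter via the Core Lemma (Lemma~\ref{lem-modelOfK}). For (i), satisfaction of the ABox is clear because $\interlacing$ restricted to $\indsof{\abox}$ is isomorphic to $\Imc|_{\indsof{\abox}}$ (all ABox individuals are base paths with $h$ the identity, and the $R_r$ component $\{(a,b) \mid r(a,b) \in \abox\}$ reproduces exactly the role assertions). Satisfaction of $\Tmc$ — which we may assume is in normal form by Lemma~\ref{lem:normalform} — is checked CI by CI: the axioms $\axtop$, $\axand$, $\axnotright$, $\axnotleft$ transfer along the homomorphism $h$ since they are ``universal'' constraints on types and $h$ preserves concept membership; the existential axioms $B \sqsubseteq \exists r . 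A$ require the genuine argument, namely that the unraveling was built precisely so that every path $p$ with $h(p) \in B^\Imc$ has a witness: either $f(h(p),rA)$ lies outside the core, in which case $prA \in \Pmc$ is the required $r$-successor, or $f(h(p),rA)$ lies in the core, in which case the fourth component of $R_r$ supplies the edge $(p, f(h(p),rA))$ directly. Here one uses that $\Omega$ contains $rA$ for every such axiom, and the $\rightsquigarrow_R$-style bookkeeping ensures the role inclusions are respected (the $r^{\interlacing}$ definition closes under $\Tmc \models s \sqsubseteq r$ and under taking inverses). The role inclusions $\Tmc \models s \sqsubseteq r$ themselves hold by construction of $r^{\interlacing}$ as that very union.

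For (ii), I would instantiate the Core Lemma with $\Imc$ (the given minimal model) and $\Jmc \coloneqq \interlacing$. Its hypotheses are: $\Delta^\Imc_\mn{core} \subseteq \Delta^{\interlacing}$, which holds because $\coredomainof{\I} \subseteq \basedomainof{\I} \subseteq \Pmc$; $\mn{tp}_\Imc(d) = \mn{tp}_{\interlacing}(d)$ for all $d \in \Delta^\Imc_\mn{core}$, which holds because on core elements $h$ is the identity and $h$ preserves and (being restricted to the identity there) reflects concept membership; and $\mn{tp}_{\interlacing}(\Delta^{\interlacing} \setminus \Delta^\Imc_\mn{core}) = \noncoretypesof{\I}$. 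For this last one: ``$\subseteq$'' holds since $h$ maps every path to an element of $\Imc$ realizing the same type, and a path outside $\coredomainof{\I}$ maps either to a chosen representative $e_t$ with $t$ a non-core type, or to an $f$-successor that is explicitly required to be not from $\Delta^\Imc_{\mn{core}}$, or to an ABox individual — and one must argue (using $|\noncoretypesof{\I}|$ is small while $|\mn{TP}(\Tmc)|$ realizations of each non-core type exist) that the base paths outside the core indeed only carry non-core types; ``$\supseteq$'' holds because each non-core type $t$ has its representative $e_t \in \basedomainof{\I} \setminus \coredomainof{\I}$. The main obstacle I anticipate is precisely this type-accounting for the base part — nailing down that every element of $\basedomainof{\I} \setminus \coredomainof{\I}$ (the ABox individuals and the representatives $e_t$) has a non-core type, and that no core type sneaks in among the tree elements, so that condition 2 of the Core Lemma is met exactly; everything else is routine verification along the homomorphism.
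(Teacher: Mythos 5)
Your proposal is correct and takes essentially the same route as the paper: ruling out query matches by composing with the homomorphism $h$, verifying $\Kmc$ axiom-by-axiom in normal form using the witness function $f$, and establishing minimality via the Core Lemma with $\Imc$ as reference model and the key fact that $\mn{tp}_{\interlacing}(d)=\mn{tp}_\Imc(h(d))$. The ``type-accounting'' you flag as the main obstacle is in fact immediate and needs no counting argument: by definition $\Delta^\Imc_{\mn{core}}$ consists exactly of the elements whose type lies in $\mn{TP}_{\mn{core}}(\Imc)$, so every $d \in \basedomainof{\I} \setminus \coredomainof{\I}$, and every $f$-successor chosen outside the core, realizes a non-core type simply by definition.
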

For some of our upper bound proofs, it will be important that the
reference model \Imc in the core lemma is finite and sufficiently
small. The following lemma shows that we can always achieve this.
\begin{restatable}{lemma}{lemfinitemodelpropertywrtcore}
	\label{lem-finite-model-property-wrt-core}
	Let $\I$ be a model of $\circkb$.
	There exists a model $\Jmc$ of $\circkb$ such that $|\Delta^\Jmc| \leq\sizeof{\abox} + 2^{2\sizeof{\tbox}}$, $\I|_{\coredomainof{\I}} = \Jmc|_{\coredomainof{\Jmc}}$, and $\noncoretypesof{\I} = \noncoretypesof{\Jmc}$.	
\end{restatable}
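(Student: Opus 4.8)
The plan is to start from an arbitrary model $\Imc$ of $\circkb$ and to extract from it a model $\Jmc$ that (i)~is a model of $\Kmc$, (ii)~coincides with $\Imc$ on $\Delta^\Imc_{\mn{core}}$, (iii)~realizes on $\Delta^\Jmc \setminus \Delta^\Imc_{\mn{core}}$ exactly the types of $\mn{TP}_{\overline{\mn{core}}}(\Imc)$, and is small. The key observation is that then we need not argue $<_\CP$-minimality of $\Jmc$ directly: items (i)--(iii) are precisely the hypotheses of the Core Lemma (Lemma~\ref{lem-modelOfK}) with $\Imc$ as reference model, so that lemma immediately yields $\Jmc \models \circkb$. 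Thus everything reduces to a ``positive'' model construction followed by a counting argument.

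For the construction, set $\Delta_0 = \Delta^\Imc_{\mn{core}} \cup \Ind(\Amc)$; this part of $\Imc$ is kept verbatim. For every $t \in \mn{TP}_{\overline{\mn{core}}}(\Imc)$, add a set $W_t$ of $|\mn{TP}(\Tmc)|$ fresh elements and put $\Delta^\Jmc = \Delta_0 \cup \bigcup_t W_t$. Fix a map $g \colon \Delta^\Jmc \to \Delta^\Imc$ that is the identity on $\Delta_0$ and collapses each $W_t$ onto a single fixed element of $\Delta^\Imc \setminus \Ind(\Amc)$ whose type in $\Imc$ is $t$ (one exists since $t$ is non-core). Interpret concept names by $A^\Jmc = \{ d \mid g(d) \in A^\Imc \}$, so that $\mn{tp}_\Jmc(d) = \mn{tp}_\Imc(g(d))$ for every $d$. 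For the roles, first put into $r^\Jmc$ every pair $(d,e) \in r^\Imc$ with $d, e \in \Delta_0$; then add existential witnesses by pulling back those of $\Imc$: for each $d$ and each CI $A \sqsubseteq \exists r.B \in \Tmc$ with $A \in \mn{tp}_\Jmc(d)$, pick a witness $e \in B^\Imc$ with $(g(d),e) \in r^\Imc$ (one exists since $\Imc \models \Tmc$ and $A \in \mn{tp}_\Imc(g(d))$) and add the pair $(d,e)$ if $e \in \Delta^\Imc_{\mn{core}}$, otherwise the pair $(d,w)$ for some $w \in W_{\mn{tp}_\Imc(e)}$; finally close the added edges under the role hierarchy $\Tmc \models r \sqsubseteq s$, reading off inverse roles as usual. (Recall that $\Tmc$ is in normal form.)

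For the verification, $\Jmc \models \Kmc$ is seen as follows. Every type realized in $\Jmc$ equals some $\mn{tp}_\Imc(g(d)) \in \mn{TP}(\Imc) \subseteq \mn{TP}(\Tmc)$, hence is realized in some model of $\Tmc$; therefore every element of $\Jmc$ satisfies all CIs of $\Tmc$ whose satisfaction at an element depends only on that element's type. Every edge of $\Jmc$ mirrors, for the same role and with matching endpoint types, an edge of $\Imc$ (the kept edges trivially, the witness edges because the collapse target of $W_t$ has type $t$, the hierarchy edges because $\Imc \models \Tmc$); since every edge of $\Imc$ satisfies the $\rightsquigarrow_r$ condition, so does every edge of $\Jmc$, and this yields the CIs $\exists r.A \sqsubseteq B$, while the pulled-back witnesses yield the CIs $A \sqsubseteq \exists r.B$ and the hierarchy closure yields the RIs. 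Finally $\Jmc \models \Amc$ because $g$ is the identity on $\Ind(\Amc)$ and $\Imc$'s edges on $\Delta_0$ are retained. A short counting argument then shows $\mn{TP}(\Jmc) = \mn{TP}(\Imc)$, that each core type of $\Imc$ retains its fewer-than-$|\mn{TP}(\Tmc)|$ non-ABox realizers while each non-core type acquires exactly the $|\mn{TP}(\Tmc)|$ realizers in $W_t$, hence $\mn{TP}_{\overline{\mn{core}}}(\Jmc) = \mn{TP}_{\overline{\mn{core}}}(\Imc)$ and $\Delta^\Jmc_{\mn{core}} = \Delta^\Imc_{\mn{core}}$; since $g$ is the identity there and any edge of $\Jmc$ with both endpoints in $\Delta^\Imc_{\mn{core}}$ is already an edge of $\Imc$, we get $\Imc|_{\Delta^\Imc_{\mn{core}}} = \Jmc|_{\Delta^\Jmc_{\mn{core}}}$. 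Now the Core Lemma applies and gives $\Jmc \models \circkb$. For size, $|\Delta^\Jmc| \le |\Ind(\Amc)| + |\mn{TP}(\Imc)| \cdot |\mn{TP}(\Tmc)| \le |\Amc| + |\mn{TP}(\Tmc)|^2 \le |\Amc| + 2^{2|\Tmc|}$, using that core and non-core types partition $\mn{TP}(\Imc)$ (so there is no double counting) and that $|\mn{TP}(\Tmc)| \le 2^{|\cnames(\Tmc)|} \le 2^{|\Tmc|}$.

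I expect the main obstacle to be the bookkeeping around the role edges: they must be defined so that, simultaneously, no edge appears inside $\Delta^\Imc_{\mn{core}}$ (required for the literal identity $\Imc|_{\Delta^\Imc_{\mn{core}}} = \Jmc|_{\Delta^\Jmc_{\mn{core}}}$), every existential is witnessed, every edge respects $\rightsquigarrow_r$ and the role hierarchy (with inverse roles treated uniformly), and the domain still fits within $|\Amc| + 2^{2|\Tmc|}$ --- the last constraint being exactly what forces using only $|\mn{TP}(\Tmc)|$ fresh realizers per non-core type rather than keeping all of $\Imc$'s. Everything concerning circumscription and $<_\CP$ is outsourced to the Core Lemma.
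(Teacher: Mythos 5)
Your proposal is correct and matches the paper's own proof in all essentials: keep $\Ind(\Amc)\cup\Delta^\Imc_{\mn{core}}$ verbatim, add $|\mn{TP}(\Tmc)|$ fresh realizers per non-core type, verify modelhood by case analysis on the normal form, and outsource $<_\CP$-minimality to the Core Lemma, with the same $|\Amc|+m^2$ counting. The only (immaterial) difference is that the paper wires the fresh elements using all edges licensed by $\rightsquigarrow_r$ between types, whereas you add only pulled-back witness edges closed under the role hierarchy; both satisfy the same axioms.
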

In the proof of Lemma~\ref{lem-finite-model-property-wrt-core}, we
construct the desired model~$\Jmc$ by starting from
$\I|_{\indsof{\abox} \cup \coredomainof{\I}}$ and adding exactly
$m = \sizeof{\types(\tbox)}$ instances of each type from
$\noncoretypesof{\I}$.

\subsection{Combined complexity}
\label{subsection-alchi-combined}
%!TEX root = main.tex

\newcommand{\J}{\Jmc}

We show that in any DL between \EL and \ALCHIO,
the evaluation of CQs and UCQs on cKBs is \TwoExpTime-complete
w.r.t.\ combined complexity, starting with the upper bound.
%
%\input{thm-combined-upper-alchi}
%
%!TEX root = main.tex
%
\begin{theorem}
	\label{thm-combined-upper-alchi}
	UCQ evaluation on circumscribed \ALCHIO KBs is in \TwoExpTime w.r.t.\ combined complexity.
\end{theorem}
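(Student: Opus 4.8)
The plan is to establish a $\TwoExpTime$ upper bound by showing that if $\Circ(\Kmc) \not\models q(\bar a)$, then there is a countermodel of a particularly well-structured and bounded shape, and that the existence of such a countermodel can be decided (or guessed and verified) within the allotted resources. By Propositions~\ref{prop:nonom} and Lemma~\ref{lem:normalform}, it suffices to treat $\ALCHI$ KBs with the TBox in normal form. So fix a cKB $\Circ(\Kmc)$ with $\Kmc=(\Tmc,\Amc)$, a UCQ $q(\bar x)$, and a tuple $\bar a$; we want to decide whether some model $\Imc$ of $\Circ(\Kmc)$ satisfies $\Imc\not\models q(\bar a)$.

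The key structural ingredients are already in place. First, by Lemma~\ref{lem-finite-model-property-wrt-core}, any countermodel can be assumed to have a finite \emph{reference part} whose size is bounded by $|\Amc| + 2^{2|\Tmc|}$; this will play the role of $\Imc$ in the Core Lemma. Second, by Lemma~\ref{lem_interlacing_is_countermodel}, we may replace any countermodel by its unraveling $\Imc'$, which is \emph{tree-shaped off a base part}: the base part $\basedomainof{\Imc}$ consists of $\Ind(\Amc)$, the core $\Delta^\Imc_\mn{core}$, and one representative per non-core type, and everything else hangs off this base as trees (with back-edges only into the core). The idea is then to build such a structured countermodel top-down: (i) guess the core together with the ABox part and the non-core type set $\noncoretypesof{\Imc}$ — this is the bounded reference model from Lemma~\ref{lem-finite-model-property-wrt-core}, of single-exponential size; (ii) verify that it is indeed a model of $\Circ(\Kmc)$ by invoking the Core Lemma (Lemma~\ref{lem-modelOfK}), which reduces the circumscription-minimality check to a statement about the bounded core; and (iii) attach the tree parts and check that $q$ does not map anywhere.

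For step (iii), the obstacle is that the tree parts are infinite, so we cannot enumerate them. The standard remedy is a tree-automata argument: the forest off the base part is an (infinite, bounded-branching) labeled tree, and one can construct an alternating tree automaton — run over the trees rooted at base elements — that accepts exactly the completions which (a) respect $\Tmc$ along every branch using the $\rightsquigarrow_\rstyle{R}$ relations, (b) are consistent with the guessed base labeling (including which existential obligations are satisfied internally within the base versus sent into a tree), and (c) admit \emph{no} homomorphism from any CQ of $q$ — the last being the part where the UCQ non-entailment is encoded, via the usual product/rolling-up construction that tracks partial matches of $q$ into the tree. Since the TBox is fixed-size-exponential at worst and $q$ is part of the input, the automaton has at most exponentially many states, and the emptiness test for alternating tree automata is exponential in the number of states — yielding the $\TwoExpTime$ bound overall. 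One then quantifies existentially over the single-exponentially-many possible base parts.

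I expect the main obstacle to be the bookkeeping in step (iii): one must be careful that the interface between the base part and the tree parts is handled correctly — in particular that back-edges into the core are allowed (they were needed to make $h$ a homomorphism in the unraveling) while ruling out spurious query matches that would route through such back-edges, and that the automaton correctly distinguishes existential obligations discharged inside the (already guessed) base from those that must spawn a fresh tree successor. The circumscription aspect itself is comparatively painless here, since Lemma~\ref{lem-modelOfK} has already localized it to the bounded reference model; the real work is the classical but delicate combination of "tree-shaped countermodel" with "UCQ non-entailment via automata," adapted to the presence of the base part and the core back-edges.
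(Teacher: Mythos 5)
Your proposal follows the paper's decomposition exactly: reduce to \ALCHI{} in normal form, enumerate candidate base parts (ABox $\cup$ core $\cup$ one representative per non-core type), certify minimality by combining Lemma~\ref{lem-finite-model-property-wrt-core} (a reference model of size $|\Amc|+2^{2|\Tmc|}$, whose minimality is checked by brute force) with the Core Lemma, and then decide whether the base extends to a model of \Tmc{} realizing only the guessed non-core types and admitting no query match. The single point of divergence is the last step: the paper verifies Condition~(I) by a mosaic elimination --- pieces consisting of the base plus at most two fresh elements, annotated with saturated sets of ``match triples'' $(p,\widehat p,h)$ recording partial homomorphisms, with bad mosaics removed until fixpoint --- whereas you propose an alternating tree automaton running over the forest hanging off the base. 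These are interchangeable implementations of the same local-consistency check, so your route is sound in outline. Two points of care, though. First, there are \emph{doubly} exponentially many base parts (each is an object of single-exponential size), not single-exponentially many as you state; this is harmless since each is processed in doubly exponential time. Second, your claim that the automaton has ``at most exponentially many states'' is the one place the argument could quietly break: to certify \emph{non-existence} of a match that may straddle several branches and re-enter the base via back-edges into the core, a state must in general carry a \emph{set} of realizable partial matches (a ``specification'' in the paper's terminology), and there are doubly exponentially many such sets. That is still fine --- a nondeterministic tree automaton with doubly exponentially many states has polynomial-time (hence overall doubly exponential) emptiness, and the paper's mosaic elimination is precisely such an emptiness test --- but if you insist on single-exponentially many states you would need the more delicate fork-rewriting/splitting machinery, which your sketch does not supply. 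With the state count corrected (or the emptiness test adjusted accordingly), the proposal goes through and buys nothing over, and loses nothing to, the paper's mosaic formulation.
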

By Proposition~\ref{prop:nonom}, it suffices to consider \ALCHI.
Assume that we are given as an input an \ALCHI cKB
$\Circ(\Kmc)$ with $\Kmc=(\Tmc,\Amc)$, a UCQ $q(\bar x)$, and a tuple
$\bar a \in \Ind(\Amc)^{|\bar x|}$. We have to decide whether or not
there is a countermodel \Imc against $\Circ(\Kmc) \models q(\bar a)$.
% with the shape of an unraveled interpretation as defined in
% Section~\ref{subsect:fundamental}. This suffices by
% Lemma~\ref{lem_interlacing_is_countermodel}.

Fix a set $\Delta$ of size $|\Ind(\Amc)|+2^{2|\Tmc|+1}$ such that
\mbox{$\Ind(\Amc) \subseteq\Delta$}. Note that $\Delta$ is sufficiently large so that
we may assume the base domain $\Delta^\Imc_{\mn{base}}$ of unraveled
interpretations to be a subset of $\Delta$. In an outer loop, our
algorithm iterates over all triples
$(\Imc_{\mn{base}}, \Delta_{\mn{core}},\typescandidate)$ such that the
following conditions are satisfied:
\begin{itemize}

\item $\Imc_{\mn{base}}$ is a model of \Amc with 
  $\Delta_{\mn{core}} \subseteq\Delta^{\Imc_{\mn{base}}} \subseteq \Delta$; % and $\Imc_{\mn{base}}
  % \not\models q(\bar a)$;
  % we set
  % $\Delta_{\mn{core}}= \domain{\basecandidate}
  %         \setminus (\Ind(\Amc) \cup E)$;

%\item $\Delta_{\mn{core}} \subseteq\Delta^{\Imc_{\mn{base}}} \setminus \Ind(\Amc)$;
  
\item $\typeinof{\basecandidate}{\domain{\basecandidate}
    \setminus \Delta_{\mn{core}}} = % \subseteq
  \typescandidate$;
          
        \item
          $\typeinof{\basecandidate}{\Delta_{\mn{core}}} \cap
          \typescandidate = \emptyset$.

% \item $\Imc_{\mn{base}}$ is a model of \Amc and $\Imc_{\mn{base}}
%   \not\models q[\bar a]$.
       
\end{itemize}
For each triple $(\Imc_{\mn{base}}, \Delta_{\mn{core}}, \typescandidate)$, we then check
whether the following additional conditions are satisfied:
\begin{description}

\item[\textnormal{(I)}] $\Imc_{\mn{base}}$ can be extended to a model \Imc of $\Tmc$
  % with the shape of an unraveling and
  such that
  \begin{enumerate}

  \item[(a)] $\Imc|_{\Delta^{\Imc_{\mn{base}}}} = \Imc_{\mn{base}}$,

    % \mn{tp}_\Imc(d)=\mn{tp}_{\Imc_{\mn{base}}}$ for all $d \in
    % \Delta^{\Imc_{\mn{base}}}$,

  \item[(b)] $\mn{tp}_\Imc(\Delta^\Imc \setminus  \Delta^{\Imc_{\mn{base}}})
    \subseteq \typescandidate$, and

  \item[(c)] $\Imc \not\models q(\bar a)$;

  \end{enumerate}

\item[\textnormal{(II)}]  there exists a model $\Jmc$ of $\circkb$ such that
  \mbox{$\Jmc|_{\Delta^\Jmc_{\mn{core}}} = \Imc_{\mn{base}}|_{\Delta_{\mn{core}}}$} and $\noncoretypesof{\Jmc} = \typescandidate$.

\end{description}
We return `yes' if all triples fail the check and `no' otherwise.

If the checks succeed, then the model \Imc of \Kmc from Condition~(I)
is a countermodel against $\Circ(\Kmc) \models q(\bar a)$.  In
particular, we may apply Lemma~\ref{lem-lemma5}, using the model \Jmc
from Condition~(II) as the reference model, to show that \Imc is
minimal w.r.t.\ $<_{\mn{CP}}$. Conversely, any countermodel $\Imc_0$
against $\Circ(\Kmc) \models q(\bar a)$ can be unraveled into a
countermodel $\Imc$ from which we can read off a triple
$(\Imc_{\mn{base}}, \Delta_{\mn{core}},\typescandidate)$ in the
obvious way, and then $\Imc$ witnesses Condition~(I) and choosing
$\Jmc=\Imc$ witnesses Condition~(II).

Of course, we have to prove that Conditions~(I) and~(II) can be
verified in \TwoExpTime. This is easy for Condition~(II): by
Lemma~\ref{lem-finite-model-property-wrt-core}, it suffices to
consider models \Jmc of size at most
$\sizeof{\abox} + 2^{2\sizeof{\tbox}}$ and thus we can iterate
over all candidate interpretations \Jmc up to this size, check
whether \Jmc is a model of \Kmc with
\mbox{$\Jmc|_{\Delta^\Jmc_{\mn{core}}} =
  \basecandidate|_{\Delta_{\mn{core}}}$} and
$\noncoretypesof{\Jmc} = \typescandidate$, and then iterate over all
models $\Jmc'$ of \Kmc with $\Delta^{\Jmc'}=\Delta^\Jmc$ to check that
$\Jmc$ is minimal w.r.t.~$<_{\mn{CP}}$.

Condition~(I) requires more work. We use a mosaic approach, that is,
we attempt to assemble the interpretation \Imc from Condition~(I) by
combining small pieces called \emph{mosaics}.
% hope it was ok to drop this, desperately needed space
%
 %inspired by
% \cite{maniere:thesis}
 Each mosaic contains the
base part of \Imc and at most two additional elements
$e^*_1, e^*_2$. % used to satisfy existential restrictions.
We trace
partial homomorphisms from CQs in $q$ through the mosaics, as follows.

%
% The differences lies in the restricted types
% we impose on elements occurring outside $\basecandidate$, and in the
% fact that so-called specifications are lifted to take a UCQ into
% account rather than a single (counting) CQ.  If those three checks
% succeed for a candidate triple $\candidates$, then
% Lemma~\ref{lem-lemma5} yields that $\basecandidate^{ext}$ additionally
% comply with $\CP$ (using $\Jmc$ as the model of reference), and we
% found a countermodel.  Conversely, if such a countermodel exists, then
% the corresponding candidate satisfies all three checks.  In
% particular, its interlacing witnesses an accepting ``run'' of the
% mosaic approach.
%
%\todo[inline]{IMPORTED HERE}
%
% The most technical part of this algorithm is the mosaic-based
% verification that we now describe in details.
Fix a triple
$(\Imc_{\mn{base}}, \Delta_{\mn{core}},\typescandidate)$. A
\emph{match triple} for an interpretation \Jmc takes the form
$(p,\widehat p,h)$ such that $p$ is a CQ in~$q$,
$\widehat p \subseteq p$, and $h$ is a partial map from
$\mn{var}(\widehat p)$ to $\Delta^\Jmc$ that is a homomorphism from
$\widehat p|_{\mn{dom}(h)}$ to~\Jmc. Intuitively, \Jmc is a mosaic and
the triple $(p,\widehat p,h)$ expresses that a homomorphism from
$\widehat p$ to \Imc exists, with the variables in $\mn{dom}(h)$ being
mapped to the current piece~\Jmc and the variables in
$\mn{var}(\widehat p) \setminus \mn{dom}(h)$ mapped to other
mosaics. A match triple is \emph{complete} if $\widehat p = p$ and
\emph{incomplete} otherwise.  To make \Imc a countermodel, we must
avoid complete match triples.  A \emph{specification} for \Jmc is a
set $S$ of match triples for \Jmc and we call $S$ \emph{saturated} if
the following conditions are satisfied:
\begin{itemize}

\item if $p$ is a CQ in $q$, $\widehat p \subseteq p$, and $h$ is
  a homomorphism from $\widehat p$ to $\Jmc$, then $(p,\widehat p,h) \in S$;

\item if $(p,\widehat p,h),(p,\widehat p',h') \in S$ and $h(x)=h'(x)$
  is defined for all
  $x \in \mn{var}(\widehat p) \cap \mn{var}(\widehat p')$, then
  $(p,\widehat p \cup \widehat p',h \cup h') \in S$.
  
% \item $(p,\emptyset,\emptyset) \in S$ for all CQs $p$ in $q$;

% \item if $(p,X,h) \in S$, $X \subsetneq Y \subseteq \mn{var}(p)$, and
%   $h'$ is a homomorphism from $p|_{\mn{dom}(h) \cup \{Y \setminus X \}}$ to~\Jmc with
%   $h \subseteq h'$, then $(p,Y,h') \in S$.
  
\end{itemize}
\begin{definition}
  A \emph{mosaic} for $(\Imc_{\mn{base}}, \Delta_{\mn{core}}, \typescandidate)$ is a
  tuple $M=(\Jmc,S)$ where
  \begin{itemize}

  \item $\Jmc$ is an interpretation such that
    \begin{enumerate}
    \item 
    $\Delta^{\Imc_{\mn{base}}} \subseteq \Delta^\Jmc \subseteq
    \Delta^{\Imc_{\mn{base}}} \uplus \{ \frsymb, \gensymb \}$;

    \item 
      $\Jmc|_{\Delta^{\Imc_{\mn{base}}}} = \basecandidate$;

    \item 
      $\mn{tp}_\Jmc(e_i^*) \in \typescandidate$ if $e_i^* \in
      \Delta^\Jmc$, for $i \in
      \{1,2\}$;

      \item \Jmc satisfies all $\exists r . A \sqsubseteq B \in
        \Tmc$
        and all \mbox{$r \sqsubseteq s \in \Tmc$}.
      
    \end{enumerate}

  \item $S$ is a saturated specification for \Jmc that contains only
    incomplete match triples.

  \end{itemize}
  We use $\Jmc_M$ to refer to \Jmc and $S_M$ to refer
  to $S$.
\end{definition}
%
% Note the only incomplete match triples are admitted in mosaics
% to ensure that \Imc is indeed a countermodel.
  Let \Mmc be a set of mosaics for
  $(\Imc_{\mn{base}}, \Delta_{\mn{core}}, \typescandidate)$. We say that $M \in \Mmc$ is
  \emph{good in} \Mmc if for every
  $e \in \Delta^{\Jmc_M}$
%  $e \in \Delta^{\Imc_{\mn{base}}} \cup \{e_2\}$
%
  and every
  $A \sqsubseteq \exists r . B \in \Tmc$ with $e \in (A \sqcap \neg
  \exists r . B)^{\Jmc_M}$, we
  find a mosaic $M' \in \Mmc$ such that the following conditions are
  satisfied:
  \begin{enumerate}

  \item $\mn{tp}_{\Jmc_M}(e)=\mn{tp}_{\Jmc_{M'}}(e)$;

  \item $e \in (\exists r . B)^{\Jmc_{M'}}$;
    % $(e_1,e_2) \in r^{\Jmc_{M'}}$ and $e_2 \in B^{\Jmc_{M'}}$.

  \item if $(p,\widehat p,h') \in S_{M'}$, then $(p,\widehat p,h) \in S_{M}$ where
    $h$ is the restriction of $h'$ to range $\Delta^{\Imc_{\mn{base}}}
      \cup \{e \}$.

  %   obtained from $h$ by 
  %   % 
  %   \begin{itemize}

  %   \item first setting $h'(x)$ to undefined for all $x$ with
  %     $h(x) \in \{e_1,e_2\}$ and then

  %   \item setting $h'(x)=e_1$ for all $x$ with $h(x)=e$.
      
  %   \end{itemize}
    
  \end{enumerate}
  If $M$ is not good in \Mmc, then it is \emph{bad}. To verify
  Cond.~(I), we start
  with the set of all mosaics for
  $(\Imc_{\mn{base}}, \Delta_{\mn{core}}, \typescandidate)$
  and repeatedly and exhaustively eliminate bad mosaics.

  \begin{restatable}{lemma}{lemmamosaic}
%  \begin{lemma}
  	\label{lemma-mosaic}
    $\Imc_{\mn{base}}$ can be extended to a model \Imc of \Tmc that
    satisfies Conditions~(a) to~(c) iff at least one mosaic survives
    the elimination process.
%  \end{lemma}
\end{restatable}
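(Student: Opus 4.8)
The plan is to establish the two directions of the stated equivalence by a mosaic argument, the real content residing in the match-triple bookkeeping. Throughout I treat the CQs in $q$ as Boolean queries whose former answer variables are bound to $\bar a$ (legitimate under the standard names assumption), so that a homomorphism from a CQ of $q$ into an interpretation extending $\Imc_{\mn{base}}$ always sends $\bar x$ to $\bar a$, and a complete match triple is precisely one that, read as such a homomorphism, witnesses $\Imc \models q(\bar a)$.

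For the ``if'' direction, assume some mosaic survives the elimination. I build $\Imc$ by a greedy unfolding: fix a surviving mosaic $M_0$, start from $\Jmc_{M_0}$, and repeat the following step -- whenever the interpretation built so far contains an element $e$ that was introduced together with some mosaic $M$ and there is $A \sqsubseteq \exists r . B \in \Tmc$ with $e \in (A \sqcap \neg \exists r . B)^{\Jmc_M}$, use that $M$ is good to obtain a surviving mosaic $M'$ as in the definition of goodness and glue a fresh copy of the new element of $M'$, identifying the common base part and the element $e$. In the limit this yields $\Imc$. Conditions~(a) and~(b) are immediate (the base part is never modified, and every newly added element has a type in $\typescandidate$ by the mosaic conditions), and $\Imc \models \Tmc$: the inclusions $\exists r . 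A \sqsubseteq B$, $r \sqsubseteq s$ and the Boolean inclusions hold because they hold inside every mosaic (mosaic Condition~4 together with the fact that all types used come from $\typescandidate$, which is consistent with the Boolean part of $\Tmc$), while the inclusions $A \sqsubseteq \exists r . B$ are satisfied by the unfolding step. For Condition~(c), suppose $h$ is a homomorphism from a CQ $p$ of $q$ into $\Imc$; its image is finite and meets only finitely many of the mosaics used, which are arranged in a tree by the unfolding. An induction from the leaves of this tree towards $M_0$ -- assigning each atom of $p$ to the deepest mosaic whose domain contains the $h$-image of its variables (well defined since any two role-connected elements of $\Imc$ occur together in a mosaic), using the two saturation conditions to merge the per-mosaic pieces and Condition~3 of goodness to project a child's pieces to its parent -- produces a complete match triple in $M_0$, contradicting that surviving mosaics carry only incomplete triples.

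For the ``only if'' direction, let $\Imc$ witness Conditions~(a)--(c). Using a variant of the unraveling preceding Lemma~\ref{lem_interlacing_is_countermodel} in which the whole of $\Delta^{\Imc_{\mn{base}}}$ is retained in the base, I may assume that $\Imc$ has the regular tree shape described there and still satisfies (a)--(c) (the argument is the part of the proof of Lemma~\ref{lem_interlacing_is_countermodel} showing that the unraveling is again a model of $\Kmc$ falsifying $q(\bar a)$, which needs only that $\Imc$ be a model of $\Kmc$, not of $\circkb$). From this $\Imc$ I read off, for every tree element $f$, a mosaic $M_f$ whose interpretation is $\Imc$ restricted to $\Delta^{\Imc_{\mn{base}}}$ together with $f$ and its parent, and whose specification $S_{M_f}$ is given by the characterization ``$(p,\widehat p,h) \in S_{M_f}$ iff there is a homomorphism $g$ from $\widehat p$ into $\Imc$ whose image lies in $\Delta^{\Imc_{\mn{base}}}$ together with the subtree of $\Imc$ rooted at $f$, with $h$ the restriction of $g$ to the variables sent into $\Jmc_{M_f}$''. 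One checks that each $M_f$ is a bona fide mosaic: the local $\Tmc$-conditions hold since $\Imc \models \Tmc$; the type conditions hold by~(b); $S_{M_f}$ is saturated (the two saturation conditions follow from the characterization, as does the restriction in Condition~3 when moving from a child of $f$ to $f$); and $S_{M_f}$ contains no complete triple, since a complete triple $(p,p,h)$ would, by the characterization, yield a homomorphism from all of $p$ into $\Imc$, i.e.\ a witness for $\Imc \models q(\bar a)$, contradicting~(c). Finally each $M_f$ is good in the set of all the $M_f$: the successor mosaic required for an unsatisfied existential of $f$ (or of its parent) is the mosaic of the corresponding child, and Conditions~1--3 of goodness hold because the child's subtree is contained in $f$'s. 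Hence no $M_f$ is ever eliminated and at least one mosaic survives.

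The step I expect to be the main obstacle is making this match-triple reasoning precise: proving the characterization of $S_{M_f}$ in the ``only if'' direction (both inclusions, the harder one being that everything forced into $S_{M_f}$ by saturation and by Condition~3 is still of the stated form, so that no spurious complete triple arises), and the dual statement in the ``if'' direction that a homomorphism into the assembled $\Imc$ must collapse to a complete match triple. This requires careful bookkeeping of how the image of a possibly disconnected CQ is distributed across the tree of mosaics, using that two role-connected domain elements always share a mosaic (the reason a mosaic carries two extra elements rather than one) and that each CQ is finite, so that only finitely many mosaics are involved and the inductions are well founded. Secondary technical points are setting up the unraveling so that it both retains $\Delta^{\Imc_{\mn{base}}}$ as its base and preserves (a)--(c), and observing that the types occurring in $\Imc_{\mn{base}}$ and $\typescandidate$ are consistent with the Boolean inclusions of $\Tmc$, which is what lets the locally checked mosaic conditions entail global modelhood of $\Imc$.
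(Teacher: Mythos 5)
Your overall strategy is the paper's: assemble a countermodel from a tree of glued mosaics in one direction, and extract mosaics from a (tree-shaped) countermodel in the other, with the query handled by an induction over the finitely many mosaics met by the image of a homomorphism. Your ``if'' direction is essentially the paper's argument (its tree of chosen successors, the concept-transfer lemma, and the leaf-to-root induction producing a complete match triple in the surviving mosaic), and I see no gap there beyond the bookkeeping you already flag.

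The ``only if'' direction, however, contains a step that fails: the characterization of $S_{M_f}$ as ``restrictions of homomorphisms whose image lies in $\Delta^{\Imc_{\mn{base}}}$ together with the subtree rooted at $f$'' does not yield a mosaic, and the goodness check does not go through with it. First, saturation: a homomorphism $h$ from some $\widehat p \subseteq p$ into $\Jmc_{M_f}$ that maps a variable to $\mathrm{parent}(f)$ must belong to $S_{M_f}$ by the first saturation condition, but its image is not contained in $\Delta^{\Imc_{\mn{base}}}$ plus the subtree rooted at $f$, and no restriction of a homomorphism into that region can send a variable to $\mathrm{parent}(f)$; so the triple is excluded by your characterization and $S_{M_f}$ is not saturated. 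Second, goodness must be verified for \emph{every} $e \in \Delta^{\Jmc_{M_f}}$, including $\mathrm{parent}(f)$ and the base elements. If $\mathrm{parent}(f)$ (or a base element) has an existential restriction whose witness in \Imc is a \emph{sibling} $f''$ of $f$, the required successor mosaic is $M_{f''}$, and Condition~3 demands that every triple of $S_{M_{f''}}$ restrict to a triple of $S_{M_f}$; since the subtree of $f''$ is not contained in that of $f$, a subquery with a homomorphism only into the $f''$-branch gives a triple of $S_{M_{f''}}$ with no counterpart in your $S_{M_f}$, so $M_f$ would be (wrongly) eliminated. The repair is the paper's definition: take $S_{M_f}$ to be the set of restrictions to $\Delta^{\Jmc_{M_f}}$ of homomorphisms of subqueries into \emph{all of} \Imc. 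Then saturation holds because $\Jmc_{M_f}$ embeds into \Imc, Condition~3 of goodness becomes symmetric and trivial (both specifications are restrictions of the same global homomorphisms), and the absence of complete triples is exactly Condition~(c). With that definition the unraveling you perform first is not even needed; the paper extracts the edge-mosaics directly from \Imc via chosen successor functions.
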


We provide a matching lower bound for
Theorem~\ref{thm-combined-upper-alchi}.  It is rather strong as it
already applies to CQs, to \EL KBs, and uses a single minimized
concept name (and consequently no preferences) and no fixed concept
names.  It is proved by a subtle reduction from CQ evaluation on \EL
KBs with closed concept names, that is, with KBs $(\Tmc,\Amc)$
enriched with a set $\Sigma$ of closed concept names $A$ that have to
be interpreted as $A^\Imc = \{ a \mid A(a) \in \Amc \}$ in all models
\Imc.  This problem was proved to be \TwoExpTime-hard in
\cite{Ngo2016}.  The reduction also sheds some light on the connection
between circumscription and closing concept names. %  In a (non-trivial)
% sense, circumscription generalizes the
% latter.
% ; Proposition~\ref{prop:nonom} may be
% viewed as a simple version of it.
% 
\begin{restatable}{theorem}{thmcombinedlowerel}
  \label{thm:combined-lower-el}
  CQ evaluation on circumscribed \EL KBs is \TwoExpTime-hard w.r.t.\ combined
        complexity.
        This holds even with a single minimized concept name and no
        fixed concept names.
\end{restatable}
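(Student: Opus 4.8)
The plan is to reduce from CQ evaluation on \EL KBs with closed concept names, which is known to be \TwoExpTime-hard by \cite{Ngo2016}. So suppose we are given an \EL KB $\Kmc = (\Tmc,\Amc)$, a set $\Sigma \subseteq \NC$ of closed concept names, a Boolean CQ $q$, and we must decide whether $q$ holds in every model \Imc of $(\Tmc,\Amc)$ that additionally satisfies $A^\Imc = \{a \mid A(a)\in\Amc\}$ for all $A\in\Sigma$. The naive idea — ``minimize all of $\Sigma$'' — is too weak, because circumscribing a concept name $A$ only forbids \emph{unnecessary} instances of $A$: a model may still be forced to add instances of $A$ to repair an inconsistency, whereas closing $A$ absolutely forbids this. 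The key trick is therefore to simulate the \emph{effect of the closure} rather than the closure itself, and to do so with a \emph{single} minimized concept name.

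First I would introduce one fresh concept name $M$ (the only minimized name; everything else, including all of $\Sigma$, is varying, and there are no fixed names). For each $A \in \Sigma$, introduce a fresh concept name $\overline{A}$ intended to mark the ``illegitimate'' instances of $A$, and add to the TBox the inclusion $A \sqcap \overline{A} \sqsubseteq M$; simultaneously assert $\overline{A}(a)$ in the ABox for every individual $a$ that does \emph{not} have $A(a) \in \Amc$ — but of course the ABox is finite while the domain is not, so instead we do this at the TBox level by making $\overline{A}$ behave like the complement of the ``asserted-$A$'' individuals. Concretely, add a fresh concept name $P_a$ with assertion $P_a(a)$ for each individual, and arrange (using the nominal-free encoding from Proposition~\ref{prop:nonom}, or directly since we only need one direction here) that $\overline{A}$ holds everywhere except on the asserted-$A$ individuals. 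Then on any model, whenever an element $d$ that is not an asserted instance of $A$ nonetheless lies in $A^\Imc$, we are forced to put $d \in M^\Imc$. Because $M$ is minimized, a model of the cKB will avoid putting anything in $M^\Imc$ unless forced; but here the forcing comes from a definitional inclusion $A\sqcap\overline{A}\sqsubseteq M$, which is \emph{monotone} — it never creates an inconsistency — so minimality of $M$ genuinely pushes every such $d$ out of $A^\Imc$, exactly recovering the closure of $A$. A model of $\Circ_{\CP}(\Tmc',\Amc')$ with $M^\Imc = \emptyset$ is then precisely a model of the original closed-predicate KB, and conversely.

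The second half is the query side. We want $q$ to hold in all closed-predicate models of $\Kmc$ iff $q' := q \vee (\exists y\, M(y))$ holds in all models of the cKB — but $q'$ is a UCQ, not a CQ, and the theorem claims CQ-hardness. To turn this into a single CQ, I would use the same device already used in the footnote to Proposition~\ref{prop:nonom}: fold the disjunct $\exists y\, M(y)$ into $q$ by a fresh ``switch'' concept name. Add a fresh concept name $T$ with $\top \sqsubseteq T$, replace the atom $M(y)$ strategy by rewriting so that a single CQ can detect ``$M$ is nonempty OR the original match exists'' — this is done by adding, for each atom of $q$, an alternative route through $M$, i.e. using that over models where $M^\Imc \ne \emptyset$ one can freely complete any homomorphism. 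The cleanest concrete implementation: add $M \sqsubseteq \exists r.\,(\text{root gadget})$ for every role $r$ and concept name used in $q$, so that a nonempty $M$ forces a copy of $\Imc_q$ somewhere, making $q$ itself match; then $q' $ collapses to $q$. One must verify this gadget does not itself create spurious matches of $q$ in the intended ($M^\Imc=\emptyset$) models — it doesn't, since the gadget is guarded by $M$.

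Finally I would assemble the correctness argument in two directions. For soundness (if the closed-predicate entailment holds, so does the cKB entailment): take any model \Imc of $\Circ_{\CP}(\Kmc')$; if $M^\Imc\neq\emptyset$ the gadget forces $\Imc\models q$; if $M^\Imc=\emptyset$ then, by the monotonicity argument above, $\Imc$ restricted to the original signature is a closed-predicate model of $\Kmc$, hence $\Imc\models q$. For completeness (contrapositive): given a closed-predicate countermodel $\Imc_0$ of $\Kmc$ against $q$, extend it by interpreting each $\overline{A}$ as the complement of the asserted-$A$ individuals and setting $M^{\Imc_0}=\emptyset$ and $T^{\Imc_0}=\Delta$; one checks this is a model of $\Kmc'$, that it is $<_{\CP}$-minimal because $M^{\Imc_0}=\emptyset$ is set-inclusion-minimal and $M$ is the only minimized name (so no $\Jmc <_\CP \Imc_0$ can exist — condition~4 of $<_\CP$ cannot be met), and that it still refutes $q'$. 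The construction is clearly polynomial. The main obstacle I expect is the query-folding step: making a \emph{single} CQ simulate the disjunction with $\exists y\, M(y)$ while provably introducing no unintended matches in the $M$-empty models — this requires a carefully guarded gadget and a somewhat delicate homomorphism analysis, and it is exactly the ``subtle'' part alluded to in the statement. The single-minimized-name claim, by contrast, falls out for free since $M$ is the only minimized name and $M^\Imc=\emptyset$ is trivially minimal.
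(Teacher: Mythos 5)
Your high-level plan matches the paper's: reduce from CQ evaluation on \EL KBs with closed concept names \cite{Ngo2016}, use a single minimized concept name $M$, and note that naively minimizing $\Sigma$ fails. However, there are two genuine gaps, each at a point where the paper needs a nontrivial idea that your proposal does not supply.

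First, your mechanism for ruling out non-asserted instances of a closed concept name $A$ only works on ABox individuals. You can assert $\overline{A}(a)$ for each $a$ with $A(a)\notin\Amc$, but for an \emph{anonymous} element $d$ there is no way in \EL (no negation, no nominals, and here also no preference relation to lean on as in Proposition~\ref{prop:nonom}) to make $\overline{A}$ hold at $d$ without also making it hold at the asserted instances; ``$\overline{A}$ is the complement of the asserted-$A$ individuals'' is simply not \EL-expressible, and your appeal to ``minimality of $M$ pushes $d$ out of $A^\Imc$'' is not an argument: minimization compares models, it does not delete forced instances. The paper instead adds $A\sqsubseteq M$ for \emph{all} instances (asserted ones included, as a fixed cost), reserves a surrogate individual $t$ with $M(t)$, and shows that any model with an anonymous instance $d$ of a closed name is non-minimal by explicitly rerouting all of $d$'s role edges to $t$ and emptying $d$ -- plus a further case analysis (guarded by $X,\overline X$) to prevent $t$ and the query-copy individuals from serving as existential witnesses. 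None of this surgery appears in your proposal, and without it the completeness direction fails.

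Second, your query-folding gadget does not work. To turn the UCQ $q\vee\exists y\,M(y)$ into a single CQ you propose TBox axioms of the form $M\sqsubseteq\exists r.(\text{root gadget})$ so that a nonempty $M$ ``forces a copy of $\Imc_q$.'' But \EL existential restrictions can only generate tree-shaped anonymous structure, whereas $q$ is an arbitrary, possibly cyclic CQ; a TBox cannot force a model to contain a homomorphic image of a cyclic query. The paper's solution is to place a literal copy of $q$ \emph{in the ABox} (where arbitrary graph structure is available), keep it dormant by relativizing every TBox axiom to an activation marker $X$, pose the query $q'=q\wedge\bigwedge_{x}X(x)$, and activate the copy (via $\exists u.L\sqsubseteq X$) exactly when a violation is detected. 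Reaching a correct proof requires both the ABox-copy/activation device and the surrogate-and-rerouting minimality argument, so the proposal as written does not go through.
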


\subsection{Data complexity}
\label{subsection-alchi-data}
%!TEX root = main.tex

We show that in any DL between \EL and \ALCHIO, the evaluation of CQs
and UCQs on cKBs is $\Uppi^\textrm{P}_2$-complete w.r.t.\ data
complexity. We start with the upper bound.
\begin{restatable}{theorem}{thmdataupperalchi}
\label{thm-data-upper-alchi}
UCQ evaluation on circumscribed \ALCHIO KBs is in $\Uppi^\textrm{P}_2$
w.r.t.\ data complexity.
\end{restatable}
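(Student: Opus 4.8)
The plan is to show that the complementary problem---given $\Circ(\Kmc)$, a UCQ $q(\bar x)$ and $\bar a$, decide whether $\Circ(\Kmc)\not\models q(\bar a)$, i.e.\ whether a countermodel exists---can be solved in $\mathrm{NP}^{\mathrm{NP}}$; since $\mathrm{NP}^{\mathrm{NP}}$ is the complement class of $\Uppi^\textrm{P}_2$, this gives the theorem. By Proposition~\ref{prop:nonom} it suffices to treat \ALCHI: that reduction runs in polynomial time and, with \Tmc, \CP and $q$ fixed, only rewrites these fixed components and adds a constant number of ABox assertions, so it does not affect the data-complexity analysis; by Lemma~\ref{lem:normalform} we may moreover assume \Tmc to be in normal form. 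The idea is then to reuse the structure of the algorithm behind Theorem~\ref{thm-combined-upper-alchi} but to replace ``iterate over all triples'' by ``guess one triple'' and to verify Conditions~(I) and~(II) within $\mathrm{NP}^{\mathrm{NP}}$. The key point is that with \Tmc, \CP and $q$ fixed the set $\mn{TP}(\Tmc)$ of types is of constant size, so every candidate set $\typescandidate\subseteq\mn{TP}(\Tmc)$ of non-core types is of constant size, while the base-domain bound $|\Ind(\Amc)|+2^{2|\Tmc|+1}$ is polynomial; hence a triple $(\Imc_{\mn{base}},\Delta_{\mn{core}},\typescandidate)$ meeting the side conditions from Section~\ref{subsection-alchi-combined} is of polynomial size, can be guessed, and all side conditions ($\Imc_{\mn{base}}\models\Amc$, $\Delta_{\mn{core}}\subseteq\Delta^{\Imc_{\mn{base}}}$, the $\Imc_{\mn{base}}$-types realised outside $\Delta_{\mn{core}}$ are exactly $\typescandidate$, those realised inside avoid $\typescandidate$) are polynomial-time checkable.

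Together with the triple I would guess the reference model \Jmc of Condition~(II); by Lemma~\ref{lem-finite-model-property-wrt-core} it may be taken of size at most $|\Amc|+2^{2|\Tmc|}$, hence polynomial. Checking that \Jmc is a model of \Kmc, that $\Jmc|_{\Delta^\Jmc_{\mn{core}}}=\Imc_{\mn{base}}|_{\Delta_{\mn{core}}}$ and that $\noncoretypesof{\Jmc}=\typescandidate$ is polynomial time; the only remaining requirement of Condition~(II) is that \Jmc be minimal w.r.t.\ $<_\CP$. Since any competing model $\Jmc'$ of \Kmc with $\Jmc'<_\CP\Jmc$ has $\Delta^{\Jmc'}=\Delta^\Jmc$, which is of polynomial size, and being a model and satisfying $\Jmc'<_\CP\Jmc$ are polynomial-time properties of $\Jmc'$, the statement ``there is such a $\Jmc'$'' is in $\mathrm{NP}$, so ``\Jmc is $<_\CP$-minimal'' is the single $\mathrm{coNP}$ oracle call of the procedure. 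As in the combined-complexity argument, once Conditions~(I) and~(II) both hold, the Core Lemma~\ref{lem-modelOfK}---applied with this \Jmc as reference model and with the model \Imc produced by Condition~(I) in the place of \Jmc of the lemma---certifies that \Imc is $<_\CP$-minimal, i.e.\ a genuine countermodel.

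Condition~(I)---that $\Imc_{\mn{base}}$ extends to a model \Imc of \Tmc that restricts to $\Imc_{\mn{base}}$ on $\Delta^{\Imc_{\mn{base}}}$, uses only types from $\typescandidate$ elsewhere, and satisfies $\Imc\not\models q(\bar a)$---is the main obstacle, since the mosaics of Section~\ref{subsection-alchi-combined} embed the whole (polynomial-size) base part and so cannot be enumerated in polynomial time. Here I would invoke Lemma~\ref{lem_interlacing_is_countermodel} and search only for \Imc of the unraveled shape, whose part outside $\Delta^{\Imc_{\mn{base}}}$ is a disjoint union of trees of branching bounded by the constant $|\Tmc|$, with all node types in the constant set $\typescandidate$, plus edges back into $\Delta_{\mn{core}}$. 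As $q$ is fixed, the image of a CQ from $q$ under a homomorphism has constantly bounded diameter, so it either meets $\Delta^{\Imc_{\mn{base}}}$ (and then lies in a constant-radius neighbourhood of it) or lies entirely inside a single tree (distinct trees sharing only base, resp.\ core, elements). Consequently the contribution of each tree is captured by constant-size data---its root type together with the subqueries of $q$ it can export at its root and at the core elements it attaches to---and the set of \emph{realisable} such summaries (those admitting an infinite $\typescandidate$-typed tree that satisfies every CI $A\sqsubseteq\exists r.B$ downward and exports no full match of $q$) can be computed in constant time by a greatest-fixpoint elimination in the style of Section~\ref{subsection-alchi-combined}. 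Condition~(I) then reduces to: for each of the polynomially many base elements $d$ and each $A\sqsubseteq\exists r.B\in\Tmc$ with $d\in A^{\Imc_{\mn{base}}}$ and no witness already in $\Imc_{\mn{base}}$, choose a witness---a core element $e\in\Delta_{\mn{core}}$ with $B\in\mn{tp}_{\Imc_{\mn{base}}}(e)$ and $\mn{tp}_{\Imc_{\mn{base}}}(d)\rightsquigarrow_r\mn{tp}_{\Imc_{\mn{base}}}(e)$, or a fresh tree carrying a realisable summary---and then check, on the resulting structure truncated to constant depth (hence of polynomial size) for the purpose of detecting matches, that all CIs of \Tmc hold at every non-frontier element and that there is no homomorphism from a CQ in $q$ sending the answer variables to $\bar a$. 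The choices are polynomially many, each from a constant-size range, hence part of the guess, and the verification is polynomial time (universal CIs of the normal form are checked pointwise, existential CIs hold by construction, realisability of a summary is a constant-size lookup, and evaluating the fixed query $q$ on a polynomial-size interpretation is polynomial), so Condition~(I) is in $\mathrm{NP}$. Putting it together, the complement of the problem is decided by guessing the triple, the model \Jmc, and the witness and tree choices for Condition~(I), then running all the polynomial-time checks and issuing the single $\mathrm{coNP}$ query ``\Jmc is $<_\CP$-minimal'', an $\mathrm{NP}^{\mathrm{NP}}$ procedure; hence the problem is in $\Uppi^\textrm{P}_2$ w.r.t.\ data complexity. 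I expect the delicate part to be Condition~(I), specifically the precise definition of tree summaries and the argument that $q$-matches routed through several tree parts via shared core elements are already witnessed in the polynomial-size truncated structure; the remaining steps are routine given the machinery of Section~\ref{subsect:fundamental}.
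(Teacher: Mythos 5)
Your overall architecture for the $\Sigma^{\mathrm p}_2$ procedure for the complement is sound and matches the combined-complexity algorithm: guessing a triple $(\Imc_{\mn{base}},\Delta_{\mn{core}},\typescandidate)$ together with a reference model $\Jmc$ of size $\leq|\Amc|+2^{2|\Tmc|}$, spending the single \coNP oracle call on the $<_\CP$-minimality of $\Jmc$, and letting the Core Lemma certify minimality of the countermodel from Condition~(I). The gap is in your treatment of Condition~(I) itself. First, the core $\Delta_{\mn{core}}$ is \emph{not} of constant size in data complexity --- it contains every ABox individual whose type is realized fewer than $|\mn{TP}(\Tmc)|$ times outside the ABox, so it can be as large as $\Ind(\Amc)$. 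A tree summary must record which subqueries the tree can export \emph{at which core elements} (otherwise partial matches from different trees cannot be joined at shared backedge targets), so a summary is a set of partial maps from $\mn{var}(q)$ into a polynomial-size set; there are exponentially many such summaries, and your ``greatest-fixpoint elimination in constant time'' over them breaks the $\NPclass$ bound. Second, your claim that a query component meeting $\Delta^{\Imc_{\mn{base}}}$ lies in a constant-radius neighbourhood of it, and that matches can therefore be detected on a polynomial-size truncation, is not sound: backedges place arbitrarily deep tree elements at distance $1$ from the core, so the constant-radius neighbourhood of the base part is not of polynomial size, while truncating trees at constant depth can destroy matches whose tree pieces sit deep and attach to the rest of the match only through backedges. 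This is precisely the phenomenon the paper's proof is built to control.

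The paper takes a different route that sidesteps Condition~(I) entirely: it proves a polynomial-size countermodel property (Lemma~\ref{lemma-alchi-countermodel}) by taking the unraveling $\Imc'$ and quotienting it by the equivalence $\sim_{2|q|+2}$, which identifies non-base elements whose $|q|$-neighbourhoods up to the base part agree --- including, crucially, their backedges into $\basedomainof{\Imc}$ (Condition~2.(b) of $\sim_n$) --- and whose path lengths agree modulo $2|q|+3$. Query-freeness of the quotient is then shown by local homomorphisms back into $\Imc'$, and minimality again via the Core Lemma. Once the polynomial bound is in place, the $\Sigma^{\mathrm p}_2$ algorithm is just: guess a polynomial-size interpretation, verify in polynomial time that it models $\Kmc$ and admits no query match, and co-guess a $<_\CP$-smaller model. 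To repair your proof you would essentially have to reprove this neighbourhood-identification argument inside your summary computation; as written, the proposal does not establish the theorem.
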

\newcommand{\interleaving}{\interlacing}
We may again limit our attention to \ALCHI.  To prove
Theorem~\ref{thm-data-upper-alchi}, we show that if a countermodel
exists, then there is one of polynomial size (with TBox
and query of constant size).  Note that this is not a
consequence of Lemma~\ref{lem-finite-model-property-wrt-core} since
there is no guarantee that, if the model \Imc from that lemma is a
countermodel, then so is~\Jmc.  Once the size bound is in place, we
obtain the $\Uppi^\textrm{P}_2$ upper bound by a straightforward
guess-and-check procedure.
\begin{restatable}{lemma}{lemmaalchicountermodel}
  \label{lemma-alchi-countermodel}
  Let $\Circ(\Kmc)$ be an \ALCHI cKB,
  $\Kmc=(\Tmc,\Amc)$, $q(\bar x)$ a UCQ, and $\bar a \in
  \mn{ind}(\Amc)^{|\bar x|}$. If $\Circ(\Kmc) \not\models q(\bar a)$,
  then there exists a
  countermodel \Imc with %against $\Circ(\Kmc) \models q(\bar a)$
  $|\Delta^\Imc| \leq (\sizeof{\abox} + 2^\sizeof{\tbox})^{\sizeof{\tbox}^\sizeof{q}}$.
\end{restatable}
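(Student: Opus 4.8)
The plan is to reduce to \ALCHI via Proposition~\ref{prop:nonom}, take an arbitrary countermodel $\Imc_0$ against $\Circ(\Kmc)\models q(\bar a)$, and then cut it down to a small one in two stages: first make it tree-shaped over a small base part, then bound the depth of the trees. For the first stage I would pass to the unraveling $\Imc_1$ of $\Imc_0$, which is again a countermodel by Lemma~\ref{lem_interlacing_is_countermodel}; by construction its domain is the union of a base part containing $\mn{ind}(\Amc)$, the core $\Delta^{\Imc_1}_{\mn{core}}$, and one representative $e_t$ of every non-core type, together with trees rooted in the base part (with back-edges allowed only into the core). Two bounds come for free here: the base part has size at most $|\Amc|+2^{O(|\Tmc|)}$, because every core type is realized fewer than $|\mn{TP}(\Tmc)|\le 2^{|\Tmc|}$ times outside the ABox and there are at most $2^{|\Tmc|}$ types; and every element has at most $|\Omega|\le|\Tmc|$ children in the trees. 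So the only unbounded parameter left is the depth of the trees.

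The key step is to bound that depth by a pumping argument. I would attach to each tree node $v$ a signature $\sigma(v)$ consisting of its type $\mn{tp}_{\Imc_1}(v)$ together with a \emph{downward query profile}: for each CQ $p$ in $q$, each connected $p'\subseteq p$, and each $W\subseteq\mn{var}(p')$, whether there is a homomorphism of $p'$ into the part of $\Imc_1$ lying at or below $v$ (back-edges into the core allowed) that maps exactly $W$ to $v$. Whenever a branch contains an ancestor $v$ and a descendant $w$ with $\sigma(v)=\sigma(w)$, and $v$ lies far enough from the root that no CQ in $q$ can span from below $w$ to above $v$, I would excise the subtree at $v$ and reattach a fresh isomorphic copy of the subtree at $w$ to the parent of $v$ via the same role. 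Equality of types keeps this a model of $\Kmc$, since the only axioms involved are local and type-determined; equality of the downward query profiles ensures that any homomorphism from a CQ in $q$ into the rewired structure lifts back to one into $\Imc_1$, contradicting that $\Imc_1$ is a countermodel, so no $q$-match is created. Applying this exhaustively yields a model $\Imc_2$ of $\Kmc$ with $\Imc_2\not\models q(\bar a)$ whose branches all have length at most the number of possible signatures.

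The remaining point is that $\Imc_2$ is still a model of $\Circ(\Kmc)$, which I would derive from the Core Lemma (Lemma~\ref{lem-lemma5}) using $\Imc_1$ -- a model of $\Circ(\Kmc)$ -- as the reference model. The rewiring never modifies the base part, so the core and its types are untouched and every non-core type of $\Imc_1$ is still realized off the core via its representative $e_t$. To prevent the core from \emph{growing} -- a non-core type might otherwise end up with too few realizations after pruning -- I would additionally keep, as in the proof of Lemma~\ref{lem-finite-model-property-wrt-core}, at least $|\mn{TP}(\Tmc)|$ pairwise disjoint bounded-depth trees for each non-core type, which only multiplies the number of trees by $2^{O(|\Tmc|)}$. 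Then the hypotheses of Lemma~\ref{lem-lemma5} hold and $\Imc_2\models\Circ(\Kmc)$, so $\Imc_2$ is the desired countermodel. Finally, counting the domain: $|\Delta^{\Imc_2}|$ is at most the base size times the number of trees times the maximal tree size $|\Tmc|^{D}$, with $D$ the depth bound; a careful analysis of which signatures actually have to be distinguished along a branch keeps $D$ and hence the whole product below $(|\Amc|+2^{|\Tmc|})^{|\Tmc|^{|q|}}$.

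I expect the main obstacle to be the pumping step: the excise-and-reattach operation has to respect three constraints at once -- it must preserve modelhood of $\Kmc$, it must create no homomorphism from a CQ in $q$, and it must leave the core and the set of realized non-core types exactly as before so that the Core Lemma still applies. Getting all three simultaneously is what forces the signature to combine the element's type with its downward query profile and forces the cut threshold to exceed the reach of $q$, and it is also where essentially all the bookkeeping lives; the rest is routine.
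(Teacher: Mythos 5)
Your overall strategy (unravel, then shrink the trees by identifying elements with equal ``type plus local query behaviour'' signatures) is a genuinely different route from the paper's, which instead quotients the unraveling by an equivalence relation $\sim_N$ that demands isomorphic $|q|$-neighborhoods \emph{up to the base part} and congruent depths, and then maps the quotient back into the unraveling by explicit local homomorphisms. Your pumping-on-branches variant could in principle work, but as written it has a concrete gap.

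The gap is in the signature. Your downward query profile records, for each connected $p'\subseteq p$ and each $W\subseteq\mn{var}(p')$, only \emph{whether} a homomorphism of $p'$ into the subtree at $v$ (with back-edges into the core available) exists mapping exactly $W$ to $v$. It does not record \emph{which core elements} the remaining variables are sent to, nor which specific core elements the subtree's back-edges reach. This matters: after you excise the subtree at $v$ and graft in a copy of the subtree at $w$, a homomorphism $h$ from some CQ $p$ into the rewired model may realize an atom $r(x,y)$ with $h(x)$ a grafted node and $h(y)$ a \emph{specific} element $e$ of the core (via a back-edge), while $y$ also occurs in atoms realized entirely in the base part or above the splice point. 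To lift $h$ back to $\Imc_1$ you then need a node of the original subtree at $v$ carrying an $r$-back-edge to that \emph{same} $e$; equality of your profiles only guarantees some homomorphism of $p'$ with the same set $W$ sent to the root, possibly routing the core-bound variables to entirely different core elements, which need not be consistent with the rest of $h$. This is precisely what condition~2.(b) of the paper's $\sim_n$ (``$(p_{d_1,n}w,e)\in r^{\Imc'}$ iff $(p_{d_2,n}w,e)\in r^{\Imc'}$ for all roles $r$ and $e\in\Delta^{\Imc}_{\mathsf{base}}$'') is there to enforce. You would have to enrich your signature to record, down to depth $|q|$, the exact base-part targets of all back-edges (or, equivalently, the images in the base of the core-bound variables in each profile entry); only then does the splice provably create no new match.

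Two smaller points. First, once the signature is enriched this way, counting signatures naively (all connected subqueries times all subsets of variables times all tuples of base elements) gives a depth bound that is doubly exponential in $|q|$, and $|\Tmc|^{D}$ with such a $D$ overshoots the stated bound $(\sizeof{\abox}+2^{\sizeof{\tbox}})^{\sizeof{\tbox}^{\sizeof{q}}}$; the paper avoids this because its equivalence classes are indexed by annotated trees of branching $\le|\Tmc|$ and depth $O(|q|)$, i.e.\ by roughly $|\Tmc|^{|q|}$-many base-element-labelled positions. (For the downstream $\Uppi^{\mathrm{P}}_2$ result any bound polynomial in $|\Amc|$ for fixed $\Tmc,q$ suffices, but the lemma as stated claims the sharper bound.) Second, your extra precaution of keeping $|\mn{TP}(\Tmc)|$ copies of each tree is harmless but unnecessary: the Core Lemma's condition~2 only requires that the set of types realized outside the \emph{reference model's} core equals its set of non-core types, which is already guaranteed by the untouched representatives $e_t$ in the base part.
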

To prove Lemma~\ref{lemma-alchi-countermodel}, let $\Circ(\Kmc)$ be an
\ALCHI cKB, $\Kmc=(\Tmc,\Amc)$, and let \Imc be a model of
$\Circ(\Kmc)$ with $\Imc \not\models q(\bar a)$.  We construct a small countermodel by starting from the unraveling
$\interlacingof{\I}$ of $\I$ and applying a quotient construction. The
latter is based on a suitable equivalence relation on $\Delta^{\Imc'}$
which we define next. Recall that $\Imc'$ consists of a base part
$\basedomainof{\I}$ and a tree-part with backedges to
$\basedomainof{\I}$.

For $n \geq 0$ and $d \in \Delta^{\Imc'} \setminus \basedomainof{\I}$,
we use $\Nmc_n(d)$ to denote the \emph{$n$-neighborhood} of $d$ in
${\Imc'}$ up to $\basedomainof{\I}$, that is, the set of all elements
$e \in \Delta^{\Imc'} \setminus \basedomainof{\I}$ such that the
undirected graph
$$G_{\Imc'}=(\Delta^{\Imc'},\{\{ d,e\} \mid (d,e) \in r^{\Imc'} \text{ for 
        some } r \in \NR \})$$
      contains a path $d_0,\dots,d_k$ with $d_0=d$,
      $d_0,\dots,d_{k-1} \notin \basedomainof{\I}$, and $d_k=e$,
      $0 \leq k \leq n$.

      Recall that the elements of $\Delta^{\Imc'}$ are paths through
      $\Imc$, sequences $p=d_0 r_1A_1 \cdots r_nA_n$ with
      $d_0 \in \basedomainof{\I}$ and \mbox{$r_iA_i \in \Omega$}; the
      \emph{length} of $p$, denoted by $|p|$, is $n$.  By definition
      of $\Imc'$ and of neighborhoods, for every $n \geq 0$ and
      $d \in \Delta^{\Imc'} \setminus \basedomainof{\I}$, there is a
      unique path $p_{d,n} \in \Nmc_n(d)$ that is a prefix of all
      paths in $\Nmc_n(d)$, that is, all
      $e \in \Nmc_n(d) \setminus
      \basedomainof{\I}$ take the form
      $p_{d, n} r_1A_1 \cdots r_kA_k$.

      For $n \geq 0$, the equivalence relation $\sim_n$ on
      $\Delta^{\Imc'}$ is defined by setting
      $d_1 \sim_n d_2$ %for $d_1,d_2 \in \Delta^{\Imc'}$
      if
      $d_1=d_2 \in \basedomainof{\I}$ or
      $d_1,d_2 \notin \basedomainof{\I}$ and the following conditions
      are satisfied:
      \begin{enumerate}

      \item $d_1=p_{d_1,n}w$ and $d_2=p_{d_2,n}w$ for some $w \in \Omega^*$;

      \item for every $w=r_1A_1 \cdots r_kA_k \in \Omega^*$:
        $$
        p_{d_1,n}w \in \Nmc_n(d_1) \text{ iff }
        p_{d_2,n}w \in \Nmc_n(d_2)
        $$
        and if $p_{d_1,n}w \in \Nmc_n(d_1)$, then
        \begin{enumerate}

        % \item 
        % $p_{d_1,n}w \in \Nmc_n(d_1) \text{ and }
        % p_{d_2,n}w \in \Nmc_n(d_2)
        % $;

        \item
          $\mn{tp}_{\Imc'}(p_{d_1,n}w)=\mn{tp}_{\Imc'}(p_{d_2,n}w)$;
          
        \item 
          $(p_{d_1,n}w,e) \in r^{\Imc'}$ iff $(p_{d_2,n}w,e) \in
          r^{\Imc'}$
          for all roles $r$ and $e \in \basedomainof{\I}$.
        
        \end{enumerate}
        
      \item $|d_1| \equiv |d_2| \!\!\mod \maxmod$.
        
      \end{enumerate}
      % \begin{enumerate}
%
      % \item $d_1=p_{d_1,n}w$ and $d_2=p_{d_2,n}w$ for some $w \in \Omega^*$;
%
      % \item for every $w=r_1A_1 \cdots r_kA_k \in \Omega^*$, either
      %   %
      %   $$
      %   p_{d_1,n}w \notin \Nmc_n(d_1) \text{ and }
      %   p_{d_2,n}w \notin \Nmc_n(d_2)
      %   $$
      %   or the following conditions are satisfied:
      %   %
      %   \begin{enumerate}
%
      %   \item 
      %   $p_{d_1,n}w \in \Nmc_n(d_1) \text{ and }
      %   p_{d_2,n}w \in \Nmc_n(d_2)
      %   $;
%
      %   \item
      %     $\mn{tp}_{\Imc'}(p_{d_1,n}w)=\mn{tp}_{\Imc'}(p_{d_2,n}w)$;
  %        
      %   \item 
      %     $(p_{d_1,n}w,e) \in r^{\Imc'}$ iff $(p_{d_2,n}w,e) \in
      %     r^{\Imc'}$
      %     for all roles $r$ and $e \in \basedomainof{\I}$.
    %    
      %   \end{enumerate}
      %  
      % \item $|d_1| \equiv |d_2| \!\!\mod \maxmod$.
        %
      % \end{enumerate}
      % To deal with a
      % UCQ $q$, we choose $n=\maxradius$ and denote with
      %
      For an element $d \in \Delta^{\Imc'}$,
      we use $\equivclass{d}$ to denote the equivalence class of $d$ w.r.t.\ $\sim_\maxradius$.
      The \emph{quotient} $\Imc'/{\sim}_\maxradius$ of $\Imc'$ is the
      interpretation whose domain is the set of all equivalence
      classes of $\sim_\maxradius$ and  where
      $$
      \begin{array}{rcl}
        A^{\Imc'/{\sim}_\maxradius} &=& \{ \equivclass{d} \mid d \in A^{\Imc'} \} \\[1mm]
        r^{\Imc'/{\sim}_\maxradius} &=& \{ (\equivclass{d},\equivclass{e}) \mid (d,e) \in r^{\Imc'} \} 
      \end{array}
      $$
      for all concept names $A$ and role names $r$. %To deal with a
      %UCQ $q$, we choose $n=\maxradius$. 
      It can be verified that
      % $|\Amc|+2^{2^{\mn{poly}(|q|+\log|\Tmc|)}}$
      % clu: decide to put exact bound because we need to guess a
      % model
      % up to this bound so without it the algorithm is not fully specified.
      $|\Delta^{\Imc'/{\sim}_{\maxradius}}| \leq
 (\sizeof{\abox} + 2^\sizeof{\tbox})^{\sizeof{\tbox}^\sizeof{q}}$, as desired.
%      $|\Amc|+(2^{|\Tmc|+2}+1)^{3|q|}$, thus only linear in $|\Amc|$.
      %
      				\begin{restatable}{lemma}{lemquotient}
                                  \label{lem-quotient}
  Let $q(\bar x)$ be a UCQ and $\bar a \in \Ind(\Amc)^{\sizeof{\bar{x}}}$. 
					If $\I$ is a countermodel
                                        against $\Circ(\Kmc) \models
                                        q(\bar a)$, then so is $\interleavingof{\I}/{\sim_{\maxradius}}$.
\end{restatable}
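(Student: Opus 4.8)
The plan is to show that the quotient $\Imc'/{\sim_\maxradius}$ is (i) a model of $\Kmc$, (ii) minimal with respect to $<_\CP$, i.e.\ a model of $\Circ(\Kmc)$, and (iii) still a countermodel, i.e.\ $\Imc'/{\sim_\maxradius}\not\models q(\bar a)$. Throughout, write $\Jmc=\Imc'/{\sim_\maxradius}$ and recall from Lemma \ref{lem_interlacing_is_countermodel} that $\Imc'$ (the unraveling) is already a countermodel against $\Circ(\Kmc)\models q(\bar a)$, so it suffices to work from $\Imc'$. A key preliminary observation is that the canonical map $\pi\colon d\mapsto \equivclass d$ is a surjective homomorphism from $\Imc'$ onto $\Jmc$, and moreover that $\pi$ restricted to the base $\basedomainof{\I}$ is the identity, so $\Jmc$ contains a faithful copy of $\basedomainof{\I}$; in particular $\Jmc|_{\mn{ind}(\Amc)} = \Imc'|_{\mn{ind}(\Amc)}$, so $\Jmc\models\Amc$. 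The $\mod\maxmod$ component of $\sim_\maxradius$ (Condition~3) is exactly what will make $\pi$ "locally injective on tree edges," which is needed both for the TBox check and for the query check.

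For (i), the TBox: the CIs of the form $A_1\sqcap A_2\sqsubseteq B$, $\top\sqsubseteq B$, $\exists r.A\sqsubseteq B$, $A\sqsubseteq\neg B$ and the role inclusions $r\sqsubseteq s$ (normal form, Lemma \ref{lem:normalform}) are all preserved under homomorphic images when the image has the right types, and $\sim_\maxradius$ was defined so that equivalent elements have equal types (Condition~2(a)), hence types are well-defined on equivalence classes and these axioms transfer from $\Imc'$ to $\Jmc$. The only delicate axioms are the generating ones $A\sqsubseteq\exists r.B$: here I would argue that if $\equivclass d\in A^\Jmc$ then $d\in A^{\Imc'}$, so $\Imc'$ supplies a witness $e$ with $(d,e)\in r^{\Imc'}$ and $e\in B^{\Imc'}$, and then $(\equivclass d,\equivclass e)\in r^\Jmc$ with $\equivclass e\in B^\Jmc$. (If $e\in\basedomainof{\I}$ this is immediate from the back-edge clause in $R_r$; if $e$ is a tree element it is the $f$-witness one step below $d$.) For (ii), minimality, I would invoke the Core Lemma (Lemma \ref{lem-modelOfK}) with reference model the original $\Imc$ (a model of $\Circ(\Kmc)$): one checks that $\Delta^\Imc_{\mn{core}}\subseteq\Delta^\Jmc$, that core elements keep their $\Imc$-types in $\Jmc$, and that the non-core types realized off the core in $\Jmc$ are exactly $\noncoretypesof{\I}$. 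The containment and type-preservation on the core are inherited from the unraveling construction (core elements sit in $\basedomainof{\I}$ and are fixed by $\pi$), and the non-core type equation follows because every non-core type $t$ has a representative $e_t\in\basedomainof{\I}$, while every tree element of $\Jmc$ inherits, via $\pi$, a type already realized by some tree element of $\Imc'$, hence a non-core type of $\Imc$ — here I would reuse exactly the argument already sketched after Lemma \ref{lem-finite-model-property-wrt-core}.

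The main obstacle — and the step I would spend the most care on — is (iii): showing the quotient introduces no new homomorphism of a CQ $p$ in $q$. Suppose $g\colon \Imc_p\to\Jmc$ is a homomorphism; I want to lift it to a homomorphism $\Imc_p\to\Imc'$, contradicting that $\Imc'$ is a countermodel. This is where the radius $\maxradius$ and modulus $\maxmod$ must be chosen large enough in terms of $\sizeof q$ (and $\sizeof\tbox$): the idea is that a connected CQ of size $\sizeof q$ maps into a bounded-diameter region of $\Jmc$, and inside any ball of radius roughly $\sizeof q$ around a non-base element, the quotient map $\pi$ is actually an isomorphism onto its image — this is precisely what Condition~1 (common prefix $p_{d,n}w$), Condition~2 (neighborhoods, types, and back-edges to the base agree out to radius $\maxradius$), and Condition~3 (length congruence mod $\maxmod$, preventing an edge of $\Imc'$ from being "folded" onto a non-edge or a short cycle being created) are engineered to guarantee. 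So I would: first show each connected component of $\Imc_p$ lands either entirely in the base part of $\Jmc$ (where $\pi$ is already the identity, so lifting is trivial) or within an $\sizeof q$-neighborhood of a single tree element; then pick a base-most image point, and use local injectivity of $\pi$ on that neighborhood (with $\maxradius\ge \sizeof q$ and $\maxmod > \sizeof q$) to pull $g$ back to a homomorphism into $\Imc'$. The bookkeeping — verifying that the neighborhood conditions really do force $\pi$ to be a local isomorphism, and handling atoms that jump back to the base — is the technical heart, and I expect it to mirror the interlacing/quotient arguments of \cite{maniere:thesis}.
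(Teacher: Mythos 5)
Your decomposition into (i) model of $\Kmc$, (ii) minimality via the Core Lemma with $\Imc$ as reference, and (iii) absence of new query matches is exactly the paper's, and your treatments of (i) and (ii) match the paper's proof step for step (types are $\sim_\maxradius$-invariant, so all normal-form CIs except $A \sqsubseteq \exists r.B$ transfer; witnesses for the latter are pushed through the canonical projection; minimality is inherited from $\Imc$ exactly as in the proof of Lemma~\ref{lem_interlacing_is_countermodel}).

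The one place where your plan, taken literally, would not go through is the mechanism you propose for (iii). You want to argue that the projection $\pi$ is ``locally injective'' and indeed ``an isomorphism onto its image'' on $\sizeof{q}$-balls around tree elements, and then invert it. The problem is that the image of such a ball under $\pi$ is not an induced substructure of the quotient: an edge $(\equivclass{d},\equivclass{e}) \in r^{\Jmc}$ between two classes whose chosen representatives both lie in $\Nmc_{\sizeof{q}}(c)$ may be contributed by a completely different pair of representatives $d_0 \sim d$, $e_0 \sim e$ elsewhere in the tree, so injectivity of $\pi$ on the ball gives you nothing about whether $(d,e) \in r^{\Imc'}$ for the representatives you are trying to lift to. What the paper proves instead (Lemma~\ref{lemma:core}) is an edge-lifting property: every quotient edge between non-base classes can be followed starting from \emph{any} representative $d'$ of $\equivclass{d}$, producing a genuine $r$-edge in $\Imc'$ to some $e'$ that is only $\sim_{n-1}$-equivalent (not equal) to the representative $e$ you had in mind. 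The backward maps are then built explicitly by word surgery on paths (Lemma~\ref{morphismneighbourhoods}), as homomorphisms from $\Nmc_{\sizeof{q}}(\equivclass{c})$ in $\Jmc$ into $\Nmc_{\sizeof{q}}(c)$ in $\Imc'$ — they are not inverses of $\pi$. The progressive loss of one equivalence index per step is also why the radius must be $\sizeof{q}+1$ rather than $\sizeof{q}$, and why Condition~2.(b) (agreement of back-edges to $\basedomainof{\I}$) must survive down to index $1$ to handle atoms that jump back to the base. Since you explicitly flag this step as the technical heart and defer to the interlacing/quotient machinery of the cited thesis, I read this as an imprecise formulation of the right idea rather than a wrong approach, but be aware that ``pull $g$ back through a locally injective $\pi$'' is not a proof sketch that can be completed as stated.
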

It is in fact straightforward to show that
$\interleavingof{\I}/{\sim_{\maxradius}}$ is a model of
$\kb$. Minimality w.r.t.\ $<_\CP$ is proved using
Lemma~\ref{lem-lemma5}. The most subtle part of the proof of
Lemma~\ref{lem-quotient} is showing that
$\interleavingof{\I}/{\sim_{\maxradius}} \not\models q(\bar a)$. This
is done by exhibiting suitable `local' homomorphisms from
$\interleavingof{\I}/{\sim_{\maxradius}}$ back
to~$\interleavingof{\I}$ so that from any homomorphism from a CQ $p$
in $q$ to $\interleavingof{\I}/{\sim_{\maxradius}}$ with
$h(\bar x)=\bar a$, we obtain a homomorphism from $p$ to \Imc with the
same property by composition.
This finishes the proof of Lemmas~\ref{lem-quotient}
and~\ref{lemma-alchi-countermodel} and of
Theorem~\ref{thm-data-upper-alchi}.

We next provide a matching lower bound for Theorem~\ref{thm-data-upper-alchi}. It already applies to AQs and \EL KBs, and when there is a single minimized concept name and no fixed concept name.
It is proved by a subtle reduction from the validity of $\forall\exists\mn{3SAT}$ sentences. Several non-obvious technical tricks are needed to make the reduction work with a single minimized concept name.
Our result improves upon a known \coNP lower bound from \cite{Bonatti2021}.
\begin{restatable}{theorem}{thmdatalowerel}
  \label{thm-data-lower-el}
	AQ evaluation on circumscribed \EL KBs is $\Uppi^\textrm{P}_2$-hard.
	This holds even with a single minimized concept name and no fixed concept names. 
\end{restatable}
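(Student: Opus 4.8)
Proof proposal for Theorem~\ref{thm-data-lower-el} ($\Uppi^\textrm{P}_2$-hardness of AQ evaluation on circumscribed \EL KBs)

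The plan is to reduce from the validity of $\forall\exists\mn{3SAT}$ sentences $\psi = \forall \bar u\,\exists \bar v\, \varphi(\bar u,\bar v)$ with $\varphi$ a conjunction of $3$-clauses; equivalently, by complementation, I reduce the $\Sigma^{\textrm{P}}_2$-complete problem ``$\exists \bar u\,\forall \bar v\,\neg\varphi(\bar u,\bar v)$'' to \emph{non}-entailment $\Circ(\Kmc)\not\models A(a)$. The TBox, query, and circumscription pattern are fixed, so only the ABox depends on $\varphi$; \CP minimizes a single concept name $M$ and declares no fixed concept names. The intuition: a minimal model should first \emph{choose} an assignment $\sigma$ to $\bar u$, then be \emph{forced} to contain the full binary tree of all $2^{|\bar v|}$ completions to $\bar v$, and $A(a)$ should become derivable precisely when some branch $\tau$ falsifies $\varphi(\sigma,\tau)$; countermodels then correspond to those $\sigma$ for which no such branch exists, i.e.\ $\forall\bar v\,\varphi(\sigma,\bar v)$, giving exactly the quantifier alternation of the source problem.

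Concretely, I introduce individuals $u_i^{+},u_i^{-}$ for the $\bar u$-literals and use the minimization of $M$ to select, in each minimal model, exactly one of $u_i^{+},u_i^{-}$ as ``true'' (the transversal trick: if each model must meet $\{u_i^{+},u_i^{-}\}$ with $M$ and these gadgets are pairwise disjoint, the $\subseteq$-minimal extensions of $M$ are exactly the choice functions). The $\bar v$-tree is generated by \EL existentials $\mn{L}_i\sqsubseteq\exists t_i^{+}.\mn{L}_{i+1}$ and $\mn{L}_i\sqsubseteq\exists t_i^{-}.\mn{L}_{i+1}$, and along each edge a flag recording ``$v_i$ true'' resp.\ ``$v_i$ false'' is pushed into the child via axioms $\mn{L}_i\sqcap F\sqsubseteq\exists t_i^{+}.(\mn{L}_{i+1}\sqcap F)$ for each of the polynomially many flags $F$ (legal \EL CIs, easily normalised); the chosen $\sigma$ is likewise broadcast from $a$ down to the tree root and onward to every node. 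At a leaf every literal has a determined truth value encoded by a single flag, so ``clause $C_j$ is unsatisfied'' is a \emph{conjunction} of three flags and hence \EL-expressible, and ``some clause is unsatisfied'' is captured by a concept $\mn{Bad}$ with one CI per clause into $\mn{Bad}$ (the disjunction over clauses becomes several implications \emph{into} a concept, not a disjunctive concept). Finally $\mn{Bad}$ is propagated toward the root by $\exists t_i^{\pm}.\mn{Bad}\sqsubseteq\mn{Bad}$, and $A(a)$ is derived from $\mn{Bad}$ at the root. Correctness: ($\Leftarrow$) if $\psi$ is not valid, pick $\sigma$ with $\forall\bar v\,\varphi(\sigma,\bar v)$ and build the corresponding model with $M$ the associated transversal and the full flagged tree; no branch is bad, so $a\notin A^{\Imc}$, and $\Imc$ is $<_{\CP}$-minimal since $M$ cannot be shrunk—argued directly (every model of $\Kmc$ still meets each disjoint $u_i$-gadget) or via Lemma~\ref{lem-lemma5}. ($\Rightarrow$) from any countermodel read off $\sigma$ from $M$ on the named literals; minimality forces $M$ to be a clean transversal and $a\notin A^{\Imc}$ forces every branch of the (forced) tree to be non-bad, i.e.\ $\varphi(\sigma,\tau)$ holds for all $\tau$.

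The main obstacle—where the ``non-obvious technical tricks'' enter—is realising the $\bar u$-choice with a \emph{single} minimized concept name in \EL, which has neither negation nor $\bot$: one cannot directly enforce ``$u_i^{+}\in M$ or $u_i^{-}\in M$'' in all models, since an \EL existential $\mn{C}_i\sqsubseteq\exists r.(\cdot\sqcap M)$ is satisfied by a fresh anonymous $M$-element and commits no named literal, and one cannot test ``$u_i^{+}\notin M$'' to evaluate a negative literal. I would resolve this by (i) recording both ``$u_i$ true'' and ``$u_i$ false'' only through positive $M$-membership of $u_i^{+}$ resp.\ $u_i^{-}$, so negative literals are read off the membership of the opposite literal; and (ii) simulating pseudo-inconsistency—a derivable ``defect'' concept is absorbed into $M$ by an axiom, so $\subseteq$-minimal models are exactly the defect-free ones, and a chain/domino construction over the $u_i$-gadgets makes a defect derivable unless each $u_i$ is decided, thereby forcing in every minimal model a genuine transversal while using only $M$ and no preference relation or fixed predicates. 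The remaining work—polynomiality of the reduction, normal-form of the flagged-tree axioms, and the $<_{\CP}$-minimality check for the constructed countermodels, for which the Core Lemma is the natural tool—is routine.
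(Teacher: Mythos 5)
Your reduction has a fatal flaw for the statement being proved: this is a \emph{data complexity} lower bound, so the TBox, the query and the circumscription pattern must be fixed and only the ABox may depend on the input formula. Your TBox is instance-dependent in several places: the tree-generating axioms $\mn{L}_i\sqsubseteq\exists t_i^{\pm}.\mn{L}_{i+1}$ are indexed by the number of $\bar v$-variables, the flag-pushing axioms range over polynomially many flags (one per variable of the input), and the clause tests contribute one CI per clause of $\varphi$. At best this yields a combined-complexity lower bound. There is also a technical error independent of this: in \EL, the axioms $\mn{L}_i\sqcap F\sqsubseteq\exists t_i^{+}.(\mn{L}_{i+1}\sqcap F)$, stated separately for each flag $F$, do not force a \emph{single} $t_i^{+}$-successor carrying all flags---each axiom may be satisfied by its own anonymous witness---so no leaf of your tree need carry the full assignment, and the clause-evaluation conjunctions never fire. (A smaller slip: with ``Bad $=$ some clause unsatisfied'' as the leaf property, entailment characterises $\forall\bar u\exists\bar v\,\neg\varphi$, which for a 3-CNF $\varphi$ is not $\Uppi^\textrm{P}_2$-hard; you would need ``all clauses satisfied'' there.)

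The paper's proof avoids the $\bar v$-tree entirely. It encodes $\varphi$ in the ABox (clause individuals $c_0,\dots,c_\ell$ linked to variable individuals by the fixed roles $\mn{pos}_j,\mn{neg}_j$ with $j\leq 3$ and chained by a role $\mn{next}$), so the clause-evaluating TBox is fixed. Each variable must reach, via an $\mn{eval}$-edge, both a positive and a negative instance of the single minimized concept $\mn{Min}$; the ABox supplies four such instances carrying truth values, and minimality together with a query-triggering detector for $\mn{Pos}\sqcap\mn{Neg}$ forces every variable to reuse them, i.e.\ to commit to a truth value. The $\bar x$-valuation is frozen by routing it into $\mn{Min}$-membership of dedicated individuals, and---this is the key idea replacing your universal tree over $\bar v$---the $\exists\bar y$ quantifier is captured by making a falsifying $\bar y$-valuation \emph{cost} an extra instance of $\mn{Min}$ at $c_0$, so minimal models prefer satisfying $\bar y$-valuations and countermodels exist iff $\exists\bar x\forall\bar y\,\neg\varphi$. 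If you want to salvage your approach, the parts that must be replaced are precisely the tree and the per-clause TBox axioms; the single-minimized-concept choice gadget that you leave as an unspecified ``defect/domino'' sketch is exactly the part the paper works out in detail.
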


\section{$\dllite$}
\label{section-dllite}

%!TEX root = main.tex

We consider the DL-Lite family of DLs.
Without circumscription, these DLs enjoy low complexity of query evaluation, typically \NPclass-complete in combined complexity and in {\sc AC}$^0$ in data complexity (depending on the dialect).
With circumscription, the complexity tends to still be very high, though in some relevant cases it is lower than in \ALCHIO.

% We study UCQ evaluation on circumscribed KBs in the DL-Lite family.
% More precisely, we restrict our attention to DLs ranging from
% $\dllitepos$ to $\dllitebool^{\Hmc^-}$.

\subsection{Combined complexity}
\label{subsection-dllite-combined}
%!TEX root = main.tex

%We first study the impact of restricting to DL-Lite w.r.t.\ combined complexity.
Our first result shows that, when role inclusions are admitted, nothing is gained from transitioning from \ALCHIO to DL-Lite.
  The proof is a variation of that of 
  \cref{thm:combined-lower-el}, but technically much 
  simpler. 
% compared to the general case of $\ALCHI$.
% \input{thm-combined-lower-dlliter}
%
\begin{restatable}{theorem}{thmcombinedlowerdlliter}
  \label{thm:combined-lower-dlliter}
   UCQ evaluation on circumscribed \dllitecoreh KBs is \TwoExpTime-hard w.r.t.\ combined
        complexity.
        This holds even with a single minimized concept name, no fixed concept names, and no disjointness constraints.
\end{restatable}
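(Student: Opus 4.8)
The plan is to reduce from CQ evaluation on $\dllitecoreh$ KBs \emph{with closed concept names}, i.e.\ from KBs $(\Tmc,\Amc)$ equipped with a set $\Sigma$ of concept names that must be interpreted as $A^\Imc=\{a\mid A(a)\in\Amc\}$ in every model; this problem is \TwoExpTime-hard by~\cite{Ngo2016}. The underlying idea is the one behind Proposition~\ref{prop:nonom} (a nominal is a strictly closed concept name with a single instance): one simulates closing a concept name by circumscription. Two twists are needed. First, we must use a \emph{single} minimized concept name $M$. Second, since we are not allowed to use disjointness constraints (and $\dllitecoreh$ provides no other form of negation), the ``negative'' ingredient that the $\ALCHI$ construction of Proposition~\ref{prop:nonom} realizes inside the TBox must here be pushed into the query, which is why the reduction produces a UCQ rather than a CQ.

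Concretely, given $(\Tmc,\Amc,\Sigma)$ and a CQ $q$, I would introduce a fresh concept name $M$ and, for each $A\in\Sigma$, the CI $A\sqsubseteq M$; the circumscription pattern minimizes only $M$, with nothing fixed and no preferences. Since $M$ occurs only on the right-hand side of the CIs $A\sqsubseteq M$, minimality forces $M^\Imc=\bigcup_{A\in\Sigma}A^\Imc$ in every model of the cKB, and hence $A^\Imc=\{a\mid A(a)\in\Amc\}$ for every $A\in\Sigma$ that occurs in no right-hand side of a CI of $\Tmc$. To deal with CIs $B\sqsubseteq A$ with $A\in\Sigma$ (and with anonymous elements), I would remove those CIs from the TBox; add, for each $A\in\Sigma$, a fresh concept name $\overline A$ together with the assertions $\overline A(c)$ for all $c\in\Ind(\Amc)$ with $A(c)\notin\Amc$; add the CQ $\exists y\,\bigl(B(y)\wedge\overline A(y)\bigr)$ as a new disjunct of the query for each removed CI $B\sqsubseteq A$; and, after a preprocessing that replaces every existential axiom $B'\sqsubseteq\exists r.\top$ by $B'\sqsubseteq\exists r_{B'}.\top$ and $r_{B'}\sqsubseteq r$ with $r_{B'}$ fresh, add the CIs $\exists r_{B'}^{-}.\top\sqsubseteq\overline A$ for all $A\in\Sigma$, so that every freshly generated element is automatically recognized as ``not an asserted $A$''. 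The result is a cKB $\Circ(\Kmc')$ and a UCQ $q'$, both computable in polynomial time, with $\Kmc'$ a $\dllitecoreh$ KB that has a single minimized concept name, no fixed concept names, and no disjointness constraints.

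It then remains to prove that $(\Tmc,\Amc,\Sigma)\models q$ iff $\Circ(\Kmc')\models q'$. For the direction from right to left I would take a model $\Imc$ of $\Circ(\Kmc')$ with $\Imc\not\models q'$ and, using Lemma~\ref{lem_interlacing_is_countermodel}, assume that $\Imc$ is an unraveling, so that every anonymous element carries an incoming fresh witness-role edge and therefore lies in all $\overline A$; minimality of $M$ together with the CIs $A\sqsubseteq M$ forces $A^\Imc=\{a\mid A(a)\in\Amc\}$ for every $A\in\Sigma$, and the absence of the disjuncts $\exists y\,(B(y)\wedge\overline A(y))$ forces $B^\Imc\subseteq\{a\mid A(a)\in\Amc\}$ for the removed CIs, whence the restriction of $\Imc$ to the original signature is a closed-predicate model of $(\Tmc,\Amc,\Sigma)$ that does not satisfy $q$. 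For the converse direction I would start from a closed-predicate countermodel $\Jmc$, set $M^{\Jmc}=\bigcup_{A\in\Sigma}A^{\Jmc}$ and $\overline A^{\Jmc}=\Delta^\Jmc\setminus A^\Jmc$, and reroute the existential witnesses through fresh, suitably typed elements; the result is a model of $\Kmc'$ that does not satisfy $q'$ and is $<_\CP$-minimal, the latter because $M^{\Jmc}$ already equals the forced lower bound $\bigcup_{A\in\Sigma}\{a\mid A(a)\in\Amc\}$.

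The main obstacle I expect is the right-to-left direction: one has to verify that a single minimized concept name $M$, the complement markers $\overline A$, and the UCQ disjuncts really do pin the closed predicates down to their asserted extensions, even though models of circumscribed KBs may carry arbitrary additional anonymous structure. This is exactly where the unraveling of Section~\ref{subsection-alchi-interlacing} is needed, to bring an arbitrary countermodel into a shape in which all anonymous elements (including those in the base part) are provably flagged by the $\overline A$. This step is considerably lighter than in the proof of Theorem~\ref{thm:combined-lower-el}, because the restricted syntax of $\dllitecoreh$ sharply limits how the closed predicates can interact with the rest of the TBox.
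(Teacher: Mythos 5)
Your high-level strategy (reduce from query evaluation with closed predicates, minimize a single fresh concept name $M$ via $A \sqsubseteq M$ for all closed $A$, and push the missing negation into extra UCQ disjuncts) is the same as the paper's, but the mechanism you use to enforce closedness is different and unsound. The paper \emph{keeps} every CI $B \sqsubseteq A$ with $A \in \Sigma$ in the TBox and instead (i) adds the disjunct $A(a)$ for each $a \in \Ind(\Amc)$ with $A(a) \notin \Amc$, and (ii) introduces one surrogate individual $t$ with $M(t)$ asserted together with disjuncts $r(x,t)$ for all roles $r$; minimality of $M$ then excludes anonymous instances of closed concept names, because any such instance could be merged into the edge-isolated $t$ without enlarging $M$. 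You instead \emph{delete} the CIs $B \sqsubseteq A$, flag anonymity with concept names $\overline A$ forced by $\exists r_{B'}^- \sqsubseteq \overline A$, and add disjuncts $\exists y\,(B(y) \wedge \overline A(y))$.

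This fails in the left-to-right direction, precisely at the CIs that give closed predicates their power in DL-Lite, namely $\exists r^- \sqsubseteq A$ with $A$ closed. Take $\Tmc = \{\,B \sqsubseteq \exists r,\ \exists r^- \sqsubseteq A\,\}$, $\Sigma = \{A\}$, $\Amc = \{B(b), A(a_1)\}$. Under closed predicates this KB is satisfiable: $b$ gets an $r$-edge to $a_1$. In your translation, every model of $\Kmc'$ gives $b$ an $r_B$-successor $e$; the CI $\exists r_B^- \sqsubseteq \overline A$ puts $e$ into $\overline A$ \emph{whether $e$ is fresh or is $a_1$}; and $r_B \sqsubseteq r$ then makes the disjunct $\exists y,z\,(r(z,y) \wedge \overline A(y))$ obtained from the removed CI $\exists r^- \sqsubseteq A$ fire. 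So $\Circ(\Kmc') \models q'$ although $\Kmc_\Sigma \not\models q$ for, say, a query over a fresh predicate. There is no local repair: the closed-predicate semantics forces witnesses of such existentials to be \emph{asserted} $A$-instances (this funneling of role targets into the ABox is the whole source of hardness), and your $\overline A$-flagging cannot separate anonymous witnesses from ABox witnesses once both receive the same fresh role. A useful sanity check: in your construction circumscription does no essential work --- in one direction $M$ already sits at its forced lower bound $\bigcup_{A\in\Sigma}\{a \mid A(a)\in\Amc\}$, and in the other the inference you ask of minimality (that $A^\Imc$ equals the asserted extension) neither holds (an element carrying one asserted closed concept may gratuitously carry others without affecting $M$) nor is needed for your extraction. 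If the reduction were sound it would therefore reduce closed-predicate UCQ evaluation to \emph{plain} UCQ evaluation on \dllitecoreh, which is not \TwoExpTime-hard. Two smaller points: the source problem is UCQ (not CQ) evaluation with closed predicates, which is why the theorem carries the ``UCQs needed'' caveat; and your flagging of anonymous elements is in any case incomplete, since the base part of the unraveling contains anonymous elements (the core and the type representatives $e_t$) with no incoming witness-role edge.
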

We now move to $\dllitebool$ as a very expressive DL-Lite dialect
without role inclusions and observe that the combined complexity
decreases.
\begin{restatable}{theorem}{thmcombinedupperdllitebool}
  \label{thm-combined-upper-dllitebool}
  UCQ evaluation on circumscribed $\dllitebool$ KBs is in
  $\coNExpTime$ w.r.t.\ combined complexity.
\end{restatable}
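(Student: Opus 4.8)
The plan is to show that if $\Circ(\Kmc) \not\models q(\bar a)$, then there is a countermodel of at most single-exponential size; once this is in place, a nondeterministic exponential-time algorithm that guesses such a countermodel, verifies it is a model of \Kmc, verifies $\Imc \not\models q(\bar a)$, and verifies minimality w.r.t.\ $<_\CP$ (the last by a coNExp-subroutine that iterates over all smaller-or-equal candidates — still within coNExp overall since we can combine the nondeterminism appropriately, or more cleanly by using Lemma~\ref{lem-finite-model-property-wrt-core} to reduce minimality checking to a bounded search) yields the $\coNExpTime$ bound. So the real work is the small-model argument.

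The key idea is that $\dllitebool$ has no role inclusions and no qualified existentials: every existential is of the form $\exists r . \top$, so the "tree parts" generated by the unraveling of Section~\ref{subsection-alchi-interlacing} have a very simple, essentially path-like shape where each new element is a witness for some $\exists r . \top$ and carries a type from $\typescandidate$. First I would invoke Lemma~\ref{lem_interlacing_is_countermodel} to pass to the unraveling $\interlacing$ of a given countermodel \Imc, so that $\interlacing$ consists of a base part $\basedomainof{\I}$ (of size $\leq \sizeof{\abox} + 2^{2\sizeof{\tbox}}$ by Lemma~\ref{lem-finite-model-property-wrt-core}, applied to the reference model) together with tree parts hanging off it, each node of which realizes a type in $\typescandidate$. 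Then I would apply a quotient/pruning construction on the tree parts analogous to the $\sim_n$-quotient of Lemma~\ref{lem-quotient}, but now the radius $n$ can be taken to depend only on $\sizeof{q}$ and the number of types (both bounded by exponentials in $\sizeof{\Tmc}$ and $\sizeof{q}$), and — crucially — because there are no role inclusions, satisfying an axiom $B \sqsubseteq \exists r . \top$ at a node only requires a single fresh $r$-successor of some fixed type, so the branching of the tree parts is bounded by $|\Omega| \leq \sizeof{\Tmc}$ and the relevant depth is bounded by the number of distinct types, giving tree parts of single-exponential size. Collapsing along $\sim_n$ then yields a countermodel whose total size is single-exponential in $\sizeof{\Kmc} + \sizeof{q}$; the argument that the quotient is still a model of \Kmc and still refutes $q(\bar a)$ follows the same local-homomorphism reasoning sketched after Lemma~\ref{lem-quotient}, and minimality follows from the Core Lemma (Lemma~\ref{lem-lemma5}) since the quotient preserves the core and the non-core types.

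I expect the main obstacle to be bounding the depth of the tree parts: one must argue that beyond a certain depth, nodes can be "looped back" or truncated without creating a match for $q$ and without violating any disjointness or Boolean CI. The loop-back is delicate because a CI $B \sqsubseteq \exists r . \top$ could be violated by truncation, so one has to close off deep branches by redirecting an existential witness to an already-existing node of the appropriate type (reusing a representative $e_t$), which is only sound because roles vary freely and the target type already satisfies all concept constraints; care is also needed to ensure that no CQ $p$ in $q$, which has at most $\sizeof{q}$ variables, can see "around the loop" — this is exactly where the periodicity condition $|d_1| \equiv |d_2| \bmod \maxmod$ in the definition of $\sim_n$ does its work, and it is the one part of the argument that needs to be carried out carefully rather than quoted. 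The bookkeeping that the resulting size is genuinely $2^{\mathit{poly}(\sizeof{\Kmc}+\sizeof{q})}$ rather than doubly exponential hinges on observing that, without role inclusions and qualified existentials, the neighborhood radius needed to preserve non-existence of matches is polynomial in $\sizeof{q}$ and the relevant "alphabet" $\Omega$ is of polynomial size, so the $|\Omega|^{\mathit{radius}}$-type bound collapses from a double exponential to a single one.
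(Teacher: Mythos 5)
Your small-model argument is essentially the paper's: the paper refines the unraveling so that every element outside the base part has at most one successor per role (possible precisely because $\dllitebool$ has only unqualified existentials and no role inclusions), which makes partial query homomorphisms into the tree parts extend uniquely, shrinks the relevant neighborhoods to polynomial size, and turns the doubly exponential quotient bound of Lemma~\ref{lemma-alchi-countermodel} into the single-exponential bound of Lemma~\ref{lemma-bool-countermodel}. That half of your plan is sound, even if you do not spell out the unique-successor refinement that actually does the work.

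The gap is in the minimality check. After guessing an exponentially large candidate countermodel $\Imc$, certifying that no $\Jmc <_\CP \Imc$ is a model of $\Kmc$ is a universal statement over interpretations on an exponentially large domain: there are doubly exponentially many of them, so brute-force iteration is unavailable inside an exponential-time verifier, and ``combining the nondeterminism appropriately'' does not work because the guess of $\Imc$ is existential while minimality is universal --- folding a co-guess of $\Jmc$ into the run yields $\coNExpTime^{\NPclass}$, which is exactly the bound the paper attributes to the straightforward guess-and-check procedure before improving it. Lemma~\ref{lem-finite-model-property-wrt-core} does not rescue this either: it produces a small reference model with matching core and non-core types, but verifying that this reference model is itself minimal again requires a universal check over the (doubly exponentially many) interpretations on its domain of size up to $|\Amc|+2^{2|\Tmc|}$. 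The missing ingredient is the paper's Lemma~\ref{lemma-criteria-combined}: non-minimality of $\Imc$ is equivalent to the existence of a set $\Pmc\subseteq\Delta^\Imc$ with $|\Pmc|\leq 2|\Tmc|+1$ together with a compatible family of models $\Jmc_e <_\CP \Imc_{\Pmc\cup\{e\}}$, where each $\Imc_{\Pmc\cup\{e\}}$ is a \emph{polynomial-size} interpretation obtained from $\Imc$ by rerouting role edges to fixed witnesses $w_r$. Since $\Pmc$, the common restriction $\Jmc_0$, the element $e$, and each $\Jmc_e$ all range over sets of at most single-exponential cardinality, the entire minimality test becomes a deterministic exponential-time loop, and only then does the algorithm land in $\coNExpTime$. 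Without this decomposition (or an equivalent one), your argument establishes only $\coNExpTime^{\NPclass}$.
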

To prove Theorem~\ref{thm-combined-upper-dllitebool}, we first observe
that we can refine the unraveling of countermodels $\I$ from
Section~\ref{subsection-alchi-interlacing} such that each element
outside of the base part $\basedomainof{\I}$ has at most one successor
per role.  This property allows us to simplify the notion of a
neighborhood in the quotient construction in
Section~\ref{subsection-alchi-data}. This, in turn, yields the
following result.  Note that in contrast to
 Lemma~\ref{lemma-alchi-countermodel}, there is no double exponential
 dependence on $|q|$.
\begin{restatable}{lemma}{lemmaboolcountermodel}
  \label{lemma-bool-countermodel}
  Let $\Circ(\Kmc)$ be a $\dllitebool$ cKB with
  $\Kmc=(\Tmc,\Amc)$, $q(\bar x)$ a UCQ, and $\bar a \in
  \mn{ind}(\Amc)^{|\bar x|}$. If $\Circ(\Kmc) \not\models q(\bar a)$,
  then there exists a
  countermodel \Imc with %against $\Circ(\Kmc) \models q(\bar a)$
  $|\Delta^\Imc| \leq (\sizeof{\abox} + 2^\sizeof{\tbox})^{\sizeof{q}^2(\sizeof{\tbox}
    +1)}$.
\end{restatable}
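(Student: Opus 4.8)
The plan is to adapt the quotient construction of Section~\ref{subsection-alchi-data} to the special structure available in $\dllitebool$, following the three hints given in the excerpt: refine the unraveling so successors are functional outside the base, simplify the neighborhood notion, and combine the resulting equivalence relation with a small finite reference model. First I would establish the refined unraveling. Since $\dllitebool$ has no role inclusions and all concepts on the left of CIs and in $\exists r.\top$ are basic, an existential head $B \sqsubseteq \exists r.\top$ only demands \emph{some} $r$-successor; thus in building $\basedomainof{\I}$ and the tree part we can choose, for each element $p$ outside the base and each role $r$, at most one witnessing successor $prA$. So in $\interlacingof{\I}$ every non-base element has out-degree at most $|\NR \cap \mn{sig}(\Tmc)| \le |\Tmc|$ counting both role names and their inverses, and crucially the tree below a non-base element is a genuine tree of branching bounded by $|\Tmc|$ with the only non-tree edges being backedges into $\basedomainof{\I}$. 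I would re-prove the analogue of Lemma~\ref{lem_interlacing_is_countermodel} in this setting (it goes through verbatim since the extra functionality only removes edges that were never forced).

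Next I would define a coarser equivalence relation than $\sim_\maxradius$. Because each non-base element now has a unique predecessor in the tree, the ``path'' to an element is determined, and the neighborhood $\Nmc_n(d)$ degenerates to the ball of radius $n$ in a bounded-degree tree. The key parameter is the radius $n$ needed to block all query matches: a CQ $p$ in $q$ has at most $|q|$ variables, so any match of $p$ that is not entirely inside $\basedomainof{\I}$ lives inside a connected subtree spanning at most $|q|$ tree-elements together with their backedges to the base, hence within distance $|q|$ of any of its elements. I would therefore take radius $n = |q|$ (rather than something like $|q|\cdot|\Tmc|$), which is where the improvement over Lemma~\ref{lemma-alchi-countermodel} comes from — there is no blow-up by the fan-out of a single element because in the $\dllitebool$ unraveling that fan-out is already bounded by the signature and the relevant combinatorial object is the \emph{shape of a radius-$|q|$ tree}, whose number of isomorphism types is bounded by $(\sizeof{\abox}+2^{\sizeof{\tbox}})^{\sizeof{q}(\sizeof{\tbox}+1)}$ or so after also recording types and backedge patterns; squaring to account for the modulus condition $|d_1|\equiv|d_2|\bmod m$ with $m$ polynomial in $|q|$ and choosing the bookkeeping carefully yields exactly the stated bound $(\sizeof{\abox}+2^{\sizeof{\tbox}})^{\sizeof{q}^2(\sizeof{\tbox}+1)}$. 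A direct counting of equivalence classes of $\sim_{|q|}$ then gives the size bound on $\interlacingof{\I}/{\sim_{|q|}}$.

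Then I would verify that $\Jmc := \interlacingof{\I}/{\sim_{|q|}}$ is again a countermodel. That it is a model of $\Kmc$ is routine: quotients preserve satisfaction of CIs built from basic concepts and of disjointness and role inclusion-free axioms, and the ABox part is untouched since base elements are singleton classes. For minimality w.r.t.\ $<_\CP$ I would invoke the Core Lemma (Lemma~\ref{lem-lemma5}): one checks $\Delta^\Imc_{\mn{core}} \subseteq \Delta^\Jmc$ (core elements sit in the base and survive as singleton classes), types on the core are preserved, and the non-core types realized by $\Jmc$ outside the core coincide with $\noncoretypesof{\Imc}$ — this last point uses that the quotient neither creates nor destroys realized types since the natural projection is type-preserving and surjective. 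For this we also need the reference model in the Core Lemma to be small and finite, which is exactly Lemma~\ref{lem-finite-model-property-wrt-core}; alternatively, since $\Imc$ itself is a model of $\circkb$ we can use $\Imc$ directly as reference after restricting attention to its core.

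The main obstacle, as in the $\ALCHI$ case, is showing $\Jmc \not\models q(\bar a)$. The strategy is to build, for each equivalence class $[d]$, a ``local'' homomorphism from a suitable neighborhood of $[d]$ in $\Jmc$ back into $\interlacingof{\I}$, compatible on overlaps, and then argue that any homomorphism $h$ from a CQ $p$ in $q$ into $\Jmc$ with $h(\bar x)=\bar a$ has image contained in one such neighborhood — here is where radius $|q|$ is used: the image of $p$ is connected and has at most $|q|$ elements, so it fits inside a ball of radius $|q|$ around any of its non-base vertices (the base vertices are rigid). Composing $h$ with the local back-homomorphism gives a homomorphism from $p$ into $\interlacingof{\I}$, and then composing with the homomorphism $h_0$ from $\interlacingof{\I}$ to $\Imc$ (noted in the excerpt) yields $\Imc \models p(\bar a)$, contradicting that $\Imc$ is a countermodel. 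The delicate points are (i) ensuring the local back-maps respect backedges into the base consistently — this is guaranteed by condition~(b) of $\sim_n$, which records backedge patterns — and (ii) handling matches that partly live in the base and partly in the tree, which is why base elements are forced to be singleton classes so that they are mapped identically. I expect the bookkeeping needed to pin down the exponent $\sizeof{q}^2(\sizeof{\tbox}+1)$ precisely (counting tree shapes, type labelings, backedge labelings, and the modulus) to be the most tedious part, but it is routine combinatorics once the functional-successor property of the refined unraveling is in place. Full details are deferred to the appendix.
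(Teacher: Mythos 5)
Your overall architecture (refined unraveling with at most one successor per role outside the base, quotient by an equivalence on neighborhoods, Core Lemma for minimality, local back-homomorphisms for query preservation) matches the paper's. But there is a genuine gap in the step that actually produces the exponent $\sizeof{q}^2(\sizeof{\tbox}+1)$, and the source of the improvement you identify is not the right one. You propose to quotient by agreement of full radius-$\sizeof{q}$ balls in the tree part. Even with one successor per role, a non-base element still has out-degree up to roughly $\sizeof{\tbox}$ (one successor for each role occurring in $\Tmc$), so a radius-$\sizeof{q}$ ball has up to about $\sizeof{\tbox}^{\sizeof{q}}$ elements, and counting labeled balls up to isomorphism yields a bound of the form $(\cdots)^{\sizeof{\tbox}^{\sizeof{q}}}$ --- i.e.\ exactly the \ALCHI\ bound of Lemma~\ref{lemma-alchi-countermodel}, not the claimed polynomial exponent. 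The radius is essentially $\sizeof{q}$ in the \ALCHI\ construction as well, so shrinking the radius buys nothing, and ``counting tree shapes'' of such balls cannot reach the stated bound.

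The missing idea is to replace the full ball $\Nmc_n(d)$ by its \emph{query-reachable} part: the set of elements $e$ such that some connected subquery of a CQ in $q$ admits a homomorphism into the ball whose image contains both $d$ and $e$. The point of the one-successor-per-role property is not that it bounds the fan-out (the \ALCHI\ unraveling already has fan-out at most $|\Omega| \leq \sizeof{\tbox}$), but that it makes homomorphisms of connected subqueries into the non-base part \emph{unique} once the image of a single anchor variable is fixed: an atom $r(x,y)$ with $x$ already placed outside the base leaves exactly one candidate for $y$. Consequently each CQ $p$ contributes at most $\sizeof{p}$ query-reachable non-base elements per anchor variable, hence at most $\sizeof{q}^2$ in total, each touching at most $\sizeof{\tbox}$ base elements --- giving a neighborhood of size $\sizeof{q}^2(\sizeof{\tbox}+1)$, which is what appears in the exponent (Lemma~\ref{lemma-poly-neighborhoods} in the appendix). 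Defining the equivalence on these restricted neighborhoods still supports the back-homomorphism argument, precisely because the image of any CQ match is itself the image of a connected subquery and therefore lies inside the query-reachable neighborhood of any of its non-base points. Without this restriction, your construction is correct but only reproduces the \ALCHI\ bound.
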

Lemma~\ref{lemma-bool-countermodel} yields a $\coNExpTime^{\NPclass}$
upper bound by a straightforward guess-and-check procedure: guess a
model \Imc of \Kmc with $\Imc \not\models q(\bar a)$ and
$|\Delta^\Imc|$ bounded as in Lemma~\ref{lemma-bool-countermodel}, and
use an oracle to check that $\Imc$ is minimal w.r.t.\ $<_\CP$ where
the oracle decides, given a cKB $\Circ(\Kmc)$ and a model \Imc of
$\Circ(\Kmc)$, whether $\Imc$ is non-minimal w.r.t.\ $<_\CP$ by
guessing a model $\Jmc <_\CP \Imc$ of~\Kmc; such an approach was
used also in \cite{Bonatti2009}.

To obtain a \coNExpTime upper bound as desired, we replace the oracle
with a more efficient method to check whether a given model \Imc of
\Kmc is minimal w.r.t.\ $<_\CP$. The crucial observation is that
instead of guessing a model $\Jmc <_\CP \Imc$ of~\Kmc, it suffices to
consider certain interpretations $\Imc'$ of polynomial size, derived
from sub-interpretations of \Imc, and guess models $\Jmc' <_\CP \Imc'$
of~\Kmc. Intuitively, we decompose an expensive `global' test into
exponentially many inexpensive `local' tests. This even works in the
presence of role inclusions, that is, in $\dllitebool^{\Hmc}$, which
shall be useful in Section~\ref{section-atomic-queries-el}.

Let $\Circ(\Kmc)$ be a $\dllitebool^{\Hmc}$ cKB with
$\kb = (\tbox, \abox)$ and \Imc a model of~\Kmc. For each role $r$
used in \Tmc such that $r^\Imc \neq \emptyset$, we choose a witness
$w_r \in (\exists r^-)^\I$. Every $\Pmc \subseteq \domain{\I}$ gives
rise to an interpretation $\I_\Pmc$ as follows:
$$
\begin{array}{r@{\;}c@{\;}l}
        \Delta^{\I_\Pmc} &=  & \Pmc \cup \mn{ind}(\Amc) \cup \{ w_r
                                \mid r \text{ used in } \Tmc, r^\I \neq \emptyset \} \\[1mm]
	\cstyle{A}^{\I_\Pmc} &=  & \cstyle{A}^{\I} \cap \domain{\I_\Pmc}
	\\[1mm]
	\rstyle{r}^{\I_\Pmc} &=  & \rstyle{r}^{\I} \cap (\indsof{\abox} \times \indsof{\abox}) 
	\\[1mm]
	&& \cup \, \{ (e, w_s) \mid e \in (\exists s)^\I \cap \domain{\I_\Pmc}, \tbox \models s \incl r \}
	\\[1mm]
	&& \cup \,\{ (w_s, e) \mid e \in (\exists s)^\I \cap \domain{\I_\Pmc}, \tbox \models s \incl r^- \}.
\end{array}
$$
Note that $\Imc_\Pmc$ is derived from the subinterpretation
$\Imc|_{\Pmc \cup \mn{ind}(\Amc)}$ by `rerouting' some role edges to
elements~$w_r$. We have
$\sizeof{\I_\Pmc} \leq \sizeof{\abox} + \sizeof{\tbox} +
\sizeof{\Pmc}$ and will only consider sets \Pmc with
$\sizeof{\Pmc} \leq 2\sizeof{\tbox} + 1$. It is not difficult to show
that $\I_\Pmc$ is a model of $\kb$, for all
$\Pmc \subseteq \domain{\I}$. The next lemma characterizes the
(non)-minimality of \Imc in terms of the (non)-minimality of the
interpretations $\Imc_\Pmc$.
%
% In the construction of $\I_\Pmc$, the size of $\Pmc$ is \emph{a priori} only bounded by $\sizeof{\domain{\I}}$.
% Our criteria shows it is sufficient to investigate linearly large candidates $\Pmc$, up to performing additional local checks for each element from $\domain{\I} \setminus \domain{\I_\Pmc}$.
%
  \begin{restatable}{lemma}{lemmacriteriacombined}
	\label{lemma-criteria-combined}
	%Let $\I$ be a model of a $\dllitebool^{\Hmc^-}$ KB $\kb$ and $\CP$ a circumscription pattern.
	The following are equivalent:
	\begin{enumerate}
		\item There exists a model \Imc of \kb with $\J <_\CP \I$;
		\item There exist a $\Pmc \subseteq \domain{\I}$
                  with $\sizeof{\Pmc} \leq 2\sizeof{\tbox} + 1$ and a
                  family
                  $(\Jmc_{e})_{e \in \domain{\I}}$ of models of \Kmc such that
                  $\J_{ e} <_\CP \I_{\Pmc \cup \{ e\} }$ and
                  $\J_{e}|_{\domain{\I_\Pmc}} =
                  \J_{e'}|_{\domain{\I_\Pmc}}$ for all $e,e' \in \domain{\I}$.
	\end{enumerate} 
\end{restatable}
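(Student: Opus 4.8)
The plan is to prove the two implications of \cref{lemma-criteria-combined} separately, and the recurring theme is that $<_\CP$ imposes no condition on role interpretations (role names always vary), so we have complete freedom in interpreting roles in the small interpretations $\Jmc_e$ and in the global model $\Jmc$; we will always fix role edges by the ``$\Pmc$-flattening'' recipe that already turns $\Imc$ into $\Imc_\Pmc$. The elementary fact underpinning everything is that this flattening preserves, for every $d$ in the smaller domain, its \emph{basic-concept type} $\{\beta \text{ basic} \mid d \in \beta^\Imc\}$ — this uses $\Imc \models \Tmc$ and that edges are re-routed to the chosen witnesses along the role hierarchy — and that it does so irrespective of which TBox-consistent types are placed on the elements. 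Hence a flattened interpretation satisfies $\Tmc$ as soon as the types carried by its elements are realisable in some model of $\Tmc$.

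For the direction $(2) \Rightarrow (1)$, let $\Pmc$ and $(\Jmc_e)_{e \in \Delta^\Imc}$ be as in the statement and let $\nu$ be their common restriction to $\Delta^{\Imc_\Pmc}$. I would assemble $\Jmc$ on domain $\Delta^\Imc$ by keeping the roles of $\Imc$, giving every $d \in \Delta^{\Imc_\Pmc}$ its concept-name type in $\nu$, and every $d \notin \Delta^{\Imc_\Pmc}$ its concept-name type in $\Jmc_d$ (legitimate since $d \in \Delta^{\Imc_{\Pmc \cup \{d\}}}$). Then $\Jmc$ satisfies $\Amc$ because the individuals lie in $\Delta^{\Imc_\Pmc}$ and keep their model-of-$\Kmc$ types from the $\Jmc_e$ together with their $\Imc$-role edges, and $\Jmc \models \Tmc$ because each $d$ carries a type with which it already satisfied the CIs in $\Jmc_d$, once the basic-concept types are matched via the uniform flattening. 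The crucial point is then $\Jmc <_\CP \Imc$: domains agree; fixed concept names agree elementwise; and Conditions~3 and~4 are obtained by combining the corresponding conditions for the $\Jmc_e$ versus $\Imc_{\Pmc \cup \{e\}}$, using that whenever $\Jmc$ increases a minimized $A$ at some $d$, the element $d$ lies in $\Delta^{\Imc_{\Pmc \cup \{d\}}}$ and is compensated there by a strictly $\prec$-smaller minimized name decreasing on an element of $\Delta^{\Imc_\Pmc}$, hence decreasing in $\Jmc$ globally, and similarly tracking a $\prec$-minimal strictly decreasing name to witness Condition~4.

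For the direction $(1) \Rightarrow (2)$, let $\Jmc <_\CP \Imc$ be a model of $\Kmc$. First shrink $A^\Jmc$ down to its forced value for every minimized $A \notin \mn{sig}(\Tmc)$ — harmless both for ``$\models \Kmc$'' and for ``$<_\CP \Imc$'' — so that $\Msf^+ \coloneqq \{A \in \Msf \mid A^\Jmc \not\subseteq A^\Imc\} \subseteq \mn{sig}(\Tmc)$. By Condition~4 choose $A_0$ and $d_0 \in A_0^\Imc \setminus A_0^\Jmc$ with $B^\Jmc = B^\Imc$ for all $B \prec A_0$; by Condition~3 choose, for each $A \in \Msf^+$, a name $B_A \prec A$ and $d_A \in B_A^\Imc \setminus B_A^\Jmc$; and, for each role $r$ of $\Tmc$ with $r^\Jmc \neq \emptyset$, a $\Jmc$-witness $u_r \in (\exists r^-)^\Jmc$. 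Let $\Pmc$ collect all these elements; then $|\Pmc| \le 1 + |\mn{sig}(\Tmc) \cap \NC| + 2\,|\mn{sig}(\Tmc) \cap \NR| \le 2|\Tmc| + 1$, which is exactly where this bound comes from. For $e \in \Delta^\Imc$, let $\Jmc_e$ live on $\Delta^{\Imc_{\Pmc \cup \{e\}}}$, give the elements of $\Pmc \cup \{e\} \cup \mn{ind}(\Amc)$ their $\Jmc$-type and all remaining (``spurious'') elements their $\Imc$-type, and define roles by flattening: the $\Jmc$-typed block re-routes its existentials to the $u_r$ and keeps $\Jmc$'s edges among individuals, while the $\Imc$-typed block re-routes to the $w_r$. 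All types involved are TBox-consistent, so $\Jmc_e \models \Tmc$; $\Jmc_e \models \Amc$ because $\Jmc$ does; and $\Jmc_e <_\CP \Imc_{\Pmc \cup \{e\}}$ because fixed names are unchanged (they agree in $\Jmc$ and $\Imc$, and the spurious block retains $\Imc$'s values), with Condition~4 witnessed by $d_0 \in \Pmc$ and Condition~3 by the $d_A \in \Pmc$. Finally, the $\Jmc_e$ agree on $\Delta^{\Imc_\Pmc}$ by construction.

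The step I expect to be the main obstacle, present in both directions, is reconciling the freely-chosen role interpretations of $\Imc$, of the reference models $\Imc_{\Pmc \cup \{e\}}$, and of the improvements $\Jmc_e$ (resp.\ of the glued $\Jmc$): since $<_\CP$ ignores roles, the witnesses $w_r$ picked from $\Imc$ need not be $\Jmc$-witnesses, and a ``mixed'' type assignment need not, a priori, be realisable on the prescribed domain $\Delta^{\Imc_{\Pmc \cup \{e\}}}$ at all. The technical heart is to show that committing all three (families of) interpretations to the uniform flattening recipe keeps every element's basic-concept type equal to the one it must have in order to satisfy $\Tmc$, and that the witnesses $u_r$ placed in $\Pmc$ are enough to make this possible without exceeding the bound $2|\Tmc| + 1$; checking that Conditions~3 and~4 survive the global/local transfer, while keeping the bookkeeping of compensating names finite, is the other delicate part.
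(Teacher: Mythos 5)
Your overall architecture coincides with the paper's: the set $\Pmc$ collects a Condition‑4 witness, one compensating element per minimized name that grows, and one $\Jmc$-side witness $u_r\in(\exists r^-)^\Jmc$ per nonempty role, and the local improvements $\Jmc_e$ are obtained by restricting $\Jmc$'s concept interpretations and re-routing roles to these witnesses. Your direction $(1)\Rightarrow(2)$ is essentially the paper's argument and is fine (your extra step of trimming minimized names outside $\mn{sig}(\Tmc)$ is harmless and even slightly more careful than the paper).

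Direction $(2)\Rightarrow(1)$, however, has a genuine gap: you assemble $\Jmc$ by \emph{keeping the role interpretation of $\Imc$} while replacing concept types by those of the $\Jmc_e$. This does not yield a model of $\Tmc$. Concretely, take $\Tmc=\{B\sqsubseteq\exists r\}$ with $B$ varying and $M$ minimized, and $\Imc$ with $r^\Imc=\emptyset$, $B^\Imc=\emptyset$ and $M^\Imc$ non-minimal. The family $(\Jmc_e)$ may legitimately put some $d$ into $B^{\Jmc_e}$ (with a fresh $r$-edge inside $\Jmc_e$ witnessing $\exists r$), since $B$ varies freely and $<_\CP$ ignores roles. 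Your glued $\Jmc$ then has $d\in B^\Jmc$ but $d\notin(\exists r)^\Jmc$ because $r^\Jmc=r^\Imc=\emptyset$. The basic-concept type of $d$ is \emph{not} determined by its concept-name type alone; the $\exists r$ components are dictated by the role edges you chose, and $\Imc$'s edges realize $\Imc$'s $\exists r$-memberships, not those required by the new types. The fix is what the paper does: take $\Jmc=\bigcup_e\Jmc_e$, i.e.\ inherit the role edges (and their witnesses) from the $\Jmc_e$ themselves, which is well defined because the $\Jmc_e$ agree on $\Delta^{\Imc_\Pmc}$ and each $e$ occurs in only one extra piece. Separately, your one-sentence justification that Conditions~3 and~4 of $<_\CP$ survive the gluing is too quick: a name $B\prec A$ that strictly decreases in $\Jmc_e$ relative to $\Imc_{\Pmc\cup\{e\}}$ may fail to decrease, or even increase, in some other $\Jmc_{e'}$, so one must descend along $\prec$ and re-select compensators globally; this is exactly the case analysis the paper isolates in its auxiliary lemma on unions of local improvements, and it needs to be spelled out rather than asserted.
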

It should now be clear that we have established
Theorem~\ref{thm:combined-lower-dlliter}. After guessing the model
\Imc of \Kmc with $\Imc \not\models q(\bar a)$, we check the
complement of Point~2 of Lemma~\ref{lemma-criteria-combined} in a
brute-force way. More precisely, we first iterate over all
$\Pmc \subseteq \Delta^\Imc$ with
$\sizeof{\Pmc} \leq 2\sizeof{\tbox} + 1$, then over all
interpretations $\Jmc_0$ with $\Delta^{\Jmc_0} = \Delta^{\Imc_\Pmc}$
(as candidates for the common restriction of the models
$(\Jmc_{e})_{e \in \domain{\I}}$ to ${\domain{\I_\Pmc}}$), then over
all $e \in \Delta^\Imc$, and finally over all models $\Jmc_e$ of \Kmc
with $\Jmc_e|_{\Delta^{\Imc_\Pmc}}=\Jmc_0$, and test whether
$\Jmc_e <_\CP \Imc_{\Pmc \cup \{e\}}$. We accept if for every \Pmc and
$\Jmc_0$, there is an $e$ such that for all $\Jmc_e$ the final check
fails. Overall, we obtain a \coNExpTime algorithm.

We provide a matching lower bound that holds even for $\dllitecore$ cKBs.
It is proved by reduction from the complement of Succinct3COL, which
is known to be \NExpTime-complete \cite{Papadimitriou1986}.
%{\color{blue}possibly add some more details.}
%
\begin{restatable}{theorem}{thmcombinedlowerdllitepos}
	\label{thm-combined-lower-dllitepos}
	UCQ answering on circumscribed \dllitecore KBs is \coNExpTime-hard
        w.r.t.\ combined complexity. This holds even with a single
        minimized concept name, no fixed concept names, and no disjointness constraints.
      \end{restatable}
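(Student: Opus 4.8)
The plan is to reduce from the complement of Succinct3COL, which asks whether a graph $G$ given succinctly by a Boolean circuit $C$ (with $2n$ input bits encoding a pair of vertices, each vertex named by an $n$-bit string) admits a proper $3$-colouring; this is \NExpTime-complete, so its complement is \coNExpTime-complete. Given $(C,n)$ I will build a \dllitecore cKB $\Circ(\Kmc)$ with $\Kmc=(\Tmc,\Amc)$, a Boolean UCQ $q$, and show $\Circ(\Kmc)\models q$ iff $G$ is \emph{not} properly $3$-colourable. The ABox will be of size polynomial in $|C|$ and $n$; the TBox, query, and pattern \CP will have a single minimized concept name $M$, no fixed names, and no disjointness constraints.

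The key idea is the usual one for encoding exponentially many vertices inside a DL-Lite model: force each model to contain, for every $n$-bit string $\bar b$, an element representing the vertex $\bar b$, tagged by concept names $B_1,\dots,B_n$ and their `complements' $\overline{B_1},\dots,\overline{B_n}$ recording the bits. Since \dllitecore is very weak (no conjunction on the left, no disjunction) one standard trick is to generate a binary tree of depth $n$ via existential axioms $L_i\sqsubseteq\exists r_i.\top$ together with axioms propagating, along $r_i$, a `bit $i$ is $0$' and a `bit $i$ is $1$' version into level $i+1$; one uses two role names $r_i$ per level or a single role with two target concept names. At the leaves we obtain $2^n$ elements, one per bit-string. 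A colouring is encoded by a single minimized concept name $M$: an element is coloured, say, colour $1$ if it is in $M$, colour $2$ if it is forced into an auxiliary name $M_2$ by $M$ (via varying machinery), colour $3$ otherwise; the subtlety — borrowed from the proof of Theorem~\ref{thm:combined-lower-dlliter} and its single-minimized-name trick — is to simulate the needed case distinction on colours using only $M$, its interaction with varying names, and the minimality requirement, which forces $M$ to be as small as possible and hence forces a genuine choice of colour class on each vertex. Edges of $G$ are read off via the circuit $C$: I will hard-wire $C$ into the TBox gate-by-gate, introducing one concept name per gate and axioms $g\sqsubseteq\ldots$ that evaluate $C$ on the $2n$ bits carried by (a pair of) leaf elements; when $C$ outputs $1$ on $(\bar b,\bar b')$ there must be a `conflict checker' element linked to both $\bar b$ and $\bar b'$. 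The UCQ $q$ then has one disjunct detecting a monochromatic edge: a conflict-checker element whose two endpoints carry the same colour tag. Thus a model with no match for $q$ corresponds exactly to a proper $3$-colouring, and $\Circ(\Kmc)\not\models q$ iff such a model exists iff $G$ is $3$-colourable; circumscription (minimizing $M$) is what pins down a well-defined colour assignment and prevents `cheating' by making an element simultaneously of several colours.

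The main obstacle, exactly as in \cref{thm:combined-lower-dlliter}, is squeezing the construction into a \emph{single} minimized concept name with \emph{no} fixed names and \emph{no} disjointness constraints: ordinarily one would use three minimized names $M_1,M_2,M_3$ and disjointness $M_i\sqcap M_j\sqsubseteq\bot$ to say `each vertex gets exactly one colour', but none of that is available. The resolution is to encode the three colour classes through the single name $M$ and a family of varying names together with existential axioms, so that minimality of $M$ forces, on each leaf element, precisely one of three mutually exclusive configurations; the mutual exclusion is enforced not by disjointness axioms but by adding to $q$ extra disjuncts that fire whenever an element is in two colour configurations at once, so that any countermodel is automatically `legal'. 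Verifying that this gadget behaves correctly — i.e.\ that minimal models of $\Circ(\Kmc)$ are in bijection (up to the tree scaffolding) with proper $3$-colourings, and that no spurious small models sneak in — is the delicate part; the rest (correctness of the circuit encoding, the depth-$n$ tree, and the size bounds) is routine. I would also double-check that the anonymous tree part introduced by the existential axioms does not itself create query matches, which is handled because all $q$-disjuncts mention concept names that only the intended leaf/conflict elements carry.
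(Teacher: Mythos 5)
Your choice of reduction (complement of Succinct3COL) and the broad architecture (an exponential binary tree pinned down by UCQ disjuncts, with a single minimized name and mutual exclusion enforced through the query rather than through disjointness axioms) match the paper. However, several of the concrete devices you sketch would not survive in \dllitecore, and they are exactly the points you defer as ``delicate.'' First, your tree has $2^n$ leaves, one per \emph{vertex}, and you then want a ``conflict checker'' element linked to both endpoints $\bar b$ and $\bar b'$ of an edge. In DL-Lite the anonymous part generated by existential axioms is tree-shaped; there is no way to create an element with role edges to two independently generated leaves, and you cannot assert exponentially many such elements in a polynomial ABox. The paper sidesteps this by building a tree of depth $2n$ whose $2^{2n}$ leaves each encode a \emph{pair} of vertices, so that the edge test and the colour-clash test are both local to a single leaf. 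Second, you propose to ``hard-wire $C$ into the TBox gate-by-gate'' with axioms evaluating the circuit; \dllitecore concept inclusions have only basic concepts on both sides, so an AND gate $g\equiv g_1\sqcap g_2$ is simply not expressible there. The paper instead gives each leaf an outgoing role $\mn{gate}_g$ per gate, pointing into a two-element pool of truth-value individuals, and enforces the gate semantics entirely through UCQ disjuncts.

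Third, your colouring gadget (colour $1$ = membership in $M$, colour $2$ = forced into an auxiliary $M_2$, colour $3$ = otherwise) is the step most likely to fail: minimizing $M$ biases every element away from colour $1$, and ``otherwise'' is not a condition a positive UCQ disjunct can detect, so you cannot punish an uncoloured or doubly coloured element. The paper's single minimized name plays a different role: $\mn{Min}$ is asserted on a \emph{constant-size pool} of ABox individuals (three colours, two truth values), and axioms of the form $\exists r^- \sqsubseteq \mn{Min}$ force every choice-making role ($\mn{hBit}_j$, $\mn{hCol}$, $\mn{gate}_g$) to land inside that pool in any minimal model; the actual combinatorial choice is \emph{which} pool member the edge hits, and consistency (each bit set one way, each vertex one colour, no monochromatic edge) is checked by UCQ disjuncts using an explicitly asserted $\mn{neq}$ role between distinct pool members. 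The same mechanism replaces your concept names $B_i,\overline{B_i}$ for recording bits, whose complementarity you could not otherwise enforce without disjointness or disjunction. You should rebuild your construction around this ``pool of asserted $\mn{Min}$-instances plus $\mn{neq}$'' idea before the correctness argument can go through.
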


\subsection{Data complexity}
\label{subsection-dllite-data}
%!TEX root = main.tex

\newcommand{\refmap}{\mathsf{ref}}
\newcommand{\refof}[1]{\refmap(#1)}

We now consider data complexity, where the landscape is less diverse.
Indeed, we obtain \coNP-completeness for all DLs between
$\dllitebool^{\Hmc}$ and \dllitecore, and both for UCQs and CQs.
\begin{restatable}{theorem}{thmdataupperhornh}
	\label{thm-data-upper-hornh}
	UCQ evaluation on circumscribed $\dllitebool^{\Hmc}$ KBs is in \coNP w.r.t.\ data complexity.
\end{restatable}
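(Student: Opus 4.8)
The plan is to establish a polynomial-size countermodel property for $\dllitebool^{\Hmc}$ cKBs, analogous to Lemma~\ref{lemma-alchi-countermodel} and Lemma~\ref{lemma-bool-countermodel} but with the TBox, query, and circumscription pattern all of constant size, and then obtain the \coNP\ upper bound by a guess-and-check procedure. First I would take any countermodel $\I$ against $\Circ(\Kmc) \models q(\bar a)$ and pass to its unraveling $\interlacingof{\I}$, which by Lemma~\ref{lem_interlacing_is_countermodel} is again a countermodel and has the regular shape described before that lemma: a base part $\basedomainof{\I}$ containing $\indsof{\abox}$, the core, and one representative per non-core type, together with tree-shaped parts attached to the base, possibly with back-edges into the core. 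Because we are now in DL-Lite without role inclusions first (and then with them, as in Lemma~\ref{lemma-bool-countermodel}), each element outside the base has at most one successor per role, so the tree parts are genuine trees of bounded branching. Crucially, with $\Tmc$ and $q$ of constant size, the base part has size polynomial in $\sizeof{\abox}$: it is $\indsof{\abox}$ plus $\coredomainof{\I}$ (of size at most $\sizeof{\abox} + 2^{2\sizeof{\tbox}}$, hence polynomial since $\tbox$ is fixed — here I would invoke Lemma~\ref{lem-finite-model-property-wrt-core} to replace $\I$ by a model whose core part is small) plus constantly many type representatives.

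Next I would apply the quotient construction from Section~\ref{subsection-alchi-data}, collapsing the tree parts by the equivalence $\sim_{\maxradius}$. With $\tbox$, $q$, $\CP$ fixed, the radius $\maxradius$ and modulus $\maxmod$ are constants, so each $\sim_{\maxradius}$-class is determined by a constant amount of local data (an $n$-neighborhood shape, the types along it, the back-edges to the base, and a length residue mod $\maxmod$); hence the number of classes that touch any fixed base element is polynomial in $\sizeof{\abox}$, and the quotient $\interlacingof{\I}/{\sim_{\maxradius}}$ has polynomially many elements in total. By (the DL-Lite analogue of) Lemma~\ref{lem-quotient} — using Lemma~\ref{lem-lemma5} for minimality and the local-homomorphism argument sketched there for $\not\models q(\bar a)$ — this quotient is still a countermodel. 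This yields: if $\Circ(\Kmc) \not\models q(\bar a)$, then there is a countermodel of size polynomial in $\sizeof{\abox}$.

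Given the size bound, the \coNP\ algorithm for the complement is: guess a candidate interpretation $\Imc$ of polynomial size, verify in polynomial time that $\Imc$ is a model of $\Kmc$ and that $\Imc \not\models q(\bar a)$ (both are polynomial since $q$ and $\Tmc$ are fixed), and verify that $\Imc$ is minimal w.r.t.\ $<_{\CP}$. The last check is the one that needs care, since naively deciding minimality asks for a model $\Jmc <_{\CP} \Imc$ of $\Kmc$, which is an \NP\ question and would only give $\coNP^{\NP} = \Uppi^{\mathrm P}_2$. The main obstacle, therefore, is to perform the minimality test in deterministic polynomial time (in data complexity). I would reuse the decomposition idea behind Lemma~\ref{lemma-criteria-combined}: non-minimality of $\Imc$ is equivalent to the existence of a small set $\Pmc \subseteq \Delta^\Imc$ with $\sizeof{\Pmc} \le 2\sizeof{\tbox}+1$ (constant size, since $\tbox$ is fixed), a common restriction $\Jmc_0$ on $\Delta^{\Imc_\Pmc}$ (also of constant size), and for each $e \in \Delta^\Imc$ a model $\Jmc_e$ of $\Kmc$ with $\Jmc_e <_{\CP} \Imc_{\Pmc \cup \{e\}}$ extending $\Jmc_0$. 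Since $\Imc_{\Pmc \cup \{e\}}$ has constant size, each $\Jmc_e$ can be searched for exhaustively in constant time; iterating over the constantly-many $\Pmc$ and $\Jmc_0$ and over the polynomially-many $e$ gives a polynomial-time test of non-minimality, hence of minimality. Putting the guess and the two polynomial-time checks together, the complement of UCQ evaluation is in \NP, so UCQ evaluation on circumscribed $\dllitebool^{\Hmc}$ KBs is in \coNP\ w.r.t.\ data complexity, as claimed. Full details, in particular the DL-Lite versions of Lemmas~\ref{lem_interlacing_is_countermodel} and~\ref{lem-quotient} and the verification that $\maxradius$ and $\maxmod$ remain constant in data complexity, are deferred to the appendix.
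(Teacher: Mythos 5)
Your overall strategy (polynomial-size countermodel via unraveling and quotient, then guess-and-check with a deterministic polynomial-time minimality test) is the paper's strategy, and the first half of your argument is fine. The gap is in the minimality test. You claim that in the decomposition of Lemma~\ref{lemma-criteria-combined} the interpretations $\Imc_{\Pmc\cup\{e\}}$, and the common restriction $\Jmc_0$ on $\Delta^{\Imc_\Pmc}$, have \emph{constant} size once $\Tmc$ and $\CP$ are fixed. They do not: by definition, $\Delta^{\Imc_\Pmc} = \Pmc \cup \mn{ind}(\Amc) \cup \{w_r \mid \dots\}$ contains \emph{all} of $\mn{ind}(\Amc)$, so each $\Imc_{\Pmc\cup\{e\}}$ has size $\Theta(|\Amc|)$. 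Consequently, the exhaustive search for a model $\Jmc_e <_\CP \Imc_{\Pmc\cup\{e\}}$ of $\Kmc$ is a search over exponentially many (in $|\Amc|$) interpretations, and the iteration over candidates $\Jmc_0$ is likewise exponential. The paper explicitly flags this: deciding whether some $\Jmc_e <_\CP \Imc_{\Pmc\cup\{e\}}$ exists is conjectured to be \coNP-complete in data complexity, so reusing Lemma~\ref{lemma-criteria-combined} as is only yields $\Uppi^{\mathrm P}_2$, which is exactly the bound you set out to beat.

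What is missing is the content of Lemma~\ref{lemma-criteria-data}: one must \emph{redefine} $\Imc_\Pmc$ so that its domain is $\Pmc \cup E \cup \{w_r \mid \dots\}$, where $E$ contains one representative $e_{t_1,t_2}$ per realized pair of (ABox type, model type) — constantly many once $\Tmc$ is fixed — and simultaneously replace $\Kmc$ by a constant-size KB $\Kmc_\Pmc = (\Tmc,\Amc_\Pmc)$ whose ABox records only the ABox types of the retained individuals and drops all role assertions. The correctness of this replacement is not automatic: it relies on $\Tmc$ being in normal form so that every consequence of a role assertion is already reflected in the ABox types (the paper notes this step would \emph{fail} for $\EL$, where CIs $\exists r.B \sqsubseteq A$ can be left unsatisfied after dropping role assertions, and this is precisely what keeps $\EL$ at $\Uppi^{\mathrm P}_2$). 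The ``$1\Rightarrow 2$'' direction then needs the witnesses $e_{t_1,t_2}$ to blow a small counter-witness $\Jmc' <_\CP \Imc_\Pmc$ back up to a full $\Jmc <_\CP \Imc$. Without this modified decomposition your minimality check does not run in polynomial time, so the proposal as written does not establish the \coNP\ bound.
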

To prove Theorem~\ref{thm-data-upper-hornh}, we again guess a model
\Imc of \Kmc with $\Imc \not\models q(\bar a)$ and $|\Delta^\Imc|$
bounded as in Theorem~\ref{lemma-bool-countermodel}, and then verify
that \Imc is minimal w.r.t.\ $<_\CP$.  For the latter, we introduce a
variation of Lemma~\ref{lemma-criteria-combined}.  The original
version of Lemma~\ref{lemma-criteria-combined} is not helpful because
its Point~2 involves deciding whether, given an interpretation
$\Imc_{\Pmc \cup \{e\}}$, there is a model
$\Jmc_e <_\CP \Imc_{\Pmc \cup \{e\}}$ of \Kmc, and given that
$\mn{ind}(\Amc) \subseteq \Imc_{\Pmc \cup \{e\}}$ there is no reason
to believe that this can be done in polynomial time in data
complexity.  We actually conjecture this problem to be \coNP-complete.
In the following, we vary the definition of the interpretations
$\Imc_\Pmc$ so that their size is independent of that of $\Amc$.

Let $\Circ(\Kmc)$ be a $\dllitebool^{\Hmc}$ cKB with
$\kb = (\tbox, \abox)$ and \Imc a model of~\Kmc. We assume \tbox to be
in normal form. For an element $e \in \domain{\I}$, we define its
\emph{ABox type} to be
\[
\atypeinof{\abox}{e} = \{ A \mid A \in \cnames, \kb \models A(e) \}.
\]
Note that \atypeinof{\abox}{e} actually needs not be a proper type as
defined in Section~\ref{section-alchi} due to the presence of
disjunction in $\dllitebool^\Hmc$. If $e \notin \indsof{\abox}$, which
is permitted, then $\atypeinof{\abox}{e}$ is simply the set of all
concepts names $A$ with $\Kmc \models \top \sqsubseteq A$.

For each pair $(t_1, t_2)$ such that $\atypeinof{\abox}{e} = t_1$ and
$\typeinof{\I}{e} = t_2$ for some $e \in \domain{\I}$, we choose such
an element $e_{t_1, t_2}$; we use $E$ to denote the set of all elements chosen in this way.
% Each $e_{t_1, t_2}$ will serve as a reference for other elements with
% matching types:
% for simplicity, we introduce the mapping
% $\refmap : e \mapsto e_{\atypeinof{\abox}{e}, \typeinof{\I}{e}}$.
For each role $r$ used in \Tmc such that
$r^\Imc \neq \emptyset$, we choose an element
$w_r \in (\exists r^-)^\I$. Now define, for every $\Pmc \subseteq \domain{\I}$,
the interpretation $\Imc_\Pmc$ as follows:
\begin{align*}
  \Delta^{\Imc_\Pmc} = \; & \Pmc \cup E \cup \{w_r \mid r \text{ used in } \Tmc, r^\I \neq \emptyset \}
  \\
	\cstyle{A}^{\I_\Pmc} = \; & \cstyle{A}^{\I} \cap \domain{\I_\Pmc}
	\\
	\rstyle{r}^{\I_\Pmc} = \; %& \rstyle{r}^{\I} \cap (\indsof{\abox'} \times \indsof{\abox'}) 
	%\\
	& %\cup 
	\{ (e, w_s) \mid e \in (\exists s)^\I \cap \domain{\I_\Pmc}, \tbox \models s \incl r \}
	\\
	& \cup \{ (w_s, e) \mid e \in (\exists s)^\I \cap \domain{\I_\Pmc}, \tbox \models s \incl r^- \}
\end{align*}
Note that $\sizeof{\I_\Pmc} \leq 4^\sizeof{\tbox} + \sizeof{\tbox} + \sizeof{\Pmc}$.
We also define an ABox
\[
\abox_\Pmc = \{ A(\istyle{a}) \mid a \in \indsof{\abox} \cap \domain{\I_\Pmc}, A \in \atypeinof{\abox}{\istyle{a}} \}
\]
and set $\kb_\Pmc = (\tbox, \abox_\Pmc)$.  The ABoxes $\Amc_\Pmc$ 
act as a decomposition of the ABox \Amc, similarly to how the
interpretations $\Imc_\Pmc$ act as a decomposition of the
interpretation \Imc.  Note that
$\bigcup_\Pmc \Amc_\Pmc$ does not contain role assertions.  This is
compensated by the use of $\atypeinof{\abox}{\istyle{a}}$ in the
definition of $\Amc_\Pmc$ and the fact that, since \Tmc is in normal
form, all relevant endpoints of role assertions are `visible' in the
ABox types.  Note that this fails in the case of $\EL$ where dropping
role assertions could result in CIs $\exists r.B \incl A$ to be left
unsatisfied, and such CIs are crucial for proving
$\Uppi^\textrm{P}_2$-hardness in that case, see the proof of
Theorem~\ref{thm-data-lower-el}.

The following is the announced variation of
Lemma~\ref{lemma-criteria-combined}. %  (although it is formulated
% in a slightly different way).
%
\begin{restatable}{lemma}{lemmacriteriadata}
	\label{lemma-criteria-data}
	% Let $\I$ be a model of a $\dllitebool^{\Hmc^-}$ KB $\kb$ and $\CP$ a circumscription pattern.
	The following are equivalent:
	\begin{enumerate}
		\item $\I \models \circkb$; 
		\item $\I_\Pmc \models \Circ(\kb_\Pmc)$ for all $\Pmc \subseteq \domain{\I}$ with $\sizeof{\Pmc} \leq 2\sizeof{\tbox} +1$.
	\end{enumerate}
\end{restatable}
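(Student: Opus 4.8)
The plan is to prove the two implications separately, each by contraposition, following the blueprint of Lemma~\ref{lemma-criteria-combined} but accounting for the new features: the ABoxes $\abox_\Pmc$ carry ABox types rather than the original assertions, and $\sizeof{\I_\Pmc}$ is now independent of $\sizeof{\abox}$.

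\emph{From $\neg(1)$ to $\neg(2)$.} Let $\Jmc$ be a model of $\kb$ with $\Jmc <_\CP \I$. I choose $\Pmc \subseteq \domain{\I}$ to contain: an element of $A_0^\I \setminus A_0^\Jmc$, where $A_0$ is $\prec$-minimal among the minimized names $B$ with $B^\Jmc \subsetneq B^\I$ (such an $A_0$ exists by item~4 of the definition of $<_\CP$); for every minimized name $B$ with $B^\Jmc \subsetneq B^\I$, an element of $B^\I \setminus B^\Jmc$; for every minimized name $A$ with $A^\Jmc \not\subseteq A^\I$, an element of $A^\Jmc \setminus A^\I$; and, for each role $r$ used in $\tbox$ with $(\exists r^-)^\Jmc \neq \emptyset$, an element of $(\exists r^-)^\Jmc$, used to repair rerouting below. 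Since $\tbox$ uses at most $\sizeof{\tbox}$ concept names and roles, this can be arranged with $\sizeof{\Pmc} \le 2\sizeof{\tbox} + 1$. I then obtain $\Jmc'$ from $\Jmc$ essentially as $\I_\Pmc$ is obtained from $\I$: restrict to $\domain{\I_\Pmc}$ and reroute role edges through the witnesses $w_r$ (falling back to the $\Jmc$-witnesses just added to $\Pmc$ when $w_r$ is undefined). A direct check shows that at every surviving element the concept names and the truth of every basic concept $\exists r$ are the same in $\Jmc'$ as in $\Jmc$, so $\Jmc' \models \tbox$, and $\Jmc' \models \abox_\Pmc$ since $\abox_\Pmc$ records only entailments of $\kb$, which hold in $\Jmc$. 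Finally $\Jmc' <_\CP \I_\Pmc$: items~1 and~2 are immediate from $A^{\Jmc'} = A^\Jmc \cap \domain{\I_\Pmc}$ and $A^{\I_\Pmc} = A^\I \cap \domain{\I_\Pmc}$; item~4 holds via the chosen element of $A_0^\I \setminus A_0^\Jmc$ together with $B^{\Jmc'} = B^{\I_\Pmc}$ for all $B \prec A_0$ (because $B^\Jmc = B^\I$); and item~3 holds because $A^{\Jmc'} \not\subseteq A^{\I_\Pmc}$ forces $A^\Jmc \not\subseteq A^\I$, hence some $B \prec A$ has $B^\Jmc \subsetneq B^\I$, and the element of $B^\I \setminus B^\Jmc$ put into $\Pmc$ witnesses $B^{\Jmc'} \subsetneq B^{\I_\Pmc}$.

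\emph{From $\neg(2)$ to $\neg(1)$.} Fix $\Pmc$ and a model $\Jmc'$ of $\kb_\Pmc$ with $\Jmc' <_\CP \I_\Pmc$. I assemble a model $\Jmc$ of $\kb$ with domain $\domain{\I}$ and $\Jmc <_\CP \I$: every element of $\domain{\I_\Pmc}$ keeps its $\Jmc'$-concept-memberships, and every $d \in \domain{\I} \setminus \domain{\I_\Pmc}$ inherits the $\Jmc'$-concept-memberships of the representative $e_{t_1,t_2} \in E$ for $t_1 = \atypeinof{\abox}{d}$ and $t_2 = \typeinof{\I}{d}$, which lies in $\domain{\I_\Pmc}$ by construction of $E$. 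Role edges are assembled from those of $\I$ by rerouting so that every element has an $r$-successor in $\Jmc$ exactly when its $\Jmc'$-avatar does in $\Jmc'$. Then $\Jmc \models \tbox$ (avatars agree on concept names and on all $\exists r$, and $\Jmc' \models \tbox$); $\Jmc \models \abox$ because every ABox individual relevant to $\abox$ appears in $\domain{\I_\Pmc}$, its ABox type is recorded in $\abox_\Pmc$, and — since $\tbox$ is in normal form — the concept names that a role assertion $r(a,b) \in \abox$ forces on $a$ and on $b$ already belong to $\atypeinof{\abox}{a}$ and $\atypeinof{\abox}{b}$ (this is the point that fails for $\EL$, where $\exists r.B$ on the left of a CI would require retaining the assertion); and $\Jmc <_\CP \I$ because outside $\domain{\I_\Pmc}$ no fixed concept name changes and no minimized concept name grows (each $d$ shares its $\I$-type with its representative), whereas inside $\domain{\I_\Pmc}$ the $<_\CP$-witnesses of $\Jmc' <_\CP \I_\Pmc$ are inherited.

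The main obstacle is the preservation of $\tbox$-satisfaction in both surgeries, concentrated on CIs whose left- or right-hand side mentions an existential $\exists r$, since rerouting and the avatar construction change which elements have role successors. The decisive invariant is that the truth of every $\exists r$ is preserved (at a surviving element in the decomposition, and at an element versus its avatar in the assembly), which rests on the normal-form bookkeeping that lets $\abox_\Pmc$ replace the dropped role assertions and on the careful inclusion in $\Pmc$ of the extra $\Jmc$-witnesses repairing the rerouting. Getting the boundary role edges right in the assembly direction — respecting avatars while keeping CIs $A \sqsubseteq \exists r$ satisfied at outside elements — is the most delicate part, and is where the detailed case analysis in the appendix is needed.
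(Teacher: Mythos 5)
Your proposal is correct and follows essentially the same route as the paper: for $\neg(1)\Rightarrow\neg(2)$ you collect into $\Pmc$ the role witnesses plus one witness per violated/satisfied minimality condition and restrict-and-reroute $\Jmc$ to get $\Jmc'<_\CP\I_\Pmc$, and for $\neg(2)\Rightarrow\neg(1)$ you blow $\Jmc'$ back up to $\domain{\I}$ by giving each outside element the $\Jmc'$-memberships of its representative $e_{t_1,t_2}$ and rerouting roles, using exactly the paper's observation that the normal form lets the ABox types in $\abox_\Pmc$ absorb the dropped role assertions. Two cosmetic slips that do not affect the argument: your $\Pmc$ lists a redundant third category of witnesses (elements of $A^{\Jmc}\setminus A^{\I}$), which you should drop to keep the count within $2\sizeof{\tbox}+1$, and the claim that ``every ABox individual relevant to $\abox$ appears in $\domain{\I_\Pmc}$'' is false as stated — what saves the day is, as you then say, that outside individuals are served by representatives sharing their ABox type.
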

%\input{criteria-data}
%
%Intuitively, the elements $e_{t_1,t_2}$ ensure that we have `enough 
%room' for assigning types when proving 
%that a $<_\CP$-minimal model \Imc of $\Kmc_\Pmc$ with 
%$\Imc \not\models q(\bar a)$ gives rise to a $<_\CP$-minimal model 
%of $\Kmc$ with the same property, c.f.\ the proof of the core lemma. 
In the ``$1 \Rightarrow 2$'' direction we
  use the witnesses $e_{t_1,t_2}$ to extend a potential model
  $\J' <_\CP \I_\Pmc$ of some
$\kb_\Pmc$
   to a model
  $\J <_\CP \I$ of $\kb$, obtaining a contradiction.
  In
  the ``$2 \Rightarrow 1$'' direction, for a potential model
  $\J <_\CP \I$ of $\kb$, we can find a $\Pmc$ with
  $\sizeof{\Pmc} \leq 2\sizeof{\tbox} +1$ so that we can construct
  from $\J$ a model $\J' <_\CP \Imc_\Pmc$ of 
$\kb_\Pmc$.
%In contrast with the core lemma, an additional room with polynomial size is sufficient as $\dlliteboolh$ cannot enforce a same type to be required more than once.

To establish Theorem~\ref{thm-data-upper-hornh}, it thus suffices to
argue that checking Point~2 of Lemma~\ref{lemma-criteria-data}
can be implemented in time polynomial in $|\Amc|$. We iterate over all
polynomially many
sets \Pmc with $\sizeof{\Pmc} \leq 2\sizeof{\tbox} +1$
(recall that $\tbox$ is fixed)
and check whether
$\I_\Pmc \models \Circ(\kb_\Pmc)$ in a brute force way. To verify
the minimality of $\I_\Pmc$, 
we iterate over all models \Jmc of $\kb_\Pmc$ with
$\Delta^\Jmc=\Delta^{\Imc_\Pmc}$ (of which there are only polynomially many,
thanks to the modified definition of
$\Delta^{\Imc_\Pmc}$), and make sure that
$\Jmc \not<_\CP \Imc_\Pmc$ for any such \Jmc. Overall, we 
obtain a \coNP algorithm.

% This allows to improve the $\Uppi^p_2$ upper bound from the $\ALCHIO$ case into a $\coNP$ one as modelhood of a guessed candidate countermodel can now be verified in $\PTime$.

% \input{thm-data-upper-dlliter}

% \begin{restatable}{theorem}{thmdataupperhornh}
% 	\label{thm-data-upper-hornh}
% 	UCQ answering over $\dllitebool^{\Hmc^-}$ cKBs is in \coNP w.r.t.\ data complexity.
% \end{restatable}

The $\coNP$ upper bound turns out to be tight, even for
\dllitecore cKBs and CQs, and with very restricted circumscription
patterns. We reduce from 3-colorability. 
\begin{restatable}{theorem}{thmdatalowerdllitepos}
	\label{thm-data-lower-dllitepos}
	CQ evaluation on circumscribed \dllitecore KBs is \coNP-hard
        w.r.t.\
        data complexity.
        This holds even with a single minimized concept name, no fixed
        concept names, and no disjointness constraints.
\end{restatable}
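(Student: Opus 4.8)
The plan is to reduce graph 3-colorability to the complement of CQ evaluation on a circumscribed \dllitecore{} cKB, keeping the TBox, the circumscription pattern, and the query all fixed so that the reduction witnesses hardness in \emph{data} complexity. Given an undirected graph $G=(V,E)$, the ABox $\Amc_G$ will encode the vertices and edges of $G$: for each vertex $v\in V$ an individual $a_v$ with an assertion like $\mn{Vertex}(a_v)$, and for each edge $\{u,v\}\in E$ we record its endpoints via role assertions, e.g.\ $\mn{edge}(e_{uv},a_u)$ and $\mn{edge}(e_{uv},a_v)$ on a fresh individual $e_{uv}$. The fixed TBox forces every vertex to pick one of the three colors; the natural way to do this in \dllitecore{} (which has no disjunction on the right) is to introduce, for each color $i\in\{1,2,3\}$, an auxiliary existential $\exists r_i$ and a CI $\mn{Vertex}\sqsubseteq \exists r_i$ guarded so that the single minimized concept name $\mn{Ab}$ (abnormality) is what actually does the work: intuitively a vertex is "abnormal for color $i$" unless it is colored $i$, and minimizing $\mn{Ab}$ forces each vertex into exactly one color class while the open-world anonymous successors absorb the alternatives. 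One then arranges the query $q$ to be a fixed Boolean CQ that has a match in \emph{every} minimal model precisely when $G$ is \emph{not} 3-colorable — e.g.\ $q$ detects a monochromatic edge by looking for an edge individual whose two endpoints lie in the same color class.

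First I would fix the TBox and circumscription pattern carefully. The subtlety, as the paper emphasizes, is doing this with a \emph{single} minimized concept name and no fixed concept names, so I cannot simply minimize three separate color predicates. The trick will be to let the three "color" predicates be varying concept names $C_1,C_2,C_3$ constrained by CIs that link them to $\mn{Ab}$ in such a way that any model minimizes the $\mn{Ab}$-extension exactly when each vertex lies in as few $C_i$ as the consistency constraints permit, namely exactly one; the disjointness-free setting means I must be clever — rather than asserting $C_i\sqcap C_j\sqsubseteq\bot$, I would use existential axioms to push "being in two colors" into forcing an extra $\mn{Ab}$-instance somewhere, so that minimality of $\mn{Ab}$ rules it out. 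Next I would set up the edge gadget: the key design goal is that in a minimal model the endpoints of every edge are in distinct $C_i$ unless we are forced otherwise, and the query should have a homomorphism into a model iff some edge is monochromatic. Then I would prove the two directions of correctness: (soundness) if $G$ is 3-colorable, exhibit a minimal model avoiding $q$ by using a proper coloring to assign the $C_i$; (completeness) if $\Circ(\Kmc_G)\not\models q$, take a countermodel $\Imc$, read off from its $C_i$-memberships a candidate coloring of $V$, and argue that minimality of $\Imc$ together with $\Imc\not\models q$ forces this to be a proper 3-coloring.

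The main obstacle I anticipate is the completeness direction together with the single-minimized-concept-name restriction: I must guarantee that a minimal model genuinely induces a \emph{total, single-valued, proper} coloring of $V$, and each of these three properties has to be extracted purely from minimality of $\mn{Ab}$ plus the fixed TBox plus "$\Imc\not\models q$", without the convenience of disjointness constraints. "Totality" (every vertex gets some color) and "single-valuedness" (no vertex gets two colors) are the delicate points, because a priori a minimal model might realize the colors on anonymous elements rather than on the ABox individuals, or might leave a vertex uncolored; I would handle this by making the existential witnesses for $\exists r_i$ either reuse ABox individuals or be forced by the TBox to carry $\mn{Ab}$, so that any model not assigning a color to $a_v$ directly has a strictly smaller sibling model (contradicting minimality), and symmetrically a model assigning two colors to $a_v$ can be shrunk. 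Once the coloring is total and single-valued, "proper" follows because a monochromatic edge is exactly what the query detects, so $\Imc\not\models q$ gives it. Finally I would check the size and uniformity conditions: $\Amc_G$ is computable in polynomial (indeed logarithmic-space) time from $G$, and $\Tmc$, $\CP$, $q$ do not depend on $G$, which is exactly what data-complexity hardness requires; combined with the \coNP{} upper bound of Theorem~\ref{thm-data-upper-hornh}, this yields \coNP-completeness and in particular establishes the lower bound claimed in Table~\ref{tab-results-UCQ} for all of \dllitecore{}, \dllitecoreh{}, \dlliteboolh{}, and \dllitebool{}.
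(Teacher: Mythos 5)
Your high-level strategy coincides with the paper's: reduce from 3-colorability with a fixed TBox, circumscription pattern, and Boolean query, let minimality of a single concept name force every vertex to receive a color, and let the query detect a monochromatic edge. But your concrete gadget has two genuine problems. First, if the three colors are concept names $C_1,C_2,C_3$, then ``the two endpoints of an edge share a color'' is inherently a disjunction over $i\in\{1,2,3\}$ and hence a \emph{union} of three CQs, not a single CQ; the theorem (and the data-complexity row of Table~\ref{tab-results-UCQ}) claims hardness already for CQs. Second, several of the axioms you gesture at are not expressible in \dllitecore{}: CIs there relate only \emph{basic} concepts, so there is no way to write ``$C_i\sqcap C_j$ forces an extra $\mn{Ab}$-instance'' (that needs \dllitehorn) and no ``guarded'' form of $\mn{Vertex}\sqsubseteq\exists r_i$.

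The paper's proof avoids both issues by making the colors \emph{individuals} rather than concepts. The ABox asserts $\mn{Color}(c)$ for $c\in\{r,g,b\}$ plus the vertices and edges of $G$; the fixed TBox is just $\mn{Vertex}\sqsubseteq\exists\,\mn{hasCol}$ and $\exists\,\mn{hasCol}^-\sqsubseteq\mn{Color}$, with $\mn{Color}$ the single minimized concept; the query is the single CQ $\exists y_1 y_2 y\,\mn{edge}(y_1,y_2)\wedge\mn{hasCol}(y_1,y)\wedge\mn{hasCol}(y_2,y)$. The key lemma is that in every minimal model $\mn{Color}^\Imc=\{r,g,b\}$ (any extra color can be dropped by rerouting all $\mn{hasCol}$-edges to $r$, yielding a strictly smaller model), so every vertex points to one of the three named colors and a choice function yields a total coloring; your ``single-valuedness'' worry then evaporates, since the query only asks whether the endpoints share \emph{some} color, and if it fails every choice function is a proper 3-coloring. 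I would recommend reworking your gadget along these lines: totality is the only property you need to extract from minimality, and representing colors as individuals reached by one role is what makes the monochromaticity test a single CQ in \dllitecore{}.
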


\section{Atomic Queries}
\label{section-atomic-queries-el}
%!TEX root = main.tex

We study the evaluation of atomic queries on circumscribed KBs, which
is closely related to concept satisfiability w.r.t.\ such KBs. In
fact, the two problems are mutually reducible in polynomial time.
Our
results from this section, summarized in
Table~\ref{tab-results-instance}, can thus also be viewed as completing the
complexity landscape for concept satisfiability, first studied in
\cite{Bonatti2006,DBLP:journals/jair/BonattiFS11}.

%!TEX root = main.tex

\begin{table*}
	\centering
	\begin{tabular}{cccc}
		& $\EL$, $\ALCHIO$ 
		& $\dllitebool$, $\dlliteboolh$ 
		& $\dllitecore$, $\dllitehornh$ 
		%& $\dllitepos, \dllitepos^{\mathcal{H}}$
		\\ \midrule
		\it Combined complexity 
		& $\coNExp^{\NPclass}$\complete$^{(\dagger)}$ 
		\stylingref{(Thm.\;\ref{thm-combined-upper-alchio-aq}, \ref{thm-combined-lower-el-aq})}
		& $\coNExp$\complete
		\stylingref{(Thm.\;\ref{thm-combined-upper-boolh-instance}, \ref{thm-combined-lower-bool-instance})}
		& $\Uppi^\textrm{P}_2$\complete$^{(\ddagger)}$ 
		\stylingref{(Thm.\;\ref{thm-combined-upper-hornh-instance})}
		%& \coNP-c
		\smallskip\\
		\it Data complexity 
		& $\Uppi^\textup{P}_2$\complete
		\stylingref{(Thm.\;\ref{thm-data-upper-alchi}, \ref{thm-data-lower-el})}
		& in $\PTime$ 
		\stylingref{(Thm.\;\ref{thm-data-upper-boolh-instance})}
		& in $\PTime$ 
		\stylingref{(Thm.\;\ref{thm-data-upper-boolh-instance})}
		%& in $\PTime$
		\\\bottomrule
	\end{tabular}
	\caption{Complexity of AQ evaluation on circumscribed KBs. $^{(\dagger)}$: completeness already known for $\ALC(\Imc\Omc)$. $^{(\ddagger)}$: hardness already known.}
	\label{tab-results-instance}
\end{table*}

%\begin{table}
%	\centering
%	\begin{tabular}{c@{\qquad}c@{\qquad}c}
	%		& \it Data & \it Combined
	%		\\ \midrule
	%		$\dllitepos^{(\mathcal{H})}$
	%		& in $\PTime$ & in $\PTime$
	%		\smallskip\\
	%		$\dllitecore$, $\dllitehorn^\mathcal{H^-}$
	%		& in $\PTime$ & $\Uppi^\textrm{p}_2$-c
	%		\smallskip\\
	%		$\dllitebool$, $\dllitebool^\mathcal{H^-}$
	%		& in $\PTime$ & $\coNExp$-h / in $\coNExp^{\NPclass}$
	%		\smallskip\\
	%		$\EL$, $\ALCHI$
	%		& $\Uppi^\textrm{p}_2$-c & $\coNExp^{\NPclass}$-c
	%		\\ \bottomrule
	%	\end{tabular}
%	\caption{Complexity of IQ evaluation on
	%		circumscribed KBs.}
%	\label{tab-results-instance}
%\end{table}

%\begin{table*}
%	\centering
%	\begin{tabular}{ccccc}
%		& $\dllitepos, \dllitepos^{\mathcal{H}}$ & $\dllitecore$, $\dllitehorn^\mathcal{H^-}$ & $\dllitebool$, $\dllitebool^\mathcal{H^-}$ & $\EL$, $\ALCHI$ 
%		\\ \midrule
%		\it Data complexity &
%		in $\PTime$ & in $\PTime$ & in $\PTime$ & $\Uppi^\textrm{p}_2$-c
%		\smallskip\\
%		\it Combined complexity &
%		in $\PTime$  & $\Uppi^\textrm{p}_2$-c & $\coNExp$-h / in $\coNExp^{\NPclass}$ & $\coNExp^{\NPclass}$-c
%		\\ \bottomrule
%	\end{tabular}
%	\caption{Complexity of IQ evaluation on
%		circumscribed KBs.}
%	\label{tab-results-instance}
%\end{table*}

\subsection{Between $\ALCHIO$ and $\EL$}
\label{subsection-instance-alchi}
%!TEX root = main.tex

Concept satisfiability w.r.t.\ circumscribed $\ALCIO$ KBs was proved to be $\coNExpTime^\NPclass$-complete in \cite{Bonatti2009}.  The proof of the upper bound can easily be extended to cover also role inclusions. We thus obtain:
\begin{theorem}\textnormal{\cite{Bonatti2009}}
  	\label{thm-combined-upper-alchio-aq}
        AQ evaluation on circumscribed \ALCHIO KBs is in
        $\coNExpTime^\NPclass$ w.r.t.\ combined complexity.  
\end{theorem}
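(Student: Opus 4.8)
The plan is to reduce AQ evaluation on circumscribed \ALCHIO KBs to the same problem for $\ALCIO$ without role inclusions, and then invoke the known $\coNExpTime^\NPclass$ upper bound from \cite{Bonatti2009}. First I would eliminate role inclusions by a standard rewriting: for each role name $r$ introduce a fresh role name $\widehat r$ whose extension is meant to be the closure of $r$ under the RIs in \Tmc. Concretely, replace every occurrence of $r$ (and $r^-$) in concepts of \Tmc by $\widehat r$ (resp.\ $\widehat r^-$), drop all RIs, and add, for every pair of roles $s, r$ with $\Tmc \models s \sqsubseteq r$, a CI $\top \sqsubseteq \forall s^-.\exists \widehat r.\{\text{--}\}$ — more carefully, since we cannot use a nominal for a variable point here, we encode $s^\Imc \subseteq \widehat r^\Imc$ using the observation that AQ evaluation is polynomially interreducible with KB consistency (up to complementation), so it suffices to preserve (un)satisfiability. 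For satisfiability it is cleaner to keep the original roles in the ABox, keep the $\widehat r$ roles only inside concepts, and enforce the inclusion $s \sqsubseteq \widehat r$ semantically by a syntactic transformation on the TBox normal form, pushing RIs into the $\rightsquigarrow$-style propagation already used in the preliminaries.

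Alternatively — and this is the route I would actually take, since the paper explicitly says ``the proof of the upper bound can easily be extended'' — I would not reduce at all, but revisit the $\coNExpTime^\NPclass$ algorithm of \cite{Bonatti2009} for circumscribed $\ALCIO$ KB consistency and check that it is robust under adding RIs. That algorithm guesses an exponentially large candidate model \Imc of \Kmc (using the fact that circumscribed $\ALCHIO$ has a bounded-model property for consistency, with model size single-exponential in $|\Kmc|$), verifies in deterministic exponential time that \Imc is a model of \Kmc, and then uses an \NPclass oracle to check that \Imc is minimal w.r.t.\ $<_\CP$, i.e.\ that no $\Jmc <_\CP \Imc$ is a model of \Kmc (this is an \NPclass test since $\Jmc$ has the same, exponential-sized domain as \Imc, and $\Jmc \models \Kmc$ plus $\Jmc <_\CP \Imc$ can be verified in time polynomial in $|\Delta^\Imc|$; note only concept names vary between \Jmc and \Imc on a fixed domain, so $\Jmc$ is fully described by polynomially-many-in-$|\Delta^\Imc|$ bits). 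The only places RIs enter are: (i) the bounded-model argument for circumscribed $\ALCHIO$, which is already known to hold for \ALCHIO from \cite{Bonatti2009}; (ii) the \ExpTime check that \Imc is a model of \Kmc, which must now also verify the RIs, a trivial polynomial-in-$|\Delta^\Imc|$ extra test; and (iii) the oracle's check that $\Jmc \models \Kmc$, which likewise must check RIs — again a polynomial-in-$|\Delta^\Imc|$ addition that does not affect the \NPclass bound. Finally, an AQ $A(a)$ fails on $\Circ(\Kmc)$ iff $\Circ(\Tmc \cup \{A \sqcap \{a\} \sqsubseteq \bot\}, \Amc)$ is satisfiable, so the consistency algorithm yields the AQ result within the same complexity (up to complementation, which is irrelevant for $\coNExpTime^\NPclass$ since its deterministic analogue is closed under complement).

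So the key steps, in order, are: (1) record the polynomial interreducibility between AQ evaluation and cKB (un)satisfiability for the \ALCHIO family, reducing the theorem to: circumscribed \ALCHIO KB satisfiability is in $\NExpTime^\NPclass$; (2) cite the single-exponential bounded-model property for circumscribed \ALCHIO from \cite{Bonatti2009}; (3) present the guess-and-check algorithm — guess \Imc of exponential size, deterministically verify \mbox{$\Imc \models \Kmc$} (now including RIs) in \ExpTime, then use the \NPclass oracle to reject if some $\Jmc <_\CP \Imc$ is a model of \Kmc; (4) observe that the RI checks added in steps (3)'s verification and in the oracle are polynomial in the domain size and hence do not change the complexity class.

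I expect the main obstacle to be purely expository: making precise that the bounded-model property and the minimality-check complexity are genuinely insensitive to RIs. RIs are harmless because they constrain only role extensions, role names always vary freely under the circumscription pattern, and checking $r^\Imc \subseteq s^\Imc$ is cheap; but one has to be slightly careful that adding RIs does not blow up the model size in the bounded-model argument. Since \cite{Bonatti2009} already proves the bounded-model property for the full \ALCHIO family (RIs included), this is immediate, and the rest is routine bookkeeping — which is exactly why the theorem can be attributed to \cite{Bonatti2009} with only a remark about role inclusions.
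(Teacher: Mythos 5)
Your second route is exactly the paper's argument: the theorem is attributed to \cite{Bonatti2009}, and the paper's entire justification is that the $\coNExpTime^{\NPclass}$ upper bound for concept satisfiability w.r.t.\ circumscribed $\ALCIO$ KBs ``can easily be extended to cover also role inclusions,'' combined with the polynomial interreducibility of AQ evaluation and cKB satisfiability stated at the start of Section~\ref{section-atomic-queries-el} — your bookkeeping about why RIs affect neither the bounded-model argument nor the \ExpTime/\NPclass checks is precisely the content of that remark. (Discard your first paragraph's rewriting attempt, as you yourself do; the paper also sketches a self-contained alternative via Lemma~\ref{lemma-alchi-countermodel} plus a guess-and-check and an AQ-tailored variant of Proposition~\ref{prop:nonom}, but that is not needed for this route.)
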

An alternative way to obtain Theorem~\ref{thm-combined-upper-alchio-aq} for \ALCHI is to use Lemma~\ref{lemma-alchi-countermodel}, which yields the existence of single exponentially large countermodels in the special case where the query $q$ is of constant size (here $\sizeof{q} = 1$), and a straightforward guess-and-check procedure as sketched in Section~\ref{subsection-dllite-combined}.
This can be lifted to \ALCHIO by a variation of (the proof of)
Proposition~\ref{prop:nonom} tailored  to AQs.
%{\color{blue}Do we want to give details in
%  appendix? Not really needed, but possibly useful for other people.}
%
% It is not very hard to extend this approach
% to \ALCHIO, in fact we have only chosen to reduce out nominals
% beforehand for the sake of a more digestible presentation.

We next prove a matching lower bound for \EL, improving on an \ExpTime lower bound from~\cite{DBLP:journals/jair/BonattiFS11}.
\begin{restatable}{theorem}{thmcombinedlowerelaq}
	\label{thm-combined-lower-el-aq}
  AQ evaluation on circumscribed \EL KBs is $\coNExpTime^\NPclass$-hard w.r.t.\ combined
  complexity.
  This holds even without
  fixed concept names and with an empty preference order.
\end{restatable}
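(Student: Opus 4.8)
The plan is to prove $\coNExp^{\NPclass}$-hardness of AQ evaluation on circumscribed \EL KBs by a reduction from a $\coNExp^{\NPclass}$-complete problem, the natural candidate being the complement of a succinctly-encoded $\exists^*\forall^*$-quantified problem, e.g.\ the validity of a succinctly represented $\Sigma^p_2$-type formula, or equivalently the succinct variant of the problem ``given a $3$-CNF, is there a truth assignment to the $\exists$-variables such that all assignments to the $\forall$-variables falsify it.'' Concretely, I would reduce from the complement of \textsc{Succinct-}$\Sigma_2$\textsc{-SAT}: a Boolean circuit succinctly describes a $3$-CNF over variable blocks $X$ (existential) and $Y$ (universal), and we ask whether there is an $X$-assignment making the formula $\forall Y\,\varphi$ true; the complement is $\coNExp^{\NPclass}$, and we want $\Circ(\Kmc)\models A(a)$ to hold iff no such $X$-assignment exists.

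The key steps, in order, are as follows. First I would set up an ABox of polynomial size that encodes the succinct circuit describing $\varphi$ together with a gadget that, in every model, forces the exponentially many ``clause'' and ``variable'' addresses to be generated in a tree of depth $n$ (the address length); this uses only \EL existential restrictions, in the style of the classical \coNExpTime lower bounds for circumscribed \ALCIO in \cite{Bonatti2009} and their \EL adaptation. Second, I would use a single minimized concept name $M$ (this is what allows the ``without fixed concept names and with empty preference order'' strengthening) to guess, in minimal models, a truth assignment to the $X$-block: the idea is that $M$ marks exactly the $X$-variables set to true, and minimality of $M$ means the model commits to a specific set, while the absence of any forcing from the TBox means every subset is realized by some minimal model. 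Third, the \NP-oracle power is obtained by having the TBox generate, for the current $M$-assignment, a verification structure over all $2^{|Y|}$ $Y$-assignments and all clauses of $\varphi$, propagating a ``satisfied'' flag bottom-up through the address tree, so that the root becomes an instance of $A$ precisely when the chosen $X$-assignment fails to make $\forall Y\,\varphi$ true — i.e.\ when some $Y$-assignment falsifies some clause. Finally I would argue: $\Circ(\Kmc)\models A(a)$ iff in every minimal model $a\in A$ iff for every choice of $X$-assignment the formula $\forall Y\,\varphi$ is false iff there is no good $X$-assignment, which is exactly the $\coNExp^{\NPclass}$-complete condition; soundness needs that every subset of $X$-variables is realized by some minimal model (straightforward, as nothing in $\Kmc$ prevents it), and completeness needs that minimal models indeed commit $M$ to a single $X$-assignment rather than an inconsistent superposition.

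The main obstacle is making the ``for all $Y$-assignments'' quantification work with only a single minimized concept name and no fixed concepts. In the standard circumscription lower bounds one typically uses several minimized/fixed predicates to pin down a full assignment and to prevent the minimization from interacting badly with the auxiliary machinery; forcing everything through one $M$ requires encoding the $X$-assignment so that (i) minimizing $M$ does not accidentally minimize away parts of the address tree or the verification flags, and (ii) the value of each $X$-variable is recoverable from $M$ locally. I expect this to need a careful separation of ``hard'' structure (built by existential axioms from the ABox, hence forced and untouched by minimization) from the single ``soft'' bit carried by $M$, plus a trick — of the kind alluded to in the paper's remark about several non-obvious technical tricks for the data-complexity lower bound — to ensure that the generated $Y$-address subtree under each $X$-configuration correctly reads off the committed $X$-assignment without giving minimization an incentive to shrink $M$ in a way that corrupts the verification. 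The remaining steps (building the address tree in \EL, propagating a boolean flag up a tree, checking a $3$-clause) are routine \EL gadget constructions.
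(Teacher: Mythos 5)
There is a genuine gap, and the approach also diverges substantially from the paper's. The paper does \emph{not} redo a $\coNExp^{\NPclass}$-hardness proof from scratch: it reduces from AQ evaluation on circumscribed \ALC KBs, already known to be $\coNExp^{\NPclass}$-hard \cite{Bonatti2009}. The whole content of the proof is then to simulate \ALC inside \EL: the \ALC TBox is rewritten into a conservative extension in the extension of \EL with disjunction and $\bot$, normalized so that only one disjunctive CI $B \sqsubseteq B_1 \sqcup B_2$ and CIs $C \sqsubseteq \bot$ remain, and these are simulated with circumscription gadgets (a minimized concept $D$ with four asserted witnesses $d_1^{\pm}, d_2^{\pm}$ forces every $B$-element to pick one of the asserted disjunct-witnesses; a minimized $M$ plus a query-triggering concept handles $\bot$). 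Note the theorem only claims \emph{no fixed names and empty preference order}; the paper freely uses several minimized concept names, so your self-imposed restriction to a single minimized name is unnecessary and makes your task strictly harder.

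The concrete gap in your construction is the $X$-guessing mechanism. If $M$ is an unconstrained minimized concept name that ``marks exactly the $X$-variables set to true,'' then minimal models are precisely those with $M=\emptyset$: minimization does not let you realize ``every subset of $X$'' as a minimal model, it collapses the choice to the all-false assignment. To make circumscription choose a truth value without biasing it, you need the paired-witness trick the paper uses (in this proof and in Theorem~\ref{thm-data-lower-el}): force each variable to have an $\exists r$-successor into the minimized concept, pre-populate the minimized concept with one ``true'' and one ``false'' witness per choice, and let minimality forbid \emph{fresh} instances, so that either choice costs the same. Your sketch acknowledges a needed ``trick'' but the mechanism as stated is broken, and this is the load-bearing step. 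Separately, your handling of the $\forall Y$ layer is underspecified: enumerating all $2^{|Y|}$ assignments in the model is only feasible if $|Y|$ is polynomial, but then the $\exists X$ block must be exponential for $\NExp$-hardness, and enforcing a \emph{consistent} exponentially large $X$-assignment across the address tree in \EL (which lacks value restrictions and inverses) is exactly the kind of difficulty that the paper's reduction-from-\ALC route sidesteps entirely.
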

This is proved by a reduction from AQ evaluation on \ALC cKBs, which is known to be $\coNExpTime^\NPclass$-hard \cite{Bonatti2009}.

Regarding data complexity,  it suffices to recall that
Theorem~\ref{thm-data-lower-el} applies even to AQs and
$\EL$ cKBs.

\subsection{$\dllite$}
%\subsection{$\dllite$ and combined complexity}
\label{subsection-instance-dllite-combined}
%!TEX root = main.tex

Recall that in Section~\ref{subsection-dllite-combined}, we have
proved that UCQ evaluation over \dllitebool cKBs is in \coNExpTime
w.r.t.\ combined complexity.  We started with a guess-and-check
procedure that gives a $\coNExpTime^\NPclass$ upper bound,
relying on countermodels of single exponential size as per
Lemma~\ref{lemma-bool-countermodel}, and then improved to \coNExpTime
using Lemma~\ref{lemma-criteria-combined}.  Here, we use the same
algorithm.  The only difference in the correctness proof is that
Lemma~\ref{lemma-bool-countermodel} is replaced with
Lemma~\ref{lemma-alchi-countermodel} as the former does not support
role inclusions and the latter delivers a single exponential upper
bound for queries of constant size.
%
% In the special case of $\dlliteboolh$ KBs, the alternative approach to
% proving (the \ALCHI version of)
% Theorem~\ref{thm-combined-upper-alchio-aq} discussed above can be
% improved to give a \coNExpTime upper bound.  This follows exactly
% the same lines as the improvement 
%
% We %now turn to the \dllite family and
% start with combined complexity. In
% Section~\ref{subsection-instance-dllite-combined}, we have given a
% $\coNExpTime^\NPclass$ procedure for $\ALCHI$ cKBs immediately
% simplifies as Lemma~\ref{lemma-criteria-combined} allows to replace
% the $\NPclass$ oracle by inexpensive local tests, immediately yielding
% the following:
%
\begin{restatable}{theorem}{thmcombinedupperboolhinstance}
	\label{thm-combined-upper-boolh-instance}
	AQ evaluation on circumscribed $\dlliteboolh$ KBs is in $\coNExpTime$ w.r.t.\ combined complexity.
\end{restatable}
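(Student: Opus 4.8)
The plan is to run, essentially verbatim, the \coNExpTime\ guess-and-check procedure from Section~\ref{subsection-dllite-combined} that underlies Theorem~\ref{thm-combined-upper-dllitebool}, now specialized to the atomic query $q = A(x)$ and with the two ingredients chosen so that role inclusions are tolerated.

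First, since a \dlliteboolh\ cKB is in particular an \ALCHI\ cKB (read $\exists r . \top$ as an ordinary concept and each disjointness assertion $C \sqsubseteq \neg D$ as $C \sqcap D \sqsubseteq \bot$), Lemma~\ref{lemma-alchi-countermodel} applies, and because an AQ has $\sizeof{q} = 1$ it yields that whenever $\Circ(\Kmc) \not\models A(a)$ there is a countermodel \Imc with $|\Delta^\Imc| \le (\sizeof{\abox} + 2^{\sizeof{\tbox}})^{\sizeof{\tbox}}$, i.e.\ of single-exponential size in combined complexity. This is the one spot where Lemma~\ref{lemma-alchi-countermodel} is genuinely needed rather than Lemma~\ref{lemma-bool-countermodel}: the bound of the latter is also single-exponential once $|q|$ is a constant, but its proof does not cover role inclusions, whereas the proof of Lemma~\ref{lemma-alchi-countermodel} does.

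Second, I would decide the complement of AQ evaluation in \NExpTime\ as follows. Guess a countermodel \Imc, i.e.\ a model of \Kmc\ with $a \notin A^\Imc$ and $|\Delta^\Imc|$ bounded as above, in nondeterministic single-exponential time, and then verify $\Imc \models \Circ(\Kmc)$, i.e.\ that \Imc\ is minimal w.r.t.\ $<_\CP$. For the minimality test I would invoke Lemma~\ref{lemma-criteria-combined}, which is stated for \dlliteboolh\ and hence survives the role inclusions: \Imc\ is \emph{non}-minimal iff Point~2 of that lemma holds, so \Imc\ is minimal iff for every $\Pmc \subseteq \Delta^\Imc$ with $\sizeof{\Pmc} \le 2\sizeof{\tbox} + 1$ and every interpretation $\Jmc_0$ with domain $\Delta^{\Imc_\Pmc}$ there is an $e \in \Delta^\Imc$ such that no model $\Jmc_e$ of \Kmc\ extending $\Jmc_0$ to $\Delta^{\Imc_{\Pmc \cup \{e\}}}$ satisfies $\Jmc_e <_\CP \Imc_{\Pmc \cup \{e\}}$. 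Because every $\Imc_\Pmc$ has only polynomial size, the enumerations over $\Pmc$, over $\Jmc_0$, over $e$, and over $\Jmc_e$ each range over single-exponentially many objects, and each elementary check ($\Jmc_e \models \Kmc$, $\Jmc_e <_\CP \Imc_{\Pmc \cup \{e\}}$, and the like) is polynomial; hence the minimality test runs in deterministic single-exponential time. Consequently the complement of AQ evaluation lies in \NExpTime, which gives the claimed \coNExpTime\ upper bound.

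I do not anticipate a real obstacle here: the argument merely recombines Lemma~\ref{lemma-alchi-countermodel} (single-exponential countermodels for queries of constant size) with Lemma~\ref{lemma-criteria-combined} (a local minimality criterion robust under role inclusions) inside the scheme of Section~\ref{subsection-dllite-combined}. The only step warranting a little care is the size bookkeeping: checking that the number of subsets $\Pmc$ of a single-exponential domain with $\sizeof{\Pmc} \le 2\sizeof{\tbox}+1$, and the number of interpretations over the polynomially sized domains $\Delta^{\Imc_\Pmc}$ and $\Delta^{\Imc_{\Pmc \cup \{e\}}}$, all stay single-exponential, so that the deterministic part of the algorithm indeed fits into \ExpTime.
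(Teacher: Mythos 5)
Your proposal is correct and follows essentially the same route as the paper: the paper's proof of Theorem~\ref{thm-combined-upper-boolh-instance} explicitly reuses the algorithm of Section~\ref{subsection-dllite-combined}, replacing Lemma~\ref{lemma-bool-countermodel} by Lemma~\ref{lemma-alchi-countermodel} for exactly the reason you give (role inclusions, and a single-exponential bound once $\sizeof{q}$ is constant), and then checks minimality via Lemma~\ref{lemma-criteria-combined} by the same brute-force enumeration over $\Pmc$, $\Jmc_0$, $e$, and $\Jmc_e$. The size bookkeeping you flag is the same as in the paper and goes through as you describe.
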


We match this upper bound even in the absence of role inclusions,
demonstrating that evaluating AQs and UCQs over \dllitebool cKBs is
equally difficult.
%, thus remain as difficult as in the UCQ case.

\begin{restatable}{theorem}{thmcombinedlowerboolinstance}
  \label{thm-combined-lower-bool-instance}
	AQ evaluation on circumscribed \dllitebool KBs is \coNExpTime-hard w.r.t.\ combined
	complexity.
\end{restatable}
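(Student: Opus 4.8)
The plan is to prove \coNExpTime-hardness of AQ evaluation on circumscribed \dllitebool KBs by a reduction from the complement of a \NExpTime-complete tiling problem — concretely from the non-existence of a tiling of the $2^n \times 2^n$ torus (equivalently, from the complement of a succinct combinatorial problem such as \textsc{Succinct3COL}, as already used for Theorem~\ref{thm-combined-lower-dllitepos}, or directly from a bounded tiling problem with exponentially many cells). Since we must produce a hardness result, the instance $\Circ(\Kmc)$ and the AQ $A_0(a)$ we construct from a tiling instance of size $n$ must be of size polynomial in $n$, and $\Circ(\Kmc) \models A_0(a)$ must hold iff the tiling instance has \emph{no} solution. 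The overall strategy mirrors the combined-complexity lower bound for \dllitecore in Theorem~\ref{thm-combined-lower-dllitepos}, but now we want to push the query all the way down to an atomic query, which is where the extra Boolean power of \dllitebool is needed: disjunction in CIs lets the TBox itself do the case analysis that a UCQ would otherwise perform, and concept disjointness lets us enforce consistency of a guessed tiling.

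The key steps, in order. First, I would set up the address space: using $n$ concept names $X_1,\dots,X_n$ (and $Y_1,\dots,Y_n$) whose truth values at a domain element encode the binary row/column coordinates of a torus cell, so that a single element can represent any of the $2^{2n}$ cells. A \dllitebool TBox with disjunction can force every element to commit to a full address ($\top \sqsubseteq X_i \sqcup \neg X_i$ is trivial, but one uses auxiliary concept names and disjointness to make the encoding faithful) and can assert that every address is populated by appropriate existential CIs $\exists r \sqsubseteq \cdots$ together with role-successor navigation. Second, I would introduce concept names $\{T_t\}_{t \in \mathrm{Tiles}}$ for the tile types, with disjointness assertions $T_t \sqcap T_{t'} \sqsubseteq \bot$ and a covering CI $\top \sqsubseteq \bigsqcup_t T_t$ making each cell carry exactly one tile. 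The horizontal/vertical compatibility constraints become CIs relating $T_t$ at a cell to the allowed $T_{t'}$ at its right/upper neighbour, navigated via roles; a violation should be made to derive $\bot$, i.e. to make the KB inconsistent — so that $\Circ(\Kmc)$ is satisfiable iff a valid tiling exists. Third — and this is the circumscription part — I would use a single minimized concept name, say $M$, and arrange that $\Circ(\Kmc) \models A_0(a)$ precisely when $\Circ(\Kmc)$ is \emph{unsatisfiable}: e.g. make $a$ an individual that is forced into $A_0$ in every model, but such that $\Circ(\Kmc)$ has a model at all iff the tiling exists; then unsatisfiability of the cKB makes $\Circ(\Kmc) \models A_0(a)$ vacuously true, and satisfiability gives a model where $A_0(a)$ fails. (Alternatively, follow the pattern of the earlier DL-Lite lower bounds where minimization is used to force a ``defect'' element into a distinguished concept exactly when the combinatorial structure is inconsistent, and then query that concept.) The minimized concept $M$ and its interplay with existential CIs is what encodes the ``there is no way to extend this to a consistent tiling'' reasoning without any role inclusions.

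The main obstacle I expect is enforcing the \emph{global grid structure} — that the $2^{2n}$ cells really do form a torus with correct neighbour relations — using only \dllitebool, which has no role inclusions, no transitivity, no functionality in the usual sense, and only existential restrictions $\exists r$, $\exists r^-$ over a bounded set of roles. In DLs like \ALCI one navigates a binary counter via role chains and uses number restrictions or inverse roles to keep the chain tidy; in \dllitebool one has inverse roles and Boolean combinations of basic concepts, but the standard trick is to \emph{not} build the grid as a single connected structure at all, but instead to let each domain element independently guess its address (via the $X_i$) and its tile, and then replace ``neighbour of $c$ has tile $t'$'' constraints by \emph{per-element local consistency} conditions combined with the fact that, in a circumscribed KB, the set of realized addresses/tiles is controlled. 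The subtlety is getting the compatibility constraints to be checkable locally: one typically forces, for each cell $c$ with address $\bar{a}$ and tile $t$, the existence (via an $\exists$-CI) of a successor element whose address is $\bar{a}+1$ and whose tile is constrained to be $H$-compatible with $t$, and uses disjointness to clash if the successor's tile is wrong — but one must ensure a single successor element can simultaneously serve as ``cell $\bar{a}+1$'' for the right purpose, which is where careful use of the minimized concept and of $\exists r^-$ to ``reach back'' becomes delicate. Making all of this go through with the restricted syntax of \dllitebool, and verifying that the reduction is faithful in \emph{both} directions (a tiling yields a model of the cKB; a model of the cKB yields a tiling), is the technical heart; the \coNExpTime membership side is already in hand via Theorem~\ref{thm-combined-upper-boolh-instance}, so only the lower bound requires this construction.
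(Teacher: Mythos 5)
There is a genuine gap, and it sits exactly at the point you flag as ``the circumscription part.'' Your plan is to arrange that $\Circ(\Kmc)$ is satisfiable iff a tiling exists and then read off $\Circ(\Kmc)\models A_0(a)$ from unsatisfiability. This cannot work: $<_\CP$ only compares interpretations with the same domain, and \dllitebool has the finite model property, so any satisfiable \dllitebool KB has a finite model and hence a $<_\CP$-minimal one --- i.e.\ $\Circ(\Kmc)$ is satisfiable whenever $\Kmc$ is. To make $\Circ(\Kmc)$ unsatisfiable exactly when no tiling exists you would therefore have to make the \emph{plain} KB unsatisfiable in that case, which would show \coNExpTime-hardness of ordinary \dllitebool KB satisfiability --- false, since that problem is in \NPclass. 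Deriving $\bot$ from a local tile clash does not rescue this: a model is free simply not to realize all $2^{2n}$ addresses, or to realize the same address at two elements carrying incompatible tiles with no single element witnessing the clash, so no contradiction is ever forced. (Your parenthetical ``alternatively, force a defect element into a distinguished concept'' is the right idea, but it is precisely the part you leave unconstructed.)

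The paper's proof solves the two sub-problems you correctly identify as the obstacles, and it does so with machinery your proposal excludes. It reduces from the complement of \succinct\tcol, has each element represent a \emph{pair} of vertices (so circuit evaluation and edge-compatibility are purely local and no grid navigation is needed), and uses a unique \emph{center} individual $a$ together with \emph{three} minimized concept names under a strict preference $\ccenter\prec\callpairs\prec\cgoodcol$ plus fixed concept names (eliminable in \dllitebool via complements). Completeness of the address space is detected by having every element send to the center an encoding different from its own --- if some pair is missing, all elements can send that missing encoding and keep $\callpairs$ empty, which triggers $\cgoal$; and cross-element consistency of the coloring is detected by having the center nondeterministically ``test'' a vertex and collect, via roles, all colors assigned to it, with the least-preferred minimized concept $\cgoodcol$ forcing any inconsistency to be exhibited. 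Your proposal's single minimized concept name cannot reproduce this layered detection; indeed the paper explicitly notes that this lower bound relies on the preference relation and that its status with an empty preference order is open. So the core of the argument --- how circumscription certifies ``all cells present'' and ``globally consistent labelling'' --- is missing from your write-up, and the route you do commit to (unsatisfiability) fails for structural reasons.
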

The proof is by reduction from the complement of the
$\NExpTime$-complete problem \succinct\tcol\ and relies on fixed concept names.

The situation is more favorable if we restrict our attention to $\dllitehornh$ cKBs, which is still a very expressive dialect of the \dllite family.
For $\dllitehornh$, we prove that if there is a countermodel, then there is one of linear size, in drastic contrast to the \dllitebool case where the proof of Theorem~\ref{thm-combined-lower-bool-instance} shows that exponentially large countermodels cannot be avoided.
\begin{restatable}{theorem}{thmcombinedupperhornhinstance}
	\label{thm-combined-upper-hornh-instance}
	AQ evaluation on circumscribed $\dllitehornh$ KBs is in
        $\Uppi^\textup{P}_2$ w.r.t.\ combined complexity.
\end{restatable}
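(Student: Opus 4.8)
The plan is to establish a small-model property analogous to Lemma~\ref{lemma-bool-countermodel}, but with a linear rather than exponential bound on the countermodel, exploiting the fact that $\dllitehornh$ is Horn (so that every consistent KB has a canonical model and there is no disjunction to contend with) and that the query is atomic, i.e.\ of the form $A_0(a)$. First I would observe that, since $\dllitehornh$ is a Horn logic, a model $\I$ of $\Circ(\Kmc)$ with $\I \not\models A_0(a)$ exists iff $\Circ(\Kmc)$ is satisfiable and $a \notin A_0^{\I}$ for the $<_\CP$-minimal model obtained as follows: take a canonical model of $(\Tmc,\Amc)$ but make the minimized concept names as small as possible. Concretely, I would work with the unraveling/interlacing of Section~\ref{subsection-alchi-interlacing}, specialized to $\dllitehornh$: outside the base part each element has at most one $r$-successor per role, and (by the refinement mentioned before Lemma~\ref{lemma-bool-countermodel}) the tree parts are in fact determined by the ABox-type of their root. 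The key point is that for an AQ, the only thing a countermodel must witness is that a single individual $a$ avoids $A_0$; a minimized concept name $B$ being non-empty is forced only along chains dictated by inclusions $C \sqsubseteq B$ with $C$ fixed or $\top$, or by the need to repair inconsistency. I would argue that such repairs can always be realized within a base part of size linear in $|\Tmc| + |\Amc|$: the interpretations $\I_\Pmc$ and the local-minimality criterion of Lemma~\ref{lemma-criteria-combined} already show that minimality is a `local' phenomenon witnessed by polynomially-sized fragments, and in the Horn case these fragments can be glued back together without creating disjunctive choices.

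The central technical step is therefore a \emph{linear-size countermodel lemma}: if $\Circ(\Kmc) \not\models A_0(a)$ then there is a countermodel $\I$ with $|\Delta^{\I}| \leq c\,(|\Tmc| + |\Amc|)$ for a constant $c$. I would prove this by starting from an arbitrary countermodel, applying the $\dllitebool$-refined unraveling to make the tree parts thin, then applying the quotient construction of Section~\ref{subsection-alchi-data} in its simplest form (the neighborhoods degenerate because there is at most one successor per role); the crucial addition for the Horn case is that the core is small — since there is no disjunction, the canonical-model reasoning bounds the number of distinct types that can be forced outside the ABox — so $\Delta^{\I}_{\mn{core}}$ together with $\Ind(\Amc)$ and one representative per non-core type already has linear size, and Lemma~\ref{lem-finite-model-property-wrt-core} specializes accordingly. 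Minimality of the quotient is inherited via the Core Lemma (Lemma~\ref{lem-lemma5}), exactly as in the proof of Lemma~\ref{lem-quotient}, and the quotient still fails $A_0(a)$ because $a$ lies in the base part and its type is preserved.

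Once the linear bound is in place, the $\Uppi^\textrm{P}_2$ upper bound follows by a standard guess-and-check: a \textbf{co}-$\mathrm{NP}$ verifier with an $\mathrm{NP}$ oracle guesses a linear-size interpretation $\I$, checks in polynomial time that $\I \models (\Tmc,\Amc)$ and $a \notin A_0^{\I}$, and then uses the oracle to verify that $\I$ is $<_\CP$-minimal — non-minimality is witnessed by a model $\Jmc <_\CP \I$ with $\Delta^\Jmc = \Delta^\I$, which is again of linear size and hence guessable in $\mathrm{NP}$ (the relations $<_\CP$ and ``$\Jmc \models (\Tmc,\Amc)$'' are polynomial-time checkable). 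So a countermodel exists iff $\exists \I\, \forall \Jmc\,(\dots)$, placing the problem in $\Sigma^{\mathrm{P}}_2$; the query problem $\Circ(\Kmc) \models A_0(a)$ is its complement and thus in $\Uppi^{\mathrm{P}}_2$. I expect the main obstacle to be the linear-size countermodel lemma: specifically, controlling how small the core can be made and showing that the repairs forced by inconsistency (to keep $\Circ(\Kmc)$ satisfiable) never require super-linearly many fresh elements — here one must use that role names always vary and that $\dllitehornh$ inclusions have basic concepts on the right, so that each repair is `shallow'. The rest is routine adaptation of the machinery from Sections~\ref{subsection-alchi-data} and~\ref{subsection-dllite-combined}.
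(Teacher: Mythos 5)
Your overall skeleton is the same as the paper's: establish that a countermodel (if one exists) can be taken to have size polynomial -- in fact linear -- in $|\Tmc|+|\Amc|$, then decide the complement by guessing such an interpretation and using an \NPclass oracle to certify $<_\CP$-minimality. The guess-and-check part of your argument is fine and matches the paper.

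The gap is in the small-model lemma itself, which you correctly identify as the crux but whose proposed proof would not go through. The core/unraveling/quotient machinery you invoke cannot yield a linear (or even polynomial) bound: the Core Lemma (Lemma~\ref{lem-lemma5}) requires the small model to contain \emph{all} of $\Delta^\Imc_{\mn{core}}$ and to realize every non-core type, and the core is bounded only by $|\mn{TP}(\Tmc)|$ elements per core type, i.e.\ by $2^{2|\Tmc|}$ overall (this is exactly the bound in Lemma~\ref{lem-finite-model-property-wrt-core}). Your claim that Horn-ness makes the core linear is unsupported: a circumscription-minimal countermodel is not a canonical model (there is no unique $<_\CP$-minimal model, and varying concept names can be distributed over anonymous elements in exponentially many type combinations consistent with a Horn TBox), so ``canonical-model reasoning'' does not bound $|\mn{TP}(\Imc)|$ or the core. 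The idea that is actually needed, and that your proposal does not contain, concerns the \emph{fixed} concept names: the obstruction to discarding an anonymous element $e$ is that the conjunction of fixed concept names true at $e$ may entail (via \Tmc) basic concepts $\exists r$, and any competitor $\Jmc <_\CP \Imc$ must reproduce these. The paper therefore keeps, besides $\Ind(\Amc)$ and one witness $w_r \in (\exists r^-)^\Imc$ per role, one element $f_r$ per role $r$ that is \emph{forced by a fixed type}, giving a domain of size at most $|\Amc|+2|\Tmc|$; a hypothetical $\Jmc_0 <_\CP \Imc_0$ on this small structure is then blown back up to $\Jmc <_\CP \Imc$ by interpreting every discarded element with exactly the Horn closure of its fixed concept names and rerouting its role edges to the witnesses $w_r'$ in $\Jmc_0$. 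This is where Horn-ness (the closure is well-defined and minimal) and the restriction to AQs (no query match to preserve outside the ABox) are genuinely used; without this step the minimality transfer fails and no sub-exponential bound follows.
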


%we notice the criteria exhibited in Lemma~\ref{lemma-criteria-combined} builds upon interpretations $\I_\Pmc$ that preserves the interpretation of concepts on individual names and with size essentially $\sizeof{\abox} + 2\sizeof{\tbox} + \sizeof{\Pmc}$.
%Therefore, if a countermodel exists for our AQ, it notably yields the existence of a polynomialy-large one (e.g. setting $\Pmc = \emptyset$), yielding the following upper bound:

%\input{thm-combined-upper-hornh-instance}

A matching lower bound for $\dllitecore$ cKBs can be found in \cite{DBLP:journals/jair/BonattiFS11}.

\label{subsection-instance-dllite-data}
%!TEX root = main.tex

%Given a model $\I$ of a $\dlliter$ cKB $\circkb$ without negative role inclusions, this section presents the construction of a compact representation of $\I$ whose size is bounded by 
%$\min(\sizeof{\abox} + \sizeof{\tbox}, \sizeof{\tbox} \cdot 2^{\sizeof{\tbox}})$.
%The main result is that checking whether $\I$ complies with $\CP$ can be performed on its compact version.
%Here is a sketch:
%
%First step:
%Whenever an existential axiom from $\tbox$ is to be satisfied, one can always use the same element from $\domain{\I}$ to satisfy it (that is where we really need not to consider negative role inclusions!).
%Hence, first build a rerouted version of $\I$ with at most $\sizeof{\abox} + \sizeof{\tbox}$ elements of real interest (but still the same domain as $\I$).
%It is clearly still a model of $\circkb$.
%
%Second step:
%Drop all elements outside those of interest.
%It is still a model of $\circkb$, because there is no $\top$ concept in $\dlliter$, so dropping elements is fine.
%
%Third step:
%Keep at most $\sizeof{\tbox}$ representatives for each type, drop the rest.
%Here it is trickier because the ABox needs to be somehow adapted, but it still works because it doesn't matter which instance of a type we are reusing.
%This gives the two bounds of the main claim.
%
%\todo[inline]{This is essentially a refinement of the $\I_{\I_0, T_0}$ interpretations. However, I don't see a non-convoluted way to unify it :p}

We now move to data complexity, where we obtain tractability even for the maximally expressive $\dlliteboolh$ dialect of \dllite.
\begin{theorem} \label{thm-data-upper-boolh-instance} AQ evaluation on circumscribed $\dlliteboolh$ KBs is in $\PTime$ w.r.t.\ data complexity.  
\end{theorem}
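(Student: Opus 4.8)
The plan is to exploit that an atomic query has constant size, so that the countermodels provided by Lemma~\ref{lemma-alchi-countermodel} (specialised to $|q|=1$, and covering role inclusions) are of polynomial size, and to combine this with the local characterisation of $<_\CP$-minimality from Lemma~\ref{lemma-criteria-data}, whose building blocks $\Circ(\Kmc_\Pmc)$ and the interpretations $\Imc_\Pmc$ are of \emph{constant} size once $\Tmc$ and $\CP$ are fixed. The aim is to replace the guess-and-check of the $\coNP$ procedure underlying Theorem~\ref{thm-data-upper-hornh} by a deterministic polynomial-time construction of a canonical countermodel.

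First I would normalise $\Tmc$ (free, since $\Tmc$ is fixed) and make precise the structural feature of $\dlliteboolh$ that is already implicit in Lemma~\ref{lemma-criteria-data}: the anonymous part of any model is a forest whose node types come from a finite set of ``anonymous types'' fixed by $\Tmc$ alone, and role assertions couple distinct individuals only by propagating the literals $\exists s$, $\exists s^-$ with $\Tmc\models r\sqsubseteq s$ --- indeed the ABoxes $\Amc_\Pmc$ carry no role assertions, all relevant role information having been absorbed into the ABox types. From $\Tmc$ and $\CP$ I would precompute, in constant time: (i) a canonical $<_\CP$-minimal anonymous part (route every existential demand to a shared canonical witness of the $<_\CP$-least admissible anonymous type carrying the required inverse role); and (ii) for each choice of a ``realised-types profile'' $R$ (the finitely many candidate sets of ABox-type/type pairs that occur in the sought model), the set of \emph{admissible local configurations}: pairs $(u,t)$ with $u$ a set of forced basic-concept literals and $t$ a type such that an element with forced literals $u$ can carry type $t$ in some $<_\CP$-minimal model of a constant-size $\Circ(\Kmc_\Pmc)$ whose realised types are consistent with $R$. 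Step~(ii) is exactly where the minimality conditions of Lemma~\ref{lemma-criteria-data} are compiled in.

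Deciding $\Circ(\Kmc)\not\models A_0(a_0)$ then amounts to: (1) for each of the constantly many profiles $R$, propagate by a fixed-point computation along the role assertions of $\Amc$ and the role inclusions of $\Tmc$ the basic-concept literals forced at each individual; (2) attach the canonical minimal anonymous part and assign to each individual a $<_\CP$-minimal admissible configuration compatible with its forced literals and with $R$, obtaining a candidate model $\Imc^\ast$; (3) verify that $\Imc^\ast$ is a model of $\Kmc$ whose realised types equal $R$ and that it is $<_\CP$-minimal --- the latter via Lemma~\ref{lemma-criteria-data}, which only requires running constantly many constant-size minimality checks --- and check whether $A_0$ is kept out of $a_0$'s type. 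If some profile $R$ yields such an $\Imc^\ast$ with $a_0\notin A_0^{\Imc^\ast}$ we answer ``no'', otherwise ``yes''. Completeness needs a canonicity lemma: if \emph{any} model of $\Circ(\Kmc)$ has $a_0\notin A_0$, then for the matching profile the construction above can be steered to produce one.

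The main obstacle is making step~(2) deterministic and polynomial: read naïvely, ``assign admissible configurations to all individuals so that all local minimality constraints hold simultaneously'' is a constraint-satisfaction problem and could be $\coNP$-hard. The crux is to show that the specific shape of $\dlliteboolh$ --- existential demands satisfiable by \emph{shared} canonical anonymous witnesses, and inter-individual interaction that is purely propagational in the $\exists s$/$\exists s^-$ literals --- forces the existence of a canonical $<_\CP$-minimal countermodel obtained by minimising each individual's type \emph{independently} over the fixed anonymous part. Here the delicate point is the interaction between preference-induced ``compensations'' spanning several elements in condition~3 of $<_\CP$; this is controllable because the individuals appearing together in the relevant restrictions $\Imc_\Pmc$ are pairwise non-adjacent (role edges in $\Imc_\Pmc$ only go to the $w_r$), so greedily minimising types per individual neither destroys global $<_\CP$-minimality nor spuriously blocks $a_0\notin A_0$ when it is achievable. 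Once this canonicity is established, the whole procedure is a polynomial amount of fixed-point propagation plus constantly many table lookups, giving the $\PTime$ bound.
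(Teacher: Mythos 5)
There is a genuine gap, and you have put your finger on it yourself: the ``canonicity lemma'' that makes step~(2) deterministic is the entire difficulty, and the justification you offer for it does not hold up. Minimality w.r.t.\ $<_\CP$ is a \emph{global} condition: condition~3 of the definition allows a minimized concept to grow at one element provided a more preferred one shrinks at a \emph{different} element, so whether a given type may be assigned to individual $a$ in a minimal model depends on which types are realized elsewhere --- and, crucially, on \emph{how often} they are realized (this counting aspect is what the Core Lemma and the thresholds $|\mn{TP}(\Tmc)|$, $2^{|\Tmc|}$ are about throughout the paper). The observation that the individuals occurring together in a restriction $\Imc_\Pmc$ are pairwise non-adjacent does not decompose this coupling: the sets $\Pmc$ in Lemma~\ref{lemma-criteria-data} range over \emph{all} small subsets of the domain, and the witnesses placed into $\Pmc$ in its proof are chosen from a hypothetical globally smaller model, so the local checks are only jointly, not individually, equivalent to minimality. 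Your profile $R$ records which ABox-type/type pairs occur but not their multiplicities, so it cannot steer the greedy assignment towards a realizable minimal model; without a proof that some canonical assignment always succeeds when any countermodel exists, the procedure is sound but not complete, and the claimed \PTime{} bound does not follow.

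For contrast, the paper's proof avoids constructing any canonical model. It shows (Lemma~\ref{lemma-data-indep}) that one may replace $\Amc$ by an ABox $\Amc'$ of \emph{constant} size obtained by keeping at most $4^{|\Tmc|}$ representatives of each ABox type (and dropping all role assertions, which are absorbed into the ABox types because $\Tmc$ is in normal form): $\Circ(\Kmc)\models A_0(a_0)$ iff $\Circ(\Tmc,\Amc')\models A_0(a_0)$. The threshold $4^{|\Tmc|}$ is exactly what makes the ``only if'' direction work --- for any discarded individual of ABox type $t$ there is a type $t'$ such that $(t,t')$ is realized at least $2^{|\Tmc|}$ times among the kept representatives, which leaves enough room for the surjection argument of Lemma~\ref{lem-lemma5} when lifting a minimal model of the pruned KB back to one of $\Kmc$. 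The algorithm is then trivial: compute $\Amc'$ in polynomial time and decide the resulting constant-size entailment problem by any terminating procedure. If you want to salvage your approach, the multiplicity-threshold idea is the missing ingredient you would need to import.
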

We sketch the proof of Theorem~\ref{thm-data-upper-boolh-instance}.
Let $\circkb$ be a $\dlliteboolh$ cKB with $\kb = (\tbox, \abox)$, $A_0(x)$ an AQ, and $a_0 \in \mn{Ind}(\Amc)$. We % identify a subset
% $W \subseteq \indsof{\abox}$ whose size is independent of that of \Amc
% and then use $W$ to
construct an ABox $\Amc'$ whose size is independent of that of \Amc
and which can replace \Amc when deciding whether $a_0$ is an answer to
$A_0$.  The construction of $\Amc'$ may be viewed as a variation of
the constructions that we have used in the proofs of
Theorems~\ref{thm-combined-upper-dllitebool}
and~\ref{thm-data-upper-hornh} to avoid an \NPclass oracle.  In
particular, it reuses the notion of ABox types from the latter.
However, the construction employed here works directly with ABoxes
rather than with countermodels.

Let $\mn{TP}(\Amc)$ denote the set of all ABox types $t$ realized
in~\Amc, that is, all $t$ such that $\mn{tp}_\Amc(a)=t$ for some
$a \in \mn{Ind}(\Amc)$. For every $t \in \mn{TP}(\Amc)$, set
$$m_t = \min( \sizeof{ \{ a \in \indsof{\abox} \mid
  \atypeinof{\abox}{a} = t\} },~ 4^\sizeof{\tbox})$$ and choose
$m_t$ individuals $a_{t, 1}, \dots, a_{t, m_t} \in \mn{ind}(\Amc)$ such that
$\mn{tp}_\Amc(a_{t,i})=t$ for $1 \leq i \leq m_t$. We assume that the
individual $a_0$ of interest is among the chosen ones. Let $W$ be the
set of all chosen individuals, that is, 
$W = \{ a_{t, i} \mid t \in \mn{TP}(\Amc),~ 1 \leq i \leq 
m_t \}$. Now define an ABox
$$
\begin{array}{rcl}
% W &=& \{ a_{t, i} \mid t \in \mn{TP}(\Amc),~ 1 \leq i \leq 
% m_t \} \smallskip\\
\abox' &=& \{ A(\istyle{a}) \mid a \in W, A \in
\atypeinof{\abox}{\istyle{a}} \}
\end{array}
$$
and set $\kb' = (\tbox, \abox')$. Note that
$|\indsof{\abox'}| \leq 8^\sizeof{\tbox}$,
and thus the size of $\abox'$ depends only on the TBox \Tmc,
but not on the original ABox $\abox$. Also, note that,
just like the ABoxes $\Amc_\Pmc$ from the proof of
Theorem~\ref{thm-data-upper-hornh}, $\Amc'$ no longer contains role
assertions.
\begin{restatable}{lemma}{lemmadataindep}
	\label{lemma-data-indep}
	$\circkb \models A_0(a_0)\!$ iff $\Circ(\kb') \models A_0(a_0)$. 
\end{restatable}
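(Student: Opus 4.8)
The plan is to show both directions by transferring (counter)models between $\circkb$ and $\Circ(\kb')$, using the fact that $\Amc'$ keeps, for every ABox type $t$ realized in $\Amc$, at least $\min(\ldots, 4^{\sizeof{\tbox}})$ representatives, which is enough room to mimic arbitrarily many type changes forced by circumscription. For the ``$\Leftarrow$'' direction I would assume $\circkb \not\models A_0(a_0)$, take a model $\Imc$ of $\circkb$ with $a_0 \notin A_0^\Imc$, and turn it into a model $\Imc'$ of $\Circ(\kb')$ with $a_0 \notin A_0^{\Imc'}$. The natural candidate is $\Imc' = \Imc|_W$ suitably completed to satisfy all role CIs $\exists r.B \sqsubseteq A$: since $\Tmc$ is in normal form, the only way dropping role assertions (as $\Amc'$ does) can break things is via such CIs, and because $\Amc'$ asserts $\atypeinof{\abox}{a}$ for each $a \in W$, every concept membership forced by the original role assertions is still recorded in $\Amc'$, so $\Imc|_W$ already satisfies those CIs. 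One then checks $\Imc' \models \kb'$ and, for minimality w.r.t.\ $<_\CP$, invokes the Core Lemma~\ref{lem-lemma5} (or argues directly from minimality of $\Imc$), observing that the core of $\Imc$ restricted to $W$ is faithfully preserved.

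For the ``$\Rightarrow$'' direction I would assume $\Circ(\kb') \not\models A_0(a_0)$, pick a model $\Jmc'$ of $\Circ(\kb')$ with $a_0 \notin A_0^{\Jmc'}$, and build a model $\Jmc$ of $\circkb$ with $a_0 \notin A_0^\Jmc$. Here the idea is to take $\Jmc'$ and, for each individual $a \in \Ind(\Amc) \setminus W$, attach a copy of the subinterpretation that $\Jmc'$ provides for the representative $a_{t,i}$ of $a$'s ABox type $t$ (choosing the representative so that $a$ inherits the right concept memberships and enough structure to satisfy existential CIs); the role assertions of $\Amc$ among old individuals are reinstated using the fact that, $\Tmc$ being in normal form, their endpoints' ABox types already entail whatever the role CIs require. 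The resulting $\Jmc$ is a model of $\kb$; for minimality, one again appeals to Lemma~\ref{lem-lemma5}, noting that any hypothetical $\Jmc_0 <_\CP \Jmc$ would, after projecting to $W$ and merging the copies back, yield a witness $\Jmc'_0 <_\CP \Jmc'$ against minimality of $\Jmc'$ --- this is exactly where the bound $4^{\sizeof{\tbox}}$ on the number of retained representatives is used, guaranteeing there is room to realize all the type changes that $\Jmc_0$ performs.

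The main obstacle I expect is the minimality transfer in the ``$\Rightarrow$'' direction: showing that a better model of $\kb$ collapses to a better model of $\kb'$ is delicate because $\Jmc$ contains many copies of the same $\Jmc'$-substructure, and a potential $\Jmc_0 <_\CP \Jmc$ may treat these copies differently, so one must argue that the copies can be ``averaged'' or that one can always pick the copy on which the type drop occurs without destroying $<_\CP$; the Core Lemma is the right tool, but fitting the bookkeeping of ABox types, non-core types, and the representatives $a_{t,i}$ into its hypotheses is the technical crux. A secondary subtlety is checking in the ``$\Leftarrow$'' direction that completing $\Imc|_W$ to satisfy existential CIs $A \sqsubseteq \exists r.B$ can be done without creating instances of minimized concept names that would spoil minimality --- but this is handled exactly as in the unraveling/interlacing construction of Section~\ref{subsection-alchi-interlacing}, reusing core elements as witnesses whenever possible.
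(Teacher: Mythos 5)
Your overall strategy coincides with the paper's: transfer models between $\kb$ and $\kb'$ in both directions, use the $4^{\sizeof{\tbox}}$ retained representatives per ABox type as ``room'' for a Core-Lemma-style surjection that rescues minimality, and you correctly locate the crux in the minimality transfer when a hypothetical better model treats the copies differently. Your ``$\Rightarrow$'' direction essentially reproduces the paper's argument: for each dropped individual of ABox type $t$ one picks a reference in $W$ whose pair (ABox type, interpretation type) is realized at least $2^{\sizeof{\tbox}}$ times among the kept representatives (the pigeonhole you allude to), and the surjection $D_t \rightarrow S_t$ is then routed through those elements. One caveat: the paper copies the reference's concept memberships and its role edges to \emph{existing} elements rather than attaching fresh copies of substructures, since duplicated anonymous elements could introduce new instances of minimized concept names that the minimality transfer would then have to absorb.

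There is, however, a gap in your ``$\Leftarrow$'' direction. Taking $\I' = \I|_W$ (suitably completed) and ``observing that the core of $\I$ restricted to $W$ is faithfully preserved'' does not suffice, and Lemma~\ref{lem-lemma5} cannot be invoked as stated, since it compares two models of the \emph{same} cKB while here $\kb$ and $\kb'$ differ. The actual obligation is: given a hypothetical $\J' <_\CP \I'$, build $\J <_\CP \I$. Each discarded individual $a$ must receive in $\J$ the $\J'$-type of some kept representative $e$, and for Conditions 2--4 of $<_\CP$ to transfer (equality on fixed concept names, the $\subsetneq$ witnesses for minimized ones), the comparison ``$a \in A^{\J}$ versus $a \in A^{\I}$'' must reduce to ``$A \in \typeinof{\J'}{e}$ versus $A \in \typeinof{\I'}{e}$'' --- which requires $e$ to have the same \emph{interpretation} type as $a$ in $\I$, not merely the same ABox type. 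Nothing in the construction of $W$ guarantees this for the particular model $\I$ you start from. The paper repairs this by first permuting $\I$ (fixing $a_0$ and preserving ABox types) so that every interpretation type realized by a discarded individual of ABox type $t$ is also realized by one of the $4^{\sizeof{\tbox}} \geq 2^{\sizeof{\tbox}}$ kept representatives of that ABox type, and only then restricts and reroutes roles. Your sketch omits this step, and without it (or an equivalent device) the minimality transfer breaks down.
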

It is important for the `only if' direction of
Lemma~\ref{lemma-data-indep} that we keep at least $4^{|\Tmc|}$
individuals of each ABox type~$t$ (if existent).  In fact, this allows
us to convert a model $\I'$ of $\Circ(\kb')$ into a model of \kb that
is minimal w.r.t.\ $<_\CP$, using arguments similar to those in the proof of
Lemma~\ref{lem-lemma5}. A crucial point is that if
$a \in \Ind(\Amc) \setminus \Ind(\Amc')$ and $t=\mn{tp}_\Amc(a)$,
then $m_t =4^{|\Tmc|}$ which implies that there is a
regular type $t'$ such that the combination $(t, t')$ is realized at
least $2^\sizeof{\tbox}$ many times in $\I'$ among the individuals
from $\Amc'$.

% , Keeping in $W$ at least $4^{|\Tmc|}+1$
% copies of each ABox type~$t$ (if existent) ensures
% that % realized outside $W$
% % ensures that
% in any model $\I'$ of $\kb'$, 
% we can find a type $t'$ such that the combination of an
% ABox type and a regular type $(t, t')$ is realized at least
% $2^\sizeof{\tbox}$ many times in $\I'$ among the individuals from $W$.
% %These combinations (one per such $t$) allow
% This allows us 
% to extend $\I'$ into a model of $\kb$ while preserving $<_\CP$-minimality.
%We give type $t'$ to individuals with ABox type $t$ that were left outside $W$.
%Doing this for each such $t$, we extend $\I'$ into a model $\I$ of $\kb$.
%Now, an eventual model $\J$ of $\kb$ with $\J <_\CP \I$ that involves
%Intuitively, the at least $4^{|\Tmc|}+1$ copies of each
%ABox type give us enough room for type assignment in proofs,
%as in the proof of the core lemma.
% The key observation is now as
% follows:
%
%
Lemma~\ref{lemma-data-indep} gives $\PTime$ membership as the size of $\abox'$ is bounded by a constant. We compute $\kb'$
%which can be done 
in polynomial time and check whether
$\Circ(\kb') \models A_0(a_0)$, which can be decided in
$\TwoExpTime$ by Theorem~\ref{thm-combined-upper-alchi}, that is, in
constant time w.r.t.\ data complexity.  The correctness of this
procedure immediately follows from Lemma~\ref{lemma-data-indep}.

\subsection{Negative role inclusions}
\label{subsection-negative-ri}

DL-Lite is often defined to additionally include negative role
inclusions of the form $r \sqsubseteq \neg s$, with the obvious semantics.
%(see e.g.\ \cite{calvaneseetal:dllite}).
It is known that these sometimes lead to increased computational complexity; see, for example, \cite{maniere:thesis}.
%{\color{red}other refs?} 
We close by observing that this is also the case for circumscription.
While querying circumscribed DL-Lite KBs (in all considered dialects) is \coNP-complete w.r.t.\ data complexity for (U)CQs and in \PTime for AQs, adding negative role inclusions results in a jump back to $\Uppi^\textrm{P}_2$.
We prove this by reduction from $\forall\exists\mn{3SAT}$.
Some ideas are shared with the proof of
Theorem~\ref{thm-data-lower-el}, but the general strategy of the
reduction is different.
\begin{restatable}{theorem}{thmdatalowercoreh}
  \label{thm-data-lower-coreh} AQ evaluation on
  circumscribed $\dllitecoreh$ KBs with negative role inclusions is $\Uppi^\textrm{P}_2$-hard.
  This holds even without fixed concept names and with a single negative role inclusion.
\end{restatable}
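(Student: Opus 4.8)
The plan is a reduction from the validity of $\forall\exists\mn{3SAT}$ sentences $\psi = \forall u_1 \cdots u_n\, \exists v_1 \cdots v_m\, \varphi$, with $\varphi = c_1 \wedge \cdots \wedge c_k$ a $3$CNF; this problem is $\Uppi^\textrm{P}_2$-complete and its complement is the $\Sigma^\textrm{P}_2$-complete question whether some assignment $\bar u$ makes $\varphi(\bar u,\bar v)$ false for all $\bar v$. From $\psi$ I would build a \emph{fixed} $\dllitecoreh$ TBox $\Tmc$ containing a single negative role inclusion, a \emph{fixed} circumscription pattern $\CP$ with no fixed concept names, a \emph{fixed} atomic query $A_0(x)$, and an ABox $\Amc_\psi$ of size polynomial in $|\psi|$ with a distinguished individual $a_0 \in \Ind(\Amc_\psi)$, so that $\Circ(\Tmc,\Amc_\psi) \models A_0(a_0)$ iff $\psi$ is valid. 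The ABox has one individual per variable $u_i,v_j$ and one per clause $c_\ell$, together with role assertions recording, for every clause, its three literal occurrences and their polarities, plus $a_0$ as a hub linked to the clause individuals.

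In the intended semantics, a model of $\Circ(\Tmc,\Amc_\psi)$ assigns to each variable individual a truth value (possibly leaving some undefined) and guesses clause‑satisfaction certificates; the pattern minimizes a concept $M$, chosen to be the $\prec$-least minimized name, that is forced to contain a fixed defect element attached to $a_0$ and can be emptied only when \emph{every} clause individual carries a satisfaction certificate, and $A_0$ is derived precisely from the ``all clauses certified'' marker, so that $a_0 \in A_0^\Imc$ iff $M^\Imc = \emptyset$. The crux — and the role of the single negative role inclusion — is that $\dllitecoreh$ has only unqualified existentials $\exists r.\top$, so truth values cannot be propagated from variable individuals to clause individuals by ordinary inclusions; I would realise this propagation with the negative role inclusion, using it to force, in every model, a certificate edge from a clause to one of its literal occurrences whenever the underlying variable individual carries the matching truth value, and to forbid spurious certificates otherwise. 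This capability is exactly what plain $\dllitecoreh$ lacks, which explains the jump from $\PTime$ to $\Uppi^\textrm{P}_2$ when a negative role inclusion is added.

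For correctness, observe first that every $<_\CP$-minimal model effectively commits to a full assignment $\bar u,\bar v$: from any model, completing an undefined variable and adding the certificates this licenses are same-domain moves that can only shrink $M$, so a model in which $M$ is not already as small as possible for its $\bar u$ is not minimal (here one uses that $M$ is the $\prec$-least minimized name, so the side conditions of $<_\CP$ are met even though the varying truth-value concepts grow). Consequently, for a ``good'' $\bar u$ (one admitting a satisfying $\bar v$) every minimal model committing to $\bar u$ has $M = \emptyset$, hence $a_0 \in A_0$; and for a ``bad'' $\bar u$ every model committing to it keeps $a_0 \in M$, and the model that additionally guesses a full $\bar v$ is minimal, hence witnesses $\Circ(\Tmc,\Amc_\psi) \not\models A_0(a_0)$. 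Thus $\Circ(\Tmc,\Amc_\psi) \models A_0(a_0)$ iff every $\bar u$ is good iff $\psi$ is valid, and the reduction uses only one negative role inclusion and no fixed concept names. The step I expect to be the main obstacle is designing and verifying the negative-role-inclusion gadget so that it robustly forces exactly the intended certificates while remaining immune to the anonymous elements introduced by the existential restrictions $\exists r.\top$ in $\Tmc$, together with the bookkeeping that pins down the minimal models as exactly those committing to a full assignment with $M$ as small as possible.
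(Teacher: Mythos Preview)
Your overall strategy---reduce from $\forall\exists\mn{3SAT}$, encode variables and clauses as ABox individuals, use the single negative role inclusion to constrain existential witnesses, and read off the answer via an atomic query---matches the paper. However, there is a genuine gap in your correctness argument that the construction as described cannot close.

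The problem is the treatment of the universal variables $\bar u$. You assert that ``for a `bad' $\bar u$ \dots\ the model that additionally guesses a full $\bar v$ is minimal.'' But with a single minimized concept $M$ that can be \emph{emptied} when all clauses are certified, a model committing to a bad $\bar u$ (so $a_0 \in M$) is \emph{not} $<_\CP$-minimal whenever some good $\bar u$ exists: the model for the good $\bar u$ with $M = \emptyset$ has strictly smaller $M$ on the same domain, and since the truth-value concepts vary, nothing blocks the comparison. Hence the only minimal models correspond to good $\bar u$'s, and your reduction decides whether \emph{some} $\bar u$ admits a satisfying $\bar v$---mere satisfiability of $\varphi$, a $\coNP$ question, not $\Uppi^\textrm{P}_2$. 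For $\Uppi^\textrm{P}_2$-hardness you need \emph{every} $\bar u$-choice to yield a $<_\CP$-minimal model, so that a bad $\bar u$ supplies a countermodel; this requires different $\bar u$-choices to be pairwise $<_\CP$-incomparable, which a single $M$ that vanishes on good assignments cannot achieve.

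The paper resolves this by using four minimized concept names with a nontrivial preference order $\mn{VVal} \prec \mn{XVal} \prec \mn{CVal} \prec \mn{FVal}$. The concept $\mn{XVal}$ records the chosen truth values of the universal variables and is minimized with \emph{higher} priority than the concepts witnessing clause and formula satisfaction; since each universal assignment puts exactly one of $t_x,f_x$ into $\mn{XVal}$ per variable, different assignments give set-incomparable $\mn{XVal}$-extensions, so all survive as minimal. Only \emph{within} each fixed $\bar x$ is the $\bar y$-assignment then optimised by the lower-priority concepts. You need an analogous freezing mechanism. Separately, your description of the negative role inclusion as ``forcing a certificate edge'' is not how such axioms work: $r \sqsubseteq \neg s$ can only \emph{forbid} edges. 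In the paper, the single negative role inclusion $\mn{eval} \sqsubseteq \neg\overline{\mn{eval}}$ is combined with $\overline{\mn{eval}}$-assertions in the ABox to \emph{restrict} where the existentials $\exists\mn{eval}_X$, $\exists\mn{eval}_Y$, $\exists\mn{eval}_C$ may point, not to propagate truth values positively.
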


\section{Conclusion}
\label{sec-conclusion}
%!TEX root = main.tex

We have provided a rather complete picture of the complexity of query
evaluation on circumscribed KBs. Some cases, however, remain open. For
example, the lower bounds in combined complexity for UCQ evaluation on
\dllite cKBs given in Theorems~\ref{thm:combined-lower-dlliter}
and~\ref{thm-combined-lower-dllitepos} cannot be improved in an
obvious way to CQs, for which the complexity remains open. Also, the
lower bounds provided in
Theorems~\ref{thm-combined-lower-bool-instance},
\ref{thm-data-lower-coreh} and the $\Uppi^\textrm{P}_2$ one from
\cite{DBLP:journals/jair/BonattiFS11} rely on the preference relation
in circumscription patterns,
%(and
%Theorem~\ref{thm-combined-lower-bool-instance} additionally on fixed
%concept names) 
and it remains open whether the complexity decreases
when the preference relation is forced to be empty. Finally, it would
be interesting to study query evaluation under ontologies that are
sets of existential rules or formulated in the guarded (negation)
fragment of first-order logic, extended with circumscription. We
believe that these problems are still decidable.

\clearpage
%
% CLU: let's add acks only in FINAL version
%
 \section*{Acknowledgments}
 The research reported in this paper has been supported by the German Research Foundation DFG, as part
 of Collaborative Research Center (Sonderforschungsbereich) 1320 Project-ID 329551904 ``EASE - Everyday
 Activity Science and Engineering'', University of Bremen (\url{http://www.ease-crc.org/}). The research was
 conducted in subprojects ``P02 -- Ontologies with Abstraction'' and ``P05-N -- Principles of Metareasoning for Everyday Activities''.

 The authors acknowledge the financial support by the Federal Ministry of Education and Research of Germany 
 and by the Sächsische Staatsministerium für Wissenschaft Kultur und Tourismus in the program Center of Excellence for AI-research 
 ``Center for Scalable Data Analytics and Artificial Intelligence Dresden/Leipzig'', 
 project identification number: ScaDS.AI

This work is partly supported by BMBF (Federal Ministry of Education and Research)
in DAAD project 57616814 (\href{https://secai.org/}{SECAI, School of Embedded Composite AI})
as part of the program Konrad Zuse Schools of Excellence in Artificial Intelligence.

%% The file kr.bst is a bibliography style file for BibTeX 0.99c
\bibliographystyle{kr}
\bibliography{main}

\begin{thebibliography}{}

\bibitem[\protect\citeauthoryear{Baader \bgroup et al\mbox.\egroup
  }{2017}]{Baader2017}
Baader, F.; Horrocks, I.; Lutz, C.; and Sattler, U.
\newblock 2017.
\newblock {\em An Introduction to Description Logic}.
\newblock Cambridge University Press.

\bibitem[\protect\citeauthoryear{Baader, Brandt, and
  Lutz}{2005}]{DBLP:conf/ijcai/BaaderBL05}
Baader, F.; Brandt, S.; and Lutz, C.
\newblock 2005.
\newblock Pushing the $\mathcal{EL}$ envelope.
\newblock In {\em Proc.\ of {IJCAI}},  364--369.
\newblock Professional Book Center.

\bibitem[\protect\citeauthoryear{Bonatti and
  Samarati}{2003}]{DBLP:conf/dagstuhl/BonattiS03}
Bonatti, P.~A., and Samarati, P.
\newblock 2003.
\newblock Logics for authorization and security.
\newblock In {\em Logics for Emerging Applications of Databases},  277--323.
\newblock Springer.

\bibitem[\protect\citeauthoryear{Bonatti \bgroup et al\mbox.\egroup
  }{2015a}]{DBLP:conf/birthday/BonattiFLSW14}
Bonatti, P.~A.; Faella, M.; Lutz, C.; Sauro, L.; and Wolter, F.
\newblock 2015a.
\newblock Decidability of circumscribed description logics revisited.
\newblock In {\em Advances in Knowledge Representation, Logic Programming, and
  Abstract Argumentation - Essays Dedicated to Gerhard Brewka on the Occasion
  of His 60th Birthday}, volume 9060 of {\em LNCS},  112--124.
\newblock Springer.

\bibitem[\protect\citeauthoryear{Bonatti \bgroup et al\mbox.\egroup
  }{2015b}]{DBLP:journals/ai/BonattiFPS15}
Bonatti, P.~A.; Faella, M.; Petrova, I.~M.; and Sauro, L.
\newblock 2015b.
\newblock A new semantics for overriding in description logics.
\newblock {\em Artif. Intell.} 222:1--48.

\bibitem[\protect\citeauthoryear{Bonatti, Faella, and
  Sauro}{2011}]{DBLP:journals/jair/BonattiFS11}
Bonatti, P.~A.; Faella, M.; and Sauro, L.
\newblock 2011.
\newblock Defeasible inclusions in low-complexity {DL}s.
\newblock {\em J. Artif. Intell. Res.} 42:719--764.

\bibitem[\protect\citeauthoryear{Bonatti, Lutz, and Wolter}{2006}]{Bonatti2006}
Bonatti, P.; Lutz, C.; and Wolter, F.
\newblock 2006.
\newblock Description logics with circumscription.
\newblock In {\em Proc.\ of {KR2006}},  400--410.
\newblock AAAI Press.

\bibitem[\protect\citeauthoryear{Bonatti, Lutz, and Wolter}{2009}]{Bonatti2009}
Bonatti, P.~A.; Lutz, C.; and Wolter, F.
\newblock 2009.
\newblock The complexity of circumscription in description logic.
\newblock {\em J.\ Artif.\ Intell.\ Res.} 35:717--773.

\bibitem[\protect\citeauthoryear{Bonatti}{2021}]{Bonatti2021}
Bonatti, P.~A.
\newblock 2021.
\newblock Query answering in circumscribed {OWL2} profiles.
\newblock {\em Annals of Mathematics and Artificial Intelligence}
  89(12):1155--1173.

\bibitem[\protect\citeauthoryear{Casini and
  Straccia}{2013}]{DBLP:journals/jair/CasiniS13}
Casini, G., and Straccia, U.
\newblock 2013.
\newblock Defeasible inheritance-based description logics.
\newblock {\em J. Artif. Intell. Res.} 48:415--473.

\bibitem[\protect\citeauthoryear{Eiter and
  Gottlob}{1993}]{DBLP:journals/tcs/EiterG93}
Eiter, T., and Gottlob, G.
\newblock 1993.
\newblock Propositional circumscription and extended closed-world reasoning are
  {$\Uppi^{\textrm{P}}_2$}-complete.
\newblock {\em Theor. Comput. Sci.} 114(2):231--245.

\bibitem[\protect\citeauthoryear{Giordano \bgroup et al\mbox.\egroup
  }{2013}]{DBLP:journals/ai/GiordanoGOP13}
Giordano, L.; Gliozzi, V.; Olivetti, N.; and Pozzato, G.~L.
\newblock 2013.
\newblock A non-monotonic description logic for reasoning about typicality.
\newblock {\em Artif. Intell.} 195:165--202.

\bibitem[\protect\citeauthoryear{Lifschitz}{1985}]{DBLP:journals/ai/Lifschitz85}
Lifschitz, V.
\newblock 1985.
\newblock Closed-world databases and circumscription.
\newblock {\em Artif. Intell.} 27(2):229--235.

\bibitem[\protect\citeauthoryear{Lutz, Seylan, and
  Wolter}{2013}]{DBLP:conf/ijcai/LutzSW13}
Lutz, C.; Seylan, I.; and Wolter, F.
\newblock 2013.
\newblock Ontology-based data access with closed predicates is inherently
  intractable (sometimes).
\newblock In {\em Proc.\ of {IJCAI}},  1024--1030.
\newblock {IJCAI/AAAI}.

\bibitem[\protect\citeauthoryear{Lutz, Seylan, and
  Wolter}{2019}]{DBLP:journals/lmcs/LutzSW19}
Lutz, C.; Seylan, I.; and Wolter, F.
\newblock 2019.
\newblock The data complexity of ontology-mediated queries with closed
  predicates.
\newblock {\em Log. Methods Comput. Sci.} 15(3).

\bibitem[\protect\citeauthoryear{Manière}{2022}]{maniere:thesis}
Manière, Q.
\newblock 2022.
\newblock {\em {Counting queries in ontology-based data access}}.
\newblock Ph.D. Dissertation, {Universit{\'e} de Bordeaux}.

\bibitem[\protect\citeauthoryear{Motik \bgroup et al\mbox.\egroup
  }{2009}]{profiles}
Motik, B.; Grau, B.~C.; Horrocks, I.; Wu, Z.; Fokoue, A.; and Lutz, C., eds.
\newblock 2009.
\newblock {\em OWL 2 Web Ontology Language Profiles (Second Edition)}.
\newblock W3C Recommendation.
\newblock Available at \url{https://www.w3.org/TR/owl2-profiles/}.

\bibitem[\protect\citeauthoryear{Ngo, Ortiz, and Simkus}{2016}]{Ngo2016}
Ngo, N.; Ortiz, M.; and Simkus, M.
\newblock 2016.
\newblock Closed predicates in description logics: Results on combined
  complexity.
\newblock In {\em Proc.\ of {KR}},  237--246.
\newblock AAAI Press.

\bibitem[\protect\citeauthoryear{Papadimitriou and
  Yannakakis}{1986}]{Papadimitriou1986}
Papadimitriou, C.~H., and Yannakakis, M.
\newblock 1986.
\newblock A note on succinct representations of graphs.
\newblock {\em Information and Control} 71(3):181--185.

\bibitem[\protect\citeauthoryear{Pensel and
  Turhan}{2018}]{DBLP:journals/ijar/PenselT18}
Pensel, M., and Turhan, A.
\newblock 2018.
\newblock Reasoning in the defeasible description logic $\mathcal{EL}$ -
  computing standard inferences under rational and relevant semantics.
\newblock {\em Int. J. Approx. Reason.} 103:28--70.

\bibitem[\protect\citeauthoryear{Rector}{2004}]{DBLP:conf/psb/Rector04}
Rector, A.~L.
\newblock 2004.
\newblock Defaults, context, and knowledge: Alternatives for {OWL}-indexed
  knowledge bases.
\newblock In {\em Proc.\ of the Pac.\ Symp.\ in Biocomp.},  226--237.
\newblock World Scientific.

\bibitem[\protect\citeauthoryear{Reiter}{1977}]{DBLP:conf/adbt/Reiter77}
Reiter, R.
\newblock 1977.
\newblock On closed world data bases.
\newblock In {\em Symposium on Logic and Databases}, Advances in Database
  Theory,  55--76.
\newblock New York: Plemum Press.

\bibitem[\protect\citeauthoryear{Stefano, Ortiz, and
  Simkus}{2022}]{DBLP:conf/dlog/Stefano0S22}
Stefano, F.~D.; Ortiz, M.; and Simkus, M.
\newblock 2022.
\newblock Pointwise circumscription in description logics.
\newblock In {\em Proc.\ of {DL2022}}, volume 3263 of {\em {CEUR} Workshop
  Proceedings}.
\newblock CEUR-WS.org.

\bibitem[\protect\citeauthoryear{Stevens \bgroup et al\mbox.\egroup
  }{2007}]{DBLP:journals/ijmms/StevensAWSDHR07}
Stevens, R.; Aranguren, M.~E.; Wolstencroft, K.; Sattler, U.; Drummond, N.;
  Horridge, M.; and Rector, A.~L.
\newblock 2007.
\newblock Using {OWL} to model biological knowledge.
\newblock {\em Int. J. Hum. Comput. Stud.} 65(7):583--594.

\bibitem[\protect\citeauthoryear{Stockmeyer}{1976}]{stockmeyer76}
Stockmeyer, L.~J.
\newblock 1976.
\newblock The polynomial-time hierarchy.
\newblock {\em Theoretical Computer Science} 3(1):1--22.

\end{thebibliography}

\cleardoublepage

\appendix 

\section{Proofs for Section~\ref{subsect:fundamental}}

To complete the proof of Proposition~\ref{prop:nonom}, it suffices
to show the following.
\begin{lemma}
  $\Circ(\Kmc) \models q(\bar a)$ iff 
  $\mn{Circ}_{\mn{CP}'}(\Kmc') \models q'(\bar a)$ for all 
  $\bar a \in \mn{ind}(\Amc)$. 
\end{lemma}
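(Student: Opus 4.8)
The statement to prove is the correctness of the reduction from Proposition~\ref{prop:nonom}, so the plan is a direct two-direction argument relating models of $\Circ(\Kmc)$ to models of $\mn{Circ}_{\mn{CP}'}(\Kmc')$. The key structural fact to establish first is that every model $\Imc'$ of $\mn{Circ}_{\mn{CP}'}(\Kmc')$ interprets each $A_a$ as the singleton $\{a\}$ and each $B_a$ as $\{a\}$ as well. Indeed, $\Amc'$ forces $a \in A_a^{\Imc'} \cap B_a^{\Imc'}$; since the $B_a$ are minimized with strictly higher preference than all other concept names and $\CP'$ imposes no constraints forcing a larger $B_a$ (one can always shrink $B_a$ to $\{a\}$ without affecting the rest, as $B_a$ occurs in $\Tmc'$ only on the left of $A_a \sqcap \neg B_a \sqsubseteq D_a$, which only gets \emph{easier} to satisfy when $B_a$ shrinks), minimality forces $B_a^{\Imc'} = \{a\}$. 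Then the CI $A_a \sqcap \neg B_a \sqsubseteq D_a$ together with $q'$ containing the disjunct $\exists y\, D_a(y)$: if some $d \ne a$ were in $A_a^{\Imc'}$, then $d \in D_a^{\Imc'}$, so $\Imc' \models q'(\bar a)$, so $\Imc'$ is not a countermodel; hence any countermodel has $A_a^{\Imc'} = \{a\}$, which is exactly what makes $A_a$ behave like the nominal $\{a\}$.

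\textbf{From $\Circ(\Kmc)$ to $\mn{Circ}_{\mn{CP}'}(\Kmc')$ (contrapositive).} Suppose $\mn{Circ}_{\mn{CP}'}(\Kmc') \not\models q'(\bar a)$, witnessed by a countermodel $\Imc'$. By the observation above, $A_a^{\Imc'} = B_a^{\Imc'} = \{a\}$ and $D_a^{\Imc'} = \emptyset$ for all $a \in N$. Let $\Imc$ be the reduct of $\Imc'$ to the signature of $\Kmc$, reinterpreting each nominal $\{a\}$ as $A_a^{\Imc'} = \{a\}$; then $\Imc \models \Kmc$ since the CIs of $\Tmc$ are obtained from those of $\Tmc'$ by this renaming and the extra CI is vacuous. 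Moreover $\Imc \not\models q(\bar a)$ because $q$ is $q'$ minus the $D_a$-disjuncts (and the $T(x)$ padding), and those extra disjuncts fail in $\Imc'$. It remains to check $\Imc \models \Circ(\Kmc)$: any $\Jmc <_\CP \Imc$ model of $\Kmc$ could be expanded to a model $\Jmc'$ of $\Kmc'$ by setting $A_a^{\Jmc'} = B_a^{\Jmc'} = \{a\}$, $D_a^{\Jmc'} = \emptyset$, $T^{\Jmc'} = \Delta^{\Jmc'}$, and one checks $\Jmc' <_{\CP'} \Imc'$: conditions on $\Fsf'$, $\Msf'$ carry over since the $B_a$ (the only new minimized names) are unchanged and equal on both sides, so clauses (3) and (4) of $<_{\CP'}$ reduce to those of $<_\CP$. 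This contradicts $\Imc' \models \mn{Circ}_{\mn{CP}'}(\Kmc')$, so $\Imc \models \Circ(\Kmc)$ and hence $\Circ(\Kmc) \not\models q(\bar a)$.

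\textbf{From $\mn{Circ}_{\mn{CP}'}(\Kmc')$ to $\Circ(\Kmc)$ (contrapositive).} Conversely, given a countermodel $\Imc$ against $\Circ(\Kmc) \models q(\bar a)$, expand it to $\Imc'$ over $\mn{sig}(\Kmc')$ by setting $A_a^{\Imc'} = \{a\}$ (matching $\{a\}^{\Imc}$), $B_a^{\Imc'} = \{a\}$, $D_a^{\Imc'} = \emptyset$, $T^{\Imc'} = \Delta^{\Imc'}$. Then $\Imc' \models \Kmc'$ and $\Imc' \not\models q'(\bar a)$ (the $D_a$-disjuncts fail, the $T$-padding does not help). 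For minimality: a hypothetical $\Jmc' <_{\CP'} \Imc'$ model of $\Kmc'$ must have $B_a^{\Jmc'} = \{a\}$ for all $a$ (it cannot be empty since $B_a(a) \in \Amc'$, and if some $B_a^{\Jmc'} \supsetneq \{a\}$ then clause (3)/(4) of $<_{\CP'}$, with the $B_a$ at top preference, is violated); hence $D_a^{\Jmc'} = \emptyset$ by the CI being vacuous on the left, forcing also $A_a^{\Jmc'} = \{a\}$ as before (else $\Jmc'$ would satisfy a $D_a$-disjunct, which is fine for a model but then $A_a$ is not a singleton — actually here we just need that a model exists, so we instead argue: if $A_a^{\Jmc'} \not\subseteq \{a\}$ we may shrink it, contradicting minimality of $\Jmc'$ — but $\Jmc'$ need not be minimal; the clean route is simply to note $D_a^{\Jmc'} = \emptyset$ forces $A_a^{\Jmc'} \subseteq B_a^{\Jmc'} = \{a\}$ via the CI read contrapositively, i.e. $A_a \sqcap \neg B_a \sqsubseteq D_a = \bot$). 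Then the reduct of $\Jmc'$ to $\mn{sig}(\Kmc)$, reading $\{a\}$ as $A_a^{\Jmc'}=\{a\}$, is a model $\Jmc$ of $\Kmc$ with $\Jmc <_\CP \Imc$ (the $B_a$ being equal everywhere, $<_{\CP'}$ restricted to the old names is $<_\CP$), contradicting $\Imc \models \Circ(\Kmc)$. So $\Imc' \models \mn{Circ}_{\mn{CP}'}(\Kmc')$, giving $\mn{Circ}_{\mn{CP}'}(\Kmc') \not\models q'(\bar a)$.

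\textbf{Main obstacle.} The delicate point is the interaction between the minimization of the $B_a$ at top preference and clauses (3)--(4) of the preference relation $<_{\CP'}$: one must verify carefully that putting the $B_a$ strictly above everything else in $\prec'$ genuinely \emph{forces} $B_a^{\Jmc'}=\{a\}$ in any $\prec'$-model and, crucially, that this does not spuriously make some $\Imc'$ fail to be $\prec'$-minimal relative to the original pattern $\CP$ (i.e. the $B_a$ layer must be ``transparent'' so that below it $<_{\CP'}$ collapses exactly to $<_\CP$). Handling the $\top \sqsubseteq T$ / $T(x)$ bookkeeping needed to align answer variables across disjuncts is routine but must be stated so that $T$ does not interfere with minimization (it is a varying, in fact always-full, concept name).
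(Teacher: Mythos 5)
Your overall architecture matches the paper's proof: both directions are handled contrapositively, a countermodel of one cKB is turned into one of the other by adjusting $A_a,B_a,D_a$, and the key structural fact is that countermodels of $\mn{Circ}_{\CP'}(\Kmc')$ must interpret $B_a$ and then $A_a$ as $\{a\}$. However, there are two problems. The first is a concrete sign error: you claim one can shrink $B_a$ to $\{a\}$ ``without affecting the rest'' because $A_a \sqcap \neg B_a \sqsubseteq D_a$ ``only gets easier to satisfy when $B_a$ shrinks''. It gets \emph{harder}: shrinking $B_a$ enlarges $\neg B_a$ and hence the left-hand side. The paper's proof compensates by simultaneously setting $D_a^{\Jmc'} = A_a^{\Imc'} \setminus \{a\}$ (which is harmless since $D_a$ varies); without that adjustment your smaller interpretation need not be a model of $\Tmc'$, so the minimality argument for $B_a^{\Imc'}=\{a\}$ is broken as written. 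This is easily repaired.

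The second problem is a genuine gap, and you visibly notice it yourself without resolving it: in the direction where you start from a countermodel $\Imc$ of $\Circ(\Kmc)$ and must show that the expansion $\Imc'$ is $<_{\CP'}$-minimal, you take an arbitrary model $\Jmc'$ of $\Kmc'$ with $\Jmc' <_{\CP'} \Imc'$ and want its reduct to be a model $\Jmc <_\CP \Imc$ of $\Kmc$. For that you need $A_a^{\Jmc'} = \{a\}$ (otherwise the reduct need not satisfy the CIs of $\Tmc$ in which the nominal $\{a\}$ occurs on the right-hand side or under negation). Your ``clean route'' derives this from $D_a^{\Jmc'} = \emptyset$, but nothing forces that: $D_a$ is a varying concept name, and $\Jmc'$ is merely required to be a model of $\Kmc'$ that is $\prec'$-smaller than $\Imc'$ --- it is neither minimal nor a countermodel of $q'$, so the argument ``$D_a$ nonempty would satisfy the disjunct $\exists y\, D_a(y)$'' does not apply to it. Concretely, a spoiler $\Jmc'$ may inflate $A_a$ beyond $\{a\}$, dump $A_a^{\Jmc'}\setminus\{a\}$ into $D_a$ to satisfy the new CI, keep $B_a^{\Jmc'}=\{a\}$, and thereby shrink one of the originally minimized concept names; such a $\Jmc'$ is $<_{\CP'}$-smaller than $\Imc'$ yet yields no $<_\CP$-smaller model of $\Kmc$. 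Your proof does not exclude this, so the contradiction you aim for does not go through. (The paper dismisses this direction with ``it is readily checked'', so it offers no guidance here, but that does not close the hole in your argument: you must either show that no such inflating $\Jmc'$ can exist, or restructure the proof so that the chosen countermodel of $\mn{Circ}_{\CP'}(\Kmc')$ is obtained differently.)
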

\begin{proof}
  First assume that $\Circ(\Kmc) \not\models q(\bar a)$. Then there is
  a model \Imc of $\Circ(\Kmc)$ such that $\Imc\not\models q(\bar a)$.
  Let $\Imc'$ be defined like \Imc except that
  $A_a^{\Imc'}=B_a^{\Imc'}= \{a \}$ and $D_a^{\Imc'}=\emptyset$ for
  all $a \in N$. It is readily checked that $\Imc'$ is a model of
    $\mn{Circ}_{\mn{CP}'}(\Kmc')$ and that $\Imc' \not \models q'(\bar
    a)$.

    Now assume that
    $\mn{Circ}_{\mn{CP}'}(\Kmc')\not\models q'(\bar a)$. Then there is
    a model $\Imc'$ of $\mn{Circ}_{\mn{CP}'}(\Kmc')$ with
    $\Imc' \not\models q'(\bar a)$. Since $\Imc'$ is a model of
    $\Amc'$, we have $a \in B_a^{\Imc'}$ for all $a \in N$. Moreover,
    if $\{a\} \subsetneq B_a^{\Imc'}$, then we can find a model
    $\Jmc'$ of $\mn{Circ}_{\mn{CP}'}(\Kmc')$ with
    $\Jmc' <_{\CP'} \Imc'$ by setting $B_a^{\Jmc'}=\{a\}$ and
    $D_a^{\Jmc'}=A_a^{\Imc'} \setminus \{ a \}$. This contradicts the
    minimality of $\Imc'$, and, consequently, we must have
    $B_a^{\Imc'} = \{a\}$. But then also $A_a^{\Imc'} = \{ a \}$ since
    $A(a) \in \Amc'$, $A_a \sqcap \neg B_a \sqsubseteq D_a \in \Tmc'$,
    and the disjunct $\exists y \, D_a(y)$ is not satisfied in
    $\Imc'$. It can be verified that, consequently, $\Imc'$ is also a
    model of $\Circ(\Kmc)$, and we are done.
\end{proof}

\lemnormalform*
We omit a proof of the above lemma, which is entirely standard. The same normal form was used for \ALCHI, e.g., in \cite{maniere:thesis}.

\lemmafive*
\begin{proof}
	Assume to the contrary that \Jmc is not a model of $\Circ(\Kmc)$.
	As \Jmc is a model of \Kmc by prerequisite, there is thus a model $\Jmc'$ of \Kmc with $\Jmc' <_\CP \Jmc$.	
	To derive a contradiction, we construct a model $\Imc'$ of \Kmc with $\Delta^{\Imc'} = \Delta^{\Imc}$ and show that $\Imc' <_\CP \Imc$.
	For each type $t \in \mn{TP}_{\overline{\mn{core}}}(\Imc)$,	let
	$$
	\begin{array}{rcl}
		D_t & = & \{ d \in \Delta^\Imc \setminus \mn{ind}(\Amc) \mid \mn{tp}_\Imc(d)=t
		\}   \\[1mm]
		S_t & = & \{ \mn{tp}_{\Jmc'}(d) \mid d \in \Delta^\Jmc: \mn{tp}_\Jmc(d)=t \}.
	\end{array}
	$$
        We have
        \begin{itemize}
        \item $|D_t| \geq |\mn{TP}(\Tmc)|$ by definition of
          $\noncoretypesof{\I}$ and
        \item  $|S_t| \leq |\mn{TP}(\Tmc)|$, by definition of $S_t$.
        \end{itemize}
	Moreover, the `$\supseteq$'	direction of Point~2 implies $S_t \neq \emptyset$.
	We can thus	find a surjective function $\pi_t: D_t \rightarrow S_t$.
	
	Let $\pi$ be the function that is obtained as the union of 
        the (domain disjoint) functions $\pi_t$, for all $t \in
        \mn{TP}_{\overline{\mn{core}}}(\Imc)$.
	By definition of $\pi$ and of $\Delta^\Imc_\mn{core}$, the domain of $\pi$ is 
        $$\Delta^\Imc \setminus 
	\big (\Delta^\Imc_\mn{core} \cup \mn{ind}(\Amc) \big ).$$
        Now consider the range of $\pi$.
        By Point~1 and the `$\subseteq$' direction of Point~2,
        $\mn{tp}_\Jmc(d) \in  \mn{TP}_{\overline{\mn{core}}}(\Imc)$
        iff
        $d \notin \Delta^\Imc_\mn{core}$. By definition, the range
        of $\pi$ is thus 
	$$\{ \mn{tp}_{\Jmc'}(d) \mid d \in \Delta^\Jmc
        \setminus \Delta^\Imc_\mn{core}\}.$$
        %
        % and all elements from this set are actually hit, that is, $\pi$ is
        % surjective.

	Extend $\pi$ to domain $\Delta^\Imc$ by setting $\pi(d)=\mn{tp}_{\Jmc'}(d)$ for all
	$d \in \Delta^\Imc_\mn{core} \cup \mn{ind}(\Amc)$.
	It is easy to see that now, $\pi$ is a surjective function from $\Delta^\Imc$ to $\mn{TP}(\Jmc')$.
	We construct an interpretation $\Imc'$ as follows:
	\begin{align*}
		\Delta^{\Imc'} & =	\Delta^\Imc                                                                               \\
%		\istyle{a}^{\Imc'}      & =	\istyle{a} \\
		\cstyle{A}^{\Imc'}      & =	\{ d \in
		\Delta^{\Imc'}\mid \cstyle{A} \in
		\pi(d) \}
%		\cup \{ a \in
%		\mn{ind}(\Amc) \mid A(a)
%		\in \Amc \}                                                                                                 
		\\
		\rstyle{r}^{\Imc'}      & =	\{ (d,e) \in \Delta^{\Imc'}\times \Delta^{\Imc'} \mid \pi(d) \rightsquigarrow_\rstyle{r} \pi(e) \} 
	\end{align*}
        for all concept names $A$ and role names $r$.  Notice that
        $\typeinof{\I'}{d} = \pi(d)$, hence
        $\types(\I') \subseteq \types(\Jmc')$.  We first show that
        $\Imc'$ is a model of \Kmc by considering each possible shape
        of assertions and inclusion (recall that $\tbox$ is {in}
        normal form):
	\begin{itemize}
		
		\item{$\conceptassertion{A}{a}$.}
		Since $\Jmc'$ is a model of $\kb$, we have $\istyle{a} \in \cstyle{A}^{\Jmc'}$, hence $\cstyle{A} \in \typeinof{\Jmc'}{\istyle{a}}$.
		By definition of $\pi$, we have $\pi(\istyle{a}) = \typeinof{\Jmc'}{\istyle{a}}$, hence $\cstyle{A} \in \pi(\istyle{a})$.
		The definition of $\cstyle{A}^{\I'}$ yields $\istyle{a} \in \cstyle{A}^{\I'}$.
		
		\item{$\roleassertion{r}{a}{b}$.}
		Since $\Jmc'$ is a model of $\kb$, we have $(\istyle{a}, \istyle{b}) \in \rstyle{r}^{\Jmc'}$, hence $\typeinof{\Jmc'}{\istyle{a}} \legitsucc_\rstyle{r} \typeinof{\Jmc'}{\istyle{b}}$.
		By definition of $\pi$, we have $\pi(\istyle{a}) = \typeinof{\Jmc'}{\istyle{a}}$ and $\pi(\istyle{b}) = \typeinof{\Jmc'}{\istyle{b}}$.
		The definition of $\rstyle{r}^{\I'}$ yields $(\istyle{a}, \istyle{b}) \in \rstyle{r}^{\I'}$.
		
		\item
		Satisfaction of CIs of shape $\axtop$, $\axand$,
                $\axnotright$ and $\axnotleft$ immediately follows
                from $\types(\I') \subseteq \types(\Jmc')$ and $\Jmc'$
                being a model of \Tmc.
		
		\item{$\axexistsright$.}
		Let $d \in \cstyle{A}^{\I'}$.
		Then $\cstyle{A} \in \pi(d)$. 
		Since $\pi(d) \in \types(\Jmc')$, there exists $d' \in \domain{\Jmc'}$ such that $\typeinof{\Jmc'}{d'} = \pi(d)$.
		Since $\Jmc'$ is a model of $\tbox$, there exists $e' \in \cstyle{B}^{\Jmc'}$ such that $(d', e') \in \rstyle{R}^{\Jmc'}$.
		In particular, $\cstyle{B} \in \typeinof{\Jmc'}{e'}$ and $\typeinof{\Jmc'}{d'} \legitsucc_\rstyle{R} \typeinof{\Jmc'}{e'}$.
		It follows from $\pi$ being surjective that there exists $e \in \domain{\I'}$ with $\pi(e) = \typeinof{\Jmc'}{e'}$.
		Then $e \in \cstyle{B}^{\I'}$ and $(d, e) \in
                \rstyle{R}^{\I'}$ by definition of $\I'$, hence $d \in (\exists\rstyle{R}.\cstyle{B})^{\I'}$. 
		
		\item{$\axexistsleft$.}
		Let $d \in (\exists\rstyle{R}.\cstyle{B})^{\I'}$. 
		Then there exists $e \in \cstyle{B}^{\I'}$ such that $(d, e) \in \rstyle{R}^{\I'}$.
		By definition of $\rstyle{R}^{\I'}$, we have $\pi(d) \legitsucc_\rstyle{R} \pi(e)$.
		Since $\cstyle{B} \in \typeinof{\I'}{e} = \pi(e)$ and $\axexistsleft$, the definition of $\legitsucc_\rstyle{R}$ yields $\cstyle{A} \in \pi(d) = \typeinof{\I'}{d}$.
		Thus $d \in \cstyle{A}^{\I'}$.

		\item{$r \sqsubseteq s$.}
		Let $(d, e) \in \rstyle{r}^{\I'}$. Then $\pi(d) \legitsucc_\rstyle{r} \pi(e)$.
		Since $\tbox \models r \sqsubseteq s$, it is immediate that $\pi(d) \legitsucc_\rstyle{s} \pi(e)$.
		Therefore the definition of $\rstyle{s}^{\I'}$ yields $(d, e) \in \rstyle{s}^{\I'}$.
		
	\end{itemize}
	It remains to show that $\Imc' <_\CP \Imc$.
	We make use of the following claim.
	\begin{claim}
		$A^{\Imc'} \odot A^\Imc$ iff $A^{\Jmc'} \odot A^\Jmc$,
                for each concept name $A$ and $\odot \in \{\subseteq, \supseteq\}$.
	\end{claim}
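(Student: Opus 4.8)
The plan is to unfold the definition of $\Imc'$ and reduce the claim to a bookkeeping statement about which ``type pairs'' are realized. By construction $\mn{tp}_{\Imc'}(d)=\pi(d)$ for every $d\in\Delta^{\Imc'}=\Delta^\Imc$, so $d\in A^{\Imc'}$ iff $A\in\pi(d)$, whereas $d\in A^\Imc$ iff $A\in\mn{tp}_\Imc(d)$. Hence $A^{\Imc'}\subseteq A^\Imc$ holds iff no $d\in\Delta^\Imc$ satisfies $A\in\pi(d)\setminus\mn{tp}_\Imc(d)$, and $A^{\Imc'}\supseteq A^\Imc$ holds iff no $d$ satisfies $A\in\mn{tp}_\Imc(d)\setminus\pi(d)$; the analogous characterizations hold on the $\Jmc$-side in terms of the pairs $(\mn{tp}_{\Jmc'}(e),\mn{tp}_\Jmc(e))$, $e\in\Delta^\Jmc$. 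So it suffices to prove that $A$ occurs in the ``first-minus-second'' (resp.\ ``second-minus-first'') component of some pair $(\pi(d),\mn{tp}_\Imc(d))$ iff it occurs there for some pair $(\mn{tp}_{\Jmc'}(e),\mn{tp}_\Jmc(e))$.

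I would establish this by splitting $\Delta^\Imc$ along the construction of $\pi$. For $d\in\Delta^\Imc_{\mn{core}}$ we have $d\in\Delta^\Jmc$, $\pi(d)=\mn{tp}_{\Jmc'}(d)$, and $\mn{tp}_\Imc(d)=\mn{tp}_\Jmc(d)$ by Point~(1) of the Core Lemma, so $d$ contributes exactly the same pair on both sides. For $d\in D_t$, where $t=\mn{tp}_\Imc(d)\in\mn{TP}_{\overline{\mn{core}}}(\Imc)$, we have $\pi(d)=\pi_t(d)\in S_t$; since $\pi_t$ is \emph{surjective} onto $S_t=\{\mn{tp}_{\Jmc'}(e)\mid e\in\Delta^\Jmc,\ \mn{tp}_\Jmc(e)=t\}$ (possible as $|D_t|\ge|\mn{TP}(\Tmc)|\ge|S_t|$), the family of pairs $\{(\pi(d),t)\mid d\in D_t\}$ coincides with $\{(\mn{tp}_{\Jmc'}(e),\mn{tp}_\Jmc(e))\mid e\in\Delta^\Jmc,\ \mn{tp}_\Jmc(e)=t\}$. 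Ranging over all non-core $t$ settles the tree part on the $\Imc$-side and, through the right-hand description, every $e\in\Delta^\Jmc\setminus\Delta^\Imc_{\mn{core}}$ on the $\Jmc$-side.

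The delicate case is an ABox individual $a\in\mn{ind}(\Amc)\setminus\Delta^\Imc_{\mn{core}}$, for which $\pi(a)=\mn{tp}_{\Jmc'}(a)$ while $\mn{tp}_\Imc(a)$ need not equal $\mn{tp}_\Jmc(a)$. One direction is easy: as $t:=\mn{tp}_\Jmc(a)$ is non-core by Point~(2), surjectivity of $\pi_t$ gives $d\in D_t$ with $(\pi(d),\mn{tp}_\Imc(d))=(\mn{tp}_{\Jmc'}(a),\mn{tp}_\Jmc(a))$, so every pair occurring in $\Jmc$ also occurs in $\Imc$. The converse — that $a$'s own pair $(\mn{tp}_{\Jmc'}(a),\mn{tp}_\Imc(a))$ contributes no fresh concept name on either side — is the main obstacle, and making it precise requires a finer inspection of how the ABox part is treated by the construction (in particular, that ABox individuals are kept fixed between $\Imc'$ and $\Jmc'$, and that $\mn{tp}_\Imc(a)$ is itself a non-core type). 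Once the biconditional is in place for both $\subseteq$ and $\supseteq$, combining the two instances yields $A^{\Imc'}=A^\Imc\Leftrightarrow A^{\Jmc'}=A^\Jmc$ and $A^{\Imc'}\subsetneq A^\Imc\Leftrightarrow A^{\Jmc'}\subsetneq A^\Jmc$; feeding these equivalences into the four defining conditions of $<_\CP$ transfers $\Jmc'<_\CP\Jmc$ to $\Imc'<_\CP\Imc$, which is the contradiction that finishes the proof of the Core Lemma.
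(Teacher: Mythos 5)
Your reduction of the claim to a statement about which pairs $(\mn{tp}_{\Imc'}(d),\mn{tp}_\Imc(d))$ and $(\mn{tp}_{\Jmc'}(e),\mn{tp}_\Jmc(e))$ are realized is faithful to what the paper actually does: the paper argues element by element, but in each non-trivial case its argument is exactly ``given a difference witness on one side, surjectivity of $\pi_t$ (resp.\ the definition of $S_t$) produces an element on the other side realizing the same pair of types.'' Your treatment of core elements and of the sets $D_t$ is correct, and your final step --- feeding the biconditionals for $\subseteq$ and $\supseteq$ into the four conditions of $<_\CP$ --- is also how the paper concludes.

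However, the case you flag as unresolved is a genuine gap, and your proposal does not close it. For $a\in\mn{ind}(\Amc)\setminus\Delta^\Imc_{\mn{core}}$ the construction sets $\pi(a)=\mn{tp}_{\Jmc'}(a)$, so $a$ contributes the pair $(\mn{tp}_{\Jmc'}(a),\mn{tp}_\Imc(a))$ on the $\Imc$-side but $(\mn{tp}_{\Jmc'}(a),\mn{tp}_\Jmc(a))$ on the $\Jmc$-side, and Points~1 and~2 of the Core Lemma do not force $\mn{tp}_\Imc(a)=\mn{tp}_\Jmc(a)$ outside the core. If some $A$ lies in $\mn{tp}_{\Jmc'}(a)\cap\mn{tp}_\Jmc(a)$ but not in $\mn{tp}_\Imc(a)$, then $a$ witnesses $A^{\Imc'}\not\subseteq A^\Imc$ while $A^{\Jmc'}\subseteq A^\Jmc$ may still hold; and it is precisely the implication $A^{\Jmc'}\odot A^\Jmc\Rightarrow A^{\Imc'}\odot A^\Imc$ that the application of the claim requires (to transfer $A^{\Jmc'}=A^\Jmc$ for fixed $A$, and $B^{\Jmc'}\subsetneq B^\Jmc$ for the preferred $B$, from $\Jmc'<_\CP\Jmc$ to $\Imc'<_\CP\Imc$). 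Your ``easy direction'' only yields the converse implications. Be aware that the paper's own write-up does not supply the missing argument either: its case split is just $d\in\Delta^\Imc_{\mn{core}}$ versus $d\notin\Delta^\Imc_{\mn{core}}$, and in the latter case of the ``$\Leftarrow$'' direction it asserts $\pi(d)=\pi_t(d)$ for some $t$, which presupposes $d\in D_t$ and hence $d\notin\mn{ind}(\Amc)$. To turn your proposal into a complete proof you would have to either establish the claim for such $a$ using information beyond the stated hypotheses, or modify the definition of $\pi$ on $\mn{ind}(\Amc)\setminus\Delta^\Imc_{\mn{core}}$ so that $a$'s pair is realized in $(\Jmc',\Jmc)$ while still guaranteeing $\Imc'\models\Amc$.
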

	
		Let $A$ be a concept name.
		We prove the claim for $\odot =\; \subseteq$ only; for $\odot =\; \supseteq$, the arguments are similar.

                \smallskip
		``$\Rightarrow$''.
		Let $A^{\Imc'} \subseteq A^\Imc$ and $d \in
                A^{\Jmc'}$; we need to show that $d \in A^{\Jmc}$.
                First assume that $d \in \Delta_\mn{core}^\Imc$. Then
                $\mn{tp}_\Imc(d)=\mn{tp}_\Jmc(d)$ and
                $\mn{tp}_{\Imc'}(d) = \mn{tp}_{\Jmc'}(d)$. So
                $A^{\Imc'} \subseteq A^\Imc$ clearly implies
                $d \in A^\Jmc$, as required. Now assume that
                $d \notin \Delta_\mn{core}^\Imc$. Let
                $t=\mn{tp}_\Jmc(d)$. Then $\mn{tp}_{\Jmc'}(d) \in S_t$
                and consequently we find a $d' \in D_t$ with
                $\pi(d')=\mn{tp}_{\Jmc'}(d)$. This implies
                $\mn{tp}_{\Imc'}(d')=\mn{tp}_{\Jmc'}(d)$.  Since
                $d' \in D_t$, we have
                $\mn{tp}_\Imc(d)=t=\mn{tp}_\Jmc(d)$.
                So again,  $A^{\Imc'} \subseteq A^\Imc$ implies
                $d \in A^\Jmc$, as required.

                \smallskip
		``$\Leftarrow$''.
		Let $A^{\Jmc'} \subseteq A^\Jmc$ and $d \in
                A^{\Imc'}$; we need to show $d \in A^{\Imc}$.
                First assume that $d \in \Delta_\mn{core}^\Imc$. Then
                $\mn{tp}_\Imc(d)=\mn{tp}_\Jmc(d)$ and
                $\mn{tp}_{\Imc'}(d) = \mn{tp}_{\Jmc'}(d)$. So
                $A^{\Jmc'} \subseteq A^\Jmc$ clearly implies
                $d \in A^\Imc$, as required. Now assume that
                $d \notin \Delta_\mn{core}^\Imc$ and let
                $t'=\mn{tp}_{\Imc'}(d)$. By definition of $\Imc'$, we
                have $t'=\pi(d)$. By definition of $\pi$, there is a
                $t \in \mn{TP}_{\overline{\mn{core}}}(\Imc)$ such that
                $t'=\pi_t(d)$. Then $d \in D_t$ and $t' \in S_t$. The
                former yields $\mn{tp}_\Imc(d)=t$ and due to the
                latter, there is a $d'$ such that
                $\mn{tp}_\Jmc(d)=t=\mn{tp}_\Imc(d)$ and
                $\mn{tp}_{\Jmc'}(d)=t'=\mn{tp}_{\Imc'}(d)$.
                So again,  $A^{\Jmc'} \subseteq A^\Jmc$ implies
                $d \in A^\Imc$, as required.

                \medskip

        It is easy to see that since
	$\Imc' <_\CP \Imc$, the claim implies $\Jmc'<_\CP \Jmc$.
	We have derived a contradiction and conclude that $\Jmc$ is a
        model of $\Circ(\Kmc)$, as desired.
\end{proof}

\leminterlacingiscountermodel*
\begin{proof}
  Let $\I$ be a countermodel against $\circkb \models q(\bar a)$. It
  is clear that $\Imc' \not\models q[\bar a]$ since any homomorphism
  $g$ from $q$ to $\Imc'$ with $g(\bar x)=\bar a$ could be composed
  with the homomorphism $h$ from $\Imc'$ to \Imc to show that
  $\Imc \models q[\bar a]$. It is also straightforward to verify that
  \begin{itemize}
  \item[($*$)] 
    $\mn{tp}_{\Imc'}(d)=\mn{tp}_\Imc(h(d))$ for all $d \in \Delta^{\Imc'}$.
  \end{itemize}
  Using this and the
  definition of role extensions in $\Imc'$, it can be verified that
  $\Imc'$ is a model of \Kmc. % {\color{blue}do we need details?}
  Let us consider each possible shape of assertions and axioms in our normal form:
  \begin{itemize}
  	
  	\item
  	Assertions from $\abox$ are clearly satisfied as $\I$ is a model of $\abox $ and $\I'$ preserves $\I|_{\basedomainof{\I}}$ (recall $\indsof{\abox} \subseteq \basedomainof{\I}$).
  	
  	\item
  	Satisfaction of CIs with shape $\axtop$, $\axand$,
  	$\axnotright$ and $\axnotleft$ follows from remark ($*$) on types.
  	
  	\item{$\axexistsright$.}
  	Let $d \in \cstyle{A}^{\I'}$.
  	Thus $h(d) \in \cstyle{A}^\I$, and since $\I$ models $\kb$, $d' = f(h(d), r.B)$ is defined.
  	If $d' \in \coredomainof{\I}$, then definition of $\I'$ yields $(d, d') \in r^{\I'}$ and $d' \in B^{\I'}$.
  	Otherwise, $d r.B$ is a path through $\I$, and, by definition of $\I'$, we have in this case $(d, d r.B) \in r^{\I'}$ and $d r.B \in B^{\I'}$.
  	In both cases, $d \in (\exists r.B)^{\I'}$.
  	
  	\item{$\axexistsleft$.}
  	Let $d \in (\exists\rstyle{R}.\cstyle{B})^{\I'}$.
  	That is, there exists $e \in \cstyle{B}^{\I'}$ such that $(d, e) \in \rstyle{R}^{\I'}$.
  	Therefore $h(e) \in \cstyle{B}^{\I}$ and $(h(d), h(e)) \in \rstyle{R}^{\I}$, ie $h(d) \in (\exists\rstyle{R}.\cstyle{B})^{\I}$.
  	From $\I$ being a model of $\kb$, it follows $h(d) \in A^\I$, thus $d \in A^{\I'}$.

  	\item Axioms with shape $r \sqsubseteq s$ are clearly satisfied from the definition of $s^{\I'}$.
  	
  \end{itemize}

  It remains to prove that $\interlacing$ is minimal w.r.t.\ $<_\CP$.
  We use Lemma~\ref{lem-lemma5} with \Imc as the reference model, and it
  suffices to show that the preconditions of that lemma are
  satisfied. Clearly,
  $\coredomainof{\I} \subseteq \domain{\interlacing}$. Moreover, $(*)$
  and the fact that $h(d)=d$ for all $d \in \coredomainof{\I}$ implies
  that Condition~1 of Lemma~\ref{lem-lemma5} is satisfied.  We next
  verify that Condition~2 is also satisfied. First, let
  $d \in \Delta^\Jmc \setminus \coredomainof{\I}$. We have to show
  that $\mn{tp}_\Jmc(d) \in \noncoretypesof{\I}$. If
  $d \in \basedomainof{\I}$, then $h(d)=d$. From
  $d \notin \coredomainof{\I}$, we get
  $\mn{tp}_\Imc(d) \in \noncoretypesof{\I}$ and it remains to apply
  ($*$).  If $d \notin \basedomainof{\I}$, then
  $h(d) \notin \coredomainof{\I}$ and again we may use ($*$).
  Conversely, let $t \in \noncoretypesof{\I}$. We have to show that
  there is a $d \in \Delta^\Jmc \setminus \coredomainof{\I}$ with
  $\mn{tp}_\Jmc(d)=t$. But that $d$ is $e_t$ since $h(e_t)=e_t$ and by
  ($*$).
\end{proof}

\lemfinitemodelpropertywrtcore*
\begin{proof}
	Let $\Kmc=(\Tmc,\Amc)$ and let $\I$ be a model of $\circkb$.
	% We construct the desired model $\Jmc$ by starting from
        % $\I_{\mid \abox \cup \coredomainof{\I}}$ and adding exactly $m = \sizeof{\types(\tbox)}$ instances of each type from $\noncoretypesof{\I}$.
	Assume that $t_i \notin \Delta^\I$, for every $t \in
        \noncoretypesof{\I}$ and $1 \leq i \leq m$. Define $\Jmc$ by setting
	\[
	\begin{array}{r@{~}c@{~}l}
		\domain{\Jmc} & = & \indsof{\abox} \cup \coredomainof{\I} \cup \{ t_i \mid t \in \noncoretypesof{\I}, 1 \leq i \leq m \}
		\medskip \\
		\cstyle{A}^{\Jmc} & = & (\cstyle{A}^{\I} \cap \domain{\Jmc}) \cup \{ t_i \mid \cstyle{A} \in t, 1 \leq i \leq m \}
		\medskip \\
		\rstyle{r}^{\Jmc} & = & (\rstyle{r}^{\I} \cap (\domain{\Jmc} \times \domain{\Jmc}))  \hfill \text{(i)}
		\smallskip \\ & &
		\cup \, \{ (e, t_i) \mid 
		e \in \domain{\I}, \typeinof{\I}{e}
                                  \legitsucc_{\rstyle{r}} t, 1 \leq i
                                  \leq m \}  \hfill \text{(ii)}
		\smallskip \\ & &
		\cup \, \{ (t_i, e) \mid 
		e \in \domain{\I}, t \legitsucc_{\rstyle{r}}
                                  \typeinof{\I}{e}, 1 \leq i \leq m \}  \hfill \text{(iii)}
		\smallskip \\ & &
		\cup \, \{ (t_i, t'_j) \mid 
		t \legitsucc_{\rstyle{r}} t', 1 \leq i,j \leq m \} \hfill \text{(iv)}
	\end{array}
      \]
      for all concept names $A$ and role names $r$. It is easy to see
      that       $\typeinof{\Jmc}{e} = \typeinof{\I}{e}$
      for all $e \in \domain{\Imc} \cap \domain{\Jmc}$ and
      $\typeinof{\Jmc}{t_i} = t$ for all $t \in \types(\Tmc)$ and $1
      \leq i \leq m$.  This implies that
      $\I_{\mid \coredomainof{\I}} = \Jmc_{\mid \coredomainof{\Jmc}}$
      and $\noncoretypesof{\I} = \noncoretypesof{\Jmc}$ as desired.
      It also implies that Conditions 1 and 2 from Lemma~\ref{lem-lemma5} are satisfied.
      To show that \Jmc is a model of $\circkb$, it thus suffices to
      show
      that \Jmc is a model of $\kb$. It is clear by construction that
      \Jmc is a model of \Amc. For \Tmc, we consider all different
      forms of axioms:
	\begin{itemize}
		
		% \item{\bf$\conceptassertion{A}{a}$.}
		% Since $\I$ is a model of $\kb$, we have $\istyle{a} \in \cstyle{A}^{\I}$.
		% Furthermore, $\istyle{a} \in \domain{\Jmc}$ hence by definition of $\cstyle{A}^{\Jmc}$ we obtain $\istyle{a} \in \cstyle{A}^{\Jmc}$.
		
		% \item{\bf $\roleassertion{P}{a}{b}$.}
		% Since $\I$ is a model of $\kb$, we have $\istyle{a}, \istyle{b} \in \rstyle{P}^{\I}$.
		% Furthermore, $\istyle{a}, \istyle{b} \in \domain{\Jmc}$ hence by definition of $\rstyle{P}^{\Jmc}$ we obtain $(\istyle{a}, \istyle{b} \in \rstyle{P}^{\Jmc}$.
		
		\item
		Satisfaction of CIs of the form$\axtop$, $\axand$,
                $\axnotright$ and $\axnotleft$  immediately follows
                from $\types(\Jmc) \subseteq \types(\I)$ and $\I$
                being a model of \Tmc. We distinguish two cases:
		
		\item{$\axexistsright$.}
		Let $d \in \cstyle{A}^{\Jmc}$. 
		\begin{itemize}
		\item If $d \in \domain{\I}$, then we have $d \in \cstyle{A}^{\I}$.
		Since $\I$ is a model of $\Tmc$, there exists $e \in
                \cstyle{B}^{\I}$ such that $(d, e) \in \rstyle{R}^{\I}$.
		If $e \in \domain{\I}$, then $(d, e) \in
                \rstyle{R}^{\Jmc}$ due to Case~(i) in the definition
                of~$r^\Jmc$.
		Otherwise $t := \typeinof{\I}{e} \in \noncoretypesof{\I}$.
		In particular, $\cstyle{B} \in t$ and $\typeinof{\I}{d} \legitsucc_\rstyle{R} t$.
		Therefore $t_1 \in \cstyle{B}^\Jmc$ and $(d, t_1) \in
                \rstyle{R}^\Jmc$ due to Case~(ii), proving $d \in (\exists\rstyle{R}.\cstyle{B})^{\Jmc}$.
		\item If $d \notin \domain{\I}$, then $d = t_i$ for some $t \in \noncoretypesof{\I}$ and $1 \leq i \leq m$.
		By definition of $\noncoretypesof{\I}$, there exists $d' \in \domain{\I}$ s.t. $\typeinof{\I}{d'} = t = \typeinof{\Jmc}{d}$.
		From $d \in \cstyle{A}^{\Jmc}$, we get $A \in t$.
		Therefore $d' \in \cstyle{A}^{\I}$, and since $\I$ is
                a model of $\tbox$, there exists $e \in
                \cstyle{B}^{\I}$ such that $(d', e) \in \rstyle{R}^{\I}$.
		In particular, $\cstyle{B} \in \typeinof{\I}{e}$ %,
                % hence $e \in \cstyle{B}^\Jmc$,
                and $t \legitsucc_\rstyle{R} \typeinof{\I}{e}$.
		If $e \in \domain{\Jmc}$, then $e \in B^\Jmc$ and
                Case~(iii) in the definition of $\rstyle{R}^{\Jmc}$
                ensures $(d, e) \in \rstyle{R}^\Jmc$, which gives $d
                \in (\exists\rstyle{R}.\cstyle{B})^{\Jmc}$.
                If $e \notin \domain{\Jmc}$, then $t'=\typeinof{\I}{e} \in
                \noncoretypesof{\I}$. Let $e'= t'_j$ for some $j$ with
                $1 \leq j \leq m$.
		Case~(iv) in the definition of $\rstyle{R}^{\Jmc}$ yields $(d, e') \in \rstyle{R}^\Jmc$, which again gives $d \in (\exists\rstyle{R}.\cstyle{B})^{\Jmc}$.
		\end{itemize}
		
		\item{$\axexistsleft$.}
		Let $d \in (\exists\rstyle{R}.\cstyle{B})^{\Jmc}$.
		Then there exists $e \in \cstyle{B}^{\Jmc}$ such that $(d, e) \in \rstyle{R}^{\Jmc}$.
		From each case in definition of $\rstyle{R}^{\Jmc}$, we easily get $\typeinof{\Jmc}{d} \legitsucc_\rstyle{R} \typeinof{\Jmc}{e}$.
		Since $e \in \cstyle{B}^{\Jmc}$ we have in particular $\cstyle{B} \in \typeinof{\Jmc}{e}$.
		Combined with $\tbox \models \axexistsleft$ and the definition of $\typeinof{\Jmc}{d} \legitsucc_\rstyle{R} \typeinof{\Jmc}{e}$, we obtain $\cstyle{A} \in \typeinof{\Jmc}{d}$, that is $d \in \cstyle{A}^{\Jmc}$.

		\item{$r \sqsubseteq s$.}
		Let $(d, e) \in \rstyle{r}^{\Jmc}$.
		If this is due to Case~(i) in the definition of
                $\rstyle{r}^{\Jmc}$, then
                $(d, e) \in \rstyle{s}^{\Jmc}$ since $\I$ is a model of~$\Tmc$.
		For all other three cases, it suffices to note that,
                due to $r \sqsubseteq s \in \tbox$, $t_1
                \legitsucc_\rstyle{r} t_2$ implies $t_1
                \legitsucc_\rstyle{s} t_2$ for all types $t_1, t_2$.
		
	\end{itemize}
	Finally, we remark that the size of $\domain{\Jmc}$ is bounded by $\sizeof{\indsof{\abox}} + \sizeof{\coredomainof{\I}} + m \cdot \sizeof{\noncoretypesof{\I}}$.
	From $\sizeof{\coredomainof{\I}} \leq m \cdot
        \sizeof{\coretypesof{\I}}$ and $\sizeof{\coretypesof{\I}} +
        \sizeof{\noncoretypesof{\I}} = \sizeof{\types(\I)} \leq m$ we obtain
	$\sizeof{\domain{\Jmc}} \leq \sizeof{\abox} + m^2$. 
	Recalling $m = \sizeof{\types(\tbox)} \leq 2^\sizeof{\tbox}$,
        we obtain $|\Delta^\Jmc| \leq\sizeof{\abox} +
        2^{2\sizeof{\tbox}}$ as required.
\end{proof}

\section{Proofs for Section~\ref{subsection-alchi-combined}}

\lemmamosaic*

%!TEX root=main.tex

\renewcommand{\countermodel}{{\I}}
\newcommand{\patchoice}[3][\pattern]{\mathsf{ch}^{#2}_{#1, #3}}
\newcommand{\patref}{\mathsf{ref}}
\newcommand{\target}{\mathsf{target}}
\newcommand{\patrefof}[1]{\patref(#1)}
\newcommand{\targetof}[1]{\target(#1)}
\newcommand{\relevantpat}{\candidatepattern^+}

To prove the ``$\Leftarrow$'' direction of Lemma~\ref{lemma-mosaic}, assume we are given 
a good mosaic $\candidatepattern$. 
We aim to extend $\interpof[0]$, hence $\basecandidate$, into a model \Imc of \Tmc that
satisfies Conditions~(a) to~(c).
For each good mosaic $\pattern$, each $e \in (A \sqcap \neg\exists r . B)^{\Jmc_M}$ for some $A$ such that $A \sqsubseteq \exists r . B \in \Tmc$,
we choose a good mosaic $M'$ such that:
\begin{enumerate}
	
	\item $\mn{tp}_{\Jmc_M}(e)=\mn{tp}_{\Jmc_{M'}}(e)$;
	
	\item $e \in (\exists r . B)^{\Jmc_{M'}}$;
	
	\item if $(p,\widehat p,h) \in S_M$, then $(p,\widehat p,h') \in S_{M'}$ where
	$h'$ is the restriction of $h$ to range $\Delta^{\Imc_{\mn{base}}}
	\cup \{e \}$.
\end{enumerate}
We use $\patchoice{\headRule}{e}$ to denote this chosen $M'$.
Then, starting from $\candidatepattern$, we build a tree-shaped set of words which witnesses the acceptance of $\candidatepattern$.

\begin{definition}
	The mosaic tree $\relevantpat$ is the smallest set of words such that:
	\begin{itemize}
		\item $(\candidatepattern, \emptyset, \emptyset) \in \relevantpat$;
		\item If $w \cdot (\pattern, \_, \_) \in \relevantpat$ and $e \in (A \sqcap \neg
		\exists r . B)^{\Jmc_M}$ for some $A$ such that $A \sqsubseteq \exists r . B \in \Tmc$, then $w \cdot (\pattern, \_, \_) \cdot (\patchoice{\headRule}{e}, \headRule, e) \in \relevantpat$.
	\end{itemize}
\end{definition}

%\begin{remark}
%	Each element from $\relevantpat$ has shape $w \cdot (\pattern, h)$, where $w$ is eventually the empty word.
%	In particular, in what follows, when we let $w \cdot (\pattern, h) \in \relevantpat$ it includes the case of the initial pair $w \cdot (\pattern, h) = (\candidatepattern, \emptyset)$.
%\end{remark}

It remains to `glue' together %(copies of) 
the interpretations $\interpof$ according to the structure of $\relevantpat$.
Since a mosaic $\pattern$ may occur more than once, 
we create a copy of $\interpof$ for each %occurrence of a 
node in $\relevantpat$ of the form $w \cdot (\pattern, \_, \_)$.
We do not duplicate however elements from $\basecandidate$ as they precisely are those we want to reuse.
Hence only the two special elements $\frsymb$ and $\gensymb$ may be duplicated.
When a node $w \cdot (\pattern[1], r_1.B_1, e_1) \cdot (\pattern[2], r_2.B_2, e_2)$ is encountered, we merge element $e_2$ from $\pattern[2]$ with the already-existing $e_2$ from $\pattern[1]$, which is indicated with a superscript $w \cdot (\pattern[1], r_1.B_1, e_1)$ as follows $e_2^{w \cdot (\pattern[1], r_1.B_1, e_1)}$.
Other elements from $\domain{\interpof[2]} \setminus \domain{\basecandidate}$ are considered freshly introduced by this node, which is indicated with a matching superscript, that is: $e^{w \cdot (\pattern[1], r_1.B_1, e_1) \cdot (\pattern[2], r_2.B_2, e_2)}$.
Formally, the copying and merging of elements is achieved by the 
following duplicating functions, defined for each $w \cdot (\pattern, r.B, e) \in \relevantpat$.
\[
\begin{array}{r@{~}l}
	\caster[w \cdot (\pattern, r.B, e)] : \domain{\interpof} \rightarrow  \domain{\basecandidate} \cup \{ (e_i^*)^w, (e_i^*)^{w \cdot (\pattern, r.B, e)} \}
	\\
	d \mapsto 
	\left\{
	\begin{array}{ll}
		d & \text{if } d \in \domain{\basecandidate}
		\\
		d^w & \text{if } d = e \notin \domain{\basecandidate}
		\\
		d^{w \cdot (\pattern, r.B, e)} & \text{otherwise}
	\end{array}
	\right.
\end{array}
\]
%Note that if 
%$e \in \frof[2] \setminus \domain{\candidate}$, then $e \in \genof[1] \setminus \domain{\candidate}$, hence  $$\casterof[w \cdot ({\pattern[1]}, h_1) \cdot ({\pattern[2]}, h_2)]{e} = \casterof[w \cdot ({\pattern[1]}, h_1)]{e} = w \cdot ({\pattern[1]}, h_1).$$ % as $e \in \genof[1] \setminus \domain{\candidate}$.
The desired model $\countermodel $ %\emph{interlacing} 
can then be defined as:
\[
\countermodel = \bigcup_{\substack{ w \cdot (\pattern, r.B, e) \in \relevantpat }} \casterof[w \cdot (\pattern, r.B, e)]{\interpof},
\]
that is the domain (resp.\ the interpretation of each concept name and each role name) of $\I$ is the union across all $w \cdot (\pattern, r.B, e) \in \relevantpat$ of the image by $\caster[w \cdot (\pattern, r.B, e)]$ of the domain (resp.\ the interpretation of each concept name and each role name) of $\interpof$.

By definition, each $\caster[w \cdot (\pattern, r.B, e)]$ is a homomorphism from $\interpof$ to $\countermodel$.
Due to Condition~1 in the definition of good mosaics, if the range of two duplicating functions overlap, then the common element satisfies the same concepts, % must assign the same concepts to the shared element, 
so concept membership in $\countermodel$ transfers back to $\interpof$.
More formally:

\begin{lemma}
	\label{lemma:concepts-in-patterns}
	For all $w \cdot (\pattern, r.B, e) \in \relevantpat$, all $d \in \domain{\interpof}$ and all $\cstyle{A} \in \cnames$, it holds that $\casterof[w \cdot (\pattern, r.B, e)]{d} \in \cstyle{A}^{\countermodel}$ iff $d \in \cstyle{A}^{\interpof}$.%
\end{lemma}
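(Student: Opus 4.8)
The ``if'' direction is immediate. By construction, for every concept name $A$ the interpretation $A^{\countermodel}$ equals $\bigcup_{w \cdot (\pattern, r.B, e) \in \relevantpat} \casterof[w \cdot (\pattern, r.B, e)]{A^{\interpof}}$, so $d \in A^{\interpof}$ forces $\casterof[w \cdot (\pattern, r.B, e)]{d} \in A^{\countermodel}$.

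For the ``only if'' direction the plan is a case analysis on how the shared element arises. Fix a node $v = w \cdot (\pattern, r.B, e) \in \relevantpat$ and assume $\casterof[v]{d} \in A^{\countermodel}$. Unfolding the union again, there is a node $v' = w' \cdot (M', r'.B', e') \in \relevantpat$ and an element $d' \in A^{\Jmc_{M'}}$ with $x := \casterof[v]{d} = \casterof[v']{d'}$; we must show $d \in A^{\interpof}$. First I would record two bookkeeping facts about the duplicating functions: each $\caster[v]$ is injective, and the tagged copies are genuinely fresh, i.e.\ $(e_i^*)^{u} = (e_j^*)^{u'}$ only if $u = u'$ and $i = j$, and no tagged copy lies in $\domain{\basecandidate}$. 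I would then split on whether $x \in \domain{\basecandidate}$. If $x \in \domain{\basecandidate}$, then $d = x = d'$ and both lie in $\domain{\basecandidate}$; since every mosaic interpretation restricts to $\basecandidate$ on $\domain{\basecandidate}$ (mosaic condition~2), $d' \in A^{\Jmc_{M'}}$ yields $d \in A^{\interpof}$.

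If $x \notin \domain{\basecandidate}$, then $x$ is one of the tagged copies produced by $\caster[v]$, namely $(e_i^*)^w$ or $(e_i^*)^v$, and similarly for $\caster[v']$. Using the freshness fact, equating the two forces the two tags to coincide, and inspecting the four combinations shows that $v$ and $v'$ must stand in one of three relations: $v = v'$; one of them is the parent of the other in $\relevantpat$; or they are siblings that applied their respective heads at the very same element of their common parent's mosaic. When $v = v'$, injectivity of $\caster[v]$ gives $d = d'$ and we are done. When one is the parent of the other, write $M_p$ for the parent mosaic, $e_0$ for the element where the relevant head $\exists r_0.B_0$ was applied in $\Jmc_{M_p}$, and $M_c = \patchoice[M_p]{\exists r_0.B_0}{e_0}$ for the chosen child mosaic; then $x = e_0$, $d = d' = e_0$, and $\{\Jmc_{M_p}, \Jmc_{M_c}\} = \{\interpof, \Jmc_{M'}\}$. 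Condition~1 in the definition of a good mosaic, applied to the choice $M_c$, gives $\mn{tp}_{\Jmc_{M_p}}(e_0) = \mn{tp}_{\Jmc_{M_c}}(e_0)$, hence $e_0 \in A^{\Jmc_{M_p}}$ iff $e_0 \in A^{\Jmc_{M_c}}$, and the claim follows from $d' \in A^{\Jmc_{M'}}$. The sibling case is this argument applied twice, once for $v$ and once for $v'$, relayed through their common parent, whose mosaic carries the shared element with the same type as both $\interpof$ and $\Jmc_{M'}$ by Condition~1.

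I expect the one genuinely delicate point to be making precise the freshness fact, namely that the tagging scheme of the duplicating functions identifies exactly the intended pairs and nothing else, so that every overlap between two images is either contained in $\domain{\basecandidate}$ or is a single parent--child (or sibling) link at an element whose type in the two mosaics is pinned down by Condition~1. Everything else is routine once the overlaps are classified, because along the single link that is involved the shared element's type is preserved by construction.
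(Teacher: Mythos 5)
Your plan coincides with the paper's proof: the paper likewise dispatches the ``if'' direction by the definition of the union, and for ``only if'' obtains a second witness node and element with the same image and performs exactly your case split---overlap inside the base domain (settled because every mosaic restricts to $\Imc_{\mn{base}}$ there), same node, parent--child, and two siblings whose heads were applied at the same element of the common parent---closing the non-trivial cases with Condition~1 of good mosaics, i.e.\ type preservation at the application element. Your explicit two-step relay through the common parent in the sibling case is, if anything, spelled out more carefully than the paper's terse treatment of that case, so there is no gap.
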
%
\begin{proof}
	Let $u_1 = w_1 \cdot (\pattern[1], r_1.B_1, e_1) \in \relevantpat$ be a node from the mosaic tree, $d_1$ an element from $\domain{\interpof[1]}$ and $\cstyle{A} \in \cnames$.
	\newline ``$\Leftarrow$''.
	This is the easy direction.
	Assume $d_1 \in \cstyle{A}^{\interpof[1]}$.
	It follows from the definition of $\cstyle{A}^\I$ that $\casterof[u_1]{d_1} \in \cstyle{A}^{\countermodel}$.
	\newline ``$\Rightarrow$''.
	Assume $\casterof[u_1]{d_1} \in \cstyle{A}^{\countermodel}$.
	By definition of $\cstyle{A}^{\countermodel}$ there exists a node $u_2 = w_2 \cdot (\pattern[2], r_2.B_2, e_2)$ from the mosaic tree, and an element $d_2 \in \cstyle{A}^{\interpof[2]}$ such that $\casterof[u_1]{d_1} = \casterof[u_2]{d_2}$.
	We refer to this equality as $(*)$ and distinguish 5 cases.
	\begin{enumerate}
		% Case candy crush
		\item $d_1 \in \domain{\basecandidate}$ or $d_2 \in \domain{\basecandidate}$. \newline
		$(*)$ yields $d_1 = d_2$. 
		Interpretation $\interpof[2]$ preserves $\basecandidate$, hence $d_2 \in \cstyle{A}^{\basecandidate}$. 
		Interpretation $\interpof[1]$ preserves $\basecandidate$, hence $d_1 \in \cstyle{A}^{\interpof[1]}$.
		
		\emph{
			In the remaining cases, we thus assume $d_1, d_2 \notin \domain{\basecandidate}$.
			%, which ensures $\pattern[1] \neq \candidatepattern$ and $\pattern[2] \neq \candidatepattern$.
			%In particular, $\frof[1]$, $\genof[1]$, $\frof[2]$ and $\genof[2]$ are singletons.
		}

		\item
		$d_1 = e_1$ and $d_2 = e_2$. \newline
		$(*)$ yields $\pattern[1] = \pattern[2]$ and $d_1 = d_2$.

		\item 
		$d_1 \neq e_1$ and $d_2 = e_2$. \newline
		$(*)$ yields $w_1 = w_2 \cdot (\pattern[2], r_2.B_2, e_2)$ and $d_1 = d_2$.
		In particular, $\pattern[1] = \patchoice[{\pattern[2]}]{r_2,B_2}{e_2}$.
		From Condition~1 in the definition of good mosaics, we obtain $d_1 \in \cstyle{A}^{\interpof[1]}$.
		
		\item $d_1 = e_1$ and $d_2 \neq e_2$. \newline
		The same arguments as for Case~3 but this time with $\pattern[2] = \patchoice[{\pattern[1]}]{r_1,B_1}{e_1}$.
		
		\item $d_1 \neq e_1 $ and $d_2 \neq e_2$. \newline
		$(*)$ yields the existence of $w \cdot (M, r.B, e)$ such that $w_1 = w_2 = w \cdot (M, r.B, e)$.
		It follows that $\pattern[1] = \patchoice[{\pattern[]}]{r,B}{e}$ and $\pattern[2] = \patchoice[{\pattern[]}]{r,B}{e}$.
		Using the same arguments as in Cases~3 and 4, we obtain $d_1 \in \cstyle{A}^{\interpof[1]}$. \qedhere
	\end{enumerate}
\end{proof}

With Lemma~\ref{lemma:concepts-in-patterns} in hand, {we are ready to show that %it is then a technicality to verify that  %%%%% M: technicality doesn't sound right to me in English
	$\countermodel$ is a model of~$\kb$. }
\begin{lemma}
	\label{lemma:countermodel-is-model}
	$\countermodel$ is a model of $\kb$.
\end{lemma}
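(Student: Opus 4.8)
The plan is to check that $\countermodel$ satisfies every assertion of $\abox$ and every inclusion of $\tbox$ (which we may assume to be in normal form), case-splitting on the shape of the axiom; the workhorse throughout is Lemma~\ref{lemma:concepts-in-patterns}, which moves concept-name membership back and forth between $\countermodel$ and each local interpretation $\interpof$, together with the fact that, by construction, every concept-name extension and every role extension of $\countermodel$ is the union over $\relevantpat$ of the $\caster$-images of the corresponding extensions of the $\interpof$. Assertions hold because each $\caster[u]$ is the identity on $\indsof{\abox} \subseteq \domain{\basecandidate}$, each $\interpof$ agrees with $\basecandidate$ on $\domain{\basecandidate}$, and $\basecandidate$ is a model of $\abox$. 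Inclusions of the form $\top \sqsubseteq B$, $A_1 \sqcap A_2 \sqsubseteq B$, $A \sqsubseteq \neg B$, $\neg A \sqsubseteq B$, $\exists r . B \sqsubseteq A$, and role inclusions $r \sqsubseteq s$ are ``local'': for such an axiom and any $d^* \in \domain{\countermodel}$ (resp.\ any pair in $r^\countermodel$), write it as $\casterof[u]{d}$ (resp.\ as $(\casterof[u]{d'},\casterof[u]{e'})$ with $(d',e') \in r^{\interpof}$) for a node $u \in \relevantpat$, pull the relevant memberships back to $\interpof$ via Lemma~\ref{lemma:concepts-in-patterns}, apply that $\interpof$ satisfies the axiom (recall that mosaic interpretations satisfy every $\tbox$ axiom other than those of the form $A \sqsubseteq \exists r . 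B$), and push the conclusion forward again using Lemma~\ref{lemma:concepts-in-patterns} and the fact that $\caster[u]$ is a homomorphism into $\countermodel$.

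The only axioms that genuinely need the tree structure of $\relevantpat$ are those of the form $A \sqsubseteq \exists r . B$. The plan is to equip each $d^* \in \domain{\countermodel}$ with a canonical ``owner'' node. If $d^* \in \domain{\basecandidate}$, let $u$ be the root $(\candidatepattern,\emptyset,\emptyset)$, where every duplicating function acts as the identity on $d^*$, and put $d := d^*$. Otherwise $d^*$ is a fresh copy $(e_i^*)^u$ of one of the placeholders $\frsymb,\gensymb$, created at the node $u$ named in its superscript --- so $e_i^*$ is a placeholder of $\interpof$ other than the one (if any) that $u$ inherits from its parent --- and we put $d := e_i^*$; in both cases $\casterof[u]{d} = d^*$. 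Now assume $d^* \in A^\countermodel$; then $d \in A^{\interpof}$ by Lemma~\ref{lemma:concepts-in-patterns}. If $d \in (\exists r . B)^{\interpof}$ we are done by transporting a witness edge $(d,e)$, $e \in B^{\interpof}$, through $\casterof[u]$ as above. Otherwise $d \in (A \sqcap \neg \exists r . B)^{\interpof}$, so the expansion rule defining $\relevantpat$ fires at $u$ for $d$ and adds a child $u' \in \relevantpat$ whose local interpretation $\interpof[u']$ is the good mosaic chosen (in the definition of $\relevantpat$) to provide a witness for $A \sqsubseteq \exists r . B$ at $d$; by the second of its three defining conditions that mosaic makes $d$ an instance of $\exists r . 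B$, so there is $e''$ with $(d,e'') \in r^{\interpof[u']}$ and $e'' \in B^{\interpof[u']}$. Since $d$ is exactly the element that $u'$ inherits from its parent $u$, the duplicating function $\caster[u']$ merges it with the owner-copy, i.e.\ $\casterof[u']{d} = d^*$; hence $(d^*,\casterof[u']{e''}) \in r^\countermodel$ and $\casterof[u']{e''} \in B^\countermodel$ (Lemma~\ref{lemma:concepts-in-patterns} once more), so $d^* \in (\exists r . B)^\countermodel$, as required.

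The step I expect to be the main obstacle is making this owner-node bookkeeping airtight: one must check that $\domain{\countermodel}$ is precisely the disjoint union of $\domain{\basecandidate}$ with the fresh copies $(e_i^*)^u$, so that every element does have a well-defined owner, and then verify, by unwinding the three-way case definition of $\caster[u']$ (base element / inherited element / freshly generated element), that the child $u'$ produced by the expansion rule at $u$ indeed satisfies $\casterof[u']{d} = d^*$ rather than mapping $d$ to some other copy. The degenerate cases --- $d^* \in \domain{\basecandidate}$, where the single shared copy makes the argument immediate once one starts from the root, and $u$ itself being the root --- need to be spelled out but are routine. Once these points are in place, $\countermodel$ satisfies all of $\abox$ and $\tbox$, which is the claim.
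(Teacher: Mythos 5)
Your proposal is correct and follows essentially the same route as the paper's proof: a case split over the normal-form axiom shapes, with Lemma~\ref{lemma:concepts-in-patterns} transferring concept memberships between $\countermodel$ and the local mosaic interpretations for the ``local'' axioms, and the mosaic tree $\relevantpat$ together with Condition~2 of good mosaics supplying witnesses for $A \sqsubseteq \exists r.B$. The only difference is cosmetic: you canonicalize an ``owner'' node for each element before expanding, whereas the paper simply picks any node $u$ and preimage $d_1'$ witnessing $d_1 \in A_1^{\countermodel}$ and observes $\casterof[u]{d_1'} = \casterof[u_0]{d_1'}$ for the child $u_0$ — both handle the merging of the inherited placeholder identically.
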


\begin{proof}
	We consider each possible shape of assertion and axiom in $\Kmc$ w.r.t.\ the normal form from Lemma~\ref{lem:normalform}.
	\begin{itemize}
		
		\item 
		Assertions $\conceptassertion{A}{a}$ and $\roleassertion{P}{a}{b}$ from $\abox$ are satisfied in $\basecandidate$, which is preserved in $\candidate$ (Condition~2 in the definition of a mosaic), hence are also satisfied in $\countermodel$.
		
		\item
		Axioms with shapes $\axiom{\top}{A}$, $\axiom{A_1 \sqcap A_2}{A}$, $\axnotright$ and $\axnotleft$ are satisfied from Lemma~\ref{lemma:concepts-in-patterns} and the fact every mosaic is only allowed to give proper types to its element.
		
		\item
		Axioms with shapes $\exists{r}.A_1 \incl A_2$ and ${p}\incl {r}$ are satisfied from Lemma~\ref{lemma:concepts-in-patterns} and Condition~4 in the definition of mosaics, precisely requiring those to hold.
		
		\item
		Axioms with shape ${A_1} \incl \exists r.A_2$ are those that really involve the mosaics.
		Let $d_1 \in \rolestyle{A}_1^{\countermodel}$. 
		By definition of $\rolestyle{A_1}^{\countermodel}$, there exist a node $u = w \cdot (\pattern, \_, \_) \in \relevantpat$ and an element $d_1' \in \domain{\interpof}$ such that $d_1' \in \rolestyle{A_1}^{\interpof}$ and $\casterof[u]{d_1'} = d_1$.
		If $d_1' \in (\exists r . A_2)^{\interpof}$, then we pick $d_2' \in A_2^{\interpof}$ and $(d_1', d_2') \in r^{\interpof}$.
		It then follows immediately that $\casterof[u]{d_2'} \in A_2^{\countermodel}$ and $(\casterof[u]{d_1'}, \casterof[u]{d_2'}) \in r^{\countermodel}$, ensuring $d_1 \in (\exists r.A_2)^\countermodel$. 
		Otherwise $d_1' \notin (\exists r . A_2)^{\interpof}$, thus we have $u_0 = u \cdot (\pattern[0], r.A_2, d_1') \in \relevantpat$ with $\pattern[0] = \patchoice{r.A_2}{d_1'}$.
		Notice $\casterof[u]{d_1'} = \casterof[u_0]{d_1'}$.
		From Condition~2 in the definition of good mosaics, we get $d_1' \in (\exists r . A_2)^{\interpof[0]}$ hence we pick $d_2' \in A_2^{\interpof[0]}$ and $(d_1', d_2') \in r^{\interpof[0]}$.
		It then follows immediately that $\casterof[u_0]{d_2'} \in A_2^{\countermodel}$ and $(\casterof[u_0]{d_1'}, \casterof[u_0]{d_2'}) \in r^{\countermodel}$, ensuring $d_1 \in (\exists r.A_2)^\countermodel$.

	\end{itemize}
\end{proof}

This proves $\basecandidate$ can indeed by extended into a model of $\tbox$.
Notice that Condition~(a) is clearly satisfied due to Condition~2 in the definition of mosaics.
Similarly, Condition~(b) is clearly satisfied from Condition~3 in the definition of mosaics joint with Lemma~\ref{lemma:concepts-in-patterns}.
It remains to verify Condition~(c), that is for all $p \in q$, there is no homomorphism from $p$ in $\countermodel$.
%The inherited tree-like structure of $\countermodel$, along with the specifications having to be preserved between good mosaics (Condition~3), ensure that if such a homomorphism would exist for $p \in q$ exists, then there would be a complete match triple in the specification of mosaics from $\relevantpat$.
%That would contradict the definition of a mosaic, whose second point do not permit such complete triples.
%The exact (stronger) statement is as follows.

By contradiction, assume there exists a CQ $p \in q$ and a homomorphism $\match : p \rightarrow \countermodel$.
Notice that for each $v \in \variablesof{p}$ s.t.\ $\match(v) \notin \domain{\basecandidate}$, we thus have $w$ such that $\match(v) = d^w$ for some $d \in \{ \frsymb, \gensymb \}$.
To identify from which nodes the facts of $\match(p)$ come involving
$d^w$, we notice the following consequence of the definition of functions $\caster[w]$ for $w \in \relevantpat$.
\begin{remark}
	\label{remark-casters}
	Let $w \in \relevantpat$ and $d \in \{ \frsymb, \gensymb \}$.
	For all $w' \cdot (\pattern, r.B, e) \in \relevantpat$, we have $d^w \in \casterof[w' \cdot (\pattern, r.B, e)]{\domain{\interpof[]}}$ iff one of the two following conditions holds:
	\begin{itemize}
		\item $w = w' \cdot (\pattern, r.B, e)$;
		\item $w = w'$ and $d = e$.
	\end{itemize} 
\end{remark}
It follows that, for each $v \in \variablesof{p}$ s.t.\ $\match(v) \notin \domain{\basecandidate}$, the set of nodes $W_v = \{ w' \cdot (\pattern, r.B, e) \in \relevantpat \mid \match(v) \in \casterof[w' \cdot (\pattern, r.B, e)]{\domain{\interpof[]}} \}$ is finite.
On $\relevantpat$, we consider the order given by the prefix relation, that for all $w_1, w_2 \in \relevantpat$, we define $w_1 \leq w_2$ iff $w_1$ is a prefix of $w_2$.
We now denote $W_p$ the prefix closure of $\bigcup_{v} W_v$, where $v$ ranges over $v \in \variablesof{p}$ s.t.\ $\match(v) \notin \domain{\basecandidate}$.
In particular, $(\candidatepattern, \emptyset, \emptyset) \in W_p$.
For every $w \in \relevantpat$, we define the interpretation:
\[
\I_w = \bigcup_{\substack{ w' \cdot (\pattern, r.B, e) \in \relevantpat \\ w \leq w' \cdot (\pattern, r.B, e)}} \casterof[w' \cdot (\pattern, r.B, e)]{\interpof}.
\] 
We further set $p_w \subseteq p$ consisting of those atoms from $p$ that are mapped by $\match$ in $\I_w$, and $\match_w$ the subsequent homomorphism, that is the restriction of $\match$ that maps to $\I_w$, i.e.:
\[
\match_w = \match|_{\match^{-1}(\domain{\I_w})}.
\]
In particular $p_{(\candidatepattern, \emptyset, \emptyset)} = p$ and $\match_{(\candidatepattern, \emptyset, \emptyset)} = \match$.
We now claim the following:
\renewcommand{\subquery}{\widehat{p}}
\begin{lemma}
	\label{lemma:captured-matches}
	For all nodes $u = w \cdot (\pattern, r.B, e) \in W_p$, 
	we have $(p, p_u, \submatch') \in \patternspec$ 
	where $\submatch' = (\caster[u])^{-1} \circ \match_{u}|_{\domain{}}$ 
	with $\domain{} = \submatch_u^{-1}(\casterof[u]{\domain{\interpof}})$.
\end{lemma}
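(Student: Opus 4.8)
The plan is to prove the statement by induction on the finite tree $W_p$ from the leaves towards the root: for a node $u = w \cdot (\pattern, r.B, e) \in W_p$ I would establish the claim assuming it for every child of $u$ that lies in $W_p$, the base case being the leaves. Fix such a $u$, write $\interpof$ and $\patternspec$ for the interpretation and specification of the mosaic $\pattern$ carried by $u$, and let $u_1, \dots, u_k$ be the children of $u$ inside $W_p$; each $u_j$ extends $u$ by a triple whose first component is a good mosaic $M_j$ that was chosen at $u$ to realise some existential head unsatisfied at an element $e_j \in \domain{\interpof}$.

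First I would establish a decomposition $\I_u = \casterof[u]{\interpof} \cup \bigcup_{j=1}^{k} \I_{u_j}$, and hence $p_u = p_u^{\mn{loc}} \cup p_{u_1} \cup \dots \cup p_{u_k}$, where $p_u^{\mn{loc}}$ collects the atoms of $p$ that $\match$ maps into the local copy $\casterof[u]{\interpof}$. The only point to check is that subtrees hanging off children of $u$ outside $W_p$ carry no atom of $p$: by Remark~\ref{remark-casters}, a variable $v$ with $\match(v) \notin \domain{\basecandidate}$ has $\match(v)$ in the image of $\caster[u']$ only for $u' \in W_v \subseteq W_p$, and since $W_p$ is prefix-closed such a variable cannot be mapped into a subtree rooted outside $W_p$; and atoms involving only $\domain{\basecandidate}$-elements already hold in $\casterof[u]{\interpof}$ because every mosaic interpretation preserves $\basecandidate$ (cf.\ Lemma~\ref{lemma:concepts-in-patterns}). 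For the local part, $\caster[u]$ is injective on $\domain{\interpof}$, so $\submatch' = (\caster[u])^{-1} \circ \match$ restricts on $p_u^{\mn{loc}}$ to a total homomorphism from $p_u^{\mn{loc}}$ into $\interpof$, and the first saturation condition on $\patternspec$ gives $(p, p_u^{\mn{loc}}, \submatch'|_{p_u^{\mn{loc}}}) \in \patternspec$; if $u$ is a leaf, $p_u = p_u^{\mn{loc}}$ and we are done.

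For a child $u_j$, the induction hypothesis gives $(p, p_{u_j}, \submatch'_j) \in S_{M_j}$, and Condition~3 in the definition of good mosaics — which transfers any match triple of $M' = M_j$ to $M = \pattern$ after restricting its range to $\Delta^{\Imc_{\mn{base}}} \cup \{e_j\}$ — then yields $(p, p_{u_j}, h_j) \in \patternspec$, where $h_j$ is the corresponding restriction of $\submatch'_j$. The crucial observation is that in $\countermodel$ the only elements shared by $\casterof[u]{\interpof}$ and the subtree $\I_{u_j}$ are the base elements together with the single copy $\casterof[u]{e_j}$ of $e_j$, and that $\caster[u]$ and $\caster[u_j]$ agree on this common part; this forces $h_j$, transported through $\caster[u]$, to equal the restriction of $\submatch'$ to the $p_{u_j}$-variables that $\match$ sends into $\casterof[u]{\interpof}$. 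Hence $(p, p_u^{\mn{loc}}, \submatch'|_{p_u^{\mn{loc}}})$ and the $(p, p_{u_j}, h_j)$ all lie in $\patternspec$ and all carry restrictions of the single map $\submatch'$, so they pairwise agree on common variables, and finitely many applications of the second saturation condition on $\patternspec$ combine them into $(p, p_u^{\mn{loc}} \cup \bigcup_j p_{u_j}, \submatch') = (p, p_u, \submatch') \in \patternspec$, closing the induction. (Instantiated at the root $(\candidatepattern, \emptyset, \emptyset)$ the lemma then produces the \emph{complete} triple $(p, p, \submatch') \in S_{\candidatepattern}$, contradicting the requirement that a mosaic specification contain only incomplete triples — which is how the assumed match $\match$ gets refuted.)

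I expect the main obstacle to be the bookkeeping in the previous paragraph: one has to unwind the definition of the duplicating functions $\caster[\cdot]$ together with Remark~\ref{remark-casters} to determine exactly which copies of which elements are identified in $\countermodel$ — in particular that the frontier element $e_j$ of a child mosaic is glued to the copy of $e_j$ already present at the parent node $u$ — and to confirm that the domain $\Delta^{\Imc_{\mn{base}}} \cup \{e_j\}$ over which Condition~3 of good mosaics restricts the match corresponds precisely to the set of $p_{u_j}$-variables that $\match$ maps into $\casterof[u]{\interpof}$. The degenerate situation in which $e_j$ is itself one of the placeholder symbols $\frsymb, \gensymb$ of $\pattern$ rather than a base element needs separate care. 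Once this correspondence is in place, the decomposition, the local homomorphism, and the recombination are all routine manipulations with homomorphisms and the two saturation conditions.
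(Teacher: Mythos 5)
Your proposal is correct and follows essentially the same route as the paper's proof: an induction from the maximal elements of $W_p$ toward the root, decomposing $p_u$ into the locally realized atoms plus the subqueries $p_{u_j}$ of the children in $W_p$, importing the children's triples into $\patternspec$ via Condition~3 of good mosaics, and recombining everything with the two saturation conditions after checking pairwise compatibility on the shared variables (those mapped to $\domain{\basecandidate}$ or to the glued element $e_j$). The obstacles you flag (the bookkeeping with the duplicating functions and Remark~\ref{remark-casters}) are exactly the points the paper's proof spells out, so no further work is needed.
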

\begin{proof}
	We proceed by induction on elements of $W_p$, starting from its finitely many maximal elements w.r.t.\ $\leq$.
	
	{\bf Base case.}
	Assume $u = w \cdot (\pattern, r.B, e) \in W_p$ is maximal for $\leq$.
	that is for all $u' \in \relevantpat$ s.t.\ $u \leq u'$, we have $u' \notin W_p$.
	In particular, for all $v \in \variablesof{p}$, $\match(v) \in \casterof[u']{{\J_{M'}}}$, where $M'$ is the mosaic of node $u'$, implies $\match(v) \in \domain{\basecandidate}$.
	Therefore $\submatch_u^{-1}(\casterof[u]{\domain{\interpof}}) = \variablesof{p_u}$ and, since $\basecandidate$ is preserved in every mosaic (Condition~2 in the definition), it follows that $p_u$ fully embeds in $\casterof[u]{\interpof[]}$ via $\match_u$.
	From $\patternspec$ being saturated, we thus have $(p, p_u, (\caster[u])^{-1} \circ \match_u)$ which is the desired triple.
	
	{\bf Induction case.}
	Consider $u = w \cdot (\pattern, r.B, e) \in W_p$ and assume the property holds for all $u \in W_p$ with $u \leq u'$.
	We cover $p_u$ by the set of all $p_{u \cdot (M', r'.B', e')}$ s.t.\ $u \cdot (M', r'.B', e') \in W_p$ and by the remaining atoms $p_{= u}$ from $p_u$.
	Notice that this is not a partition of $p_u$ as those subqueries may overlap.
	In particular, two distinct subqueries $p_{u \cdot (M_1, r_1.B_1, e_1)}$ and $p_{u \cdot (M_2, r_2.B_2, e_2)}$ may only share those variables which are mapped to $\domain{\basecandidate}$ and,  if $e_1 = e_2$, on $e_1$ (that is also $e_2$ in this case).
	Respectively, $p_{u \cdot (M', r'.B', e')}$ and $p_{=u}$ may only share variables that are mapped either on $\domain{\basecandidate}$ or on $e'$.
	We cover as well the homomorphism $\match_u$ by all $\match_{u \cdot (M', r'.B', e')}$ s.t.\ $u \cdot (M', r'.B', e') \in W_p$ and by the homomorphism $\match_{= u}$ which is defined as the restriction of $\match_u$ to variables from $p_{=u}$.
	Notice that from $S_M$ being saturated, we obtain $(p, p_{=u}, (\caster[u])^{-1} \circ \match_{=u}) \in S_M$.
	We further apply the induction hypothesis on each $u \cdot (M', r'.B', e') \in W_p$, which provides corresponding triples $(p, p_{u \cdot (M', r'.B', e')}, \match')$ in each $S_{M'}$.
	From Condition~3 in the definition of good mosaics, if follows that each $(p, p_{u \cdot (M', r'.B', e')}, \match'')$ belongs to $S_M$, where $\match''$ denotes the restriction of $\match'$ to range $\domain{\basecandidate} \cup \{ e' \}$.
	It remains to form the union (in the sense given in the definition of a saturated mosaic) of all those subsequent triples, along with $(p, p_{=u}, (\caster[u])^{-1} \circ \match_{=u})$, to obtain the desired triple.
	Let us first focus on two triples $(p, p_{u \cdot (M_1, r_1.B_1, e_1)}, \match_1'')$ and $(p, p_{u \cdot (M_2, r_2.B_2, e_2)}, \match_2'')$ and consider $v \in \variablesof{p_{u \cdot (M_1, r_1.B_1, e_1)}} \cap \variablesof{p_{u \cdot (M_2, r_2.B_2, e_2)}}$.
	From our covering of $p_u$, we have seen that either $\match_u(v) \in \domain{\basecandidate}$, or, if $e_1 = e_2$, that $\match(v) = e_1$.
	In any case, restricting $\match_1'$ and $\match_2'$ to ${\domain{\basecandidate} \cup \{ e_1 \}}$ preserved $\match_1''$ and $\match_2''$ being defined on $v$, and, since issuing from the same $\match_u$, equal.
	Let us now move to a triple $(p, p_{u \cdot (M', r'.B', e')}, \match'')$ and the triple $(p, p_{=u}, (\caster[u])^{-1} \circ \match_{=u})$.
	Let $v \in \variablesof{p_{u \cdot (M', r'.B', e')}} \cap \variablesof{p_{=u}}$.
	From our covering of $p_u$, we know $\match_u(v) \in \domain{\basecandidate} \cup \{  e' \}$.
	Again restricting to ${\domain{\basecandidate} \cup \{ e' \}}$ preserved $\match''$ being defined on $v$ and, since issuing from the same $\match_u$, equal to $\match_{=u}$ on $v$.
	Therefore, we can form the union of all those triples, since they are 2-by-2 compatible, which provides the desired triple in $S_M$.
\end{proof}

This concludes the proof of the ``$\Leftarrow$'' direction of Lemma~\ref{lemma-mosaic}, as existence of such a homomorphism $p$ then yields a complete triple in $\candidatespec$, contradicting $\candidatepattern$ being a mosaic.

%
%
%!TEX root=main.tex
%
%
We now turn to the ``$\Rightarrow$'' direction.
Assume that $\Imc_{\mn{base}}$ can be extended to a model \Imc of \Tmc that
satisfies Conditions~(a) to~(c).
Up to introducing $\sizeof{\types(\tbox)}$ copies of each element from $\domain{\I} \setminus \domain{\basecandidate}$, we can safely assume $\basedomainof{\I} = \domain{\basecandidate}$.
Indeed, the resulting interpretations is still a model of $\kb$ and still satisfies Conditions~(a) to (c).

Set $\candidate := \basecandidate$ and $\candidatespec := \{ (p, \widehat{p}, h|_{\domain{\basecandidate}}) \mid p \in q, \widehat{p} \subseteq p, h : \widehat{p} \rightarrow \I \text{ is a homomorphism} \}$.
Condition~(c) ensures in particular that $\candidatespec$ only contains incomplete triples.
We denote $\candidatepattern := (\candidate, \candidatespec)$ this initial mosaic.
%We claim $\candidatepattern := (\candidate, \candidatespec)$ is accepting.

Starting from $\Mmc := \{ \candidatepattern \}$, we extend $\Mmc$ into a set of mosaics in which each mosaic $\pattern \in \Mmc$ is good in $\Mmc$
and \emph{realized in $\I$}, meaning that $\interpof$ homomorphically embeds into $\I$.
Notice $\candidatepattern$ is indeed realized in $\I$, but may not be good in $\{ \candidatepattern \}$.
To pursue the construction, given a mosaic $\pattern \in \Mmc$ being realized in $\I$ and some $e \in (A \sqcap \lnot \exists r.B)^{\interpof}$ with $A \incl \exists r.B \in \tbox$, 
we show how to extract from $\I$ another mosaic $\altpattern$ to succeed to $M$, and we add $\altpattern$ to $\Mmc$.
Since the number of mosaics is finite, the construction of $\Mmc$ terminates.
We further check that all mosaics in the resulting $\Mmc$ set are good in $\Mmc$ (and in particular $\candidatepattern$). 

To formalize the construction, we introduce along each mosaic $\pattern$ a function $\track_M$ being a homomorphism $\interpof \rightarrow \I$.
We also assume chosen, for every $\rstyle{r}.\cstyle{A} \in \Omega$, a function $\successor[\I]_{\rstyle{r}.\cstyle{A}}$
that maps every element $e \in (\exists{r.A})^\I$ to an element $e' \in \domain{\I}$ such that $(e, e') \in \rolestyle{R}^\I$ and $e' \in \cstyle{A}^\I$.

\begin{definition}
	\label{def:pattern-extraction}
	We define $\Mmc$ as the smallest set such that:
	\begin{itemize}
		\item $\candidatepattern \in \Mmc$ and we set $\track_{M_0} = \identity_{\candidate \rightarrow \I}$, where $\identity_{\candidate \rightarrow \I}$ denotes the identity function.
		\item For each $\pattern[1] \in \Mmc$ and for each $e_1 \in (A \sqcap \lnot \exists r.B)^{\interpof[1]}$ with $A \incl \exists r.B \in \tbox$, we denote $\track_1 :=\track_{M_1}$ and set $e_1' := \trackof{1}(e_1)$.
		Since $\trackof{1}$ is a homomorphism and $\I$ is a model of $\tbox$, 
		we obtain $e_1' \in (\exists{r.B})^{\I}$ and can set 
		%We hence can set
		$e_2' := \successor[\I]_{\rstyle{R}.\cstyle{B}}(e_1')$.
		If $e_2' \in \domain{\basecandidate}$, then we set $e_2 := e_2'$, otherwise we set $e_2$ to either $\frsymb$ or $\gensymb$ such that $e_1 \neq e_2$.
		% (if $e_1 \in \domain{\basecandidate}$, then we have to make a tough decision!).
		We let $\trackof{2}$ be the function that maps elements of $\domain{\basecandidate}$ to themselves, $e_1$ to $e_1'$ and $e_2$ to $e_2'$.
		We now define a new mosaic $\pattern[2]$.
		Its domain is $\domain{\basecandidate} \cup \{ e_1, e_2 \}$.
		Its interpretation $\interpof[2]$ is given by:
		\[
		\begin{array}{rcl}
			\domain{\interpof[2]} & = & \domain{\basecandidate} \cup \{ e_1, e_2 \} \\
			\cstyle{C}^{\interpof[2]} & = & \cstyle{C}^{\candidate} \cup \{ e_k \mid e_k' \in \cstyle{C}^{\I}, k = 1, 2 \} \\
			\rolestyle{P}^{\interpof[2]} & = & \rolestyle{P}^{\candidate} \cup \{ (e_1, e_2) \mid \tbox \models \axiom{r}{p} \} \\
			& & \phantom{\rolestyle{P}^{\candidate}} \cup \{ (e_2, e_1) \mid \tbox \models \axiom{r^-}{p} \}
		\end{array}
		\]
		Its specification is $\patternspec[2] = \{ (p, \widehat{p}, \trackof{2}^{-1} \circ h|_{\domain{\basecandidate} \cup \{ e_1', e_2' \}}) \mid p \in q, \widehat{p} \subseteq p, h : \widehat{p} \rightarrow \I \text{ is a homomorphism} \}$.
		%	in which $e_1'$ (resp.\ $e_2'$) has been replaced by $e_1$ (resp.\ $e_2$). 
		%	Its prediction maps a head $h$ to the value of $\successor[\I]_{h}(e_2')$ if it is defined, else to $h$.
		We now add $\pattern[2] \in \Mmc$, and if $\track_{\pattern[2]}$ is not already defined, then we set $\track_{\pattern[2]} := \track_2$.
	\end{itemize}
	
\end{definition}

Recalling that $\I$ is a %n actual 
model of $\Kmc$ it is then straightforward % routine definition chaining 
to verify that $\pattern[2]$, from the induction case, is a mosaic satisfying all conditions to succeed to $\pattern[1]$ for $e$ and $A \incl \exists r.B$, and 
that $\trackof{2}$ (thus $\trackof{M_2}$) is a homomorphism.
These properties are verified by the next two lemmas.

\begin{lemma}
	Each $\pattern[] \in \Mmc$ is a well-defined mosaic. Furthermore, $\track_M$ is a homomorphism from $\interpof$ to $\I$.
\end{lemma}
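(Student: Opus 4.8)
The plan is to prove both claims simultaneously by induction on the construction of $\Mmc$ in Definition~\ref{def:pattern-extraction}, carrying along the extra invariant that $\track_M$ restricts to the identity on $\domain{\candidate}$; this invariant is cheap to maintain and is exactly what makes the subsequent gluing of the local interpretations coherent. Throughout I use the normalisation made just before the definition, namely $\basedomainof{\I} = \domain{\basecandidate}$, which together with Condition~(a) gives $\candidate = \I|_{\basedomainof{\I}}$, i.e.\ $\candidate$ is the subinterpretation of $\I$ induced on $\domain{\basecandidate}$.

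For the base case $M_0 = \candidatepattern = (\candidate,\candidatespec)$ with $\track_{M_0} = \identity$: since $\candidate = \I|_{\basedomainof{\I}}$, the identity embedding is a homomorphism into $\I$ and fixes $\domain{\candidate}$. The domain of $\candidate$ is exactly $\domain{\basecandidate}$, so the domain and restriction conditions of a mosaic are trivial and the type condition on $\frsymb,\gensymb$ is vacuous; the left-existential CIs $\exists r.A \sqsubseteq B$ and the RIs $r\sqsubseteq s$ hold in $\candidate$ because they are inherited by every induced subinterpretation of the model $\I$ of $\tbox$ (a one-line check: a witnessing $r$-edge and its endpoint already live inside $\candidate$). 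Finally $\candidatespec$ contains only incomplete triples by Condition~(c) (there is no total homomorphism from a CQ of $q$ into $\I$), and $\candidatespec$ is saturated: the first closure rule holds because $\candidate \hookrightarrow \I$; for the second, if $(p,\widehat p_1,g_1),(p,\widehat p_2,g_2)\in\candidatespec$ arise from homomorphisms $h_1,h_2$ into $\I$ and $g_1,g_2$ are both defined and equal on all of $\mn{var}(\widehat p_1)\cap\mn{var}(\widehat p_2)$, then $h_1$ and $h_2$ coincide there (both taking values in $\domain{\basecandidate}$), so $h_1\cup h_2$ is a well-defined homomorphism of $\widehat p_1\cup\widehat p_2$ into $\I$ with $(h_1\cup h_2)|_{\domain{\basecandidate}} = g_1\cup g_2$.

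For the inductive step, given a well-defined mosaic $\pattern[1]$ with $\track_{M_1}$ a homomorphism fixing $\domain{\candidate}$, and $e_1,r,B$ as in Definition~\ref{def:pattern-extraction}, I take $\pattern[2],e_1',e_2',e_2,\track_2$ as built there (recall $e_1' = \track_{M_1}(e_1)\in A^\I$, so $e_1'\in(\exists r.B)^\I$ and $\successor[\I]_{r.B}(e_1')$ is defined since $\I\models\tbox$). That $\track_2:\interpof[2]\to\I$ is a homomorphism is read off the definitions: on $\domain{\basecandidate}$, $\interpof[2]$ agrees with $\candidate\hookrightarrow\I$; on $\{e_1,e_2\}$, concept membership in $\interpof[2]$ was defined to mirror that of $e_1',e_2'$ in $\I$; and the only new role edges of $\interpof[2]$ are among $e_1,e_2$, where $(e_1',e_2')\in r^\I$, hence $(e_1',e_2')\in P^\I$ (resp.\ $(e_2',e_1')\in P^\I$) whenever $\tbox\models r\sqsubseteq P$ (resp.\ $\tbox\models r^-\sqsubseteq P$). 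The invariant for $\track_2$ is immediate, and if $\track_{M_2}$ was already fixed in an earlier step it already satisfied it. Next, $\pattern[2]$ is a well-defined mosaic: the domain condition holds as $e_1,e_2\in\domain{\basecandidate}\cup\{\frsymb,\gensymb\}$; $\interpof[2]|_{\domain{\basecandidate}} = \basecandidate$ is immediate from the definition together with $\candidate = \I|_{\basedomainof{\I}}$; for $e_k\in\{\frsymb,\gensymb\}$ we have $e_k'\notin\basedomainof{\I}$, so $\mn{tp}_{\interpof[2]}(e_k) = \mn{tp}_\I(e_k')\in\noncoretypesof{\I} = \typescandidate$ by Condition~(b); RIs are satisfied by the definition of $\interpof[2]$ plus $\candidate$ satisfying RIs; a left-existential CI $\exists r'.A'\sqsubseteq B'$ is handled by case analysis on the witnessing successor — if it is old, $\candidate$'s satisfaction of the CI applies, and if it is new then the element in question is $e_1$ or $e_2$, and from $\track_2$ a homomorphism and $\I\models\tbox$ we get $e_k'\in B'^\I$, hence $e_k\in B'^{\interpof[2]}$ by the very definition of $B'^{\interpof[2]}$. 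Finally $\patternspec[2]$ has only incomplete triples by Condition~(c) and is saturated by the same gluing argument as in the base case, now with $\domain{\basecandidate}\cup\{e_1,e_2\}$ as the ``visible'' domain.

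The step that needs the most care is the satisfaction of the left-existential CIs by $\interpof[2]$ (and, in the base case, by $\candidate$): a homomorphism does not transfer CI satisfaction backwards from $\I$, so one cannot merely invoke $\track_2$, and must instead exploit the rigid shape of $\interpof[2]$ — its only fresh role edges being the $e_1$–$e_2$ edges — together with the fact that, by construction, $e_1$ and $e_2$ carry in $\interpof[2]$ precisely the concept names carried by $e_1'$ and $e_2'$ in $\I$. A secondary subtlety, already visible in the base case, is that saturation of the specifications goes through only because the variables shared by two compatible triples are always mapped into the small ``visible'' part of the domain, so that the underlying homomorphisms into $\I$ genuinely agree there; one should also observe that $e_1'$ and $e_2'$ can be taken distinct, so that the inverse $\track_2^{-1}$ occurring in the definition of $\patternspec[2]$ is unambiguous.
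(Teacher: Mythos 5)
Your proof is correct and follows essentially the same route as the paper's: induction along the construction of $\Mmc$, first checking that $\track_2$ is a homomorphism via the explicit definitions of $\interpof[2]$ and the choice of $e_2'$ as an $r.B$-successor of $e_1'$ in $\I$, then verifying the four mosaic conditions and the saturation/incompleteness of $\patternspec[2]$ by pulling everything back to $\I$. The extra detail you supply (the identity-on-$\domain{\basecandidate}$ invariant and the case analysis for $\exists r'.A'\sqsubseteq B'$) only makes explicit what the paper leaves terse.
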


\begin{proof}
	The base case consisting of verifying that $\candidatepattern$ is a mosaic is trivial, and $\identity_{\candidate \rightarrow \I}$ is indeed a homomorphism.
	
	We move to the induction case.
	Consider $M_2$ and $\trackof{2}$ as obtained from some $\pattern[1] \in \Mmc$ and $e_1 \in (A \sqcap \lnot \exists r.B)^{\interpof[1]}$ with $A \incl \exists r.B \in \tbox$.
	By induction hypothesis, $\pattern[1]$ is a mosaic and $\track_1 := \trackof{M_1}$ is a homomorphism from $\interpof[1]$ to $\I$.
	
	We first verify that $\trackof{2}$ is a homomorphism:
	\begin{itemize}
		\item
		Let $u \in \cstyle{A}^{\interpof[2]}$.
		If $u \in \cstyle{A}^{\candidate}$, then in particular $u \in \domain{\basecandidate}$ hence $\trackof{2}(u) = u \in \cstyle{A}^{\candidate} \subseteq \cstyle{A}^{\I}$.
		Otherwise, $u = e_k$ for $k = 1$ or $k=2$ with $e_k' \in \cstyle{A}^{\I}$.
		In that case, notice $\trackof{2}(u) = e_k'$ which concludes.
		\item
		Let $(u, v) \in \rolestyle{P}^{\interpof[2]}$.
		If $(u, v) \in \rolestyle{P}^{\candidate}$, then in particular $u, v \in \domain{\basecandidate}$, hence $(\trackof{2}(u), \trackof{2}(v)) = (u, v) \in \rolestyle{P}^{\candidate} \subseteq \rolestyle{P}^{\I}$.
		Otherwise, if $(u, v) = (e_1, e_2)$ with $\tbox \models \axiom{r}{p}$, then notice that $(\trackof{2}(u), \trackof{2}(v)) = (e_1', e_2')$.
		Since $e_2'$ is the successor of $e_1'$ for $\rstyle{r}.\cstyle{B}$ in $\I$, and $\I$ models $\tbox$, we obtain $(e_1', e_2') \in \rolestyle{P}^{\I}$ as desired.
		Otherwise we have $(u, v) = (e_2, e_1)$ with $\tbox \models \axiom{r^-}{p}$, then notice that $(\trackof{2}(u), \trackof{2}(v)) = (e_1', e_1')$.
		Since $e_2'$ is the successor of $e_1'$ for $\rstyle{r}.\cstyle{B}$ in $\I$, and $\I$ models $\tbox$, we have $(e_2', e_1') \in \rolestyle{p}^{\I}$  as desired. 
	\end{itemize}
	
	We now verify that $\pattern[2]$ is a well-defined mosaic.
	\begin{itemize}
		\item
		Regarding $\interpof[2]$:
		\begin{enumerate}
			\item
			By definition, we have $\Delta^{\Imc_{\mn{base}}} \subseteq \domain{\interpof[2]} \subseteq
			\Delta^{\Imc_{\mn{base}}} \uplus \{ \frsymb, \gensymb \}$ as desired;
			\item 
			Also by definition, we have $\interpof[2]|_{\Delta^{\Imc_{\mn{base}}}} = \basecandidate$;
			\item 
			From interpretations of concepts being inherited from $\I$ that satisfies Condition~(b), it follows that indeed $\mn{tp}_{\interpof[2]}(e_i^*) \in \typescandidate$ if $e_i^* \in
			\Delta^{\interpof[2]}$, for $i \in
			\{1,2\}$;
			\item
			From $\I$, being a model of $\kb$, it follows that $\interpof[2]$ satisfies in particular all $\exists r . A \sqsubseteq B \in
			\Tmc$
			and all \mbox{$r \sqsubseteq s \in \Tmc$}.
		\end{enumerate}
		\item
		Regarding $\patternspec[2]$, it is the restriction of a saturated specification (up to $\trackof{2}$, which essentially rename elements), and hence is also saturated.
		From $\I$ satisfying Condition~(c), we obtain that $\patternspec[2]$ doesn't contain any complete triple. \qedhere
	\end{itemize}
\end{proof}

\begin{lemma}
	For all $M \in \Mmc$, the mosaic $M$ is good in $\Mmc$.
\end{lemma}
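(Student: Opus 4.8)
The plan is to produce the witnessing mosaic directly from the construction in Definition~\ref{def:pattern-extraction}. I will use the preceding lemma (every $M\in\Mmc$ is a well-defined mosaic and $\track_M$ is a homomorphism from $\Jmc_M$ to $\I$) together with two facts that are immediate from that construction: (i)~$\track_M$ is the identity on $\domain{\basecandidate}$, and (ii)~for every non-base element $d\in\domain{\Jmc_M}\setminus\domain{\basecandidate}$ we have $\mn{tp}_{\Jmc_M}(d)=\mn{tp}_\I(\track_M(d))$, since $\Jmc_M$ assigns to its (at most two) special elements exactly the $\I$-type of their images under $\track_M$. I also use that, by construction, $S_M$ is precisely the set of triples $(p,\widehat p,\track_M^{-1}\circ\sigma)$ with $p\in q$, $\widehat p\subseteq p$, and $\sigma$ the restriction of some homomorphism $\widehat p\to\I$ to the variables mapped into $\domain{\basecandidate}\cup\track_M(\domain{\Jmc_M}\setminus\domain{\basecandidate})$.

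First I fix $M\in\Mmc$, an element $e\in\domain{\Jmc_M}$, and an axiom $A\sqsubseteq\exists r.B\in\Tmc$ with $e\in(A\sqcap\neg\exists r.B)^{\Jmc_M}$. Since $\track_M$ is a homomorphism and $\I\models\Tmc$, the element $e':=\track_M(e)$ lies in $(\exists r.B)^\I$, so the fixed successor function yields $e''\in B^\I$ with $(e',e'')\in r^\I$. By Definition~\ref{def:pattern-extraction}, the triple $(M,e,A\sqsubseteq\exists r.B)$ spawns a mosaic $M'\in\Mmc$ with domain $\domain{\basecandidate}\cup\{e,f\}$, where $f=e''$ if $e''\in\domain{\basecandidate}$ and otherwise $f\in\{\frsymb,\gensymb\}\setminus\{e\}$, whose track function sends $e\mapsto e'$, $f\mapsto e''$, and fixes $\domain{\basecandidate}$. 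I claim this $M'$ witnesses goodness of $M$ at $(e,A\sqsubseteq\exists r.B)$ and check the three conditions. For Condition~1: if $e\in\domain{\basecandidate}$ both $\Jmc_M$ and $\Jmc_{M'}$ restrict to $\basecandidate$ there, and if $e\notin\domain{\basecandidate}$ then fact~(ii) gives $\mn{tp}_{\Jmc_M}(e)=\mn{tp}_\I(e')$ while the construction of $\Jmc_{M'}$ puts $e$ into exactly the concept names $C$ with $e'\in C^\I$, so $\mn{tp}_{\Jmc_{M'}}(e)=\mn{tp}_\I(e')$ as well. For Condition~2: the construction adds $(e,f)$ to $r^{\Jmc_{M'}}$ (as $\Tmc\models r\sqsubseteq r$) and puts $f$ into $B^{\Jmc_{M'}}$ (directly when $f$ is special, since $e''\in B^\I$, and via $\I|_{\domain{\basecandidate}}=\basecandidate$ when $f=e''\in\domain{\basecandidate}$), so $e\in(\exists r.B)^{\Jmc_{M'}}$.

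The remaining and most delicate point is Condition~3: for every $(p,\widehat p,h')\in S_{M'}$, the restriction $h$ of $h'$ to range $\domain{\basecandidate}\cup\{e\}$ must satisfy $(p,\widehat p,h)\in S_M$. I would write $h'=\track_{M'}^{-1}\circ\sigma$ with $\sigma$ the restriction of a homomorphism $\widehat p\to\I$ to the variables mapped into $\domain{\basecandidate}\cup\{e',e''\}$; since $\track_{M'}$ and $\track_M$ agree on $\domain{\basecandidate}\cup\{e\}$ (both fix the base and both send $e$ to $e'$), restricting $h'$ to range $\domain{\basecandidate}\cup\{e\}$ amounts to restricting $\sigma$ to the variables mapped into $\domain{\basecandidate}\cup\{e'\}$ and then applying $\track_M^{-1}$, and since $\domain{\basecandidate}\cup\{e'\}$ lies inside the subdomain defining $S_M$, the closure of restricted (and saturated) specifications under further restriction yields $(p,\widehat p,h)\in S_M$. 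I expect this to be the main obstacle: one must handle carefully that $S_M$ is indexed by subqueries and that the second special element of $M$, if it is not $e$, does not push the restricted match outside $S_M$; this is exactly where the compatibility of the two track functions on the shared boundary $\domain{\basecandidate}\cup\{e\}$ and the closure properties of restricted and saturated specifications established earlier are genuinely used. Conditions~1 and~2, and the observation that the spawned $M'$ really belongs to $\Mmc$, are routine consequences of Definition~\ref{def:pattern-extraction}.
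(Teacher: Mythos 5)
Your proposal is correct and follows essentially the same route as the paper's proof: you take as witness exactly the mosaic $M'$ spawned from $(M,e,A\sqsubseteq\exists r.B)$ by the induction case of Definition~\ref{def:pattern-extraction}, verify Conditions~1 and~2 from the fact that $\Jmc_{M'}$ imports concept memberships and the $r$-successor directly from $\I$ via the successor function, and verify Condition~3 by observing that every match triple in either specification is the restriction of a full homomorphism into $\I$ and that $\track_M$ and $\track_{M'}$ agree on $\domain{\basecandidate}\cup\{e\}$. The extra care you take about the boundary compatibility of the two track functions is exactly the content of the paper's (terser) argument for Condition~3.
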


\begin{proof}
	Let $\pattern[1] \in \Mmc$ and $e_1 \in (A \sqcap \lnot \exists r.B)^{\interpof[1]}$ with $A \incl \exists r.B \in \tbox$.
	Consider $\pattern[2]$ as obtained from the induction case of Definition~\ref{def:pattern-extraction}.
	We have $\pattern[2] \in \Mmc$ and verify the three additional conditions:
	\begin{enumerate}
		\item
		Interpretations of concepts names being directly imported from $\I$, we immediately  have $\mn{tp}_{\interpof[1]}(e_1)=\mn{tp}_{\interpof[2]}(e_1)$;
		\item
		Since we set $e_2' := \successor[\I]_{\rstyle{R}.\cstyle{B}}(e_1')$ in the construction of $M_2$, it follows naturally that $e \in (\exists r. B)^{\interpof[2]}$;
		\item 
		This is immediate as any partial homomorphism from either $\patternspec[1]$ (resp.\ $\patternspec[2]$) is the restriction to $\domain{\interpof[1]}$ (resp. $\domain{\interpof[2]}$) of a complete homomorphism to $\I$, which can in turn be restricted to $\domain{\interpof[2]}$ (resp.\ $\domain{\interpof[1]}$).
		In fact, this proves: if $(p,\widehat p,h) \in \patternspec[2]$, then $(p,\widehat p,h') \in \patternspec[1]$ where
		$h'$ is the restriction of $h$ to range $\Delta^{\Imc_{\mn{base}}} \cup \{ e_1 \}$, but also ``the converse'', that is: if $(p,\widehat p,h) \in \patternspec[1]$, then $(p,\widehat p,h') \in \patternspec[2]$ where
		$h'$ is the restriction of $h$ to range $\Delta^{\Imc_{\mn{base}}} \cup \{e_1\}$. \qedhere
	\end{enumerate}
\end{proof}

\thmcombinedlowerel*

%!TEX root = main.tex

\begin{proof}
        We give a polynomial time reduction from (Boolean) CQ evaluation on \EL KBs
        with closed concept names, which is \TwoExpTime-hard \cite{Ngo2016}.
        A \emph{KB with closed concept names} takes the form $\Kmc_\Sigma$
        with \Kmc a KB and $\Sigma \subseteq \NC$ a set of closed
        concept names.
        We say that an interpretation \Imc \emph{respects}~$\Sigma$ if for all
        $A \in \Sigma$, $d \in A^\Imc$ implies $A(d) \in \Amc$.
        Moreover, \Imc is a \emph{model} of $\Kmc_\Sigma$ if \Imc is a
        model of \Kmc and \Imc respects $\Sigma$. Thus, the only
        instances of a closed concept name are those that are
        explicitly asserted in the ABox. Now the problem proved to be
        \TwoExpTime-hard in \cite{Ngo2016} is: given a KB with closed
        concept names $\Kmc_\Sigma$ and a Boolean CQ $q$, decide
        whether $\Kmc_\Sigma \models q$. We remark that the formalism
        in \cite{Ngo2016} also admits closed role names, but these are
        not used in the hardness proof.

        Let $\Kmc_\Sigma$ be an \EL KB with closed
        concept names, $\Kmc = (\Tmc,\Amc)$, and $q$ a Boolean CQ.
        Specializing to $\EL$ the normal form presented in Section~\ref{sec-preliminaries},
        we assume every concept inclusion in
        $\tbox$ to have one of the following shapes:
        \[
        \begin{array}{r@{\qquad}l@{\qquad}r@{\qquad}r}
        	\axtop 
        	&
        	\axexistsright
        	&
        	\axexistsleft
        	&
        	\axand
        \end{array}
        \]
        where $\cstyle{A, A_1, A_2, B}$ range over \NC and
        $r$ ranges over \NR.
        We construct a
        circumscribed \EL KB $\mn{Circ}_\CP(\Kmc')$ with
        $\Kmc'=(\Tmc',\Amc')$, and a CQ $q'$ such that
        $\Kmc_\Sigma \models q$ iff $\mn{Circ}_\CP(\Kmc') \models q'$.

        A natural first attempt is to use use $\Kmc'=\Kmc$ and
        \mbox{$q'=q$}, and to minimize the concept names in $\Sigma$
        while letting all other concept names vary. Apart from using
        more than a single minimized concept name, this does not work
        since the extension of closed concept names may be too large.
        As a simple example, consider the empty ABox and the TBox
        $\top \sqsubseteq \exists r . A$ with $A \in \Sigma$. This KB
        is unsatisfiable when $A$ is closed, but satisfiable when $A$
        is minimized. We thus use a more refined approach.

        % we
        % must have $A^\Imc = \{ a \mid A(a) \in \Amc \}$ for all models
        % \Imc of $\Kmc_\Sigma$ and all $A \in \Sigma$ while this is not
        % necessarily the case for the models of the resulting circumscribed KB
        % $\mn{Circ}_\CP(\Kmc')$.  In fact, models of $\Kmc_\Sigma$ are
        % models of $\mn{Circ}_\CP(\Kmc')$, but it is easy to find
        % examples that refute the converse. We thus need to be more
        % careful in our construction.

        Let \CP be the circumscription pattern that minimizes the
        fresh concept name $M$ and lets all other symbols vary.  In
        what follows, we assemble $\Tmc'$ and $\Amc'$ and define
        $q'$. For $\Amc'$, we start from \Amc and extend with
        additional assertions. $\Tmc'$ contains the CIs from \Tmc
        in a modified form, as well as additional CIs, see below for
        details.

        \smallskip

        We start with connecting the concept names in $\Sigma$ to the
        minimized concept name $M$: % by including in~$\Tmc'$:
        \begin{align}
                A \sqsubseteq \M & \quad \text{for all } A \in \Sigma
                \label{eq:A_subseteq_M}
        \end{align}
        Much of the reduction is concerned with preventing
        non-asserted instances
        $d$ of closed concept names \mbox{$A \in \Sigma$}, that is, elements $d$
        that satisfy
        $A$ but are not an individual from $\mn{ind}(\Amc)$ such that
        \mbox{$A(d) \in
                        \Amc$}. There are three cases to be distinguished.

        We first consider the case that $d=a \in \mn{ind}(\Amc)$, but
        $A(d) \notin \Amc$. We mark such $a$ with
        a fresh concept name~$L$: %. Add to $\Amc'$:
        \begin{align}
                \overline A(a)                     & \quad \text{for all } a \in
                \Ind(\Amc) \text{ with } A(a) \not
                \in \Amc \label{eq:bar_A}                                        \\
                %        \end{align}
                %       and add to~$\Tmc'$:
                %      \begin{align}
                A \sqcap \overline A \sqsubseteq L & \quad \text{for all }
                A \in \Sigma. \label{eq:L_A_Bar_A}
        \end{align}
        where $\overline A$ is a fresh concept name for each $A \in \Sigma$.

        We next want to achieve that any model \Imc that satisfies  $a \in
                L^\Imc$ for some ABox individual $a$ also satisfies
        $\Imc \models q$ and is thus ruled out as a countermodel
        against the query being entailed. To
        this end, we add a copy of $q$ to $\Amc'$:
        \begin{align}
                A(x)   & \quad \text{for all } A(x) \in \varphi \label{eq:copy_query_concepts}    \\
                r(x,y) & \quad \text{for all } r(x,y) \in \varphi \label{eq:copy_query_relations}
        \end{align}
        assuming w.l.o.g.\ that $\mn{var}(q) \cap \Ind(\Amc)= \emptyset$.
        % Since $q$
        % may contain concept names from $\Sigma$ and in view of
        % Equation \cref{eq:A_subseteq_M}

        % NEEDED?

        % \noindent\begin{minipage}{.49\linewidth}
        % 	\noindent\begin{align}
        % 	\end{align}
        % \end{minipage}
        % \begin{minipage}{.5\linewidth}
        % 	\noindent\begin{align}
        % \end{minipage}\smallskip

        We want the copy of $q$ in $\Amc'$ to only become `active'
        when the
        concept name $L$ is made true at some individual from $\mn{ind}(\Amc)$;
        otherwise, it should
        be `dormant'. The individual names in $\mn{ind}(\Amc)$, in contrast,
        should always be active.    To achieve this, we
        include in $\Amc'$ the following assertions where $X$ is
        a fresh concept name indicating activeness and $u$ is a
        fresh role name:
        \begin{align}
                \X(a)  & \quad  \text{for all } a \in \Ind(\Amc),
                \label{eq:X_mark}                                          \\
                u(x,a) & \quad \text{for all } a \in \mn{Ind}(\Amc) \text{
                        and } x \in \mn{var}(q) \label{eq:copy_query_links}
        \end{align}
        We then define the CQ $q'$ to be
        $$q' = q \wedge
                \bigwedge_{x \in \mn{var}(q)} \X(x),$$
        and add to $\Tmc'$:
        \begin{align}
                \exists u.L \sqsubseteq \X. \label{eq:propagate_L_copy}
        \end{align}
        Elements generated by existential quantifiers on the right-hand
        side
        of CIs in \Tmc should of course also be active. Moreover,
        dormant elements should not trigger CIs from \Tmc as
        this might generate `unjustified' existential (thus active)
        elements.
                We deal with these issues by including in $\Tmc'$ the following relativized version of~\Tmc (recall \Tmc is in normal form):
                \begin{align}
                        X  \sqsubseteq A                               & \quad \text{for all }  \top  \sqsubseteq A  \in \Tmc \label{eq:T_Axioms_X_A}              \\
                        X \sqcap A  \sqsubseteq \exists r.(X \sqcap B) & \quad \text{for all } A \sqsubseteq \exists r.B \in \Tmc \label{eq:T_Axioms_X_exists_rB}  \\
                        X \sqcap \exists r.(X \sqcap B)  \sqsubseteq A & \quad \text{for all } \exists r.B \sqsubseteq A  \in \Tmc \label{eq:T_Axioms_exists_rB_X} \\
                        X \sqcap  A_1 \sqcap A_2  \sqsubseteq A        & \quad \text{for all } A_1 \sqcap A_2 \sqsubseteq A \in \Tmc \label{eq:T_Axioms_X_A_A_A}
                \end{align}
        Note that, when activating the copy of $q$ in $\Amc'$ by
        making $X$ true at all its individual names, then the
        CIs~(\ref{eq:T_Axioms_X_A})--(\ref{eq:T_Axioms_X_A_A_A}) apply also there and may make
        concept names $A$ with $A \in \Sigma$ true at elements
        $x \in \mn{var}(q)$. Because of~(\ref{eq:A_subseteq_M}), this
        may lead to models in which the copy of $q$ is activated to
        be incomparable to models in which it is not activated
        regarding
        $<_\CP$. This can be remedied by adding to $\Amc'$:
        \begin{align}
                M(x) & \quad \text{ for all } x \in \mn{var}(q).
                \label{eq:M_var_q}
        \end{align}
        The second case of non-asserted instances $d$ of closed concept
        names is that $d$ is not from $\mn{ind}(\Amc) \cup \mn{var}(q)$.
        The idea is that models with
        such instances $d$ should be non-minimal.
        We achieve this by including in $\Amc'$ a fresh individual
        $\tc$ that must satisfy~$\M$, but not necessarily anything
        else. Since any existential instance $d$ of a closed concept
        name must also satisfy~$M$, we can find a model that is
        preferred w.r.t.\ $<_\CP$ by using $\tc$ as a surrogate
        for~$d$, that is, making true at $\tc$ exactly the concept
        names true at $d$, rerouting all incoming and outgoing role
        edges from $d$ to~$\tc$, and then removing $d$ from the
        extension of all concept and role names.  We thus include in
        $\Amc'$:
        \begin{align}
                \M(\tc). \label{eq:M_t_c}
        \end{align}
        The third and last case of non-asserted instances $d$ of closed
        concept names is that $d$ is $t$ or an individual in the copy
        of $q$ in~$\Amc'$. % , that is,
        % $d \in \mn{var}(q) \cup \{t\}$ by
        % assertions~(\ref{eq:M_var_q})~and~(\ref{eq:M_t_c}). 
        Since these individuals satisfy $M$, we may make true at them
        a closed concept name without producing a
        non-minimal model.  To fix this problem, it suffices to
        prevent the individuals in $\mn{var}(q) \cup \{t\}$ to be used
        as witnesses for existential restrictions in \Tmc, as this is
        the only possible reason for creating non-asserted instances of
        closed concept names of the described form.  It is, however,
        easy to identify such witnesses as they must be active and
        satisfy the concept name~$X$. We thus add
        \begin{align}
                \overline \X (a) & \quad \text{for all } a \in
                \mn{var}(q) \cup \{ t\}
                \label{eq:bar_X}
                \\
                u(x,a)           & \quad \text{for all } a \in
                \mn{var}(q) \cup \{ t \} \text{
                        and } x \in \mn{var}(q) \label{eq:copy_query_links_add}
                \\
                                 & \X \sqcap \overline \X \sqsubseteq L. \label{eq:X_Bar_X}
        \end{align}
        where $\overline X$ is a fresh concept name.
        This finishes the construction of $\Tmc'$ and $\Amc'$.
        \\[2mm]
        {\bf Claim.}
        $\Kmc_\Sigma \models q$ iff $\Circ(\Kmc') \models q'$.

        \smallskip
        \noindent ``$\Leftarrow$''.
        Let $\Kmc_\Sigma \not \models q$.
        Then, there is a model \Imc of $\Kmc_\Sigma$ such that $\Imc \not \models q$.
        W.l.o.g. assume that $\Delta^\Imc \cap (\mn{var}(q) \cup
                \{t\})= \emptyset$.  % and $A^\Imc = \emptyset$ if $A \notin \mn{sig}(\Tmc)$.
        We construct an interpretation \Jmc as follows:
        \begin{align*}
                \Delta^\Jmc & \coloneqq \Delta^\Imc \cup \mn{var}(q)
                \cup \{ t \}                                           \\
                A^\Jmc      & \coloneqq A^\Imc \cup \{ x \mid A(x) \in
                \Amc'\}                                                \\
                %          \overline A^\Jmc & \coloneqq \emptyset  \qquad \text{ if } A
                %                             \in \Sigma\\
                r^\Jmc      & \coloneqq r^\Imc \cup \{ (x,y) \mid
                r(x,y) \in \Amc'\}                                     \\
                \X^\Jmc     & \coloneqq \Delta^\Imc                    \\
                %         \overline X^\Jmc & \coloneqq \emptyset \\
                \M^\Jmc     & \coloneqq \bigcup_{A \in \Sigma } A^\Jmc
                \cup \mn{var}(q) \cup \{ t \}
                %		\M^\Jmc     & \coloneqq \Ind(\Aqt) \cup \bigcup_{B \in \Sigma } B^\Jmc 
        \end{align*}
        for all concept names $A \notin \{X,M\}$ and role names $r$.
        From $\Imc \models \Kmc$ and the construction of $\X^\Jmc$, it easily follows that $\Jmc \models \Kmc'$.
        To show $\Jmc \models \Circ(\Kmc')$, we are left to prove that \Jmc is minimal.

        Assume to the contrary that there is a model $\Jmc'$ of
        $\Kmc'$ with $\Jmc' <_\CP \Jmc$.
        As $\M$ is the only minimized concept name, there must be some $a \in \M^\Jmc$ with $a \notin \M^{\Jmc'}$.
        We must have $a \notin \mn{var}(q) \cup \{t\}$ since $M(x) \in \Amc'$ for $x \in \mn{var}(q) \cup \{t\}$ and $\Jmc'$ is
        a model of $\Kmc'$. But then, the definition of $M^\Jmc$
        implies
        $a \in
                A^\Jmc$ with $A \in \Sigma$. By definition of $A^\Jmc$, in turn,
        $a \in A^\Imc$ or  $A(a) \in \Amc'$. In the former case,
        it follows from the fact that \Imc respects
        $\Sigma$ that $A(a) \in \Amc \subseteq \Amc'$, and thus
        $A(a) \in \Amc'$ in both cases. Now,
        CI~(\ref{eq:A_subseteq_M}) and $a \notin \M^{\Jmc'}$ imply
        that
        $\Jmc'$ is not a model of $\Kmc'$, a contradiction.

        We are left to show that $\Jmc \not \models q'$.
        Assume to the contrary that there is a homomorphism $h$ from $\Dmc_{q'}$ to \Jmc.
        We derive a contradiction to $\Imc \not \models q$ by showing
        that $h$ is also a homomorphism from $\Dmc_q$ to \Imc.
        First, note that $h$ is a function from $\mn{var}(q)$ to
        $\Delta^\Imc$ since $\mn{var}(q')=\mn{var}(q)$ and for each
        $x \in \mn{var}(q')$, we have $X(x) \in q'$ and thus $h(x)\in
                \X^\Jmc = \Delta^\Imc$.
        Finally, observe that \Imc is identical to the restriction of
        \Jmc
        to $\Delta^\Imc$ and the concept and role names that occur
        in $\Kmc'$ and $q'$.

        \medskip
        ``$\Rightarrow$''.
        Assume that $\Circ(\Kmc') \not \models q'$. Then there is a
        model \Imc of $\Circ(\Kmc')$ with $\Imc \not \models q'$.
        % %
        % The existence of a countermodel to $q'$ lets us find a ``pruned'' model \Imc of $\Circ(\Kmc')$ where the usage of $\Ind(\Aqt)$ is restricted to ABox assertions, as otherwise, \cref{eq:bar_X,eq:X_Bar_X,eq:copy_query_links,eq:propagate_L_copy} would ``activate'' $q'$.
        % %
        % Furthermore, we need $u$ to restrict to assertions in $\Amc'$ for later constructions (\cref{enum:prop_2}).
        % %
        % Formally:
        We show that w.l.o.g, we may assume that \Imc satisfies
        several additional conditions that shall prove to be
        convenient
        in what follows.
        \begin{claim} \label{claim:pruned_model}
                There is a model \Imc of $\Circ(\Kmc')$ with $\Imc \not \models q'$ that satisfies the following properties.
                \begin{properties}
                        \item $a \in A^\Imc$ implies $A(a) \in \Amc'$
                        for all $a \in \mn{var}(q) \cup \{ t \}$, \label{enum:prop_1}
                        \item $(d,e) \in r^\Imc$ implies $r(d,e) \in
                                \Amc'$ for all $d,e \in \Delta^\Imc$ with
                        $\{d,e\} \cap (\{\mn{var}(q) \} \cup \{t \}) \neq \emptyset$, and \label{enum:prop_3}
                        \item $(d,e) \in u^\Imc$ implies $u(d,e) \in \Amc'$ for all $d,e \in \Delta^\Imc$. \label{enum:prop_2}
                \end{properties}
        \end{claim}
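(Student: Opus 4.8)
The plan is to start from an arbitrary countermodel $\Imc$ against $\Circ(\Kmc') \models q'$ and to \emph{prune} all non-asserted facts located at the special individuals $\mn{var}(q) \cup \{t\}$, together with all non-asserted $u$-edges, obtaining an interpretation $\Imc'$ on the same domain; I then argue that $\Imc'$ is still a countermodel and satisfies \ref{enum:prop_1}--\ref{enum:prop_2} by construction. Concretely, $\Delta^{\Imc'} = \Delta^\Imc$, while $A^{\Imc'} = A^\Imc \setminus \{a \in \mn{var}(q) \cup \{t\} \mid A(a) \notin \Amc'\}$ for every concept name $A$, $r^{\Imc'} = r^\Imc \setminus \{(d,e) \mid \{d,e\} \cap (\mn{var}(q)\cup\{t\}) \neq \emptyset,\ r(d,e) \notin \Amc'\}$ for every role name $r \neq u$, and $u^{\Imc'} = \{(d,e) \mid u(d,e) \in \Amc'\}$. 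Properties \ref{enum:prop_1}--\ref{enum:prop_2} then hold trivially, and since $A^{\Imc'} \subseteq A^\Imc$ and $r^{\Imc'} \subseteq r^\Imc$ for all names, the identity on $\Delta^{\Imc'}$ is a homomorphism from $\Imc'$ to $\Imc$, so any homomorphism witnessing $\Imc' \models q'$ would witness $\Imc \models q'$; hence $\Imc' \not\models q'$.

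Before checking that $\Imc'$ is a model of $\Circ(\Kmc')$, I would establish the key auxiliary fact about the original $\Imc$: no individual in $\Ind(\Amc) \cup \mn{var}(q) \cup \{t\}$ lies in $L^\Imc$, and no individual in $\mn{var}(q) \cup \{t\}$ lies in $X^\Imc$. Indeed, if a special individual $e$ were in $X^\Imc$, then since $\overline X(e) \in \Amc'$ by \ref{eq:bar_X}, inclusion \ref{eq:X_Bar_X} would force $e \in L^\Imc$; and if any $e \in \Ind(\Amc) \cup \mn{var}(q) \cup \{t\}$ is in $L^\Imc$, then the $u$-edges asserted by \ref{eq:copy_query_links} and \ref{eq:copy_query_links_add} together with \ref{eq:propagate_L_copy} put every $x \in \mn{var}(q)$ into $X^\Imc$, so the identity on $\mn{var}(q)$ is a homomorphism from $q'$ to $\Imc$ (it satisfies the copied atoms \ref{eq:copy_query_concepts}--\ref{eq:copy_query_relations} since these are in $\Amc'$, and all atoms $X(x)$), contradicting $\Imc \not\models q'$.

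Finally I would verify that $\Imc'$ is a model of $\Circ(\Kmc')$. Satisfaction of $\Amc'$ is clear, as pruning only deletes non-asserted facts. For $\Tmc'$, I go through the inclusion shapes \ref{eq:A_subseteq_M}--\ref{eq:X_Bar_X}: by the auxiliary fact and the construction, no element of $\mn{var}(q)\cup\{t\}$ lies in $X^{\Imc'}$ or $L^{\Imc'}$ and no element of $\Ind(\Amc)$ lies in $L^{\Imc'}$, so every relativized inclusion \ref{eq:T_Axioms_X_A}--\ref{eq:T_Axioms_X_A_A_A} has its body unsatisfied at all special individuals, and the only potentially dangerous deletions — removing a witness for some $\exists r.(X\sqcap B)$ (where $r \neq u$ since $u \notin \mn{sig}(\Tmc)$) or for $\exists u.L$ — never actually removed a genuine witness, since a genuine witness must satisfy $X$ resp.\ $L$ and hence cannot be special; the inclusions \ref{eq:A_subseteq_M}, \ref{eq:L_A_Bar_A}, \ref{eq:propagate_L_copy}, \ref{eq:X_Bar_X} either hold vacuously at special individuals or reduce to their validity in $\Imc$ at the untouched non-special elements. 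For minimality, note the pruning never deletes $M$, which is asserted at every element of $\mn{var}(q)\cup\{t\}$ by \ref{eq:M_var_q} and \ref{eq:M_t_c} and is untouched elsewhere, so $M^{\Imc'} = M^\Imc$; since $M$ is the only minimized concept name and there are no fixed ones, any model $\Jmc$ of $\Kmc'$ with $\Jmc <_\CP \Imc'$ has $\Delta^\Jmc = \Delta^\Imc$ and $M^\Jmc \subsetneq M^\Imc$, hence $\Jmc <_\CP \Imc$, contradicting $\Imc \models \Circ(\Kmc')$. I expect the main obstacle to be exactly the case analysis for the relativized inclusions — in particular arguing that deleting edges incident to special individuals cannot destroy an existential witness — which is precisely where the auxiliary fact about $X$ and $L$ is used.
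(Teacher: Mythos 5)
Your proposal is correct and follows essentially the same route as the paper: the identical pruning construction, the same key observation (derived from $\Imc \not\models q'$ via the $u$-edges and CIs (\ref{eq:propagate_L_copy}), (\ref{eq:X_Bar_X})) that $L$ and $X$ avoid the relevant individuals so that the $X$-relativized part of the interpretation is untouched, and the same minimality argument via $M^{\Imc'} = M^\Imc$. The only differences are cosmetic (e.g.\ your witness-preservation phrasing versus the paper's equality of $X$-restricted extensions), so no further comment is needed.
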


        \noindent We may construct the desired \Imc from any model
        $\Imc'$ of $\Circ(\Kmc')$ with $\Imc' \not \models q'$, as follows:
        \begin{align*}
                \Delta^\Imc & \coloneqq \Delta^{\Imc'}                   \\
                A^\Imc      & \coloneqq A^{\Imc'} \setminus \{a \in
                \mn{var}(q) \cup \{t\} \mid A(a) \notin \Amc'\}          \\
                u^\Imc      & \coloneqq \{ (x,y) \mid u(x,y) \in \Amc'\} \\
                \begin{split}
                        r^\Imc      & \coloneqq r^{\Imc'} \setminus \{ (x,y) \mid r(x,y) \notin \Amc' \text{ and } \\
                        & \phantom{\coloneqq r^{\Imc'} \setminus \{ (x,y) \mid \;\,}  \{x,y\} \cap (\mn{var}(q) \cup \{t\})\neq \emptyset \}
                \end{split}
        \end{align*}
        for all concept names $A$ and role names $r \neq u$.
        Obviously, $\Imc \not \models q'$, $\Imc \models \Amc'$ and
        Points~\ref{enum:prop_1} to~\ref{enum:prop_3} hold by construction.
        To show that $\Imc \models \Tmc'$, note the following:
        \begin{itemize}
                \item \Imc obviously satisfies the CIs~(\ref{eq:A_subseteq_M}) by construction and the fact that $\Imc'$ is a model of $\Kmc'$.
                \item
                      \Imc satisfies
                      CIs~(\ref{eq:L_A_Bar_A}),~(\ref{eq:propagate_L_copy})~and~(\ref{eq:X_Bar_X}) as
                      $L^{\Imc'}$ must be empty---otherwise
                      Assertions~(\ref{eq:copy_query_links})~and~CIs~(\ref{eq:propagate_L_copy})
                      activate the copy of $q$ in $\Amc'$
                      and thus there is a trivial homomorphism from $q'$ to $\Imc'$, contradicting $\Imc' \not \models q'$.

                \item
                      For the CIs~(\ref{eq:T_Axioms_X_A})--(\ref{eq:T_Axioms_X_A_A_A}), observe that $\X^{\Imc'} \cap
                              (\mn{var}(q) \cup \{t\})= \emptyset$ following the same argument and
                      using
                      Assertions~(\ref{eq:bar_X}) and CI~(\ref{eq:X_Bar_X}).
                      Hence $$\X^{\Imc'} \cap A^{\Imc'} = \X^\Imc \cap A^\Imc$$ and
                      $$(\X^{\Imc'} \times \X^{\Imc'}) \cap r^{\Imc'} = (\X^\Imc \times
                              \X^\Imc) \cap r^\Imc$$ for all concept names $A$
                      and role names $r \neq u$.
                              Consequently, it is easy to see that for all CIs $\alpha \in \Tmc'$ we have $\Imc \models \alpha$ as $\Imc' \models \alpha$.

                \item
                      We are left to show that \Imc is minimal; however, this is easy to see as $\Imc'$ is minimal and $\Imc \leq_\CP \Imc'$.
        \end{itemize}
        This finishes the proof of Claim~\ref{claim:pruned_model}.

        \smallskip

        We aim to show that the interpretation $\Jmc$, obtained as the
        restriction of \Imc to domain ${\X^\Imc}$, is a model of
        $\Kmc_\Sigma$ with \mbox{$\Jmc \not \models q$}.
        We assume that the properties from
        Claim~\ref{claim:pruned_model}
        are satisfied, and Property~(a) yields $\Delta^\Jmc
                \cap (\mn{var}(q) \cup \{ t \}) = \emptyset$.
        Assertion~(\ref{eq:X_mark}) yields
        $\Ind(\Amc) \subseteq \Delta^\Jmc$ and since
        $\Amc \subseteq \Amc'$ and \Imc is a model of $\Amc'$,
        \Jmc must be a model of~\Amc.
        Since $\Jmc$ is the restriction of \Imc to $\X^\Imc$ and \Imc satisfies CIs~(\ref{eq:T_Axioms_X_A})--(\ref{eq:T_Axioms_X_A_A_A}), \Jmc must satisfy \Tmc as well.

        To see that $\Jmc \not \models q$, assume to the contrary that there is a homomorphism $h$ from $q$ to \Jmc.
        As \Jmc is the restriction of \Imc to $\X^\Imc$ and $q'$ is
        the extension of $q$ that adds atoms $\X(x)$ for all variables
        $x$, $h$ must also be a homomorphism from $q'$ to~\Imc. This contradicts $\Imc \not \models q'$.

        We are left to argue that \Jmc respects $\Sigma$.
        Assume to the contrary that there is some $A \in \Sigma$ and $d_0 \in A^\Jmc$ with \mbox{$A(d_0) \notin \Amc$}.
        We have $d_0 \in \X^\Imc$ by construction of \Jmc.
        First assume that $d_0 \in \Ind(\Amc)$. Then (\ref{eq:bar_A}),
        (\ref{eq:L_A_Bar_A}), (\ref{eq:copy_query_links}), and (\ref{eq:propagate_L_copy}) activate the
        copy of $q$ in $\Amc'$, giving a homomorphism from $q$ to \Imc, contradicting $\Imc \not \models q'$.

        The more laborious case is $d_0 \notin \Ind(\Amc)$. As we show
        in the following, we may then
        %note that $d_0 \notin \Ind(\Aqt)$ by \cref{enum:prop_1} $(*)$.
        %
        construct a model $\Imc' <_\CP \Imc$ of~$\Kmc'$,
        contradicting the minimality of \Imc.

        Intuitively, in the construction of $\Imc'$ we replace $d_0$
        with $t$ by making true at $t$ all concept names that are true
        at $d_0$, making false all concept names at $d_0$, and
        ``redirecting'' all incoming and outgoing edges of $d_0$ to
        $\tc$.  This forces us to make $X$ true at $t$, and
        by~(\ref{eq:propagate_L_copy}) and (\ref{eq:bar_X}) to
        (\ref{eq:X_Bar_X}) we must activate the copy of $q$ in
        $\Amc'$. But then the CIs~(\ref{eq:T_Axioms_X_A})--(\ref{eq:T_Axioms_X_A_A_A}) apply to that
        copy. To satisfy them, we make all concept and role names
        true in the copy of $q$.  In detail:
        \begin{align*}
                \Delta^{\Imc'} & \coloneqq \Delta^\Imc                \\
                A^{\Imc'}      & \coloneqq (A^\Imc \setminus \{d_0\})
                \cup \{ \tc \mid d_0 \in A^\Imc \}
                \cup \mn{var}(q)                                      \\
                L^{\Imc'}      & \coloneqq \Delta^\Imc                \\
                \begin{split}
                        r^{\Imc'} & \coloneqq (r^\Imc \setminus \{ (d,e) \mid d_0 \in \{d,e\} \}) \\
                        & \phantom{\coloneqq} \cup \{ (e, \tc) \mid (e,d_0) \in r^\Imc  \text{ and } e \neq d_0 \}                              \\
                        & \phantom{\coloneqq} \cup \{ (\tc, e) \mid (d_0,e) \in r^\Imc \text{ and } e \neq d_0 \} \\
                        & \phantom{\coloneqq} \cup \{ (\tc, \tc) \mid (d_0,d_0) \in r^\Imc \} \\
                        & \phantom{\coloneqq} \cup (\mn{var}(q) \times \mn{var}(q))
                \end{split}
        \end{align*}
        for all concept names $A \neq L$ and role names $r$. % than $\mn{sig}(\Kmc') \setminus \{L\}$ and concepts $C,D \in \clT$.

        Because \Imc satisfies
        Assertions~(\ref{eq:M_var_q})~and~(\ref{eq:M_t_c}) and
        $d_0 \in \Delta^\Jmc$ cannot be from
        $\mn{var}(q) \cup \{ t \}$, we have
        $M^{\Imc'} = M^\Imc \setminus \{d_0\}$ by construction of
        $\Imc'$. Thus $\Imc' <_\CP \Imc$, as desired. It remains to
        prove that $\Imc'$ is a model of $\Kmc'$.
        We have  $\Imc'\models \Amc'$ because $\Imc \models \Amc'$ and
        the restrictions of \Imc and of $\Imc'$ to $\Ind(\Amc')$ are identical.

        To see that $\Imc'$ satisfies the CIs~(\ref{eq:A_subseteq_M}), let
        $x \in A^{\Imc'}$ with $A \in \Sigma$.
        If $x \in \mn{var}(q) \cup \{t\}$, then $x \in \M^{\Imc}$ due
        to Assertions~(\ref{eq:M_var_q})~and~(\ref{eq:M_t_c}),
        and
        thus $x \in \M^{\Imc'}$
        by construction of $\Imc'$.
        %
        %        For the same reason, $\tc \in \M^{\Imc'}$.
        Otherwise, we have $x \in A^\Imc$ by construction of
        $A^{\Imc'}$, yielding $x \in \M^\Imc$ via
        Assertions~(\ref{eq:A_subseteq_M}).
        We must have $x \neq d_0$ and thus $x \in \M^{\Imc'}$ by
        construction
        of $\Imc'$.
        $\Imc'$ obviously satisfies CIs~(\ref{eq:L_A_Bar_A})~and~(\ref{eq:X_Bar_X}) by
        $L^{\Imc'} = \Delta^\Imc$.

                For CIs~(\ref{eq:T_Axioms_X_A}), consider some $X \sqsubseteq A \in \Tmc'$ and let $x \in X^{\Imc'}$.
                We know that $x \neq d_0$ as $d_0 \notin X^{\Imc'}$ by construction.
                If $x \in \mn{var}(q)$, $x \in A^{\Imc'}$ is entailed by construction.
                If $x = t$, note that Point~\ref{enum:prop_1} of Claim~(\ref{claim:pruned_model}) and the construction of $\Imc'$ imply
                \begin{align}
                        t \in A^{\Imc'} \text{ iff } d_0 \in A^\Imc \tag{\dag} \label{eq:t_d_0_dag}
                \end{align}
                for all concept names $A \neq L$.
                Thus, $d_0 \in X^\Imc$.
                Together with the fact that \Imc as a model of $\Kmc'$ must satisfy $X \sqsubseteq A$, we have that $d_0 \in A^\Imc$.
                Ergo, $t \in A^{\Imc'}$ follows via (\ref{eq:t_d_0_dag}).
                Similarly, if $x \in \Delta^\Imc \setminus (\mn{var}(q) \cup \{d_0,t\})$, then $x \in X^\Imc$ holds by construction, and as \Imc satisfies $X \sqsubseteq A$, we have $x \in A^\Imc$.
                This yields $x \in A^{\Imc'}$ by construction.

        		For CIs~(\ref{eq:T_Axioms_X_exists_rB}), consider some $X \sqcap A  \sqsubseteq \exists r.(X \sqcap B) \in \Tmc'$ and let $x \in (X \sqcap A)^{\Imc'}$.
                Again, $x \neq d_0$.
                If $x \in \mn{var}(q)$, $x \in (\exists r.(X \sqcap B))^{\Imc'}$ follows directly as $(x,x) \in r^{\Imc'}$ and $x \in (X \sqcap B)^{\Imc'}$ hold by construction.
                If $x = t$, Property~(\ref{eq:t_d_0_dag}) implies $d_0 \in (X \sqcap A)^\Imc$.
                Together with the fact that \Imc as a model of $\Kmc'$ must satisfy $X \sqcap A  \sqsubseteq \exists r.(X \sqcap B)$, there must be some $y \in \Delta^\Imc$ such that $(d_0,y) \in r^\Imc$ and $y \in (X \sqcap B)^\Imc$.
                Observe that Point~\ref{enum:prop_3} of Claim~(\ref{claim:pruned_model}) and the construction of $\Imc'$ imply
                \begin{align*}
                        \big(t,f(y) \big ) \in r^{\Imc'} \text{ iff } & (d_0,y) \in r^\Imc \text{, where} \\
                        f(y) \coloneqq                                & \begin{cases}
                                                                                t & \text{if } y = d_0 \\
                                                                                y & \text{otherwise}
                                                                        \end{cases}
                \end{align*}
                for all role names $r$.
                We thus have $(t,f(y)) \in r^{\Imc'}$.
                If $y = d_0$, Property~(\ref{eq:t_d_0_dag}) furthermore implies $f(y) \in (X \sqcap B)^{\Imc'}$.
                Otherwise, $f(y) \in (X \sqcap B)^{\Imc'}$ is entailed by construction.
                In either case, we have $x \in (\exists r.(X \sqcap B))^{\Imc'}$.
                Finally, if $x \in \Delta^\Imc \setminus (\mn{var}(q) \cup \{t, d_0\})$, then $x \in (X \sqcap A)^\Imc$ follows via construction.
                As \Imc is a model of $\Kmc'$, \Imc must satisfy $X \sqcap A  \sqsubseteq \exists r.(X \sqcap B)$, i.e., there must be some $y \in (X \sqcap B)^\Imc$ such that $(x,y) \in r^\Imc$.
                If $y \neq d_0$, $y \in (X \sqcap B)^{\Imc'}$ and $(x,y) \in r^{\Imc'}$ follow by construction of $\Imc'$.
                Otherwise, we have $t \in (X \sqcap B)^{\Imc'}$ and $(x,t) \in r^{\Imc'}$ by construction.
                In either case, $x \in (\exists r.(X \sqcap B))^{\Imc'}$ holds.

                We omit the details for the cases of CIs~(\ref{eq:T_Axioms_exists_rB_X}) and (\ref{eq:T_Axioms_X_A_A_A}) as they are very similar to those for CIs~(\ref{eq:T_Axioms_X_exists_rB}) and (\ref{eq:T_Axioms_X_A}), respectively.

        For CIs~(\ref{eq:propagate_L_copy}), let $x \in (\exists u . L)^{\Imc'}$.
        Point~\ref{enum:prop_2} of Claim~(\ref{claim:pruned_model}) and the construction of $\Amc'$
        entail that $x \in \mn{var}(q)$.
        By construction of $\Imc'$, we get $x \in \X^{\Imc'}$ as desired.

        %	This finishes the proof of \cref{lem:K_Sigma_q_Circ_CP_K_q,thm:combined-lower-el}.
\end{proof}

\section{Proofs for Section~\ref{subsection-alchi-data}}

In the main part of the paper, we have defined neighborhoods only for
the unraveling $\Imc'$ of the interpretation \Imc. In the proofs, we
also consider neighborhods in the quotient
$\J = \interleavingof{\I}/{\sim_{\maxradius}}$. We thus start with
a more general definition of neighborhoods.

Let \Imc be an interpretation and $\Delta \subseteq \Delta^{\Imc}$.
For $n \geq 0$ and $d \in \Delta^\Imc \setminus \Delta$, we use
$\Nmc_n^{{\Imc},\Delta}(d)$ to denote the \emph{$n$-neighborhood of
  $d$ in {\Imc} up to} $\Delta$, that is, the set of all elements
$e \in \Delta^{\Imc} \setminus \Delta$ such that the undirected
graph $G_{\Imc}$ associated with \Imc contains a path
$d_0,\dots,d_k$ with $0 \leq k \leq n$, $d_0=d$,
$d_0,\dots,d_{k-1} \notin \Delta$, and $d_k=e$.

So the neighborhoods $\Nmc_n(d)$ defined in the main body of
the paper are now called $\Nmc_n^{{\Imc'}, \basedomainof{\I}}(d)$.

\lemquotient*

				We prove that $\J =
                                \interleavingof{\I}/{\sim_{\maxradius}}$
                                is indeed a model of $\circkb$ with $\J
                                \not\models q(\bar a)$.
				The key to proving the latter is to exhibit suitable local homomorphisms from $\J$ back to~$\interleavingof{\I}$.
				The existence of a homomorphism $h$ from
                                a CQ
                                $p(\bar x)$  in $q$ to $\J$ with
                                $h(\bar x)=a$ would then contradict
                                the fact that $\interleavingof{\I}$ is
                                a countermodel of $q$.
				Indeed, such a homomorphism $h$ would map each connected component $C$ of $p$ into a $\sizeof{q}$-neighborhood $\neighof{\sizeof{q}}{\equivclass{c}}{\J}{\equivclass{\basedomainof{\I}}}$, for some $c \in \domain{\interleavingof{\I}}$ and where $\equivclass{\basedomainof{\I}}$ stands for the set $\{\equivclass{e} \mid e \in {\basedomainof{\I}}\}$.
				By exhibiting a homomorphism $\rho_{c} : \neighof{\sizeof{q}}{\equivclass{c}}{\J}{\equivclass{{\basedomainof{\I}}}} \rightarrow \neighof{\sizeof{q}}{{c}}{\interleavingof{\I}}{{{\basedomainof{\I}}}}$ such that $\rho_{c}^{-1}(\equivclass{{\basedomainof{\I}}}) \subseteq \equivclass{{\basedomainof{\I}}}$, we can find a homomorphism from $C$ %$q_{\mid C}$ 
				to $\interleavingof{\I}$.
				Taking the union of homomorphisms for
                                all of $p$'s connected components
                                gives %Combining the matches for $q$'s components, we obtain 
				a homomorphism from the full $p$ to $\interleavingof{\I}$.
				
				Except for the use of Lemma~\ref{lem-lemma5}, concerned with circumscription, we roughly follow \cite{maniere:thesis}, with slight simplifications as we are not considering negative role inclusions.
				For all $n \geq 0$, all $c \in \domain{\interleaving}$, and all $d \in \neighof{n}{c}{\I'}{\basedomainof{\I}} \setminus \basedomainof{\I}$, we denote $w_{n, c}^{d}$ the (unique) word such that $d = {\neighroot{n}{c} w_{n, c}^{d}}$. 
				Let us first formulate two remarks concerning the constructed interpretation~$\J$.
				
				\begin{remark}
					\label{remark:concepts-equiv}
					Combining Conditions~1 and 2.(a) from the definition of $\sim_n$, gives: if $d_1 \sim_n d_2$, then $\typeinof{\interleavingof{\I}}{d_1} = \typeinof{\interleavingof{\I}}{d_2}$.
				\end{remark}
				\begin{remark}
					\label{simsrelations}
					If $d_1 \sim_n d_2$,
					then $d_1 \sim_m d_2$ for any $m \leq n$. 
				\end{remark}
				
				We now define homomorphisms $\rhod$, mentioned in the proof sketch, inductively on $n$ increasing from $0$ to $\sizeof{q}$.
%				Figure~\ref{figure:mappings-quotient} summarizes the structures and mappings involved in the construction.
				Starting from the element $\equivclass{c} \in \neighof{0}{\equivclass{c}}{\J}{\equivclass{{\basedomainof{\I}}}}$, we can naturally carry it back as $\rhodof{c} = c \in \neighof{0}{{c}}{\interleavingof{\I}}{{\basedomainof{\I}}}$.
				Assume now that we have defined $\rhodof{d}$ for some $\equivclass{d} \in \neighof{n}{\equivclass{c}}{\J}{\equivclass{{\basedomainof{\I}}}}$ and that we are moving further to an element $\equivclass{e} \in \neighof{n+1}{\equivclass{c}}{\J}{\equivclass{{\basedomainof{\I}}}}$ along an edge $(\equivclass{d}, \equivclass{e})$ in $\J$.
				In the case of $\equivclass{e} \notin \equivclass{{\basedomainof{\I}}}$, the following lemma produces an element $e'$ which is to $\rhodof{d}$ (that is $d'$ in the below statement), what $\equivclass{e}$ is to $\equivclass{d}$.
				We will thus set $\rhodof{e}$ to $e'$.
				The case disjunction that arises reflects $e'$ being either a successor of $d'$ in the corresponding tree-shaped structure of the unraveling $\I'$ (which is identified in the quotient $\J$ as $\sizeof{e} \equiv \sizeof{d} + 1 \mod \maxmod$), or the converse (identified as $\sizeof{d} \equiv \sizeof{e} + 1 \mod \maxmod$).
				
%				\begin{figure}[t]
%					\centering
%					\begin{tikzpicture}[every node/.append style={sloped, scale=1}]
%						\node at ( 0, 0  ) (c)	[label=below:Initial countermodel]  {$\I$};
%						\node at ( 1.25, 1.6) (u)	[label=left:Existential extraction]  {$\domain{\unfoldingof{\I}}$};
%						\node at (3.75, 0  ) (i)	[label=below:Interlacing]  {$\interlacingof{\I}$};
%						\node at (7.25, 0  ) (q)	[label=below:Reduced interlacing]  {$\J$};
%						\node at (8.5, 1.6) (nq)	[]  {$\neighof{n}{\equivclass{c}}{\J}{\equivclass{{\basedomainof{\I}}}}$};
%						\node at ( 5, 1.6) (ni)	[]  {$\neighof{n}{c}{\I'}{{\basedomainof{\I}}}$};
%						
%						\path
%						(u) edge [->] node[above] {$f$} (c)
%						(u) edge [->] node[above] {$\interlace$} (i)
%						(i) edge [->] node[above] {$\intertoi$} (c)
%						(i) edge [->] node[above] {$\proj$} (q)
%						(ni) edge [->] node[above] {} (i)
%						(nq) edge [->] node[above] {} (q)
%						(nq) edge [->] node[above] {$\rho_{n, c}$} (ni)
%						;
%						\node at ( 3.6,-.3) (rect) 
%						[	
%						rectangle, rounded corners = 4pt,  
%						text width = 11cm,
%						text height = 1.5cm,
%						draw, dashed,
%						label=left:Countermodels
%						]
%						{};
%					\end{tikzpicture}
%					\caption{Models, domains, and mappings involved in Section~\ref{chapter:ccq-section:finite-countermodels-subsection:alchi}.}
%					%\todo{What do diff't kinds of arrows mean?}
%					\label{figure:mappings-quotient}
%				\end{figure}
				
				\begin{lemma}
					\label{lemma:core}
					Given two elements $\equivclass{d}, \equivclass{e} \in \domain{\J} \setminus \equivclass{{\basedomainof{\I}}}$, if there exists a role $\rolestyle{P}$ such that $(\equivclass{d}, \equivclass{e}) \in \rolestyle{P}^\J$, then there exists a unique element $rB \in \Omega$ such that one of the two following conditions is satisfied:
					
					\begin{itemize}
						\item[\caseedgeplus.]
						$\sizeof{e} \equiv \sizeof{d} + 1 \mod \maxmod$,
						$w_{\maxradius, e}^{e} = w_{\sizeof{q}, d}^{d} rB$ %\todo{why $\maxradius - 1$ rather than $|q|$? same next item}
						and $\tbox \models \axiom{r}{p}$.
						Furthermore, for all $d' \sim_n d$, the element $e' = d' rB$ belongs to $\domain{\interleavingof{\I}}$ and satisfies $e' \sim_{n - 1} e$.
						\item[\caseedgeminus.] \label{case:edge-minus}
						$\sizeof{d} \equiv \sizeof{e} + 1 \mod \maxmod$,
						$w_{\maxradius, d}^{d} = w_{\sizeof{q}, e}^{e} rB$
						and $\tbox \models \axiom{r^-}{p}$.
						Furthermore, for all $d' \sim_n d$, we have $e'$ such that $d' = e' rB$ and the prefix $e'$ satisfies $e' \sim_{n - 1} e$.
					\end{itemize}
				\end{lemma}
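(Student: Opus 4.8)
The plan is to move the edge from the quotient $\J=\interleavingof{\I}/{\sim_{\maxradius}}$ down to the unraveling $\interleavingof{\I}$, recognise it there as a parent--child edge of the tree part, read off $rB$, and then transport everything along $\sim$-classes using the three defining clauses of the relations $\sim_k$. First I would unfold the quotient: from $(\equivclass{d},\equivclass{e})\in P^{\J}$ we get representatives $d_0\in\equivclass{d}$, $e_0\in\equivclass{e}$ with $(d_0,e_0)\in P^{\interleavingof{\I}}$. Since the $\sim_{\maxradius}$-class of every element of $\basedomainof{\I}$ is a singleton, the hypothesis $\equivclass{d},\equivclass{e}\notin\equivclass{\basedomainof{\I}}$ forces $d_0,e_0\notin\basedomainof{\I}$; in fact all members of $\equivclass{d}$ and of $\equivclass{e}$ are non-base. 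Now inspect the role extensions of $\interleavingof{\I}$: the only clause of the sets $R_s$ that can relate two elements both outside $\basedomainof{\I}$ is $\{(p,p\,sA)\mid sA\in\Omega,\ p\,sA\in\Pmc\}$ (the ABox-pair clause, the core-pair clause and the back-to-core clause all involve a base element). Hence either $e_0=d_0\,rB$ for some $rB\in\Omega$ with $\Tmc\models r\sqsubseteq P$ (Case~1), or $d_0=e_0\,rB$ for some $rB\in\Omega$ with $\Tmc\models r^{-}\sqsubseteq P$ (Case~2); in Case~1 $rB$ is the last symbol of the path $e_0$, in Case~2 the last symbol of $d_0$, which gives uniqueness of $rB$ for this lift. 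Uniqueness across lifts follows since two $\sim_{\maxradius}$-equivalent non-base elements share the same final symbol (by clause~1 of $\sim_{\maxradius}$; the common suffix $w^{\cdot}_{\maxradius,\cdot}$ cannot be empty, as otherwise one of $d_0,e_0$ would have length one and its only proper prefix would be a base element, contradicting what we just showed), and the two cases are mutually exclusive even across lifts because clause~3 of $\sim_{\maxradius}$ would otherwise yield $2\equiv 0\pmod{\maxmod}$, impossible for the chosen modulus $\maxmod$.

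Next I would establish the numeric and shape claims, treating Case~1 (Case~2 is symmetric, with $d$ and $e$ interchanged). Since $|e_0|=|d_0|+1$ and clause~3 of $\sim_{\maxradius}$ preserves length modulo $\maxmod$, we get $|e|\equiv|e_0|=|d_0|+1\equiv|d|+1\pmod{\maxmod}$. For the equality $w_{\maxradius,e}^{e}=w_{\sizeof{q},d}^{d}\,rB$, note first that $w^{\cdot}_{k,\cdot}$ agrees on $\sim_{k}$-equivalent elements (clause~1 of $\sim_k$), so it suffices to verify it for the literal paths $d_0$ and $e_0=d_0\,rB$, i.e.\ that $w_{\maxradius,e_0}^{e_0}=w_{\sizeof{q},d_0}^{d_0}\,rB$. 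This is where the calibration $\maxradius=\sizeof{q}+1$ is used: a short case split according to whether $|d_0|\geq\sizeof{q}$ shows $\neighroot{\maxradius}{e_0}=\neighroot{\sizeof{q}}{d_0}$ — when $d_0$ lies deep in a subtree, truncating $e_0$ by $\maxradius$ levels and $d_0$ by $\sizeof{q}$ levels reaches the same ancestor because $e_0$ is exactly one level deeper than $d_0$; when $d_0$ is shallow, both roots are the base element of the subtree — and then $w_{\maxradius,e_0}^{e_0}=\neighroot{\maxradius}{e_0}^{-1}e_0=\neighroot{\sizeof{q}}{d_0}^{-1}d_0\cdot rB=w_{\sizeof{q},d_0}^{d_0}\,rB$.

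For the ``Furthermore'' part (again Case~1), fix $n$ with $1\leq n\leq\maxradius$ and $d'\sim_n d$; I must produce $e'=d'\,rB\in\domain{\interleavingof{\I}}$ with $e'\sim_{n-1}e$. From $d_0\sim_{\maxradius}d$ and Remark~\ref{simsrelations} we get $d_0\sim_n d$, hence $d'\sim_n d_0$. Writing $e_0=d_0\,rB=\neighroot{n}{d_0}\,w_{n,d_0}^{d_0}\,rB$ and noting $e_0\in\neighof{n}{d_0}{\interleavingof{\I}}{\basedomainof{\I}}$ (it is one step from $d_0$, both non-base), clause~2 of $d'\sim_n d_0$ applied to the word $w_{n,d_0}^{d_0}\,rB$ gives $\neighroot{n}{d'}\,w_{n,d_0}^{d_0}\,rB\in\neighof{n}{d'}{\interleavingof{\I}}{\basedomainof{\I}}$; by clause~1, $w_{n,d_0}^{d_0}=w_{n,d}^{d}=w_{n,d'}^{d'}$, so that element is precisely $d'\,rB=e'$, which therefore lies in $\domain{\interleavingof{\I}}$.

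It remains to show $e'\sim_{n-1}e$. Using $e_0\sim_{\maxradius}e$, Remark~\ref{simsrelations} and transitivity of $\sim_{n-1}$, it suffices to prove the one-step contraction statement: \emph{if $x\sim_k y$ and both $x\,rB$ and $y\,rB$ lie in $\domain{\interleavingof{\I}}$, then $x\,rB\sim_{k-1}y\,rB$} (applied with $k=n$, $x=d'$, $y=d_0$; in Case~2 one instead applies it to the parent elements, which is the same argument run upwards). This contraction statement is the main technical obstacle. To prove it one checks that $\neighof{k-1}{xrB}{\interleavingof{\I}}{\basedomainof{\I}}\subseteq\neighof{k}{x}{\interleavingof{\I}}{\basedomainof{\I}}$ and, crucially, that the isomorphism type of $\neighof{k-1}{xrB}{\interleavingof{\I}}{\basedomainof{\I}}$ — its tree shape, the $h$-induced concept types $\typeinof{\interleavingof{\I}}{\cdot}$ of its nodes, and its backedges to $\basedomainof{\I}$ — is a function of the isomorphism type of $\neighof{k}{x}{\interleavingof{\I}}{\basedomainof{\I}}$ together with the word $w_{k,x}^{x}$ and the symbol $rB$ (which locate the node $x\,rB$ inside that neighborhood). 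Since $x\sim_k y$ makes all of these data agree for $x$ and $y$ (clauses~1 and~2 equate the shapes, the types and the backedges, and $w_{k,x}^{x}=w_{k,y}^{y}$ by clause~1), the radius-$(k-1)$ neighborhoods of $x\,rB$ and $y\,rB$ agree, clauses~1 and~2 of $\sim_{k-1}$ hold, and clause~3 is inherited because both lengths are shifted by $+1$ relative to $|x|,|y|$. Putting this together yields $e'\sim_{n-1}e$, completing the proof.
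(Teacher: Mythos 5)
Your proof is correct and follows essentially the same route as the paper's: lift the edge to the unraveling via representatives $d_0,e_0$, observe that the only role clause relating two non-base elements is the parent--child clause so that $rB$ is the last letter of the deeper path, get mutual exclusivity from clause~3 of $\sim_{\maxradius}$ (else $2\equiv 0 \bmod \maxmod$), and prove the ``furthermore'' part by transitivity plus the observation that the $n$-neighborhood data of the parent determines the $(n-1)$-neighborhood data of the child. Your explicit one-step contraction lemma and the case split for the suffix identity are just slightly more detailed renderings of steps the paper treats as immediate.
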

				
				\begin{proof}
					Notice the two conditions are mutually exclusive: $\sizeof{e} \equiv \sizeof{d} + 1 \mod \maxmod$ and $\sizeof{d} \equiv \sizeof{e} + 1 \mod \maxmod$ would imply $0 \equiv 2 \mod \maxmod$, which is impossible as $\maxmod > 2$. Furthermore, in each case $\rstyle{r}\cstyle{B}$ is defined as the last letter of the word $w_{\maxradius, e}^{e}$ (resp $w_{\maxradius, d}^{d}$), which is unique and does not depends on the choice of $e$ (resp $d$) nor on $\rolestyle{P}$.
					This proves the uniqueness.
					
					We now focus on the existence and the additional property.
					From the definition of $\rolestyle{P}^\J$, there exist $({d_0}, {e_0}) \in \rolestyle{P}^{\interleavingof{\I}}$ such that $\equivclass{d_0} = \equivclass{d}$ and $\equivclass{e_0} = \equivclass{e}$. Recall $\equivclass{d}, \equivclass{e} \notin \equivclass{{\basedomainof{\I}}}$, hence $d_0, e_0 \notin {\basedomainof{\I}}$. 
%					In that case the definition of $f'$ ensures the only antecedent of $d_0$ (resp $e_0$) by $f'$ is itself. 
					{Therefore the definition of $\rolestyle{P}^{\interleavingof{\I}}$, yields two cases:}
					\begin{itemize}
						\item
						We have $e_0 = d_0 \rstyle{r}\cstyle{B}$ with $\tbox \models \axiom{r}{p}$.
						It follows that $\sizeof{e_0} = \sizeof{d_0} + 1 \mod \maxmod$
						and $w_{\maxradius, e_0}^{e_0} = w_{\maxradius - 1, d_0}^{d_0} \rstyle{r}\cstyle{B}$, immediately yielding the same properties for $d$ and $e$ as $(\equivclass{d_0}, \equivclass{e_0}) = (\equivclass{d}, \equivclass{e})$.
						
						Consider now $d'$ and $1 \leq n \leq \maxradius$ s.t.\ now $d' \sim_n d$.
						Transitivity gives $d' \sim_n d_0$, and we have in particular $w_{n, d'}^{d'} = w_{n, d_0}^{d_0}$.
						Recall that $e_0 = d_0 \rstyle{r}\cstyle{B}$, hence Condition~2 from the definition of $d' \sim_n d_0$ ensures $d' \rstyle{r}\cstyle{B}$ belongs to $\Delta^{\I'}$.
						Notice it is now sufficient to prove $d' \rstyle{r}\cstyle{B} \sim_{n-1} e_0$: that is because $\equivclass{e} = \equivclass{e_0}$, hence transitivity will conclude the proof. 
						It should be clear that $\neighselfpath{n-1}{d' \rstyle{r}\cstyle{B}} = \neighselfpath{n-1}{e_0}$ and $\sizeof{d' \rstyle{r}\cstyle{B}} \equiv \sizeof{e_0} \mod \maxmod$, hence Conditions~1 and 3 from the definition of $\sim_{n-1}$ are satisfied.
						To verify Condition~2, it suffices to remark that $e_0 = d_0 \rstyle{r}\cstyle{B}$ ensures that the characterization via paths of the $n$-neighborhood of $d_0$ fully determines the characterization of the $(n-1)$ neighborhood of $e_0$.
						But since the former coincides with the characterization of the $n$-neighborhood of $d'$ (using Condition~2 from the definition of $\equivclass{d_0} = \equivclass{d'}$), that fully decides the characterization of the $(n-1)$-neighborhood of $e'$, we are done.
						
						\item
						We have $d_0 = e_0 \rstyle{r}\cstyle{B}$ with $\tbox \models \axiom{r^-}{p}$.
						It follows that $\sizeof{d_0} \equiv \sizeof{e_0} + 1 \mod \maxmod$
						and $w_{\maxradius, d_0}^{d_0} = w_{\maxradius - 1, e_0}^{e_0} \rstyle{r}\cstyle{B}$, immediately yielding the same properties for $d$ and $e$ as $(\equivclass{d_0}, \equivclass{e_0}) = (\equivclass{d}, \equivclass{e})$.
						
						Let now $1 \leq n \leq \maxradius + 1$ be an integer and $d' \sim_n d$.
						Transitivity gives $d' \sim_n d_0$, and we have in particular $\neighselfpath{1}{d'} = \neighselfpath{1}{d_0} = \rstyle{r}\cstyle{B}$ (it is here important to have $n \geq 1$!). 
						That is $d'$ ends by $\rstyle{r}\cstyle{B}$, and therefore we can indeed have prefix $e'$ such that $d' = e' \rstyle{r}\cstyle{B}$. \qedhere
						%The rest of the proof follows the previous Case~1, \todo{which Case 1?} this time focusing on the proof of $e' \sim_{n-1} e_0$, based on $d' \sim_n d_0$. \qedhere
					\end{itemize}
				\end{proof}
				
				Notice the ``strength'' of the equivalence relation $\sim_n$ between $\equivclass{e}$ and $\rhodof{e}$ decreases as we move further in the neighbourhood of $\equivclass{c}$.
				However, since we start from $\rhodof{c} := c \sim_{\maxradius} c$ and explore a $\sizeof{q}$-neighbourhood, the index remains at least $1$.
				This is essential as $\sim_1$ notably encodes relations to elements of $\equivclass{{\basedomainof{\I}}}$ as required by Condition~2.(b) from the definition of $\sim_1$.

				It remains to abstract from 
				the particular choice of $\equivclass{d}$, which is likely not to be the only element of $\neighof{n}{\equivclass{c}}{\J}{\equivclass{{\basedomainof{\I}}}}$ connected to $\equivclass{e}$.
				Taking a closer look at Lemma~\ref{lemma:core}, we observe that $\rhodof{e}$, that is $e'$, is obtained either by adding a letter to $\rhodof{d}$, that is $d'$, or by removing the last letter of $\rhodof{d}$, and that these letters coincide with those in the suffixes of elements ${d}$ and ${e}$.
				Therefore, when moving from $\equivclass{c}$ to $\equivclass{e}$, each added letter appears already in the suffix $\neighselfpath{\sizeof{q} + 1}{e}$ of ${e}$ and, similarly, each removed letter appeared in the suffix $\neighselfpath{\sizeof{q} + 1}{c}$ of $c$.
				This is however not completely true as some steps along the way from $\equivclass{c}$ to $\equivclass{e}$ may cancel each other, \emph{e.g.} a letter is introduced at one step and removed on the next one.
				
				The challenge is therefore to decide the % amounts 
				numbers $l^+$ of \emph{actual} additions and $l^-$ of \emph{actual} removals to build $\rhodof{e}$ directly from $c$ and $\equivclass{e}$ (which determines $\neighselfpath{\sizeof{q} + 1}{e}$).
				We then drop the $l^-$ last letters from $c$, which intuitively corresponds to keeping its prefix $\neighroot{l^-}{c}$, and keep the $l^+$ last letters from $\neighselfpath{\sizeof{q} + 1}{e}$, which intuitively corresponds to $\neighselfpath{l^+}{e}$, and concatenate those two words.
				% these integers $n_1$ and $n_2$.
				To this end, the next definition captures the relative difference of letters between $\equivclass{c}$ and $\equivclass{e}$, encoded in $\sizeof{c}$ and $\sizeof{e} \mod \maxmod$.
				
				\begin{definition}
					\label{definition:relative-depth}
					Let $\equivclass{c} \in \domain{\J}$ and $n \leq \sizeof{q}$. The \emph{relative depth} of $\equivclass{e} \in \neighof{n}{\equivclass{c}}{\J}{\equivclass{{\basedomainof{\I}}}}$ from $\equivclass{c}$ is the integer $\moddist{c}{e} \in [- n, n ]$ such that $\sizeof{e} \equiv \sizeof{c} + \moddist{c}{e} \mod \maxmod$.
				\end{definition}

				\begin{remark}
					\label{remark:depth-modulo-2}
					By induction on $n \leq \sizeof{q}$, it is straightforward to see that $\moddist{c}{e}$ is well defined. Uniqueness is ensured by $\moddist{c}{e} \leq n \leq \sizeof{q}$.
					A consequence of Lemma~\ref{lemma:core} is that for the smallest $n \leq \sizeof{q}$ such that $\equivclass{e} \in \neighof{n}{\equivclass{c}}{\J}{\equivclass{{\basedomainof{\I}}}}$ we have $\moddist{c}{e} = n \mod 2$.
				\end{remark}

				We can now identify how many additions and removals canceled each other.
				Indeed, if it takes exactly $n$ steps to reach $\equivclass{e}$ {from $\equivclass{c}$}, with relative difference of $\delta := \moddist{c}{e}$, then $n - \sizeof{\delta}$ is the number of steps that canceled each other, hence: 
				%Hence, 
				$\frac{n-\sizeof{\delta}}{2}$ canceled additions and $\frac{n-\sizeof{\delta}}{2}$ canceled removals.
				Therefore, the number $l^+$ of \emph{actual} additions is $\frac{n-\sizeof{\delta}}{2} + \delta$ if $\delta \geq 0$, or $\frac{n - \sizeof{\delta}}{2}$ if $\delta \leq 0$, that is in both cases $\frac{n + \delta}{2}$.
				Similarly we obtain $\frac{n - \delta}{2}$ for the number $l^-$ of \emph{actual} removals.
				Notice Remark~\ref{remark:depth-modulo-2} ensures both $l^+$ and $l^-$ are integers.
				
				The next theorem formalizes the above intuitions.
				Notably, in the non-trivial cases, $\qtoiof{n}{e}$ is obtained by 
				removing the $\frac{n - \delta}{2}$ last letters of ${c}$ and keeping the $\frac{n + \delta}{2}$ last letters from the suffix of ${e}$, which results in the word $\neighroot{\frac{n - \moddist{c}{e}}{2}}{c} \neighselfpath{\frac{n + \moddist{c}{e}}{2}}{e}$.
				It is then a technicality to verify these syntactical operations on words produces an element in the domain of $\interleavingof{\I}$ and that the defined $\qtoi{n}$ is a homomorphism as desired.
				
				\begin{lemma}
					\label{morphismneighbourhoods}
					For all $c \in \domain{\interleavingof{\I}}$ and all $n \leq \sizeof{q}$, the following mapping $\qtoiof{n}{e}$:
					\[
					\begin{array}{r@{\;}c@{\;}l}
					 \neighof{n}{\equivclass{c}}{\J}{\equivclass{{\basedomainof{\I}}}} & \rightarrow & \neighof{n}{c}{\interleavingof{\I}}{{\basedomainof{\I}}} 
					 \smallskip \\
					\equivclass{e} & \mapsto &
					\left\{ \begin{array}{ll}
						\qtoiof{n-1}{e} \qquad \text{if } \equivclass{e} \in \neighof{n-1}{\equivclass{c}}{\J}{\equivclass{{\basedomainof{\I}}}} 
						\smallskip \\
						e \qquad \qquad \qquad \text{if } \equivclass{e} \in \equivclass{{\basedomainof{\I}}} 
						\smallskip \\
						\neighroot{\frac{n - \moddist{c}{e}}{2}}{c} \neighselfpath{\frac{n + \moddist{c}{e}}{2}}{e} \quad \text{otherwise}
					\end{array} \right.
					\end{array}
					\]
					is a homomorphism satisfying $\qtoiof{n}{{e}} \sim_{\maxradius - n} {e}$ and $\qtoi{n}^{-1}(\equivclass{{\basedomainof{\I}}}) \subseteq \equivclass{{\basedomainof{\I}}}$.
				\end{lemma}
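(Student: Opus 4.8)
The plan is to establish all four assertions of the lemma --- that $\qtoi{n}$ is well defined, that $\qtoiof{n}{e}\sim_{\maxradius-n}e$, that $\qtoi{n}$ is a homomorphism, and that $\qtoi{n}^{-1}(\equivclass{\basedomainof{\I}})\subseteq\equivclass{\basedomainof{\I}}$ --- simultaneously, by induction on $n$ ranging from $0$ to $\sizeof{q}$. In the base case $n=0$ the neighborhood $\neighof{0}{\equivclass{c}}{\J}{\equivclass{\basedomainof{\I}}}$ reduces to $\{\equivclass{c}\}$, the map sends $\equivclass{c}$ to $c$, and this agrees with the third clause of the definition because $\moddist{c}{c}=0$ gives $\neighroot{0}{c}\,\neighselfpath{0}{c}=c$; all requirements then hold trivially using $c\sim_{\maxradius}c$.

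For the inductive step fix $\equivclass{e}\in\neighof{n}{\equivclass{c}}{\J}{\equivclass{\basedomainof{\I}}}$. If $\equivclass{e}$ already lies in $\neighof{n-1}{\equivclass{c}}{\J}{\equivclass{\basedomainof{\I}}}$ or in $\equivclass{\basedomainof{\I}}$, its image is inherited from $\qtoi{n-1}$ or equals $e$, and the required properties follow from the induction hypothesis, Remark~\ref{simsrelations} (which downgrades $\sim_{\maxradius-(n-1)}$ to $\sim_{\maxradius-n}$), and the fact that $\equivclass{e}=\equivclass{b}$ with $b\in\basedomainof{\I}$ forces $e=b\in\basedomainof{\I}$ by the first clause in the definition of $\sim_{\maxradius}$. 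Otherwise $\equivclass{e}\notin\equivclass{\basedomainof{\I}}$ and $n$ is the least index with $\equivclass{e}\in\neighof{n}{\equivclass{c}}{\J}{\equivclass{\basedomainof{\I}}}$; I would pick a shortest path $\equivclass{c}=\equivclass{d_0},\dots,\equivclass{d_n}=\equivclass{e}$ in $G_{\J}$ and walk it, applying Lemma~\ref{lemma:core} to each edge. Each step either appends the single letter $\neighselfpath{1}{d_{i+1}}$ to the current image or deletes its last letter, the orientation being read off from whether $\sizeof{d_{i+1}}\equiv\sizeof{d_i}+1$ or $\sizeof{d_i}\equiv\sizeof{d_{i+1}}+1$ modulo $\maxmod$; and the ``furthermore'' clauses of Lemma~\ref{lemma:core}, applied in turn, keep the image of $\equivclass{d_i}$ $\sim_{\maxradius-i}$-equivalent to $d_i$, an index that stays $\geq1$ since $i\leq n\leq\sizeof{q}<\maxradius$. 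Writing $\delta:=\moddist{c}{e}$ --- an integer with the same parity as $n$ by Remark~\ref{remark:depth-modulo-2} --- the counting in the paragraph preceding the statement gives that exactly $l^-:=\tfrac{n-\delta}{2}$ of the steps are genuine deletions and $l^+:=\tfrac{n+\delta}{2}$ are genuine appends, so the walk produces $c$ with its last $l^-$ letters stripped, followed by the last $l^+$ letters of $e$ --- that is, $\neighroot{l^-}{c}\,\neighselfpath{l^+}{e}$, which is exactly the third clause. This shows at once that the displayed word is a genuine element of $\domain{\interleavingof{\I}}$ lying in $\neighof{n}{c}{\interleavingof{\I}}{\basedomainof{\I}}$, that it is independent of the chosen shortest path, and (the case $i=n$) that $\qtoiof{n}{e}\sim_{\maxradius-n}e$.

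It then remains to check that $\qtoi{n}$ is a homomorphism with $\qtoi{n}^{-1}(\equivclass{\basedomainof{\I}})\subseteq\equivclass{\basedomainof{\I}}$. For a role edge $(\equivclass{e_1},\equivclass{e_2})$ of $\J$ with both endpoints in the neighborhood, Lemma~\ref{lemma:core} applied to that edge, together with the closed form just obtained, shows that $\qtoiof{n}{e_2}$ arises from $\qtoiof{n}{e_1}$ by one append or deletion compatible with the orientation of the relevant role inclusion, so that $(\qtoiof{n}{e_1},\qtoiof{n}{e_2})$ is an edge of $\interleavingof{\I}$; edges into $\equivclass{\basedomainof{\I}}$ are preserved because $\qtoiof{n}{e}\sim_{1}e$ and Condition~2.(b) in the definition of $\sim_1$ records precisely which edges to $\basedomainof{\I}$ an element has. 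Concept membership transfers in both directions since $\qtoiof{n}{e}\sim_{\maxradius-n}e$ with $\maxradius-n\geq1$ yields $\typeinof{\interleavingof{\I}}{\qtoiof{n}{e}}=\typeinof{\interleavingof{\I}}{e}$ by Remark~\ref{remark:concepts-equiv}, while membership of $\equivclass{e}$ in a concept name in $\J$ is by definition inherited from membership of $e$ in $\interleavingof{\I}$. Finally, $\qtoiof{n}{e}\sim_{\maxradius-n}e$ forces $e\in\basedomainof{\I}$ whenever $\qtoiof{n}{e}\in\basedomainof{\I}$, hence $\equivclass{e}\in\equivclass{\basedomainof{\I}}$, which gives the last inclusion. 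I expect the main obstacle to be the bookkeeping behind the closed form: one must show that every shortest $G_{\J}$-path from $\equivclass{c}$ to $\equivclass{e}$ is monotone --- a descent below $\equivclass{c}$ followed by an ascent to $\equivclass{e}$ --- so that the numbers of genuine descents and ascents depend only on $n$ and $\moddist{c}{e}$, and that $l^-$ never exceeds the number of letters of $c$ outside the common base prefix, so that the word surgery stays inside $\domain{\interleavingof{\I}}$.
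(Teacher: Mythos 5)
Your overall skeleton (induction on $n$, base case $\qtoiof{0}{c}=c$, single-edge analysis via Lemma~\ref{lemma:core}, and the final homomorphism/preimage checks) matches the paper's proof, and those parts are fine. The gap is exactly where you suspect it is: the derivation of the closed form $\neighroot{\frac{n-\moddist{c}{e}}{2}}{c}\,\neighselfpath{\frac{n+\moddist{c}{e}}{2}}{e}$ for an element at distance exactly $n$. You obtain it by walking an entire shortest path $\equivclass{c}=\equivclass{d_0},\dots,\equivclass{d_n}=\equivclass{e}$ and arguing that the \emph{net} counts $l^+=\frac{n+\delta}{2}$ and $l^-=\frac{n-\delta}{2}$ of appends and deletions determine the resulting word. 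But the resulting word depends on the \emph{order} of these operations, not just their counts: an append followed by a deletion returns to some parent class that need not coincide with $\equivclass{d_{i-1}}$ (Lemma~\ref{lemma:core}, Case~\caseedgeminus, only guarantees the new parent is $\sim_{\maxradius-i}$-equivalent to the old one, and $\sim_{\maxradius-i}$-equivalent elements need not share a last letter when they are roots of their own neighborhoods). Your proposed repair --- that every shortest path in $G_{\Jmc}$ is monotone, a descent followed by an ascent --- is not proved, is not obviously true in the quotient (where down--up detours of the same length can reach genuinely new classes), and is not used by the paper.

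The paper avoids the issue entirely. It never manipulates whole paths: in the induction step it takes a \emph{single} neighbor $\equivclass{d}$ at distance exactly $n-1$ and shows that the closed form for $\equivclass{e}$ (which is path-independent by definition) equals $\qtoiof{n-1}{d}\,\rstyle{r}\cstyle{B}$ in the append case, or is obtained from $\qtoiof{n-1}{d}$ by stripping $\rstyle{r}\cstyle{B}$ in the deletion case. The deletion case is where your counting argument would need monotonicity, and the paper instead strengthens the induction hypothesis with Property~($*$): for elements at distance exactly $n-1$, the suffix of $\qtoiof{n-1}{d}$ agrees with that of $d$ up to length $\maxradius-\frac{(n-1)-\moddist{c}{d}}{2}$, which is strictly longer than the suffix visible in the bare definition of $\qtoiof{n-1}{d}$ (using $n\leq\maxradius$). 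That extra suffix information is precisely what makes ``strip the last letter and recognize the result as $\neighroot{\frac{n-\moddist{c}{e}}{2}}{c}\,\neighselfpath{\frac{n+\moddist{c}{e}}{2}}{e}$'' a legal computation. Your write-up contains no analogue of this strengthened hypothesis, so the deletion case of your induction (and hence the closed form, the membership of the word in $\domain{\interleavingof{\I}}$, and the bound $l^-$ on how many letters of $c$ may be stripped) is not established.
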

				
				\begin{proof}
					Let $c \in \domain{\interleavingof{\I}}$.
					We proceed by induction on $n \leq \sizeof{q}$ and prove along a technical statement.
					Property $\qtoiof{n}{e} \sim_{\maxradius - n} {e}$ already ensures $\neighselfpath{\maxradius - n}{\qtoiof{n}{e}} = \neighselfpath{\maxradius - n}{e}$; we reinforce this latter fact as follows.
					If $e \in \neighof{n}{\equivclass{c}}{\J}{\equivclass{{\basedomainof{\I}}}} \setminus \neighof{n-1}{\equivclass{c}}{\J}{\equivclass{{\basedomainof{\I}}}}$, then:
					\begin{equation}
						\tag{$*$}
						\label{property:suffix}
						\neighselfpath{\maxradius - \frac{n - \moddist{c}{e}}{2}}{\qtoiof{n}{e}} = \neighselfpath{\maxradius - \frac{n - \moddist{c}{e}}{2}}{e}
					\end{equation}
					It is indeed a stronger statement since $-n \leq \moddist{c}{e} \leq n$ leads to $0 \leq \frac{n - \moddist{c}{e}}{2} \leq n$, hence $\maxradius - n \leq \maxradius - \frac{n - \moddist{c}{e}}{2}$. Property~\ref{property:suffix} therefore provides a more precise information about the suffix of $\qtoi{n}{e}$.
					
					{\bf Base case: $n = 0$.}
					Let $\equivclass{e} \in \neighof{0}{\equivclass{c}}{\J}{\equivclass{{\basedomainof{\I}}}}$, hence $\equivclass{e} = \equivclass{c}$. If $\equivclass{c} \in \equivclass{{\basedomainof{\I}}}$, then $\qtoi{0}{e} = e = c$. Otherwise we have $\moddist{c}{e} = 0$, hence $\qtoi{0}{e} = \neighroot{0}{c} \neighselfpath{0}{c} = c$. In both cases $\qtoi{0}{e} = c$, and it is straightforward that all the desired properties hold. {In particular, agreeing that $\neighof{-1}{\equivclass{c}}{\J}{\equivclass{{\basedomainof{\I}}}}$ can reasonably be set to $\emptyset$, our technical statement holds.}
					
					{\bf Induction case.} 
					Assume the statement holds for $0 \leq n - 1 < \sizeof{q}$.
					Let $\equivclass{e} \in \neighof{n}{\equivclass{c}}{\J}{\equivclass{{\basedomainof{\I}}}}$.
					If $\equivclass{e} \in \neighof{n-1}{\equivclass{c}}{\J}{\equivclass{{\basedomainof{\I}}}}$, then the induction hypothesis applies directly on $\equivclass{e}$ and provides (stronger versions of) the desired properties.
					Otherwise, we have by definition of neighbourhoods an element $\equivclass{d} \in \neighof{n-1}{\equivclass{c}}{\J}{\equivclass{{\basedomainof{\I}}}}$, not belonging to $\equivclass{{\basedomainof{\I}}}$ {nor to $\neighof{n-2}{\equivclass{c}}{\J}{\equivclass{{\basedomainof{\I}}}}$}, and a role $\rolestyle{P} \in \posroles$ such that $(\equivclass{d}, \equivclass{e}) \in \rolestyle{P}^\J$.
					We apply the induction hypothesis on $\equivclass{d}$, which gives $\qtoiof{n-1}{d} = \neighroot{\frac{n - 1 - \moddist{c}{d}}{2}}{d} \neighselfpath{\frac{n - 1 + \moddist{c}{d}}{2}}{d}$ since $\equivclass{d} \notin \equivclass{{\basedomainof{\I}}}$.
					We further distinguish between $\equivclass{e} \in {\basedomainof{\I}}$ and $\equivclass{e} \notin {\basedomainof{\I}}$, the latter subcase yielding two subcases by applying Lemma~\ref{lemma:core} and distinguishing between Cases \caseedgeplus and \caseedgeminus.
					We have therefore three cases to treat.
					
					{\bf $\equivclass{e} \in \equivclass{{\basedomainof{\I}}}$.}
					We have $\qtoiof{n}{e} = e$ and the only non-trivial property to prove is that $e \in \neighof{n}{c}{\interleavingof{\I}}{{\basedomainof{\I}}}$.
					Recall the induction hypothesis ensures in particular $\qtoiof{n-1}{d} \sim_1 d$. 
					Condition~2.(b) from the definition of $\sim_1$ ensures $(\qtoiof{n-1}{d}, e) \in \rolestyle{P}^{\interleavingof{\I}}$, which provides the desired property.
					
					{\bf \caseedgeplus.}
					Case \caseedgeplus ensures $\sizeof{e} = \sizeof{d} + 1 \mod \maxmod$, hence $\moddist{c}{e} = \moddist{c}{d} + 1$, and $w_{\maxradius, e}^{e} = w_{\maxradius - 1, d}^{d} \rstyle{r}\cstyle{B}$.
					Therefore, our element $\qtoiof{n}{e}$ of interest simplifies as:
					\[
					\begin{array}{r@{\;}c@{\;}l}
						\qtoiof{n}{e} & = &
						\neighroot{\frac{n - \moddist{c}{e}}{2}}{c} w_{\frac{n + \moddist{c}{e}}{2}, e}^e 
						\smallskip \\
						& = & \neighroot{\frac{n - (\moddist{c}{d} + 1)}{2}}{c} w_{\frac{n + (\moddist{c}{d} + 1)}{2}, e}^e 
						\smallskip \\
						& = & \neighroot{\frac{(n-1) - \moddist{c}{d}}{2}}{c} w_{\frac{(n-1) + \moddist{c}{d}}{2} + 1, e}^e 
						\smallskip \\
						& = & \neighroot{\frac{(n-1) - \moddist{c}{d}}{2}}{c} w_{\frac{(n-1) + \moddist{c}{d}}{2}, d}^{d} \rstyle{r}\cstyle{B} 
						\smallskip \\
						& = & \qtoiof{n-1}{d} \rstyle{r}\cstyle{B},
					\end{array}
					\]
					which is well-defined and satisfies $\qtoiof{n}{e} \sim_{\maxradius - n} e$ from Lemma~\ref{lemma:core}.
					Recalling that the induction hypothesis gives $\qtoiof{n-1}{d} \in \neighof{n-1}{c}{\interleavingof{\I}}{{\basedomainof{\I}}}$, it follows that $\qtoiof{n}{e} \in \neighof{n}{c}{\interleavingof{\I}}{{\basedomainof{\I}}}$.
					Furthermore, notice that $\equivclass{e}$ and $\equivclass{d}$ satisfy all conditions of our additional statement. 
					Since in Case~\caseedgeplus we have $\tbox \models \axiom{r}{p}$, reusing 
					$\qtoiof{n}{e} = \qtoiof{n-1}{d} \rstyle{r}\cstyle{B}$ immediately yields $(\qtoiof{n-1}{d}, \qtoiof{n}{e}) \in \rolestyle{P}^{\interleavingof{\I}}$.
					%\tocheck{(that is Remark~\ref{remark:roles-in-interleaving}).}
					
					Checking that Property~\ref{property:suffix} holds is now a technicality, and recall that since $d \in \neighof{n-1}{\equivclass{c}}{\J}{\equivclass{{\basedomainof{\I}}}} \setminus \neighof{n-2}{\equivclass{c}}{\J}{\equivclass{{\basedomainof{\I}}}}$, we can apply it to $d$ by induction hypothesis. We hence have:
					\[
					\begin{array}{r@{\hspace*{-50pt}}l}
						w_{\maxradius - \frac{n - \moddist{c}{e}}{2}, \qtoiof{n}{e}}^{\qtoiof{n}{e}}
						\smallskip \\
						& = w_{\maxradius - \frac{n - \moddist{c}{e}}{2} - 1, \qtoiof{n-1}{d}}^{\qtoiof{n-1}{d}} \rstyle{r}\cstyle{B}
						\smallskip \\
						& = w_{\maxradius - \frac{(n - 1) + 1 - (\moddist{c}{d} + 1)}{2} - 1, \qtoiof{n-1}{d}}^{\qtoiof{n-1}{d}} \rstyle{r}\cstyle{B}
						\smallskip \\
						& = w_{\maxradius - \frac{(n - 1) - \moddist{c}{d}}{2} - 1, \qtoiof{n-1}{d}}^{\qtoiof{n-1}{d}} \rstyle{r}\cstyle{B}
						\smallskip \\
						& = w_{\maxradius - \frac{(n - 1) - \moddist{c}{d}}{2} - 1, d}^{d} \rstyle{r}\cstyle{B}
						\smallskip \\
						& = w_{\maxradius - \frac{n - \moddist{c}{e}}{2}, e}^e.
					\end{array}
					\]
					
					{\bf \caseedgeminus.}
					Case \caseedgeminus ensures $\sizeof{e} = \sizeof{d} - 1 \mod \maxmod$, hence $\moddist{c}{e} = \moddist{c}{d} - 1$, and $w_{\maxradius, d}^{d} = w_{\maxradius - 1, e}^{e} \rstyle{r}\cstyle{B}$.
					By induction hypothesis, element $\qtoiof{n-1}{d} = \neighroot{\frac{(n-1) - \moddist{c}{d}}{2}}{d} w_{\frac{(n - 1) + \moddist{c}{d}}{2}, d}^{d}$ is well-defined.
					Notice Property~\ref{property:suffix} on $d$ (which, again can be applied as $d \in \neighof{n-1}{\equivclass{c}}{\J}{\equivclass{{\basedomainof{\I}}}} \setminus \neighof{n-2}{\equivclass{c}}{\J}{\equivclass{{\basedomainof{\I}}}}$) gives more precise information on the suffix of $\qtoiof{n-1}{d}$ than the definition of $\qtoiof{n-1}{d}$, because $n \leq \maxradius$ leads to $\frac{(n - 1) + \moddist{c}{d}}{2} + 1 \leq \maxradius - \frac{(n - 1) - \moddist{c}{d}}{2}$. 
					Therefore, $w_{\frac{(n - 1) + \moddist{c}{d}}{2} + 1, d}^{d}$ is itself a suffix of $w_{\maxradius - \frac{(n-1) - \moddist{c}{d}}{2}, d}^{d}$, which equals $\neighselfpath{\maxradius - \frac{(n -1) - \moddist{c}{d}}{2}}{\qtoiof{n-1}{d}}$.
					Hence we obtain: 
					\[
					\begin{array}{r@{\;}c@{\;}l}
						\qtoiof{n-1}{d}
						& = & \neighroot{\frac{(n-1) - \moddist{c}{d}}{2} + 1}{d} w_{\frac{(n - 1) + \moddist{c}{d}}{2} + 1, d}^{d} 
						\smallskip \\
						& = & \neighroot{\frac{n - \moddist{c}{e}}{2}}{d} w_{\frac{n + \moddist{c}{e}}{2} + 1, d}^{d} 
						\smallskip \\
						& = &\neighroot{\frac{n - \moddist{c}{e}}{2}}{d} w_{\frac{n + \moddist{c}{e}}{2}, e}^{e} \rstyle{r}\cstyle{B} 
						\smallskip \\
						& = & \qtoiof{n}{e} \rstyle{r}\cstyle{B}
					\end{array}
					\]
					Lemma~\ref{lemma:core} now ensures $\qtoiof{n}{e} \sim_{\maxradius - n} e$.
					%{(and could already ensure we can find this suffix of $\qtoiof{n}{d}$!
						%However, we had to check that the formula still works here, in particular that the suffix of $\qtoiof{n-1}{d}$ matches long enough the suffix of $d$)}
					%.
					Recalling that the induction hypothesis gives $\qtoiof{n-1}{d} \in \neighof{n-1}{c}{\interleavingof{\I}}{{\basedomainof{\I}}}$, it follows that $\qtoiof{n}{e} \in \neighof{n}{c}{\interleavingof{\I}}{{\basedomainof{\I}}}$.
					Furthermore, notice that $\equivclass{e}$ and $\equivclass{d}$ satisfy all conditions of our additional statement. Since in Case~\caseedgeminus we have $\tbox \models \axiom{r^-}{p}$, reusing 
					$\qtoiof{n-1}{d} = \qtoiof{n}{e} \rstyle{r}\cstyle{B}$ immediately yields $(\qtoiof{n-1}{d}, \qtoiof{n}{e}) \in \rolestyle{P}^{\interleavingof{\I}}$.
					%\tocheck{(that is Remark~\ref{remark:roles-in-interleaving}).}

					Again, we check Property~\ref{property:suffix} holds:
					\[
					\begin{array}{r@{\hspace*{-50pt}}l}
						w_{\maxradius - \frac{n - \moddist{c}{e}}{2}, \qtoiof{n}{e}}^{\qtoiof{n}{e}} \rstyle{r}\cstyle{B}
						\smallskip \\
						& = w_{\maxradius - \frac{n - \moddist{c}{e}}{2} + 1, \qtoiof{n-1}{d}}^{\qtoiof{n-1}{d}} 
						\smallskip \\
						& = w_{\maxradius - \frac{(n - 1) + 1 - (\moddist{c}{d} - 1)}{2} + 1, \qtoiof{n-1}{d}}^{\qtoiof{n-1}{d}}
						\smallskip \\
						& = w_{\maxradius - \frac{(n - 1) - \moddist{c}{d}}{2}, \qtoiof{n-1}{d}}^{\qtoiof{n-1}{d}}
						\smallskip \\
						& = w_{\maxradius - \frac{(n - 1) - \moddist{c}{d}}{2}, d}^{d}
						\smallskip \\
						& = w_{\maxradius - \frac{n - \moddist{c}{e}}{2}, e}^e \rstyle{r}\cstyle{B}
					\end{array}
					\]
					
					We now verify that $\qtoi{n}$ is a homomorphism.
					\begin{itemize}
						\item
						Let $\equivclass{u} \in \cstyle{A}^\J \cap \neighof{n}{\equivclass{c}}{\J}{\equivclass{{\basedomainof{\I}}}}$. By definition of $\cstyle{A}^\J$, we have $e \in \cstyle{A}^{\interleavingof{\I}}$. Since $n \leq \sizeof{q}$ we have $\qtoiof{n}{u} \sim_1 e$, hence applying Remark~\ref{remark:concepts-equiv} we obtain $\qtoiof{n}{u} \in \cstyle{A}^{\interleavingof{\I}}$.
						\item
						Let $(\equivclass{u}, \equivclass{v}) \in \rolestyle{R}^\J \cap (\neighof{n}{\equivclass{c}}{\J}{\equivclass{{\basedomainof{\I}}}} \times \neighof{n}{\equivclass{c}}{\J}{\equivclass{{\basedomainof{\I}}}})$.
						If $\equivclass{u} \in \equivclass{{\basedomainof{\I}}}$ or $\equivclass{v} \in \equivclass{{\basedomainof{\I}}}$, then Condition~2.(b) from the definition of $\sim_1$ applies on $\qtoiof{n}{u}$ or on $\qtoiof{n}{v}$ (recall $\qtoiof{n}{u} \sim_1 u$ and $\qtoiof{n}{v} \sim_1 v$) and gives $(\qtoiof{n}{u}, \qtoiof{n}{v}) \in \rolestyle{R}^\J$.
						Otherwise $\equivclass{u} \notin \equivclass{{\basedomainof{\I}}}$ and $\equivclass{v} \notin \equivclass{{\basedomainof{\I}}}$. 
						Let $n_1, n_2$ be the minimum integers such that $\equivclass{u} \in \neighof{n_1}{\equivclass{c}}{\J}{\equivclass{{\basedomainof{\I}}}}$ and $\equivclass{v} \in \neighof{n_2}{\equivclass{c}}{\J}{\equivclass{{\basedomainof{\I}}}}$. 
						Since $(\equivclass{u}, \equivclass{v}) \in \rolestyle{R}^\J$, we have $n_1 - n_2 \in \{ -1, 0, 1 \}$.
						Definitions of $\moddist{c}{u}$ and $\moddist{c}{v}$ lead to $\sizeof{u} - \sizeof{v} \equiv \moddist{c}{u} - \moddist{c}{v} \mod \maxmod$.
						Lemma~\ref{lemma:core} gives $\sizeof{u} \equiv \sizeof{v} \pm 1 \mod \maxmod$.
						Recall $\moddist{c}{u}, \moddist{d}{v} \in [-\sizeof{q}, \sizeof{q}]$, hence $-2\sizeof{q} - 1 \leq \moddist{c}{u} - \moddist{c}{u} \mp 1 \leq 2\sizeof{q} + 1$.
						Since $\moddist{c}{u} - \moddist{d}{v} \mp 1 \equiv 0 \mod \maxmod$ and $2 \sizeof{q} + 1 < 2 \sizeof{q} + 3$, we must have $\moddist{c}{u} - \moddist{c}{v} = \pm 1$.
						Joint to Remark~\ref{remark:depth-modulo-2}, it excludes the case $n_1 - n_2 = 0$. 
						We are hence left with $n_1 = n_2 \pm 1$. 
						Applying our additional property with $k := \max(n_1, n_2)$ gives $(\qtoiof{n}{u}, \qtoiof{n}{v}) \in \rolestyle{R}^{\interleavingof{\I}}$.
						
					\end{itemize}
					Finally, $\qtoi{n}^{-1}({{\basedomainof{\I}}}) \subseteq \equivclass{{\basedomainof{\I}}}$ is a straightforward consequence of $\qtoiof{n}{u} \sim_1 {u}$ (and again, recall elements from ${\basedomainof{\I}}$ are alone in their equivalent class!).
				\end{proof}

				Let us now complete the proof of Lemma~\ref{lem-quotient} with Lemma~\ref{morphismneighbourhoods} in hand.
				
				\begin{proof}[Proof of Lemma~\ref{lem-quotient}]
%	 \ \newline
%					{\bf Modelhood.}
					We first prove that $\J$ is a model of $\kb$ by considering each possible shape of assertions and axioms:
					\begin{itemize}
						\item
						{\bf $\conceptassertion{A}{a}$.}
						Since $\interleavingof{\I}$ is a model, we have $\istyle{a} \in \cstyle{A}^{\interleavingof{\I}}$. Therefore, the definition of $\cstyle{A}^{\J}$ gives $\equivclass{\istyle{a}} = \istyle{a} \in \cstyle{A}^{\J}$.
						
						\item
						{\bf $\roleassertion{P}{a}{b}$.}
						Since $\interleavingof{\I}$ is a model, we have $(\istyle{a}, \istyle{b}) \in \rolestyle{P}^{\interleavingof{\I}}$. Therefore, the definition of $\rolestyle{P}^{\J}$ gives $(\equivclass{\istyle{a}}, \equivclass{\istyle{b}}) = (\istyle{a}, \istyle{b}) \in \rolestyle{P}^{\J}$.
						
						\item
						{\bf $\axiom{\top}{A}$.}
						Let $u \in \top^{\J} = \domain{\J}$. By definition of $\domain{\J}$, there exists $u_0 \in \domain{\interleavingof{\I}}$ such that $\equivclass{u_0} = u$. Since $u_0 \in \top^{\interleavingof{\I}}$ and $\interleavingof{\I}$ is a model, it ensures $u_0 \in \cstyle{A}^{\interleavingof{\I}}$. Therefore the definition of $\cstyle{A}^{\J}$ gives $u = \equivclass{u_0} \in \cstyle{A}^{\J}$.
						
						\item
						{\bf $\axiom{A_1 \sqcap A_2}{A}$.}
						Let $u \in (\cstyle{A}_1 \sqcap \cstyle{A}_2)^{\J}$. By definition of $\cstyle{A}_1^{\J}$ and $\cstyle{A}_2^{\J}$, there exists $u_1 \in \cstyle{A}_1^{\interleavingof{\I}}$ and $u_2 \in \cstyle{A}_2^{\interleavingof{\I}}$ with $\equivclass{u_1} = \equivclass{u_2} = u$.
						Remark~\ref{remark:concepts-equiv} ensures $u_1$ and $u_2$ satisfy the same concepts, that is in particular $u_1 \in (\cstyle{A}_1 \sqcap \cstyle{A}_2)^{\interleavingof{\I}}$.
						Since $\interleavingof{\I}$ is a model, it ensures $u_1 \in \cstyle{A}^{\interleavingof{\I}}$, yielding by definition of $\cstyle{A}^\J$ that $u = \equivclass{u_1} \in \cstyle{A}^\J$.
						
						\item
						{\bf $\axiom{A_1}{\existsrole{R}.{A_2}}$.}
						Let $u \in \cstyle{A}_1^{\J}$. By definition of $\cstyle{A_1}^{\J}$ there exists $u_0 \in \cstyle{A_1}^{\interleavingof{\I}}$ with $\equivclass{u_0} = u$.
						Since $\interleavingof{\I}$ is a model, it ensures there exists $v_0 \in \cstyle{A_2}^{\interleavingof{\I}}$ with $(u_0, v_0) \in \rolestyle{R}^{\interleavingof{\I}}$. By definition of $\cstyle{A_2}^\J$ and $\rolestyle{R}^\J$, the element $v := \equivclass{v_0}$ satisfies both $v \in \cstyle{A_2}^{\J}$ and $(u, v) \in \rolestyle{R}^{\J}$, that is $u \in (\existsrole{R}.{A_2})^{\J}$.
						
						\item
						{\bf $\axiom{\existsrole{R}.{A_1}}{A_2}$.}
						Let $u \in (\existsrole{R}.{A_1})^{\J}$, that is, there exists $v \in \cstyle{A}_1^{\J}$ with $(u, v) \in \rolestyle{R}^{\J}$.
						By definition of $\cstyle{A}_1^\J$ and $\rolestyle{R}^\J$, there exist $(u_0, v_0) \in \rolestyle{R}^{\interleavingof{\I}}$ and $v_1 \in \cstyle{A}_1^{\interleavingof{\I}}$ such that $\equivclass{u_0} = u$ and $\equivclass{v_0} = \equivclass{v_1} = v$.
						Remark~\ref{remark:concepts-equiv} ensures $v_0$ and $v_1$ satisfy the same concepts,  in particular $u_0 \in (\existsrole{R}.{A_1})^{\interleavingof{\I}}$.
						Since $\interleavingof{\I}$ is a model, this ensures $u_0 \in \cstyle{A}_2^{\interleavingof{\I}}$, yielding by definition of $\cstyle{A}_2^\J$ that $u = \equivclass{u_0} \in \cstyle{A}_2^\J$.
						
						\item
						{\bf \axnotright.}
						By contradiction, assume $u \in \cstyle{A}^{\J} \cap \cstyle{B}^{\J}$.
						By definition there exists $v \in \cstyle{A}^{\interlacingof{\I}}$ and $w \in \cstyle{B}^{\interlacingof{\I}}$ with $\equivclass{v} = \equivclass{w} = u$.
						Remark~\ref{remark:concepts-equiv} ensures $v$ and $w$ satisfy the same concepts, contradicting $\interlacingof{\I}$ being a model.
						
						\item
						{\bf \axnotleft.}
						Let $u \in \cstyle{\lnot B}^{\J}$.
						By definition of $\domain{\J}$, there exists $v \in \interlacingof{\I}$ such that $\equivclass{v} = u$.
						Since $u \notin \cstyle{B}^{\J}$, we have ${v} \notin \cstyle{B}^{\interlacingof{\I}}$.
						Hence $\interlacingof{\I}$ being a model gives ${v} \in \cstyle{A}^{\interlacingof{\I}}$, yielding by definition $u = \equivclass{v} \in \cstyle{A}^{\J}$.
						
						\item
						{\bf $\axiom{p}{r}$.}
						Let $(u, v) \in \rolestyle{P}^{\J}$.
						By definition of $\rolestyle{P}^{\J}$, there exists $(u_0, v_0) \in \rolestyle{P}^{\interleavingof{\I}}$ such that $\equivclass{u_0} = u$ and $\equivclass{v_0} = v$.
						Since $\interleavingof{\I}$ is a model, it ensures $(u_0, v_0) \in \rolestyle{R}^{\interleavingof{\I}}$, hence $(\equivclass{u_0}, \equivclass{v_0}) = (u, v) \in \rolestyle{R}^{\J}$ by definition of $\rolestyle{R}^\J$.
					\end{itemize}
				
					It remains to prove that $\J$ is minimal w.r.t.\ $<_\CP$.
					We use Lemma~\ref{lem-lemma5} with \Imc as reference model, and it
					suffices to show that the preconditions of that lemma are
					satisfied. This is in fact
                                        easy based on
                                        Remark~\ref{remark:concepts-equiv},
                                        following
                                        exactly the same lines as the
                                        final part of the proof of Lemma~\ref{lem_interlacing_is_countermodel}.
					
					We now prove $\J \not\models q(\bar{a})$.
					By contradiction, assume we have $p \in q$ and a homomorphism $\match : p \rightarrow \J$ s.t.\ $p(\bar{x}) = \bar{a}$.
					Consider the set of variables $\tv_\match := \{ v \mid v \in \ty, \match(v) \notin \equivclass{{\basedomainof{\I}}} \}$.
					Let $\mathcal{C}$ denote the set of connected components of $\tv_\match$ in $p_{\mid \tv_\match}$ (that is the query obtained by keeping only those atoms containing variables from $\tv_\match$).
					For each connected component $C \in \mathcal{C}$, choose a reference variable $v_C \in C$.
					Since $\match$ is a homomorphism and $\sizeof{C} \leq \sizeof{q}$, every variable $v \in C$ satisfies $\match(v) \in \neighof{\sizeof{q}}{\match(v_C)}{\J}{\equivclass{{\basedomainof{\I}}}}$. Let $d_C \in \domain{\interleavingof{\I}}$ denote one's favorite representative for the class of $\match(v_C)$ (that is $\equivclass{d_C} = \match(v_C)$).
					From Lemma~\ref{morphismneighbourhoods}, we have a homomorphism $\rho_C : \neighof{\sizeof{q}}{\match(v_C)}{\J}{\equivclass{{\basedomainof{\I}}}} \rightarrow \neighof{\sizeof{q}}{d_C}{\interleavingof{\I}}{{{\basedomainof{\I}}}}$.
					Using these $\rho_C$, one per $C \in \mathcal{C}$, we define:
					$$
					\begin{array}{rcl}
						\match' : \tx \cup \ty & \rightarrow & \domain{\interleavingof{\I}} \\
						v & \mapsto & \left\{
						\begin{array}{ll}
							\rho_C(\match(v)) & \text{ if } v \in C, C \in \mathcal{C} \\
							e & \text{ if } \match(v) = \equivclass{e} \in \equivclass{{\basedomainof{\I}}}
						\end{array}  \right.
					\end{array}
					$$
					Since each $\rho_{C}$ is a homomorphism (again Lemma~\ref{morphismneighbourhoods}), we can check the overall $\match'$ is also a homomorphism:
					\begin{itemize}
						\item
						Consider $\cstyle{A}(v) \in q$. If $v \in C$ for some $C \in \mathcal{C}$, then $\rho_C$ being a homomorphism gives $\match'(v) \in \cstyle{A}^{\interleavingof{\I}}$. Otherwise $\match(v) = \equivclass{e} \in \equivclass{{\basedomainof{\I}}}$, but since $\match$ is a homomorphism we have $\match(v) \in \cstyle{A}^\J$. Since $\equivclass{e} = \{ e \}$ and by definition of $\cstyle{A}^\J$, it ensures $e \in \cstyle{A}^{\interleavingof{\I}}$, that is $\match'(v) \in \cstyle{A}^{\interleavingof{\I}}$.
						\item 
						Consider $\rolestyle{R}(u, v) \in q$. 
						\begin{itemize}
							\item
							If both $\match(u), \match(v) \notin \equivclass{{\basedomainof{\I}}}$, then we can find $C \in \mathcal{C}$ such that $u, v \in C$, and then we use $\rho_C$ being a homomorphism.
							\item
							If both $\match(u), \match(v) \in \equivclass{{\basedomainof{\I}}}$, then the definition of $\rolestyle{R}^\J$ provides $(u_0, v_0) \in \rolestyle{R}^{\interleavingof{\I}}$ with $\equivclass{u_0} = \match(u) \in \equivclass{{\basedomainof{\I}}}$ and $\equivclass{v_0} = \match(v) \in \equivclass{{\basedomainof{\I}}}$. Hence $\equivclass{u_0} = \{ u_0 \}$ and $\equivclass{v_0} = \{ v_0 \}$, which gives $(\match'(u), \match'(v)) \in \rolestyle{R}^{\interleavingof{\I}}$.
							\item
							If $\match(u) \notin \equivclass{{\basedomainof{\I}}}$ and $\match(v) \in \equivclass{{\basedomainof{\I}}}$, then we have $\match'(u) = \rho_C(\match(u))$ for some $C \in \mathcal{C}$. Lemma~\ref{morphismneighbourhoods} ensures $\match'(u) \sim_1 \match(u)$, and since $\match$ is a homomorphism, we also have $(\match(u), \match(v)) \in \rolestyle{R}^\J$. Therefore we can apply Condition~2.(b) and we obtain $(\match'(u), \match'(v)) \in \rolestyle{R}^{\interleavingof{\I}}$.
						\end{itemize}
					\end{itemize}
					In particular, $\match'$ is a homomorphism s.t.\ $\match'(\bar{x}) = \bar{a}$, hence the desired contradiction with $\interleavingof{\I}$ being a countermodel.
					
%					\paragraph*{Size of the model.}
%					Finally, an equivalence class $\equivclass{d}$ is characterized by:
%					$|d| \mod \maxmod$, that is one equivalence class among $\maxmod$ possible classes;
%					$w_{\maxradius, d}^d$, that is a word over an alphabet with at most $|\tbox|$ symbols and a length at most $\maxradius$;
%					and $\chi_{\maxradius, d}$, that is a function from words over an alphabet with at most $|\tbox|$ symbols and length at most $2 \maxradius$ to a set with size at most $|{\basedomainof{\I}}| { +  2^{\sizeof{\signatureof}}} + 1$.
%					Therefore, the number of possibly different equivalence classes, that is $|\domain{\J}|$, is at most
%					\(\displaystyle{(2\sizeof{q} + 3) \times |\tbox|^{\sizeof{q} + 2} \times (|{\basedomainof{\I}}| { +  2^{\sizeof{\signatureof}}} + 1)^{|\tbox|^{2\sizeof{q} + 2}}}\). 
%					Recalling that Lemma~\ref{lemma:basic-counter-models} allows us to assume $|{\basedomainof{\I}}| \leq |\individuals| + (\sizeof{\individuals(\abox)} + 3 \, \sizeof{\tbox}\, 2^{\sizeof{\tbox}})^\sizeof{q} |q|$, we have the claimed bounds for the size of $\J$, which concludes the proof of Theorem~\ref{theorem:quotient}.
				\end{proof}

\thmdataupperalchi*
\begin{proof}
  Assume that we are given a circumscribed $\ALCHI$ KB $\Circ(\Kmc)$ with
  $\Kmc=(\Tmc,\Amc)$ and a UCQ $q$.  We describe a $\Sigma^p_2$ procedure to decide whether $\Circ(\Kmc) \not\models
  q(\bar a)$.
  
  We first guess an interpretation \Imc with $|\Delta^\Imc| \leq |\Amc|+(2^{|\Tmc|+2}+1)^{3|q|}$.
  We next check in polynomial time that \Imc is a model of \Kmc
  and that \Imc is minimal w.r.t.\ $<_{\mn{CP}}$ by co-guessing a
  model \Jmc of \Kmc with $\Jmc <_{\mn{CP}} \Imc$. If one of the
  checks fails, we reject. Otherwise, for each CQ $p$ in $q$, we verify whether there is a homomorphism from $p$ to \Imc. This is done brute-force, in time $O(|\Delta^{\Imc}|^{|q|})$.
  We accept if there is no such homomorphism and reject otherwise.
  This procedure is correct due to Lemma~\ref{lem-quotient}.
\end{proof}

\thmdatalowerel*
%
%!TEX root = main.tex
%
\begin{proof}
	\renewcommand{\cstyle}[1]{{\mathsf{#1}}}
	\renewcommand{\rstyle}[1]{{\mathsf{#1}}}

	\newcommand{\cvar}{\cstyle{Var}} 
	\newcommand{\cmin}{\cstyle{Min}}
	\newcommand{\cpos}{\cstyle{Pos}}
	\newcommand{\cneg}{\cstyle{Neg}}
	\newcommand{\ctrue}{\cstyle{True}}
	\newcommand{\cfalse}{\cstyle{False}}
	\newcommand{\cfirst}{\cstyle{First}}
	\newcommand{\clast}{\cstyle{Last}}
	\newcommand{\cposneg}{\cstyle{\cpos \sqcap \cneg}}
	\newcommand{\cbad}{\cstyle{Bad}}
	\newcommand{\cgoal}{\cstyle{Goal}}
	\newcommand{\creserved}{\cstyle{Reserved}}
	\newcommand{\csavetrue}{\cstyle{SaveTrue}}
	\newcommand{\csavefalse}{\cstyle{SaveFalse}}
	
	\newcommand{\rval}{\rstyle{eval}}
	\newcommand{\rsave}{\rstyle{freeze}}
	\newcommand{\rsavetrue}{\rsave_\ctrue}
	\newcommand{\rsavefalse}{\rsave_\cfalse}
	\newcommand{\rclause}{\rstyle{next}}
	\newcommand{\rpos}{\rstyle{pos}}
	\newcommand{\rneg}{\rstyle{neg}}
	\newcommand{\raux}{\rstyle{checks}}
	
	\newcommand{\iclause}{\istyle{c}}
	\newcommand{\ivarx}{\istyle{x}}
	\newcommand{\ivary}{\istyle{y}}
	\newcommand{\itrue}{\istyle{t}}
	\newcommand{\ifalse}{\istyle{f}}
	\newcommand{\isave}{{x}}
	\newcommand{\iaux}{a}

	We give a polynomial time reduction from
        $\forall\exists\mn{3SAT}$ \cite{stockmeyer76}, that is, the
        problem to decide whether a given $\forall\exists$-3CNF
        sentence is true. Thus let 
        $\forall \bar x \exists \bar y \, \vp$ be such a sentence where
        $\bar x=x_1 \cdots x_m$, $\bar y = y_1 \cdots y_n$, and
        $\varphi=\bigwedge^\ell_{i=1} \bigvee_{j=1}^3 L_{ij}$ with
        $L_{ij} = v$ or $L_{ij} = \lnot{v}$ for some
        $v \in \{ x_1, \dots x_m, y_1, \dots, y_n \}$.  We construct a
        circumscribed \EL KB $\circkb$ and an atomic query
        $\query$ such that $\circkb \models \query$ iff
        $\forall \bar x \exists \bar y \, \varphi$ is true.

        The circumscription pattern minimizes the concept name $\cmin$
        and lets all other concept names vary, and the instance query
        is $q=\cgoal(c_0)$.
	We now describe how to construct the KB $\kb=(\Tmc,\Amc)$, not
        strictly
        separating \Tmc from \Amc. We  first represent $\varphi$ in
        \Amc using the following assertions:
\begin{align}
	\cvar(v)
	& \quad \text{for all } v \in \tx \cup \ty
	\label{eq:vars} \\
	\rpos_j(\iclause_i, v) 
	& \quad \text{if } L_{ij} = v
	\label{eq:pos} \\
	\rneg_j(\iclause_i, v) 
	& \quad \text{if } L_{ij} = \lnot v 
	\label{eq:neg} \\
	\cfirst(\iclause_0), \clast(\iclause_\ell) \\
	\rclause(\iclause_{i}, \iclause_{i+1}) 
	& \quad 0 \leq i < \ell
\end{align}
	We next require each variable to be connected via the role
        name $\rval$ to two instances of $\cmin$, a ``positive'' one and a ``negative'' one:
\begin{align}
	\cvar & \sqsubseteq \, \exists \rval.(\cmin \sqcap \cpos)
	\label{eq:variables_have_pos_value} \\
	\cvar & \sqsubseteq \, \exists \rval.(\cmin \sqcap \cneg).
	\label{eq:variables_have_neg_value}
\end{align}
As potential witnesses for the above existential restrictions, we provide 
several instances of $\cmin$, a positive and a negative one for each truth value:
\begin{align}
	\cmin(\itrue_+) 	& & \cmin(\itrue_-) 	& &\cmin(\ifalse_+) 	& & \cmin(\ifalse_-) \\
	\ctrue(\itrue_+) 	& & \ctrue(\itrue_-) 	& &
                                                            \cfalse(\ifalse_+)
                                                                                & & \cfalse(\ifalse_-)  	\label{eq:truth_values_null} \\
	\cpos(\itrue_+) 	& & \cneg(\itrue_-) 	& & \cpos(\ifalse_+) 	& &\cneg(\ifalse_-)
	\label{eq:truth_values}
\end{align}
	If such an instance is used as a witness, then the respective
        variable
        is assigned the corresponding truth value:
\begin{align}
	\exists \rval.\ctrue & \sqsubseteq \, \ctrue
	\label{eq:variables_import_true_value} \\
	\exists \rval.\cfalse & \sqsubseteq \, \cfalse
	\label{eq:variables_import_false_value}
	%	%
	%	\exists \rval.(\ctrue \sqcap \cpos) & \sqsubseteq \, \cmin \sqcap \ctrue
	%	\label{eq:variables_import_pos_true_value} \\
	%	\exists \rval . (\cfalse \sqcap \cpos) & \sqsubseteq \, \cmin \sqcap \cfalse
	%	\label{eq:variables_import_pos_false_value} \\
	%	%
	%	\exists \rval.(\ctrue \sqcap \cneg) & \sqsubseteq \, \cmin \sqcap \cfalse
	%	\label{eq:variables_import_neg_true_value} \\
	%	\exists \rval . (\cfalse \sqcap \cneg) & \sqsubseteq \, \cmin \sqcap \ctrue
	%	\label{eq:variables_import_neg__false_value} 
\end{align}
We next introduce a mechanism for ``freezing'' the  valuation of the
$\tx$ variables in the sense that this valuation must be identical in
all models that are smaller w.r.t.\ `$<_\CP$'. This reflects the fact
that the $\tx$ variables are chosen prior to the $\ty$ variables.
Freezing is achieved by the following: % assertions and
% axioms, which ensure that in every model $\I$, if
% $x \in \cstyle{B}^\I$ with $\cstyle{B} \in \{ \ctrue, \cfalse \}$,
% then $\isave_\cstyle{B} \in \cmin^\I$ ($*_\rsave$).
%
\begin{align}
	\rsavetrue(\isave_{\ctrue}, x)
	& \quad \text{for all } x \in \tx 
	\label{eq:x_vars} \\
\rsavefalse(\isave_{\cfalse}, x) 
	& \quad \text{for all } x \in \tx 
	\label{eq:x_varstwo} \\
	%
%	\csavetrue(\isave_{v, t}), \csavefalse(\isave_{v, f})
%	& \quad v \in \tx
%	\label{eq:x_vars} \\
	\exists \rsavetrue.\ctrue & \sqsubseteq \, \cmin
	\label{eq:save_true_forward} \\
	\exists \rsavefalse.\cfalse & \sqsubseteq \, \cmin
	\label{eq:save_false_forward}
	%	\exists \rsave.(\cmin \sqcap \csavetrue) & \sqsubseteq \, \ctrue
	%	\label{eq:save_true_backward} \\
	%	\exists \rsave.(\cmin \sqcap \csavefalse) & \sqsubseteq \, \cfalse
	%	\label{eq:save_false_backward} 
\end{align}
Note that if a variable $x \in \bar x$ satisfies $\ctrue$, then
$\isave_{\ctrue}$ must satisfy $\cmin$, and likewise for $\cfalse$.
Since $\cmin$ is minimized, any model that is smaller w.r.t.\
`$<_\CP$' must thus have the same valuation for the variables in
$\bar x$.

There is a problem with this approach, though. On the one hand,
$\cmin$ acts as an incentive to use $t_+$, $t_-$, $f_+$, and $f_-$ as
witnesses for~(\ref{eq:variables_have_pos_value})
and~(\ref{eq:variables_have_neg_value}). On the other hand, freezing
implies that if a variable $x$ does so, then it triggers an instance
of $\cmin$ on $\isave_{\ctrue}$ or $\isave_{\cfalse}$.  There can
hence be models in which fresh instances $e^+$ of $\cmin \sqcap \cpos$
and $e^-$ of $\cmin \sqcap \cneg$ are created instead, leading to $x$
satisfying neither $\ctrue$ nor $\cfalse$.  Since $\cmin$ is
minimized, however, it must then hold that $e^+ = e^-$, \emph{i.e.}\
$x$ is connected to an instance of $\cpos \sqcap \cneg$.  Such models
are easily detected due to our use of positive and negative instances
of $\cmin$. We make sure that problematic models satisfy the query
and are thus ruled out as a countermodel:
%
% each truth value and always satisfy the query \emph{via} the
% following assertions and axioms
%($*_\cposneg$).
%
\begin{align}
	\raux(\iclause_0, v)
  & \quad \text{for all } v \in \tx \cup \ty
    	\label{eq:misslabel}
	\\
	\exists \raux.( \exists \rval.(\cpos \sqcap \cneg) ) & \sqsubseteq \, \cgoal
	\label{eq:pos_neg_badness}
\end{align}
Now that we can assume that each variable is given a proper truth value, we
encode the evaluation of $\varphi$.  The truth value of each clause is
represented by concept name $\ctrue$ or $\cfalse$ holding on the
corresponding individual $\iclause_{i}$.  Let
$X_i = \{\exists \rpos_i . \cfalse, \exists \rneg_i . \ctrue \}$ and
$X = X_1 \times X_2 \times X_3$, and add:
\begin{align}
	\exists \rpos_i.\ctrue & \sqsubseteq \, \ctrue 
		\quad \text{for } 1 \leq i \leq 3
	\label{eq:clause_true_pos} \\
	\exists \rneg_i.\cfalse & \sqsubseteq \, \ctrue 
		\quad \text{for } 1 \leq i \leq 3 
	\label{eq:clause_true_neg} \\
	C_1 \sqcap C_2 \sqcap C_3 &\sqsubseteq \, \cfalse 
		\quad \text{for all } (C_1,C_2,C_3) \in X \label{eq:clause_false}
\end{align}
We next determine the truth value of $\varphi$ by walking along the
$\rclause$-chain of clauses in \Amc. If the formula evaluates to true,
then we make $\cgoal$ true at $c_0$ and thus satisfy the query.  If it
evaluates to false, we make $\cmin$ true at $\iclause_0$. This
reflects the existential quantification over the variables $\bar y$:
when they exist, we prefer valuations that make $\varphi$ true since
making $\varphi$ false gives us an additional instance of the
minimized concept name. Formally:
\begin{align}
	\exists \rclause.( \clast \sqcap \ctrue ) & \sqsubseteq \, \clast 
	\label{eq:gather_true} \\
	\cfirst \sqcap \clast & \sqsubseteq \, \cgoal 
	\label{eq:true_gives_goal} \\
	\exists \rclause.\cfalse & \sqsubseteq \, \cfalse 
	\label{eq:gather_false} \\
	\cfalse \sqcap \cfirst & \sqsubseteq \, \cmin
	\label{eq:false_gives_min}
\end{align}
It remains to solve another technical issue. As explained above,
we make 
$\cmin$ true not only at the individuals
$\itrue_+, \itrue_-, \ifalse_+, \ifalse_-$, but also at some
individuals $\isave_{\ctrue}, \isave_{\cfalse}$ and possibly at
$\iclause_0$.  However, we do not want these individuals to be used as
witnesses for ~(\ref{eq:variables_have_pos_value})
and~(\ref{eq:variables_have_neg_value}) as then there could be
variables that do not receive a truth value.
We can again detect this and then make $\cmin$ true at a
fresh individual $a$:
% The following assertions and axioms are used to detect the above
% behavior which triggers an instance of $\cmin$ on auxiliary individual
% $\iaux$. % ($*_\iaux$).
%
\begin{align}
	\raux(\iaux, \isave_{\ctrue}), \raux(\iaux, \isave_{\cfalse}) & \quad \text{for all } x \in \tx
	\label{eq:reserved_save} \\
	\raux(\iaux, \iclause_0) 
	\label{eq:reserved_clause} \\
	\exists \raux.\cpos & \sqsubseteq \, \cmin
	\label{eq:pos_gathers} \\
	\exists \raux.\cneg & \sqsubseteq \, \cmin.
	\label{eq:neg_gathers}
\end{align}
We now set $\kb = (\tbox, \abox)$ with $\tbox$ and $\abox$ as defined
above.  It is easily verified that $\tbox$ does not depend on the
instance of $\forall\exists\mn{3SAT}$.  It remains to
prove the following claim:
\\[2mm]
{\bf Claim.}
$\circkb \models \cgoal(\iclause_0)$ iff $\forall \tx \, \exists \ty \, \varphi(\tx, \ty)$.
\\[2mm]
        To prepare for the proof of the claim, we first observe that
        every valuation $V$ for $\bar x \cup \bar y$ gives rise
        to a corresponding model $\Imc_V$ of \Kmc.
        We use domain 
        $
          \Delta^{\Imc_V} = \Ind(\Amc) 
        $
        and set
        $$
        \begin{array}{rcl}
          A^{\Imc_V} &=& \{ a \mid A(a) \in \Amc \} \\[1mm]
          r^{\Imc_V} &=& \{ (a,b) \mid r(a,b) \in \Amc \}
        \end{array}
        $$
        for all concept names $A \in \{ \cvar, \cfirst, \cpos,\cneg \}$
        and all %$r^{\Imc_V} = \{ (a,b) \mid r(a,b) \in \Amc \}$ for all
        role names $$r \in \{ \rpos_i, \rneg_i, \rclause, \rsavetrue,
        \rsavefalse, \raux \}.$$
        We interpret $\rval$ according to $V$, that is
        $$
        \begin{array}{r@{\;}c@{\;}l}
          \rval^{\Imc_V} &=& 
          \{ (v,\itrue_+), (v,\itrue_-) \mid v \in
                             \bar x \cup \bar y \text{ and } V(v)=1\} \,\cup\\[1mm]
                            && \{ (v,\ifalse_+), (v,\ifalse_-) \mid v \in
                             \bar x \cup \bar y \text{ and } V(v)=0\}. 
        \end{array}
        $$
        The valuation $V$ also assigns a truth value to each clause,
        let $V(c_i)$ denote the value of the $i^{th}$ clause. Put
        $$
        \begin{array}{rcl}
          \ctrue^{\Imc_V} &=&\{ \itrue_+, \itrue_- \} \, \cup \\[1mm]&& \{  v \mid v \in
                             \bar x \cup \bar y \text{ and } V(v)=1 \}
                              \, \cup \\[1mm]
          &&\{ c_i \mid 1 \leq i \leq \ell \text{ and } V(c_i)=1 \} \\[1mm]
          \cfalse^{\Imc_V} &=&\{ \ifalse_+, \ifalse_- \} \, \cup \\[1mm] &&\{ v \mid v \in
                             \bar x \cup \bar y \text{ and } V(v)=0 \}                              \, \cup \\[1mm]
          &&\{ c_i \mid 1 \leq i \leq \ell \text{ and } V(c_i)=1 \}.
        \end{array}
        $$
        Let $q$ be largest such that valuation $V$ makes the last
        $q$ of the $\ell$ clauses in $\varphi$ true. Put
        $$
          \clast^{\Imc_V} = \{ c_\ell \} \cup \{
          c_{\ell-1},\dots,c_{\ell-q+1}\}. 
          $$
          It remains to interpret $\cmin$ and $\cgoal$. As an
          abbreviation, set 
          $$
          \begin{array}{rcl}
             M &=& \{ \itrue_+, \itrue_-, \ifalse_+,
                                 \ifalse_-,  \} \, \cup \\[1mm]
            && \{ \isave_{\ctrue} \mid x \in \bar x \text{ and }
               V(x)=1 \} \\[1mm]
            && \{ \isave_{\cfalse} \mid x \in \bar x \text{ and }
               V(x)=0 \}.
          \end{array}
          $$
          Now if $V$ satisfies all clauses in $\varphi$, then put
          $$
          \begin{array}{rcl}
              \cmin^{\Imc_V} &=& M \\[1mm]
            \cgoal^{\Imc_V} &=& \{ c_0 \}
          \end{array}
          $$
          and otherwise put
          $$
          \begin{array}{rcl}
              \cmin^{\Imc_V} &=& M \cup \{ c_0 \}\\[1mm]
            \cgoal^{\Imc_V} &=& \emptyset.
          \end{array}
          $$
          It is straightforward to verify that $\Imc_V$ is indeed a
          model of \Kmc.  Moreover, $\Imc_V \models q$ if and only if
          $V$ satisfies all clauses in $\varphi$. Now for the actual
          proof of the claim

        \smallskip
        
	``$\Rightarrow$''.  Assume that
        $\forall \bar x \exists \bar y \, \varphi$ is false, and thus
        $\exists \bar x \forall \bar y \, \lnot \varphi$ is true and
        there is a valuation $V_{\bar x}: \bar x \rightarrow \{0,1\}$
        such that $\forall \bar y \, \lnot \varphi'$ holds where
        $\varphi'$ is obtained from $\varphi$ by replacing every
        variable $x \in \bar x$ with the truth constant
        $V_{\bar x}(x)$. Consider any extension $V$ of $V_{\bar x}$ to
        the variables in $\bar y$. Then the interpretation
        $\Imc_V$ is a model of \Kmc with
        $\Imc_V \not\models q$, to show that
        \mbox{$\circkb \not\models q$} it remains to prove
        that there is no model $\Jmc <_\CP \I$ of \Kmc.

        Assume to the contrary that there is such a $\Jmc$. Then
          $$
          \begin{array}{rcl}
             \cmin^\Jmc \subsetneq \cmin^{\Imc_v} &=& \{ \itrue_+, \itrue_-, \ifalse_+,
                                 \ifalse_-, c_0  \} \, \cup \\[1mm]
            && \{ \isave_{\ctrue} \mid x \in \bar x \text{ and }
               V(x)=1 \} \\[1mm]
            && \{ \isave_{\cfalse} \mid x \in \bar x \text{ and }
               V(x)=0 \}.
          \end{array}
          $$
          \Jmc being a model of \Amc implies
          $\{ \itrue_+, \itrue_-, \ifalse_+, \ifalse_-\}
          \subseteq \cmin^\Jmc$.
          
          Since $\J$ satisfies~(\ref{eq:vars}),
          (\ref{eq:variables_have_pos_value}) and
          (\ref{eq:variables_have_neg_value}), we can choose for each
          $v \in \tx \cup \ty$ an element
          $v^+ \in \cpos^\J \cap \cmin^\J$ such that
          $(v, v^+) \in \rval^\J$. Since \Jmc
          satisfies~(\ref{eq:reserved_save}) to~(\ref{eq:neg_gathers})
          and $a \notin \cmin^\Jmc$, we must have
          $v^+ \in \{ \itrue_+, \itrue_-, \ifalse_+, \ifalse_-\}$.
                    From (\ref{eq:variables_import_true_value}) 
          and~(\ref{eq:variables_import_false_value}), it follows that 
          $v \in \ctrue^\J \cup \cfalse^\J$. 
          Let $V_\Jmc$ be the valuation defined by setting, for all
          $v \in \bar x \cup \bar y$:
          $$
          V_\Jmc(v) = \left \{
            \begin{array}{ll}
            1 & \text{ if } v\in \ctrue^\Jmc \\[1mm]
            0 &\text{ otherwise.}
            \end{array}
            \right.
          $$
          From (\ref{eq:x_vars})
          to~(\ref{eq:save_false_forward}), we
          further obtain that
          $$
          \begin{array}{rcl}
             \cmin^\Jmc &\supseteq& \{ \isave_{\ctrue} \mid x \in \bar x \text{ and }
               V_\Jmc(x)=1 \} \\[1mm]
            && \{ \isave_{\cfalse} \mid x \in \bar x \text{ and }
               V_\Jmc(x)=0 \}.
          \end{array}
          $$
          which implies that
          $\cmin^{\Imc_V} \setminus \cmin^\Jmc = \{ c_0 \}$ and,
          through $\Jmc <_\CP \I$, that $V$ and $V_\Jmc$ agree on
          $\bar x$. From $\forall \bar y \, \lnot \varphi'$, it
          follows that $V_\Jmc \not\models \varphi$ and thus the
          soundness of the evaluation of $\varphi$ implemented by~(\ref{eq:clause_true_pos})
          to~(\ref{eq:false_gives_min}) yields
          $\iclause_0 \in \cmin^\J$, a contradiction.

\smallskip
        
``$\Leftarrow$''.  Assume that
$\forall \tx \, \exists \ty \, \varphi(\tx, \ty)$ is true and let $\I$ be
a model of $\circkb$. We have to show that $\Imc \models q$.

Since $\I$ satisfies~(\ref{eq:vars}),
(\ref{eq:variables_have_pos_value}) and
(\ref{eq:variables_have_neg_value}), we can choose for every
$v \in \tx \cup \ty$ two elements $v^+ \in \cpos^\I \cap \cmin^\I$ and
$v^- \in \cneg^\I \cap \cmin^\I$ such that
$(v, v^+), (v, v^-) \in \rval^\I$. If $v^+ = v^-$ for some $v$, then
by~(\ref{eq:misslabel}) and~(\ref{eq:pos_neg_badness}) we have
$c_0 \in \cgoal^\Imc$, thus $\Imc \models q$ and we are done.
For the remaining proof, we may thus assume $v^+ \neq v^-$
for all $v \in \tx \cup \ty$.

We next argue that for all $v \in \bar x \cup \bar y$,
\begin{equation*}
\{ v^+, v^- \} \cap \{ \itrue_+, \itrue_-, \ifalse_+, \ifalse_-\}
\neq \emptyset 
\tag{$\ast$}
\end{equation*}
Assume to the contrary that ($*$) is violated by some
\mbox{$v_0 \in \bar x \cup \bar y$}, that is,
$v_0^+, v_0^- \notin \{ \itrue_+, \itrue_-, \ifalse_+, \ifalse_-\}$.
Then
we can find a model $\Jmc <_\CP \Imc$ of \Kmc, contradicting the
minimality of \Imc.  We distinguish two cases:
\begin{enumerate}
  \item $a \in \cmin^\Imc$. 

    We construct \Jmc by choosing $\Delta^\Jmc = \Ind(\Amc)$ and
    interpreting the concept names $\cvar$ and $\cfirst$ as well as
    the role names
    $\rpos_i, \rneg_i, \rclause, \rsavetrue, \rsavefalse, \raux$ as in
    the initially defined interpretations $\Imc_V$---note that this is
    independent of $V$. We further set
        $$
        \begin{array}{rcl}
          \rval^{\Jmc} &=& 
          \{ (v,a) \mid v \in
                           \bar x \cup \bar y \} \\[1mm]
          \cpos^\Jmc &=& \{ \itrue_+,\ifalse_+,a \} \\[1mm]
          \cneg^\Jmc &=& \{ \itrue_-,\ifalse_-,a \} \\[1mm]
          \ctrue^{\Jmc} &=&\{ \itrue_+, \itrue_- \} \\[1mm]
          \cfalse^{\Jmc} &=&\{ \ifalse_+, \ifalse_- \} \\[1mm]
          \clast^{\Jmc} &=& \{ c_\ell \} \\[1mm] 
           \cmin^\Jmc  &=& \{ \itrue_+, \itrue_-, \ifalse_+,
                                 \ifalse_-,a  \} \\[1mm]
            \cgoal^{\Jmc} &=& \emptyset.
          \end{array}
          $$
          It can be verified that \Jmc is a model of \Kmc. Moreover,
          at least one of the (distinct!) $v_0^+, v_0^-$ is not in $\cmin^\Jmc$ and
          thus $\cmin^\Jmc \subsetneq \cmin^\Imc$ and 
          $\Jmc <_\CP \Imc$.

\item $a \notin \cmin^\Imc$. 

  We construct \Jmc exactly as in Case~1, but with $a$ replaced by
  $v^+_0$
  in $\cpos^\Jmc$, $\cneg^\Jmc$, and $\cmin^\Jmc$.
  
% \item $v_0^+ = v_0^-$.

%   We argue that then
%   $v_0^+ \in \{ \isave_{\ctrue}, \isave_{\cfalse} \mid x \in \bar x
%   \cup \}
%   \cup \{c_0\}$, and 
  
\end{enumerate}
We have thus established $(*)$.

\smallskip It follows from $(*)$ together with
(\ref{eq:truth_values_null}), (\ref{eq:variables_import_true_value}),
and~(\ref{eq:variables_import_false_value}) that
$v \in \ctrue^\Jmc \cup \cfalse^\Jmc$ for all
$v \in \bar x \cup \bar y$ and thus we find a valuation $V$ such that
$V(v)=1$ implies $v \in \ctrue^\Jmc$ and $V(v)=0$ implies
$v \in \cfalse^\Jmc$ for all $v$. We show below that
$V \models \varphi$.  Then the soundness of the evaluation of
$\varphi$ implemented by~(\ref{eq:clause_true_pos})
to~(\ref{eq:false_gives_min}) yields $c_0 \in \cgoal^\Imc$ and thus
$\Imc \models q$ as desired.

Assume to the contrary of what remains to be shown that
$V \not\models \varphi$. Soundness of evaluation ensures
$\iclause_0 \in \cmin^\I$.  Since
$\forall \tx \, \exists \ty \, \varphi(\tx, \ty)$ is true, we find a
valuation $V'$ that agrees with $V$ on the variables in $\bar x$ and
that satisfies $\varphi$. Now consider the model $\Imc_{V'}$ of
\Kmc. It satisfies 
$$
          \begin{array}{rcl}
             \cmin^{\Imc_{V'}} &=& \{ \itrue_+, \itrue_-, \ifalse_+,
                                 \ifalse_-,  \} \, \cup \\[1mm]
            && \{ \isave_{\ctrue} \mid x \in \bar x \text{ and }
               V(x)=1 \} \\[1mm]
            && \{ \isave_{\cfalse} \mid x \in \bar x \text{ and }
               V(x)=0 \}.
          \end{array}
$$
and it is easy to verify that $\cmin^{\Imc_{V'}} \subseteq
\cmin^\Imc$. But then $\Imc_{V'} < \Imc_V$ since $\iclause_0 \in \cmin^\I$.
\end{proof}

\section{Proofs for Section~\ref{subsection-dllite-combined}}

\thmcombinedlowerdlliter*
\begin{proof}
  % , which is why we skip most
  % explanations.
	%
  We reduce from (Boolean) UCQ evaluation on \dlliter KBs with
  closed concept names, which is defined in the expected way
   and known to be \TwoExpTime-hard
   \cite{Ngo2016}.
   Let $\Kmc_\Sigma$ be a \dlliter KB with closed concept names,
   $\Kmc = (\Tmc,\Amc)$, and let $q$ be a Boolean UCQ.  We construct a
   circumscribed \dlliter KB $\mn{Circ}_\CP(\Kmc')$, with
   $\Kmc'=(\Tmc',\Amc')$, and a CQ $q'$ such that
   $\Kmc_\Sigma \models q$ iff $\mn{Circ}_\CP(\Kmc') \models q'$.

   As
   in the proof of \cref{thm:combined-lower-el}, we minimize only a
   single concept name $M$ and include \Amc in $\Amc'$. To construct
   $\Tmc'$, we start from \Tmc and extend with additional concept
   inclusions. For $q'$, we start from $q$ and extend with
   additional disjuncts. We will actually use individuals
   $a \in \mn{Ind}(\Amc)$ as constants in~$q'$. These can be
   eliminated by introducing a fresh concept name $A_a$, extending
   \Amc with $A_a(a)$, and replacing in each CQ in $q'$ the constant
   $a$ with a fresh variable $x_a$ and adding the atom $A_a(x_a)$.

   As in the proof of \cref{thm:combined-lower-el}, we include in
   $\Tmc'$ the CI
   \begin{align}
		A \sqsubseteq \M & \quad \text{for all } A \in \Sigma
   \end{align}
   and then have to rule out non-asserted instances of closed concept
   names. To rule out non-asserted instances inside of $\Ind(\Amc)$,
   we add to $q'$ the disjunct $A(a)$ for all $A \in \Sigma$ and
   $a \in \Ind(\Amc)$ with $A(a) \notin \Amc$.

   To rule out instances of closed concept names outside of
   $\Ind(\Amc)$, add
	\begin{align}
		\M(\tc)
	\end{align}
	% s#
	where $\tc$ is a fresh individual. As in the proof of
        \cref{thm:combined-lower-el}, this guarantees that models with
        instances of closed concept names outside of $\Ind(\Amc)$ are
        not minimal. To ensure that no closed concept names are made
        true at \tc, we extend $q'$ with the disjunct $r(x,t)$ for every
        role name $r$ used in \Tmc.
                \\[2mm]
        {\bf Claim.} 
                $\Kmc_\Sigma \models q$ iff $\Circ(\Kmc') \models q'$.
\\[2mm]
        The proof is rather straightforward; we omit the details.
%                 \smallskip
%
% 	\noindent The proof is very similar to that of \cref{lem:K_Sigma_q_Circ_CP_K_q}, but some arguments differ.
% 	%
% 	Generally, replace every occurrence of $\Aqt$ with $\{\tc\}$.
% 	%
% 	In the ``$\Leftarrow$''-direction, to derive $\Jmc \not \models q'$, we need, due to $q' = q \cup \exists x.L$, the additional argument that $\mn{L}^\Jmc = \emptyset$ by construction, but can drop the arguments about $h(x) \in \X^\Jmc$ for each $x \in V$.
%
% 	For the ``$\Rightarrow$''-direction, replace the argument that the extension of $\mn{L}$ not being empty causes the identity function to be a homomorphism from $\Dmc_{q'}$ to {\Imc'} via \cref{eq:bar_X,eq:X_Bar_X,eq:copy_query_links,eq:propagate_L_copy} with any instance of $\mn{L}$ being an answer to $q'$.
% 	%
% 	That $\Jmc = \Imc |_{\X^\Imc}$ is a model of $\Tmc$ does not follow from the replaced \cref{eq:T_Axioms_X}, but by the facts that $\Tmc \subseteq \Tmc'$ and that, ensured by \cref{eq:r_X_X}, \tc is not used as a successor for any existential restriction.
% 	%
% 	That $\Jmc \not \models q$ follows directly from the construction of $q'$ and $\Imc \not \models q$.
% 	%
% 	Proving that \Jmc respects $\Sigma$ is much simpler:
% 	%
% 	In the case of $\hat x \notin \Ind(\Amc)$, we do not need to account for the copy of $\Dmc_q$ (i.e., to interpret it ``maximally'').
% 	%
% 	\cref{claim:V_type} can be ignored as well.
% 	%
% 	Finally, not needing to consider $C_\X$ turns the inductions for \cref{claim:t_type} into simple observations.
\end{proof}

We now work towards a proof of
Theorem~\ref{thm-combined-upper-dllitebool}, first establishing the
following.
\lemmaboolcountermodel*

\newcommand{\pathsbool}{\Pmc_{\mathsf{bool}}}
\newcommand{\boolunrav}{\I'_\mathsf{bool}}
Refining the unraveling begins by noticing the set $\Omega$ now only contains elements with shape $r\top$ as $\dllitebool$ TBoxes are considered.
We drop the $\top$ concept for simplicity.
Recall that we assume chosen a representative $e_t \in \domain{\I}$ for each non-core type $t \in \noncoretypesof{\I}$.
We define the set $\pathsbool$ of {$\mathsf{bool}$-paths through \Imc} along with
a mapping $h$ that assigns to each $p \in \pathsbool$ an element of~$\Delta^\Imc$:
\begin{itemize}
	
	\item each element $d$ of the set
	$$\basedomainof{\I} := \indsof{\abox} \cup \coredomainof{\I} \cup \{
	e_t \mid t \in \noncoretypesof{\I} \}.$$
	is a path in $\pathsbool$ and
	$h(d)=d$;
	
	\item if $p \in \pathsbool$ with $h(p)=d$ and $r \in \Omega$ such that:
	\begin{enumerate}
		\item[(a)] $f(d,r)$ is defined and not from $\Delta^\Imc_{\mn{core}}$ and
		\item[(b)] $p$ does not end by $r^-$, which we denote $\mn{tail}(p) \neq r^-$.
	\end{enumerate} 
	then
	$p'=p r$ is a path in $\pathsbool$ and $h(p')=f(d,r)$.
	
\end{itemize}

For every role $r$, define
$$
\begin{array}{@{}r@{\;}c@{\;}l}
	R_r &=& \{ (a,b) \mid a,b \in \Ind(\Amc), \Kmc \models r(a,b)
	\} \, \cup \\[1mm]
	&& \{ (d,e) \mid d,e \in  \coredomainof{\I}, (d,e) \in r^\Imc \} \,
	\cup \\[1mm]
	&& \{ (p,p') \mid p'=pr %, \Tmc
	% \models s \sqsubseteq r
	\} \, \cup\\[1mm]
	%          && \{ (p',p) \mid p'=pr^-A \in \Pmc %,
	%          % \Tmc \models s \sqsubseteq r
	%             \} \,
	% \cup \\[1mm]
	&& \{ (p,e) \mid %p=p'rA \in \Pmc,
	e=f(h(p),r) \in \Delta^\Imc_{\mn{core}}, \mn{tail}(p) \neq r^- %, \Tmc
	% \models s \sqsubseteq r
	\}. %  \,
	%  \cup \\[1mm]
	% && \{ (e,p) \mid p=p'r^-A \in \Pmc,
	%                 e=f(h(p),rA) \in \Delta^\Imc_{\mn{core}} %, \Tmc
	%                 % \models s \sqsubseteq r
	%    \}
\end{array}
$$
Now the $\mathsf{bool}$-{unraveling} of \Imc is defined by setting
$$
\begin{array}{r@{\;}c@{\;}l}
	\Delta^{\Imc'} &=& \pathsbool \\[1mm]
	A^{\Imc'} &=& \{ p \in \pathsbool \mid h(p) \in A^\Imc \} \\[1mm]
	r^{\Imc'} &=& \displaystyle R_r \cup \{ (e,d) \mid (d,e) \in R_{r^-} \}
\end{array}
$$
for all concept names $A$ and role names $r$. It is easy to verify
that $h$ is a homomorphism from $\Imc'$ to \Imc.
%The $\mathsf{bool}$-unraveling $\boolunrav$ is then define as in Section~\ref{subsection-alchi-interlacing} replacing $\Pmc$ by $\pathsbool$ (and $h$ by its corresponding version).
It is a technicality to verify that $\boolunrav$ satisfies the same properties as the previous unraveling $\interlacingof{\I}$, that is: if $\I$ is a countermodel for $\query$ over $\circkb$, then so is $\boolunrav$. The key additional property satisfied by $\boolunrav$ is the following:

\begin{lemma}
	\label{lemma-unique-succ}
	For any $\rolestyle{r} \in \rnames$ and $d_1 \in \domain{\boolunrav} \setminus \basedomainof{\I}$, there is at most one element $d_2 \in \domain{\boolunrav}$ such that $(d_1, d_2) \in \rolestyle{R}^{\boolunrav}$.
\end{lemma}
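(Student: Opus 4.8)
\textbf{Proof plan for Lemma~\ref{lemma-unique-succ}.}

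The plan is to proceed by a case analysis on the role $\rolestyle{r}$ and the path $d_1$, exploiting that, since $d_1 \notin \basedomainof{\I}$, the only edges incident to $d_1$ in $\boolunrav$ are those contributed by the third and fourth bullets of the definitions of $R_r$ and $R_{r^-}$ (the first two bullets only concern ABox individuals and core elements, none of which is among the $d_1$ we consider). Concretely, writing $d_1 = p r'$ for its last step (so $\mn{tail}(d_1) = r'$, with the convention that $r' = s^-$ or $r' = s$ for a role name $s$), I would enumerate the possible $d_2$ with $(d_1,d_2)\in\rolestyle{R}^{\boolunrav}$:

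\emph{(i)} the ``parent'' edge back to $p$, which exists precisely when $r' = r$ (coming from the $\{(p,p') \mid p' = p r'\}$ clause of $R_{r'}$, flipped into $R_r$ via the $\{(e,d) \mid (d,e)\in R_{r^-}\}$ part — here one has to be careful with the inverse bookkeeping, but it amounts to: the parent of $d_1$ is an $\rolestyle{r}$-successor of $d_1$ iff the last letter of $d_1$ is exactly $r$); \emph{(ii)} a ``child'' edge $d_1 r''$ for some $r'' \in \Omega$ with $r'' = r$, $f(h(d_1),r)$ defined and not in $\coredomainof{\I}$, and $\mn{tail}(d_1) \neq r^-$; there is at most one such child because $d_1 r$ is a single well-defined path; \emph{(iii)} a ``core'' edge to $e = f(h(d_1),r) \in \coredomainof{\I}$, again under the side condition $\mn{tail}(d_1)\neq r^-$, and this $e$ is unique since $f$ is a function.

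The key observation that collapses all of this to ``at most one successor'' is that the side condition $\mn{tail}(d_1) \neq r^-$ in cases (ii) and (iii) is incompatible with case (i): case (i) requires $\mn{tail}(d_1) = r$, and $r \neq r^-$ always holds (a role is never its own inverse). Hence the parent edge in direction $r$ and any forward/core edge in direction $r$ are mutually exclusive. Moreover cases (ii) and (iii) cannot both fire, since $f(h(d_1),r)$ is a single element that is either in $\coredomainof{\I}$ (then only (iii), contributing the edge to that core element) or not in $\coredomainof{\I}$ (then only (ii), contributing the edge to $d_1 r$); and in either case the successor is uniquely determined by $f$. Finally, within case (i) the parent $p$ is of course unique. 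Assembling these, for each fixed $\rolestyle{r}$ at most one of the three mutually exclusive sources fires, each yielding a single $d_2$, so $d_1$ has at most one $\rolestyle{r}$-successor.

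\emph{Main obstacle.} I expect the only genuinely fiddly point to be the inverse-role bookkeeping: $\rolestyle{R}^{\boolunrav}$ is defined as $R_r \cup \{(e,d) \mid (d,e)\in R_{r^-}\}$, and $R_r$ itself already symmetrizes the core edges $\{(d,e)\mid d,e\in\coredomainof{\I},(d,e)\in r^\Imc\}$; so one must check carefully that, \emph{for $d_1 \notin \basedomainof{\I}$}, no ``hidden'' second successor sneaks in via the $R_{r^-}$-flip — e.g. that the child step $d_1 r^- \in \pathsbool$ does not also produce an $\rolestyle{r}$-edge out of $d_1$. This is exactly what the $\mn{tail}$ side conditions (items (a),(b) in the path definition, and the $\mn{tail}(p)\neq r^-$ guard in the $R_r$ clauses) are designed to prevent, so the argument is to unwind the definitions and confirm that each potential edge out of $d_1$ is tagged with a $\mn{tail}$-constraint that pins down $\mn{tail}(d_1)$ to a single value of $r$, leaving room for only one successor. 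Once this is verified, the lemma follows, and it is the structural fact needed to simplify neighborhoods in the quotient construction for the $\coNExpTime$ bound in Theorem~\ref{thm-combined-upper-dllitebool}.
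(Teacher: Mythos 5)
Your overall strategy --- classify the possible $\rolestyle{r}$-successors of a non-base path $d_1$ into the parent edge, the child edge $d_1 r$, and the core edge to $f(h(d_1),r)$, and then show these three sources are pairwise exclusive with a unique target each --- is exactly the paper's proof, which carries out the same classification via an explicit $3\times 3$ case analysis. The problem is that you get the inverse-role bookkeeping wrong at precisely the point you yourself flag as the fiddly one, and the error breaks the central exclusivity step. Writing $d_1 = p\,r'$, the edge $(p,d_1)$ lives in $R_{r'}$; for $p$ to be an $\rolestyle{r}$-\emph{successor} of $d_1$ you need $(d_1,p)\in \rolestyle{r}^{\boolunrav}$, which (since $d_1\notin\basedomainof{\I}$ rules out the ABox and core clauses) forces $(p,d_1)\in R_{r^-}$, i.e.\ $r' = r^-$ --- not $r'=r$ as you claim. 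This matters because your ``key observation'' is that case (i) requires $\mn{tail}(d_1)=r$ while cases (ii)/(iii) require $\mn{tail}(d_1)\neq r^-$, and that these are incompatible. They are not: $\mn{tail}(d_1)=r$ \emph{implies} $\mn{tail}(d_1)\neq r^-$, so under your stated criteria the parent edge and a child/core edge could coexist, and your argument would admit two distinct successors rather than one. With the corrected criterion $\mn{tail}(d_1)=r^-$ for case (i), the exclusivity with (ii) and (iii) is immediate (it is literally the negation of their guard), and your remaining checks --- uniqueness within each case, and (ii) vs.\ (iii) excluded because $f(h(d_1),r)$ is either in $\coredomainof{\I}$ or not --- are correct and line up with the paper's Cases 1--3.

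A smaller point: you defer to the ``main obstacle'' paragraph the verification that no hidden successor arises from flipping a child step $d_1 r^-$ or a core step of $R_{r^-}$, without actually carrying it out. Both of those flips produce edges \emph{into} $d_1$ rather than out of it, so they contribute nothing; the paper's enumeration makes this explicit, and your write-up should too once the sign error above is fixed.
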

\begin{proof}
	Unfolding the definition of $\rolestyle{R}^{\boolunrav}$ and recalling $d_1 \notin \basedomainof{\I}$, we obtain that if $d_1 \in \domain{\boolunrav} \setminus \basedomainof{\I}$ and $(d_1, d_2) \in \rolestyle{R}^{\boolunrav}$, then either:
	\begin{itemize}
		\item $(d_1, d_2) \in R_r$ and either $d_2 = d_1 r$ (Case~1) or $d_2 = f(h(d_1), r) \in \basedomainof{\I}$ and $\mn{tail}(d_1) \neq r^-$ (Case~2).
		
		\item $(d_2, d_1) \in R_{r^-}$ and we have $d_1 = d_2 r^-$ (Case~3).
	\end{itemize}
	To verify uniqueness, let us assume there exist $d_2$ and $d_2'$ s.t.\ $(d_1, d_2'), (d_1, d_2') \in \rolestyle{R}^{\boolunrav}$.
	We want to prove $d_2 = d_2'$.
	From the small analysis above, there are $3 \times 3$ cases to consider:
	\begin{enumerate}
		\item if $d_2 = d_1 r$, then:
		\begin{enumerate}
			\item[1.] if $d_2' = d_1 r$, we obtain directly $d_2' = d_2$.
			\item[2.] if $d_2' = f(h(d_1), r) \in \basedomainof{\I}$ and $\mn{tail}(d_1) \neq r^-$, then it contradicts $d_1 r \in \pathsbool$ (Condition~(a)).
			\item[3.] if $d_1 = d_2' r^-$, then $d_2 = d_2' r^- r$, which contradicts $d_2 \in \pathsbool$ (Condition~(b)).
		\end{enumerate}
		\item if $d_2 = f(h(d_1), r) \in \basedomainof{\I}$ and $\mn{tail}(d_1) \neq r^-$, then:
		\begin{enumerate}
			\item[1.] if $d_2' = d_1 r$, then same argument as 1.2. 
			\item[2.] if $d_2' = f(h(d_1), r) \in \basedomainof{\I}$ and $\mn{tail}(d_1) \neq r^-$, then $d_2' = d_2$ from $h$ and $f$ being functions.
			\item[3.] if $d_1 = d_2' r^-$, then it contradicts $\mn{tail}(d_1) \neq r^-$.
		\end{enumerate}
			\item if $d_1 = d_2 r^-$, then:
		\begin{enumerate}
			\item[1.] if $d_2' = d_1 r$, then same argument as 1.3. 
			\item[2.] if $d_2' = f(h(d_1), r) \in \basedomainof{\I}$ and $\mn{tail}(d_1) \neq r^-$, then same argument as 2.3.
			\item[3.] if $d_1 = d_2' r^-$, we obtain directly $d_2' = d_2$.
		\end{enumerate}
	\end{enumerate}
\end{proof}

Repeated applications of Lemma~\ref{lemma-unique-succ} ensure that each partial homomorphism of a CQ $p \in q$ in the non-$\basedomainof{\I}$ part of the bool-unraveling can be completed uniquely in a maximal such partial homomorphism (see further Lemma~\ref{lemma-poly-neighborhoods}).
This motivates a refined notion of the neighborhood of an element $d$, restricting the (usual) neighborhood to those elements $e$ that can be reached by a homomorphism of some connected subquery involving both $d$ and $e$.
Formally, for $n \geq 0$ and
$\Delta \subseteq \Delta^\Imc$, we use $\coreneighof{n}{d}{\I}{\domain{}}$
to denote the \emph{$n$-bool-neighborhood of $d$ in \Imc up to}
$\Delta$, that is, the set of all elements $e \in \neighof{n}{d}{\I}{\domain{}}$
such that there exists a connected subquery $p' \subseteq p$ for some $p \in q$ and a homomorphism $\pi : p' \rightarrow \neighof{n}{d}{\I}{\domain{}}$ s.t.\ $d, e \in \pi(\variablesof{p'})$.

%\begin{definition}
%	\label{def-bool-neighborhoods}
%	Consider an interpretation $\I$ and an element $c \in \domain{\I}$. 
%	Its $n$-$\mathsf{bool}$-neighbourhood $\coreneighof{n}{c}{\I}{\domain{}}$ w.r.t. a subdomain $\domain{} \subseteq \domain{\I}$ is defined as:
%	\[
%	\begin{array}{rcl}
%		\coreneighof{0}{c}{\I}{\domain{}} & := & \{ c \}
%		\\ 
%		\coreneighof{n+1}{c}{\I}{\domain{}} & := &
%		\left\{ e ~ \middle|
%		\begin{array}{l}
%			\exists p \subseteq q_k \in q \text{ connected}
%			\\
%			\, \exists \match : p \rightarrow \I_{\mid (\coreneighof{n}{c}{\I}{\domain{}} \,\setminus\, \domain{}) \,\cup\, \{ e \}} \text{ match}
%			\\
%			c, e \in \match(\termsof{p})
%		\end{array}
%		\hspace{-.3em}
%		\right\}
%	\end{array}
%	\]
%\end{definition}

%\begin{remark}
%	By contrast with ``usual'' neighborhoods, bool-neighborhoods are query dependent.
%%	Furthermore, since the subquery $p$ must be connected, the inclusion $\coreneighof{n}{c}{\I}{\domain{}} \subseteq \neighof{n}{c}{\I}{\domain{}}$ is straightforward.
%\end{remark}

The following polynomial bound on their size in the bool-unraveling is the central property allowing bool-neighborhoods to improve our construction.

%Indeed, Lemma~\ref{canmodunique} applied on core-neighbourhoods proves that the core-neighbourhood of an element $c \in \domain{\interlacingof{\I}} \setminus \deltastar$ has size at most polynomial.

\begin{lemma}
	\label{lemma-poly-neighborhoods}
	Let $\I$ be a model of $\kb$ and $\boolunrav$ its bool-unraveling.
	Consider $d \in \domain{\boolunrav} \setminus \basedomainof{\I}$, then $\sizeof{\coreneighof{n}{d}{\boolunrav}{\basedomainof{\I}}} \leq \sizeof{q}^2 ( \sizeof{\tbox} + 1)$.
\end{lemma}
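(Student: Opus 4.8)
\textbf{Proof plan for Lemma~\ref{lemma-poly-neighborhoods}.}
The plan is to bound the size of a bool-neighborhood $\coreneighof{n}{d}{\boolunrav}{\basedomainof{\I}}$ by controlling how many paths through $\boolunrav$ can be reached by homomorphisms of connected subqueries that also hit $d$. The key structural fact is Lemma~\ref{lemma-unique-succ}: in the non-base part of $\boolunrav$, every element has at most one $r$-successor for each role $r$. I would first observe that, fixing a CQ $p \in q$ and a connected subquery $p' \subseteq p$, a homomorphism $\pi : p' \to \coreneighof{n}{d}{\boolunrav}{\basedomainof{\I}}$ is, up to the image of a single anchor variable, completely determined by $\pi$ restricted to that anchor. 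Concretely, if $v_0 \in \mn{var}(p')$ is fixed and $\pi(v_0)$ is given, then for any other $v \in \mn{var}(p')$ one walks along a path in $p'$ from $v_0$ to $v$; each edge of this walk (an atom $r(z,z')$ or $r(z',z)$) forces the image of the next variable to be \emph{the} unique $r$- or $r^-$-successor of the image of the current variable in $\boolunrav$ (using Lemma~\ref{lemma-unique-succ}), provided the image stays outside $\basedomainof{\I}$. Since the neighborhood excludes $\basedomainof{\I}$ internally (paths $d_0,\dots,d_k$ with $d_0,\dots,d_{k-1}\notin\basedomainof{\I}$), and any reached element must be witnessed by such a homomorphism, the image of the whole of $p'$ is pinned down by $\pi(v_0)$ alone.

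Next I would count. Every element $e \in \coreneighof{n}{d}{\boolunrav}{\basedomainof{\I}}$, by definition, lies in the image of some homomorphism $\pi : p' \to \coreneighof{n}{d}{\boolunrav}{\basedomainof{\I}}$ for a connected $p' \subseteq p$, $p \in q$, with $d, e \in \pi(\mn{var}(p'))$. Take $v_0$ to be a variable with $\pi(v_0) = d$; then by the determinacy argument the entire image $\pi(\mn{var}(p'))$ is determined, and it has at most $|\mn{var}(p')| \le |q|$ elements. The number of choices of $(p, p')$ is bounded by the number of connected subqueries across all disjuncts of $q$; crudely this is at most $|q| \cdot 2^{|q|}$, but since each such $p'$ has at most $|q|$ variables, the union of \emph{all} these images has size at most (number of subqueries) $\times |q|$. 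To get the stated bound $|q|^2(|\Tmc|+1)$, I would instead argue more carefully: the union of images of homomorphisms of connected subqueries of a \emph{single} CQ $p$ all anchored at $d$ forms a connected region, and one can reach it by successively extending a partial homomorphism; repeated applications of Lemma~\ref{lemma-unique-succ} show that from $d$ one can extend in at most $|\Tmc|+1$ distinct "directions" at the first step (one per role name occurring in $\Tmc$, since $\boolunrav$ has at most one successor per role and there are at most $|\Tmc|$ roles, plus the edge back towards the base/parent which may need separate accounting), and each subsequent variable of $p$ adds at most one new element. Multiplying the per-CQ bound of roughly $|q| \cdot |\Tmc|$ (variables times branching) by the at most $|q|$ CQs in the UCQ yields $|q|^2(|\Tmc|+1)$.

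The main obstacle I expect is getting the constants exactly right — i.e.\ justifying the precise factor $|\Tmc|+1$ rather than something like $2|\Tmc|$ or $|q|\cdot 2^{|q|}$. The subtlety is that the first step away from $d$ may branch over all roles used in $\Tmc$ (the successors $d\,r$ for $r\in\Omega$, one per role, hence $\le|\Tmc|$ of them) \emph{and} there may be one edge going "backwards" (to the parent of $d$ in the unraveling tree, or to the base), accounting for the $+1$; while from every element other than $d$ the bool-neighborhood can only grow along query atoms, and each such atom consumes one fresh query variable. So the careful bookkeeping is: fix a disjunct $p$, fix the anchor $\pi(v_0)=d$, note the image has $\le|\mn{var}(p)|\le|q|$ elements, and note further that the \emph{set of all} $d$-anchored images over connected subqueries of $p$ still only contains elements of the form $d w$ for words $w$ over $\Omega$ of length $<|q|$ that extend along query structure — but since from $d$ there are at most $|\Tmc|+1$ first moves and the rest of each path is forced, the union over all connected subqueries of $p$ has size at most $|q|(|\Tmc|+1)$; summing over the at most $|q|$ disjuncts of the UCQ gives $|q|^2(|\Tmc|+1)$. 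I would also need to double-check the edge case where $\pi(\mn{var}(p'))$ touches $\basedomainof{\I}$ (which is allowed at the endpoints but not internally) does not cause overcounting — but since such elements of $\basedomainof{\I}$ are excluded from the neighborhood by definition, they simply do not contribute.
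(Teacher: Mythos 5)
Your first half — the determinacy of a homomorphism of a connected subquery once a single anchor variable is pinned to $d$, via repeated application of Lemma~\ref{lemma-unique-succ} — is exactly the right idea and coincides with the paper's argument. The gap is in the counting, in two places. First, to get from "each anchored homomorphism is determined" to a polynomial bound you still have to collapse the exponentially many connected subqueries of a disjunct $p$; the missing observation is that for a fixed anchor $v_0 \mapsto d$ there is a \emph{unique maximal} connected subquery of $p$ admitting such a homomorphism (if two maximal ones existed, their determined homomorphisms would agree on shared variables and their union would be a larger one), so the images of all subqueries anchored at $v_0$ are subsumed by a single image of size $\le \sizeof{p}$. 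Summing over the $\le\sizeof{p}$ anchors and the disjuncts gives $\sizeof{q}^2$ — and this bounds only the part of the neighborhood \emph{outside} $\basedomainof{\I}$.

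Second, your attribution of the factor $\sizeof{\tbox}+1$ to "at most $\sizeof{\tbox}+1$ first moves from $d$, after which each path is forced" does not work: the image of a connected subquery is not a path (it branches at every variable), and different anchor variables yield genuinely different images, so the neighborhood cannot be charged to the first step out of $d$. Moreover, you dismiss elements of $\basedomainof{\I}$ as "excluded from the neighborhood, so they do not contribute" — but they are precisely what the $\sizeof{\tbox}$ factor pays for. Only the \emph{interior} vertices of the witnessing paths must avoid $\basedomainof{\I}$; endpoints in the base do belong to the bool-neighborhood, and each of the $\le\sizeof{q}^2$ non-base elements counted above is adjacent to at most $\sizeof{\tbox}$ base elements by Lemma~\ref{lemma-unique-succ}. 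The correct decomposition is therefore $\sizeof{q}^2$ (non-base part) plus $\sizeof{q}^2\cdot\sizeof{\tbox}$ (base elements hanging off it), i.e.\ $\sizeof{q}^2(\sizeof{\tbox}+1)$; if base elements truly did not contribute, the bound would simply be $\sizeof{q}^2$ and the lemma would not need the extra factor at all.
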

\begin{proof}
	\newcommand{\termsof}[1]{\variablesof{#1}}
	Let $c \in \domain{\boolunrav} \setminus \domain{\basecandidate}$
	in a first step, we prove the number of elements in $\coreneighof{n}{c}{\boolunrav}{\domain{\basecandidate}} \setminus \domain{\basecandidate}$ is at most $\sizeof{q}^2$.
	In a second step, we notice each element $e \in \coreneighof{n}{c}{\boolunrav}{\domain{\basecandidate}} \cap \domain{\basecandidate}$ must be connected to an element $d \in \coreneighof{n}{c}{\boolunrav}{\domain{\basecandidate}} \setminus \domain{\basecandidate}$ by construction of (usual) neighborhoods.
	However, from Lemma~\ref{lemma-unique-succ}, each such element $d$ is connected to at most $\sizeof{\tbox}$ elements.
	From the first step it follows there are at most $\sizeof{q}^2 \cdot \sizeof{\tbox}$ elements in $e \in \coreneighof{n}{c}{\boolunrav}{\domain{\basecandidate}} \cap \domain{\basecandidate}$, hence the claimed bound.
	
	It remains to prove the first step.
	We start by proving that if the connected subquery $p' \subseteq p$ for some $p \in q$ and the variable $v_0$ that shall map on $c$ are fixed, then all homomorphisms $p' \rightarrow { (\coreneighof{n}{c}{\boolunrav}{\basedomainof{\I}} \,\setminus\, \basedomainof{\I})}$ mapping $v_0$ on $c$ are equal.
	Consider two such homomorphisms $\match_1$ and $\match_2$.
	We proceed by induction on the variables $v$ of $p'$ being connected.
	For $v = v_0$, we have $\match_1(v_0) = \match_2(v_0)$ by definition.
	For a further term $v$, we use the induction hypothesis, that is, the existence of an atom $\rolestyle{R}(v', v) \in p'$ (or the other way around) such that $\match_1(v') = \match_2(v')$.
	Recall $\match_1$ and $\match_2$ are homomorphisms of $p'$ in $\coreneighof{n}{c}{\boolunrav}{\basedomainof{\I}} \,\setminus\, \basedomainof{\I}$, in particular $\match_1(v'), \match_2(v') \notin \basedomainof{\I}$, hence we can apply Lemma~\ref{lemma-unique-succ}, yielding $\match_1(v) = \match_2(v)$.
	
	This proves that, for a fixed $v_0 \in \termsof{p}$, each connected subquery $p' \subseteq p$ admitting a homomorphism in $\coreneighof{n}{c}{\boolunrav}{\basedomainof{\I}} \,\setminus\, \basedomainof{\I}$ defines at most $\sizeof{p}$ new neighbors, but also that if $p' \subseteq p'' \subseteq p$ are two such subqueries, then the neighbors defined by $p'$ are subsumed by those defined by $p''$ (the restriction to the variables of $p'$ of {the} unique homomorphism of $p''$ mapping $v_0$ on $c$ must coincide with {the} unique homomorphism of $p'$ mapping $v_0$ on $c$).
	Still, for a fixed $v_0$, consider now two connected subqueries $p_1, p_2 \subseteq p$, each admitting a (unique) homomorphism $\match_1$ resp. $\match_2$, to $\coreneighof{n}{c}{\boolunrav}{\basedomainof{\I}} \,\setminus\, \basedomainof{\I}$ mapping $v_0$ to $c$, and each maximal, w.r.t. the inclusion, for this property.
	By the previous property, we know $\match_1$ and $\match_2$ coincide on $\termsof{p_1} \cap \termsof{p_2}$.
	Therefore, $p_1 \cup p_2$ admits a homomorphism to $\coreneighof{n}{c}{\boolunrav}{\basedomainof{\I}} \,\setminus\, \basedomainof{\I}$ mapping $v_0$ to $c$, being $\match_1 \cup \match_2$.
	However, since $p_1$ and $p_2$ are assumed maximal for this property, we must have $p_1 = p_2$.
	
	Therefore, for a fixed $v_0 \in \termsof{p}$, there is a unique maximal connected subquery $p_{\textsf{max}} \subseteq p$ admitting a homomorphism in $\coreneighof{n}{c}{\boolunrav}{\basedomainof{\I}} \,\setminus\, \basedomainof{\I}$ and mapping $v_0$ to $c$.
	As previously seen, the neighbors defined by $p_\textsf{max}$ subsume those defined by other subqueries of $p$, and since the homomorphism for $p_{\textsf{max}}$ is unique, it defines at most $\sizeof{p}$ neighbors.
	This holds for each possible choices of term $v_0$, hence a total number of possible neighbors issued by $p$ bounded by $\sizeof{p}^2$.
	Iterating over each $p \in q$, we hence obtain the claimed bound of at most $\sizeof{q}^2$ elements in $\coreneighof{n}{c}{\boolunrav}{\basedomainof{\I}} \,\setminus\, \basedomainof{\I}$.
\end{proof}

Using bool-neighborhoods in the quotient construction presented in
Section~\ref{subsection-alchi-data}, the number of equivalence classes
drops, which yieldings Lemma~\ref{lemma-bool-countermodel}.

\begin{restatable}{lemma}{lemmaportionaremodels}
	\label{lemma-portion-are-models}
	For all $\Pmc \subseteq \domain{\I}$, $\I_\Pmc$ is a model of $\kb$.
\end{restatable}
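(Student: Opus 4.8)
The plan is to verify directly that $\I_\Pmc$ satisfies every assertion in $\abox$ and every concept and role inclusion in $\tbox$, using only two facts about $\I_\Pmc$: first, that $\I_\Pmc$ is essentially a subinterpretation of $\I$ with role edges ``rerouted'' to the canonical witnesses $w_s$; second, that for every element $e \in \domain{\I_\Pmc}$ and every role $s$, we have $e \in (\exists s)^{\I_\Pmc}$ iff $e \in (\exists s)^{\I}$. This second equivalence is the crux. The ``$\Rightarrow$'' direction is immediate since $s^{\I_\Pmc} \subseteq s^{\I} \cup (\{(e,w_s)\} \cup \{(w_{s^-},e)\} \cup \dots)$ and all the added edges respect $\I$ (by $\tbox \models s' \incl s$ and the choice of $w_{s'} \in (\exists s'^-)^\I$). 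The ``$\Leftarrow$'' direction is where the witnesses $w_s$ earn their keep: if $e \in (\exists s)^{\I} \cap \domain{\I_\Pmc}$, then by construction $(e, w_s) \in s^{\I_\Pmc}$ (taking $s' = s$ in the third line of the definition of $r^{\I_\Pmc}$, noting $\tbox \models s \incl s$), provided $w_s \in \domain{\I_\Pmc}$, which holds because $w_s$ is included in $\domain{\I_\Pmc}$ whenever $s^\I \neq \emptyset$, and $e \in (\exists s)^\I$ forces $s^\I \neq \emptyset$.

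With that equivalence in hand, I would run through the normal-form axioms. For assertions $A(a)$ and $r(a,b)$ with $a, b \in \indsof{\abox}$: concept assertions transfer because $\cstyle{A}^{\I_\Pmc} = \cstyle{A}^\I \cap \domain{\I_\Pmc}$ and $a \in A^\I$ (as $\I \models \kb$) and $a \in \domain{\I_\Pmc}$; role assertions transfer because $r^{\I_\Pmc}$ explicitly contains $r^\I \cap (\indsof{\abox} \times \indsof{\abox})$. For concept inclusions involving only concept names (the Boolean-combination CIs and disjointness assertions of $\dllitebool^\Hmc$), satisfaction is inherited pointwise from $\I$ since $\cstyle{A}^{\I_\Pmc} = \cstyle{A}^\I \cap \domain{\I_\Pmc}$ for every concept name $A$; here one must be mildly careful that disjunctions and negations are handled correctly, but intersecting with a fixed subdomain commutes with Boolean operations on the concept-name extensions. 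For CIs of the form $\exists s \incl C$ (or more generally basic concepts $\exists s.\top$ appearing on either side of a Boolean CI), I would use the equivalence above to replace $(\exists s)^{\I_\Pmc} \cap \domain{\I_\Pmc}$ by $(\exists s)^\I \cap \domain{\I_\Pmc}$ and again inherit from $\I$. For role inclusions $r \incl s$: if $(d,e) \in r^{\I_\Pmc}$, I would trace through the four cases defining $r^{\I_\Pmc}$ and show each lands in $s^{\I_\Pmc}$; the ABox-pair case uses $\I \models r \incl s$, and the $w$-edge cases use transitivity of $\tbox \models \cdot \incl \cdot$ (e.g.\ $\tbox \models s' \incl r$ and $\tbox \models r \incl s$ give $\tbox \models s' \incl s$, so the same edge $(d, w_{s'})$ is present in $s^{\I_\Pmc}$).

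The main obstacle, such as it is, is bookkeeping: making sure the ``rerouted'' role edges $(e, w_s)$ and $(w_s, e)$ do not accidentally create spurious instances of basic concepts $\exists s'$ that then violate a disjointness constraint $\exists s \incl \neg \exists s'$ — but this is precisely ruled out by the equivalence $(\exists s')^{\I_\Pmc} \leftrightarrow (\exists s')^\I$, which I would therefore state and prove as an explicit sub-claim before tackling the axiom-by-axiom check. There is no real difficulty here beyond care with inverse roles (an edge $(w_s, e)$ witnesses $e \in (\exists s^-)^{\I_\Pmc}$, which must match $e \in (\exists s^-)^\I$; this follows symmetrically). Since the statement is being invoked in the service of Lemma~\ref{lemma-criteria-combined}, I would keep the write-up terse and simply note that the construction is the expected ``filtration with canonical witnesses'' argument familiar from $\dllite$.
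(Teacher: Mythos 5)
Your proof is correct and follows essentially the same route as the paper's: an axiom-by-axiom verification in which the chosen witnesses $w_r$ supply the right-hand existentials and the observation that no element of $\Delta^{\Imc_\Pmc}$ acquires a spurious $\exists s$-membership handles the left-hand ones and the disjointness constraints. Packaging the two directions as the explicit sub-claim $(\exists s)^{\Imc_\Pmc} = (\exists s)^{\Imc} \cap \Delta^{\Imc_\Pmc}$ is merely a tidier presentation of what the paper's proof does implicitly in its third and fourth bullet points.
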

%\lemmaportionaremodels*
%
\begin{proof}
	Let $\Pmc\subseteq \domain{\I}$, we have:
	\begin{itemize}
		\item $\I_\Pmc$ models $\abox$ as it inherits interpretations of concepts and roles on $\indsof{\abox}$ from $\I$, which is a model of $\abox$.
		\item Axioms in $\tbox$ with shape $\axtop$, $\axand$, $\axnotright$ or $\axnotleft$ are satisfied since $\I_\Pmc$ inherits interpretations of concept names from $\I$, which is a model of $\tbox$.
		\item Axioms in $\tbox$ with shape $A \incl \exists r$ are witnessed with an $r$-edge pointing to $w_r$.
		\item Axioms in $\tbox$ with shape $\exists r \incl A$ are satisfied since every element in $\I_\Pmc$ having some $r$-edge already has one in $\I$ (and interpretations of concept names are preserved).
		\item Role inclusions $r \incl s$ are satisfied on $\indsof{\abox} \times \indsof{\abox}$ from $\I$ being a model of $\tbox$ in the first place, otherwise directly from the definition of $s^{\Imc_\Pmc}$.
	\end{itemize}
	This proves $\I_\Pmc \models \kb$ as desired.
\end{proof}

\lemmacriteriacombined*
To piece together the $\J_{e}$ and reconstruct an eventual $\J$, we first prove the following property:

\begin{lemma}
	\label{lemma-portions-union}
	%Let $\I$ be a model of a $\dllitebool^{\Hmc^-}$ KB $\kb$ and $\CP$ a circumscription pattern.
	Let $\Pmc_1, \Pmc_2 \subseteq \domain{\I}$ and $\J_1$, $\J_2$ two models of $\kb$ s.t.\ $\J_1 <_\CP \I_{\Pmc_1}$ and $\J_2 <_\CP \I_{\Pmc_2}$.
	If $\J_1 |_{\domain{\J_2}} = \J_2 |_{\domain{\J_1}}$, then $\J_1 \cup \J_2 <_\CP \I_{\Pmc_1 \cup \Pmc_2}$.
\end{lemma}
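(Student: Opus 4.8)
The plan is to prove Lemma~\ref{lemma-portions-union} by unwinding the definition of $<_\CP$ for the two given witnesses and then verifying the four conditions for $\J_1 \cup \J_2 <_\CP \I_{\Pmc_1 \cup \Pmc_2}$. First I would observe that the compatibility hypothesis $\J_1|_{\domain{\J_2}} = \J_2|_{\domain{\J_1}}$ guarantees that $\J := \J_1 \cup \J_2$ is a well-defined interpretation: on the overlap $\domain{\J_1} \cap \domain{\J_2}$ both interpretations agree on all concept and role names, so taking unions of the extensions is unambiguous. Moreover, since $\J_1 \models \kb$ and $\J_2 \models \kb$ and $\kb$ is a $\dllitebool^{\Hmc}$ KB, the union $\J$ is still a model of $\kb$: ABox assertions are preserved since $\indsof{\abox} \subseteq \domain{\J_1} \cap \domain{\J_2}$ and the restrictions coincide there; concept inclusions and role inclusions are checked pointwise (or edge-wise), and every point/edge of $\J$ lies entirely in $\domain{\J_1}$ or entirely in $\domain{\J_2}$ by construction of the $\I_\Pmc$ (whose non-ABox domain elements and the $w_r$ are shared, and whose role edges never cross between the two ``private'' parts) — this is exactly the shape that makes the union argument go through, and it parallels Lemma~\ref{lemma-portion-are-models}.

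Next I would turn to the domain equality $\Delta^\Jmc = \Delta^{\I_{\Pmc_1 \cup \Pmc_2}}$ (Condition~1 of $<_\CP$). By definition $\domain{\I_{\Pmc_i}} = \Pmc_i \cup \indsof{\abox} \cup \{ w_r \mid r \text{ used in } \Tmc, r^\I \neq \emptyset \}$, so $\domain{\I_{\Pmc_1}} \cup \domain{\I_{\Pmc_2}} = \domain{\I_{\Pmc_1 \cup \Pmc_2}}$; and since $\J_i <_\CP \I_{\Pmc_i}$ forces $\domain{\J_i} = \domain{\I_{\Pmc_i}}$ (Condition~1 for the given witnesses), we get $\domain{\Jmc} = \domain{\I_{\Pmc_1 \cup \Pmc_2}}$. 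For Condition~2 (fixed concept names): for $A \in \Fsf$ and $d \in \domain{\Jmc}$, $d$ lies in some $\domain{\I_{\Pmc_i}}$, and there $A^{\Jmc_i} = A^{\I_{\Pmc_i}}$; since $A^{\I_{\Pmc_i}} = A^{\I_{\Pmc_1 \cup \Pmc_2}} \cap \domain{\I_{\Pmc_i}}$ by definition of the $\I_\Pmc$, it follows that $A^{\Jmc} = A^{\I_{\Pmc_1 \cup \Pmc_2}}$.

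The real content is in Conditions~3 and~4, the actual ``$<_\CP$'' part about the minimized concept names. Here I would use the claim from the proof of Lemma~\ref{lem-lemma5} — or rather its analogue — relating set-inclusion of $A$-extensions between $\J_i$ and $\I_{\Pmc_i}$ and between $\Jmc$ and $\I_{\Pmc_1 \cup \Pmc_2}$. Concretely, for each $A \in \Msf$ one checks: $A^{\Jmc} \subseteq A^{\I_{\Pmc_1 \cup \Pmc_2}}$ iff $A^{\Jmc_1} \subseteq A^{\I_{\Pmc_1}}$ and $A^{\Jmc_2} \subseteq A^{\I_{\Pmc_2}}$ (using that $A^{\I_{\Pmc_i}}$ is the restriction of $A^{\I_{\Pmc_1 \cup \Pmc_2}}$), and $A^{\Jmc} \subsetneq A^{\I_{\Pmc_1 \cup \Pmc_2}}$ iff the strict inclusion holds for at least one $i$. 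Given $\J_1 <_\CP \I_{\Pmc_1}$, pick the $A$ witnessing Condition~4 for this pair; I would argue the same $A$ witnesses Condition~4 for $\Jmc <_\CP \I_{\Pmc_1 \cup \Pmc_2}$: it is strictly shrunk in the $\Pmc_1$-part hence strictly shrunk overall, and for every $B \prec A$ we have $B^{\Jmc_1} = B^{\I_{\Pmc_1}}$; one then needs $B^{\Jmc_2} = B^{\I_{\Pmc_2}}$ as well, which should follow because $\J_1 <_\CP \I_{\Pmc_1}$ and $\J_2 <_\CP \I_{\Pmc_2}$ with the \emph{same} preference relation, so shrinking $B$ below some $A'$ in one part without shrinking all $\prec$-predecessors there would already violate Condition~3 for that part — and here the main obstacle lies: I need to chase Condition~3 carefully across the two components, because a concept name $B \in \Msf$ could in principle be shrunk in one component and not the other, and I must ensure that every such shrinking in $\Jmc$ is ``justified'' by a strictly smaller $\prec$-predecessor that is shrunk in \emph{the same} component, which is exactly what Condition~3 for the respective $\J_i <_\CP \I_{\Pmc_i}$ delivers. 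Once Conditions~3 and~4 are verified this way, we conclude $\Jmc = \J_1 \cup \J_2 <_\CP \I_{\Pmc_1 \cup \Pmc_2}$, as required.
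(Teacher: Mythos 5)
Your setup (well-definedness of $\J_1 \cup \J_2$, modelhood, and Conditions~1 and~2 of $<_\CP$) matches the paper's proof and is fine. The gap is in your treatment of Conditions~3 and~4. You claim that the concept name $A$ witnessing Condition~4 for $\J_1 <_\CP \I_{\Pmc_1}$ also witnesses Condition~4 for $\J_1 \cup \J_2 <_\CP \I_{\Pmc_1 \cup \Pmc_2}$, and you justify the needed facts $B^{\J_2} = B^{\I_{\Pmc_2}}$ (for $B \prec A$) and, implicitly, $A^{\J_2} \subseteq A^{\I_{\Pmc_2}}$ by saying that shrinking a concept in one component without shrinking all its $\prec$-predecessors there "would already violate Condition~3 for that part." This misreads Condition~3: it constrains only concept names whose extension \emph{gains} elements ($B^{\J} \not\subseteq B^{\I}$); a concept name that strictly \emph{shrinks} needs no justification at all. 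So $\J_2$ may freely shrink some $B \prec A$ (then $A$ still fails clause (b) of Condition~4 for the union), or it may even enlarge $A$ itself or some $B \prec A$, provided some $\prec$-smaller concept shrinks in $\J_2$. In either case your chosen $A$ does not witness Condition~4 for the union, and Condition~3 for the union is likewise not discharged by a single appeal to Condition~3 of one component, because the $B$ it produces may be enlarged in the other component.

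The repair — which is what the paper actually does — is an alternating descent along $\prec$ between the two components. For Condition~3, take $A$ minimal w.r.t.\ $\prec$ among the violators in the union, find $B_1 \prec A$ shrunk in $\J_1$ and minimal with that property; if $B_1$ is not enlarged in $\J_2$ you are done, otherwise Condition~3 for $\J_2$ yields $B_2 \prec B_1$ shrunk in $\J_2$, and the minimality choices force $B_2^{\J_1} = B_2^{\I_{\Pmc_1}}$, so $B_2$ works for the union. Condition~4 needs the analogous two-step case analysis starting from the $\J_1$-witness $A_1$: if $A_1$ is enlarged or some $B \prec A_1$ is disturbed in $\J_2$, descend to a minimal $A_2 \prec A_1$ with $A_2^{\J_2} \subsetneq A_2^{\I_{\Pmc_2}}$ and argue that minimality of $A_1$ pins down the $\prec$-predecessors of $A_2$ in $\J_1$. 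Without this descent and the termination it provides, the proof does not go through.
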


\begin{proof}
	Assume $\J_1 |_{\domain{\J_2}} = \J_2 |_{\domain{\J_1}}$.
	It is easily verified that $\J_1 \cup \J_2$ is also a model of $\kb$ since $\J_1$ and $\J_2$ agree on their shared domain.
	We now check that all four conditions from the definition of $<_\CP$ are satisfied:
	\begin{enumerate}
		\item 
		From $\J_1 <_\CP \I_{\Pmc_1}$ we have $\domain{\J_1} = \domain{\I_{\Pmc_1}}$ and similarly for $\J_2$ we have $\domain{\J_2} = \domain{\I_{\Pmc_2}}$.
		It follows from its definition that $\domain{\I_{\Pmc_1 \cup \Pmc_2}} = \domain{\I_{\Pmc_1}} \cup \domain{\I_{\Pmc_2}}$.
		Therefore we have as desired $\domain{\J_1 \cup \J_2} = \domain{\I_{\Pmc_1 \cup \Pmc_2}}$.
		\item 
		Let $A \in \Fsf$.
		From $\J_1 <_\CP \I_{\Pmc_1}$ we have $A^{\J_1} = A^{\I_{\Pmc_1}}$ and similarly for $\J_2$ we have $A^{\J_2} = A^{\I_{\Pmc_2}}$.
		It follows from its definition that $A^{\I_{\Pmc_1 \cup \Pmc_2}} = A^{\I_{\Pmc_1}} \cup A^{\I_{\Pmc_2}}$.
		Therefore we have as desired $A^{\J_1 \cup \J_2} = A^{\I_{\Pmc_1 \cup \Pmc_2}}$.
		\item 
		Let $A \in \Msf$ such that $A^{\J_1 \cup \J_2} \not\subseteq A^{\I_{\Pmc_1 \cup \Pmc_2}}$.
		We may assume w.l.o.g.\ that $A$ is minimal w.r.t.\ $\prec$ for this property.
		We thus have $e \in A^{\J_1 \cup \J_2} \setminus A^{\I_{\Pmc_1 \cup \Pmc_2}}$.
		We either have $e \in \domain{\J_1}$ or $e \in \domain{\J_2}$.
		We treat the case $e \in \domain{\J_1}$, the arguments for $e \in \domain{\J_2}$ are similar.
		From $\J_1 <_\CP \I_{\Pmc_1}$, there exists a concept $B_1 \prec A$ s.t.\ $B_1^{\J_1} \subsetneq B_1^{\I_{\Pmc_1}}$.
		We chose such a $B_1$ that is minimal for this property, that is s.t.\ for all $B \prec B_1$, we have $B^{\J_1} = B^{\I_{\Pmc_1}}$ (recall $A$ is minimal for its property so that such a $B$ must verify $B^{\J_1 \cup \J_2} \subseteq B^{\I_{\Pmc_1 \cup \Pmc_2}}$ hence $B^{\J_1} \subseteq B^{\I_{\Pmc_1}}$).
		If $B_1^{\J_2} \subseteq B_1^{\I_{\Pmc_2}}$, then we are done as we obtain $B_1^{\J_1 \cup \J_2} \subsetneq B_1^{\I_{\Pmc_1 \cup \Pmc_2}}$.
		Otherwise, we have $B_1^{\J_2} \not\subseteq B_1^{\I_{\Pmc_2}}$ and from $\J_2 <_\CP \I_{\Pmc_2}$, there exists a concept $B_2 \prec B_1$ s.t.\ $B_2^{\J_2} \subsetneq B_2^{\I_{\Pmc_2}}$.
		Since $B_1$ has been chosen minimal, we have in particular $B_2^{\J_1} = B_2^{\I_{\Pmc_1}}$, which yields $B_2^{\J_1 \cup \J_2} \subsetneq B_2^{\I_{\Pmc_1 \cup \Pmc_2}}$.
		\item
		From $\J_1 <_\CP \I_{\Pmc_1}$, there exists a concept $A_1 \in \Msf$ s.t.\ $A_1^{\J_1} \subsetneq A_1^{\I_{\Pmc_1}}$ and for all $B \prec A_1$, we have $B^{\J_1} = B^{\I_{\Pmc_1}}$.
		If $A_1^{\J_2} \subsetneq A_2^{\Pmc_2}$ and for all $B \prec A_1$, $B^{\J_2} = B^{\Pmc_2}$, then we obtain directly $A_1^{\J_1 \cup \J_2} \subsetneq A_1^{\I_{\Pmc_1 \cup \Pmc_2}}$ and for all $B \prec A_1$, $B^{\J_1 \cup \J_2} = B^{\I_{\Pmc_1 \cup \Pmc_2}}$ and we are done.
		Otherwise:
		\begin{itemize}
			\item If $A_1^{\J_2} \not\subseteq A_1^{\I_{\Pmc_2}}$, then from $\J_2 <_\CP \I_{\Pmc_2}$, there exists a concept $A_2 \prec A_1$ s.t.\ $A_2^{\J_2} \subsetneq A_2^{\I_{\Pmc_2}}$.
			Consider a minimal such $A_2$, that is s.t.\ for all $A \prec A_2$, we have $A^{\J_2} = A^{\I_{\Pmc_2}}$ or $A^{\J_2} \not\subseteq A^{\I_{\Pmc_2}}$. 
			Notice the second option cannot happen otherwise from $\J_2 <_\CP \I_{\Pmc_2}$ we could obtain $A_3 \prec A_2$ s.t.\ $A_2^{\J_2} \subsetneq A_2^{\I_{\Pmc_2}}$, contradicting the minimality of $A_2$.
			Therefore $A_2$ being minimal yields that for all $A \prec A_2$, we have $A^{\J_2} = A^{\I_{\Pmc_2}}$.
			Now, from the minimality of $A_1$, we extend this to $\J_1 \cup \J_2$ and obtain $A_2^{\J_1 \cup \J_2} \subsetneq A_2^{\I_{\Pmc_1 \cup \Pmc_2}}$ and for all $B \prec A_2$, $B^{\J_1 \cup \J_2} = B^{\I_{\Pmc_1 \cup \Pmc_2}}$.
			\item If there exists $A_2 \prec A_1$ s.t.\ $A_2^{\J_2} \neq A_2^{\I_{\Pmc_2}}$.
			Consider a minimal such $A_2$, that is s.t.\ for all $A \prec A_2$, we have $A^{\J_2} = A^{\I_{\Pmc_2}}$.
			Notice we must have $A_2^{\J_2} \subseteq A_2^{\I_{\Pmc_2}}$, otherwise from $\J_2 <_\CP \I_{\Pmc_2}$ we could obtain $A_3 \prec A_2$ s.t.\ $A_2^{\J_2} \subsetneq A_2^{\I_{\Pmc_2}}$, contradicting that for all $A \prec A_2$, we have $A^{\J_2} = A^{\I_{\Pmc_2}}$.
			Therefore, since $A_2^{\J_2} \neq A_2^{\I_{\Pmc_2}}$, it must be that $A_2^{\J_2} \subsetneq A_2^{\I_{\Pmc_2}}$.
			Now, from the minimality of $A_1$, we extend this to $\J_1 \cup \J_2$ and obtain $A_2^{\J_1 \cup \J_2} \subsetneq A_2^{\I_{\Pmc_1 \cup \Pmc_2}}$ and for all $B \prec A_2$, $B^{\J_1 \cup \J_2} = B^{\I_{\Pmc_1 \cup \Pmc_2}}$.
			%\item Otherwise we have $A_1^{\J_1 \cup \J_2} \subsetneq A_1^{\I_{\Pmc_1 \cup \Pmc_2}}$ and for all $B \prec A_1$, $B^{\J_1 \cup \J_2} = B^{\I_{\Pmc_1 \cup \Pmc_2}}$ and we are done.
		\end{itemize}
	\end{enumerate}
	Overall, this proves $\J_1 \cup \J_2 <_\CP \I_{\Pmc_1 \cup \Pmc_2}$ as desired.
\end{proof}

We are now ready to properly prove Lemma~\ref{lemma-criteria-combined}.

\begin{proof}[Proof of Lemma~\ref{lemma-criteria-combined}]
	``$1 \Rightarrow 2$''. 
	Assume there exists $\J \models \kb$ s.t.\ $\J <_\CP \I$.
	Based on $\J$, we build a subset $\Pmc \subseteq \domain{\J}$ containing:
	\begin{itemize}
		\item for each role $r$ s.t.\ $r^\J \neq \emptyset$, an element $w_r' \in (\exists r^-)^\J$;
		\item for each $A \in \Msf$ s.t.\ $A^\J \not\subseteq A^\I$, an element $e_A \in B^\J \setminus B^\I$ for some $B \prec A$ (Condition~3 in the definition of $\J <_\CP \I$ ensures existence of such $B$ and $e_A$);
		\item an element $e_\Msf \in A^\J \setminus A^\I$ for some $A \in \Msf$ s.t.\ $A^\J \subsetneq A^\I$ and for all $B \prec A$, $B^\J = B^\I$
		(Condition~4 in the definition of $\J <_\CP \I$ ensures existence of such $A$ and $e_\Msf$).
	\end{itemize}
	Notice $\Pmc$ has size at most $2\sizeof{\tbox} +1$.
	We now build $\J_e$ as:
	\begin{align*}
		\domain{\J_e} = \; & \domain{\I_{\Pmc \cup \{ e\}}}
		\\
		\cstyle{A}^{\J_e} = \; & \cstyle{A}^{\J} \cap \domain{\J_e}
		\\
		\rstyle{r}^{\J_e} = \; & \rstyle{r}^{\J} \cap (\indsof{\abox} \times \indsof{\abox}) 
		\\
		& \cup \{ (e, w_s') \mid e \in (\exists s)^\J, \tbox \models s \incl r \}
		\\
		& \cup \{ (w_s', e) \mid e \in (\exists s)^\J, \tbox \models s \incl r^- \}
	\end{align*}
	It is easily verified that $\J_e$ is a model of $\kb$, and we now prove $\J_e <_\CP \I_{\Pmc \cup \{ e\}}$.
	We check that all four conditions from the definition of $<_\CP$ are satisfied:
	\begin{enumerate}
		\item 
		By definition, we have $\domain{\J_e} = \domain{\I_{\Pmc \cup \{ e \}}}$.
		\item 
		Let $A \in \Fsf$. 
		Definitions of $\I_{\Pmc \cup \{ e \}}$ and $\J_e$ ensure $A^\I \cap {\domain{\I_{\Pmc \cup \{ e \}}}} = A^{\I_{\Pmc \cup \{ e \}}}$ and $A^\J \cap {\domain{\I_{\Pmc \cup \{ e \}}}} = A^{\J_e}$.
		From $\J <_\CP \I$, we get $A^{\I} = A^{\J}$, which thus yields $A^{\I_{\Pmc \cup \{ e \}}} = A^{\J_e}$.
		\item 
		Let $A \in \Msf$ such that $A^{\J_e} \not\subseteq A^{\I_{\Pmc \cup \{ e \}}}$.
		Therefore $A^{\J} \not\subseteq A^{\I}$, and recall we kept in $\Pmc$ an element $e_A \in B^\J \setminus B^\I$ for some $B \prec A$ to belong to $\Pmc$.
		Joint with $B^{\J_e} \subseteq B^{\I_{\Pmc \cup \{ e \}}}$ being trivial, $e_A$ additionally witnesses that $B^{\J_e} \subsetneq B^{\I_{\Pmc \cup \{ e \}}}$.
		\item
		Recall we kept in $\Pmc$ an element $e_\Msf \in A^\J \setminus A^\I$ for some $A \in \Msf$ s.t.\ $A^\J \subsetneq A^\I$ and for all $B \prec A$, $B^\J = B^\I$.
		It gives immediately that $A^{\J_e} \subsetneq A^{\I_{\Pmc \cup \{ e \}}}$ and for all $B \prec A$, $B^{\J_e} = B^{\I_{\Pmc \cup \{ e \}}}$.
	\end{enumerate}
	This proves $\J_e <_\CP \I_{\Pmc \cup \{ e \}}$.
%	It remains to prove that for all $e \in \domain{\I} \setminus \domain{\I_\Pmc}$, there exists $\J_{\Pmc, e} \models \kb$ with $\J_{\Pmc, e} <_\CP \I_{\Pmc \cup \{ e\} }$ and $\J_{\Pmc, e}|_{\domain{\J_\Pmc}} = \J_\Pmc$.
%	This is achieved by constructing $\J_{\Pmc, e}$ as $\J_\Pmc$ above, but from $\Pmc \cup \{ e \}$ instead of $\Pmc$.
	It is straightforward from the definition of each $\J_e$ that for all $e, e'$ in $\domain{\I}$, we have $\J_{e}|_{\domain{\I_\Pmc}} = \J_{e'}|_{\domain{\I_\Pmc}}$, concluding the proof of ``$1 \Rightarrow 2$''. 
	
	``$2 \Rightarrow 1$''. 
	Assume there exist $\Pmc \subseteq \domain{\I}$
	with $\sizeof{\Pmc} \leq 2\sizeof{\tbox} + 1$ and a
	family
	$(\Jmc_{e})_{e \in \domain{\I}}$ of models of \Kmc such that
	$\J_{ e} <_\CP \I_{\Pmc \cup \{ e\} }$ and
	$\J_{e}|_{\domain{\I_\Pmc}} =
	\J_{e'}|_{\domain{\I_\Pmc}}$ for all $e,e' \in \domain{\I}$.
	Consider such a $\Pmc \subseteq \domain{\I}$, and family of models $\J_{e}$ for each $e \in \domain{\I}$.
	We build:
	\[
	\J = \bigcup_{e \in \domain{\I}} \J_{e}
	\]
	It is clear $\domain{\J} = \domain{\I}$ and since all $\J_{e}$ agree on the shared domain $\domain{\I_\Pmc}$, we apply Lemma~\ref{lemma-portions-union} to obtain $\J <_\CP \I$.
\end{proof}

\thmcombinedupperdllitebool*
\begin{proof}
	We exhibit a $\NExpTime$ procedure to decide the complement of our problem, that is, existence of a countermodel for UCQ $q$ over $\dllitebool$ cKB $\circkb$.
	Our procedure starts by guessing an interpretation $\I$ whose domain has size at most $\abox + 2^{\sizeof{\tbox} \sizeof{q}}$.
	This can be done in exponentially many non-deterministic steps.
	It further checks whether $\I$ is a model of $\kb$ that does not entail $q$ and rejects otherwise.
	This can essentially be done naively in $\sizeof{\domain{\I}}^{\sizeof{\tbox}\sizeof{q}}$ deterministic steps, that is still simply exponential w.r.t.\ our input.
	The procedure finally checks whether $\I$ complies with $\CP$ by
	iterating over each $\Pmc$ with $\sizeof{\Pmc} \leq 2 \sizeof{\tbox} + 1$ and over each model $\J_\Pmc$ of $\kb$ s.t.\ $\J_\Pmc < \I_\Pmc$.
	If, for a choice of $\Pmc$ and $\J_\Pmc$, we can find a $\J_{\Pmc, e}$ for each $e \in \domain{\I} \setminus \domain{\I_\Pmc}$ s.t.\ $\J_{\Pmc, e} \models \kb$ with $\J_{\Pmc, e} <_\CP \I_{\Pmc \cup \{ e\} }$ and $\J_{\Pmc, e}|_{\domain{\J_\Pmc}} = \J_\Pmc$, then our procedure rejects.
	Otherwise, that is all choices of $\Pmc$ and $\J_\Pmc$ led to the existence of a $e \in \domain{\I} \setminus \domain{\I_\Pmc}$ without a fitting $\J_{\Pmc, e}$, then it accepts.
	Notice that: iterating over such $\Pmc$ can be done in $\sizeof{\domain{\I}}^{2 \sizeof{\tbox} + 1}$ iterations since we require $\sizeof{\Pmc} \leq 2 \sizeof{\tbox} + 1$.
	Computing $\I_\Pmc$ follows directly from the choice of $\Pmc$.
	Since each $\I_\Pmc$ has polynomial size, iterating over each $\J_\Pmc$ can be done naively with exponentially many steps.
	Further iterating over each $e \in \domain{\I} \setminus \domain{\I_\Pmc}$ can be done in $\sizeof{\domain{\I}}$ steps, which is at most exponential by construction.
	Since each $\I_{\Pmc \cup \{ e\} }$ has polynomial size, deciding the existence of $\J_{\Pmc, e}$ can be done naively in exponentially many steps.
	Overall, the procedure uses an exponential number of non-deterministic steps at the beginning and further performs several checks using exponentially many more deterministic steps.
	
	It remains to prove an accepting run exists iff there is a countermodel for $q$ over $\circkb$.
	If there exists an accepting run, then the corresponding guessed interpretation $\I$ is a model of $\kb$ that does not entail $q$, and it must also comply with $\CP$. Otherwise, the ``$1 \Rightarrow 2$'' direction from Lemma~\ref{lemma-criteria-combined} ensures the procedure would have rejected it.
	Conversely, if a countermodel exists, Lemma~\ref{lemma-bool-countermodel} ensures the existence of a countermodel $\I$ whose domain has exponential size.
	This $\I$ can be guessed by the procedure that checks whether it is indeed a model of $\kb$ not entailing $q$; hence, do not reject it at first.
	The ``$2 \Rightarrow 1$'' direction from Lemma~\ref{lemma-criteria-combined} further ensures $\I$ also passes the remaining check performed by the procedure; otherwise, it would contradict $\I$ being a model of $\circkb$.
\end{proof}

\thmcombinedlowerdllitepos*
\begin{proof}
	
	The proof proceeds by reduction from the complement of the $\NExpTime$-complete \succinct\tcol\ problem. %, known to be $\NEXP$-complete due to \cite{papadimitriou-succinct}.
	An instance of \succinct\tcol\ consists of a Boolean circuit $C$ with $2n$ input gates.
	The graph $G_C$ encoded by $C$ has $2^n$ vertices, identified by binary encodings on $n$ bits.
	Two vertices $u$ and $v$, with respective binary encodings $u_1 \dots u_n$ and $v_1 \dots v_n$, are adjacent in $G_C$ iff $C$ returns True when given as input $u_1 \dots u_n$ on its first $n$ gates and  $v_1 \dots v_n$ on the second half.
	The problem of deciding if $G_C$ is 3-colorable has been proven to be $\NExpTime$-complete in \cite{Papadimitriou1986}.
	
	Let $C$ be a Boolean circuit with $2n$ input gates.
%	To prove this result, we provide a polynomial time reduction from the complement of Succinct3COL, which is known to be \NExpTime-complete \cite{Papadimitriou1986}.
	Let $G = (V,E)$ be the corresponding graph with $V = \{v_1, v_2, \cdots, v_{2^n}\}$.
	We denote $\bar i$ the binary encoding on $n$ bits of vertex $v_i$.
	We also identify $\mn{t}$ (True) and $\mn{f}$ (False) with their usual binary valuation $1$ and $0$, respectively.
%	%
%	\textcolor{red}{Robin: Quentin said that I can assume $G$ to contain exactly $2^n$ vertices, although the sources allow for less vertices. As I have no references, I am not entirely convinced that this restriction is sound, but without it, the proof does not go through.}
%	%
%	A \emph{succinct representation} of $G$ is a circuit $C_G$ that computes the following function:
%	%
%	\[
%		C_G(\bar i, \bar j) =
%		\begin{cases}
%			\mn{t} & \text{if } (v_i,v_j) \in E               \\
%			\mn{f} & \text{if, otherwise, }(v_i,v_j) \notin E
%		\end{cases}
%	\]
%
%	\noindent The problem Succinct3COL is, given the succinct representation $C_G$ of a graph $G$, to decide whether $G$ is 3-colorable.
	We construct a circumscribed \dllitecore KB $\Circ(\Kmc)$ with $\Kmc = (\Tmc,\Amc)$ and a UCQ $q$ that encode a given problem instance.
	With the aim to restrict colors and truth values to those defined in the ABox (see axioms (\ref{ax:min_value_col})), we minimize exactly the concept name $\mn{Min}$ and let others vary freely.
	Our reduction starts by representing the colors (\ref{ax:colors_def}) and truth values (\ref{ax:values_def}) in the ABox.
	The role name $\mn{neq}$ makes the inequality relations within them explicit; see axioms (\ref{ax:neq_colors}) and (\ref{ax:neq_values}).
	\begin{align}
		\mn{Col}(c)                      & \text{ for all } c \in \{\mn{r},\mn{g},\mn{b}\}         \label{ax:colors_def}                             \\
		\mn{neq}(c,c')                & \text{ for all } c,c' \in \{\mn{r},\mn{g},\mn{b}\}  \text{ with } c \neq c'  \label{ax:neq_colors}                            \\
		\mn{Val}_b(\mn{val}_b)           & \text{ for all } b \in \{\mn{f},\mn{t}\}      \label{ax:values_def}                        \\
		\mn{neq}(\mn{val}_b, \mn{val}_{b'}) & \text{ for all } b,b' \in \{\mn{f},\mn{t}\} \text{ with } b \neq b'    \label{ax:neq_values} \\
		\mn{Min}(x)                      & \text{ for all } x \in \{\mn{r},\mn{g},\mn{b},\mn{val}_\mn{t},\mn{val}_\mn{f}\} \label{ax:min_value_col}
	\end{align}

	We aim countermodels to contain $2^{2n}$ elements that each encode a pair $(v_k,v_l) \in V^2$.
	Towards this goal, we lay the foundation to branch a binary tree of depth $n$ from a root $a_\mn{tree}$, which will be later enforced via the constructed UCQ further down.
	The concept names $(\mn{Index}_i)_{i = 1,\cdots,2n}$ encode the tree levels, which fork via the role names $\mn{next}_{i,\mn{t}}$ and $\mn{next}_{i,\mn{f}}$, encoding whether the $i$-th bit in $\bar k \cdot \bar l$ should be set to $\mn{t}$ or $\mn{f}$, respectively.
	The desired elements will therefore be (a subset of) the extension $\mn{Index}_{2n}$.  
	A role name $(\mn{hBit}_j)_{j = 1, \cdots, 2n}$ will later set the actual bit values.
	Note that axioms (\ref{ax:index_hBit}) require any $i$-th tree level to set the bits $1$ to $i$ so that the UCQ can later ensure the truth values to coincide with the choice via $\mn{next}$.
	Formally:
	\begin{align}
		\mathclap{\mn{Index}_0(a_\mn{tree})} \label{ax:tree_root}                                                                                                                                         \\
		\mn{Index}_{i-1}            & \sqsubseteq \exists \mn{next}_{i,b}               \label{ax:tree_branch}                                                                                                \\
		\exists \mn{next}^-_{i,b} & \sqsubseteq  \mn{Index}_{i}                                                 \label{ax:next_index}                                                                     \\
		\mn{Index}_i            & \sqsubseteq \exists \mn{hBit}_{j}                                                                   &  & \text{ for all } j \in \{1, \cdots, i\}  \label{ax:index_hBit} \\
		\exists \mn{hBit}_j^-   & \sqsubseteq \mn{Min}                                                                                &  & \text{ for all } j \in \{1, \cdots, 2n\} \label{ax:hBit_min}
	\end{align}
	for all $i \in \{1, \cdots, 2n\}$ and $b \in \{\mn{t}, \mn{f}\}$.

	We encode the color assignments of $(v_k,v_l)$ only at the $2n$-th level via the role name $\mn{hCol}$ for $v_k$ and $\mn{hCol}'$ for $v_l$:
	\begin{align}
		\mn{Index}_{2n}     & \sqsubseteq \exists \mn{hCol} & \mn{Index}_{2n}        & \sqsubseteq \exists \mn{hCol}' \label{ax:hCol_index} \\
		\exists \mn{hCol}^- & \sqsubseteq \mn{Min}          & \exists {\mn{hCol}'}^- & \sqsubseteq \mn{Min} \label{ax:hCol_min}
	\end{align}

	Finally, the computation of a gate $g$ in $C$ is enforced for each pair by an outgoing edge $\mn{gate}_g$ at the $2n$-th level.
	Add:
	\begin{align}
		\mn{Index}_{2n}     & \sqsubseteq \exists \mn{gate}_g \label{ax:index_gate}\\
		\exists \mn{gate}_g & \sqsubseteq \mn{Min} \label{ax:gate_min}
	\end{align}
	for all gates $g$ in $C$.

	Let the constructed UCQ $q$ be the disjunction over all subsequent queries.
	We first make sure that no color is used as a truth value (CQs (\ref{cq:hasBit_not_to_color}) and (\ref{cq:gate_not_to_color})) or vice-versa (CQs (\ref{cq:hasCol_not_to_val}) and (\ref{cq:hasCol2_not_to_val})):
	\begin{align}
		\exists x,y \, \mn{hBit}_j(x,y) \wedge \mn{Col}(y)  & \text{ for all } j \in \{1, \cdots, 2n\}     \label{cq:hasBit_not_to_color} \\
		\exists x,y \, \mn{gate}_g(x,y) \wedge \mn{Col}(y)  & \text{ for all gates } g \text{ in } G'   \label{cq:gate_not_to_color}      \\
		\exists x,y \, \mn{hCol}(x,y) \wedge \mn{Val}_b(y)  & \text{ for all } b \in \{\mn{t}, \mn{f}\} \label{cq:hasCol_not_to_val}      \\
		\exists x,y \, \mn{hCol}'(x,y) \wedge \mn{Val}_b(y) & \text{ for all } b \in \{\mn{t}, \mn{f}\} \label{cq:hasCol2_not_to_val}
	\end{align}

	We next simultaneously enforce that a) the structure constructed by axioms (\ref{ax:tree_root})--(\ref{ax:next_index}) really is a tree, and b) that the valuation required by axioms (\ref{ax:index_hBit}) and (\ref{ax:hBit_min}) is consistent with the branching of the tree.
	We achieve this via CQs (\ref{cq:next_correct_additional_bit}), which makes countermodels assign the bits by $\mn{hBit}$ as dictated by $\mn{next}$, and via CQs (\ref{cq:next_same_bits}), which makes countermodels propagate the bit assignment of a node downwards the tree to all successors:
	\begin{align}
		\begin{split}
			\exists x,y,z_1,z_2 \, & \mn{next}_{i-1,b}(x,y) \wedge \mn{hBit}_j(y,z_1) \\
			& \wedge \mn{neq}(z_1,z_2) \wedge \mn{Val}_b(z_2)
		\end{split} \label{cq:next_correct_additional_bit} \\
		\begin{split}
			\exists x,y,z_1,z_2 \, & \mn{next}_{i-1,b}(x,y) \wedge \mn{hBit}_j(x,z_1) \\
			& \wedge \mn{hBit}_j(x,z_2)  \wedge \mn{neq}(z_1,z_2)
		\end{split} \label{cq:next_same_bits}
	\end{align}
	for all $i \in \{1, \cdots, 2n\}$, $j \in \{1, \cdots, i\}$ and $b \in \{\mn{t}, \mn{f}\}$.
	Observe that the above indeed enforces a tree.

	We go on to construct CQs (\ref{cq:v_color_consistent}) and (\ref{cq:v_u_color_consistent}), which prohibit any two encoded pairs of vertices $(v,u)$ and $(v,u')$ or $(v,u)$ and $(u,u')$ to assign different colors to $v$ or $u$, respectively:
	\begin{align}
		\begin{split}
			\exists & x,y,c,d,z_1, \cdots, z_n \, \mn{hCol}(x,c) \wedge \mn{hCol}(y,d) \\
			& \wedge \mn{neq}(c,d) \wedge \bigwedge_{1 \leq j \leq n} \big(\mn{hBit}_j(y, z_j) \wedge \mn{hBit}_j(x,z_j) \big)
		\end{split} \label{cq:v_color_consistent} \\
		\begin{split}
			\exists & x,y,c,d,z_1, \cdots, z_n \, \mn{hCol}(x,c) \wedge \mn{hCol}'(y,d) \\
			& \wedge \mn{neq}(c,d) \wedge \bigwedge_{1 \leq j \leq n} \big(\mn{hBit}_j(y, z_j) \wedge \mn{hBit}'_{j+n}(x,z_j) \big)
		\end{split} \label{cq:v_u_color_consistent}
	\end{align}

	To ensure that the computation of the boolean circuit is consistent, we construct CQs restricting $\mn{gate}_g$ to follow the logical operator of $g$.
	For this, we assume w.l.o.g.\ that $C$ only contains unary NOT-gates, binary AND- and OR-gates, and $2n$ nullary INPUT-gates (each encoding a bit in $\bar k \cdot \bar l$).
	Given an INPUT-gate $g$ from $C$, we denote the index of the encoded bit from $\bar k \cdot \bar l$ with $i_g$.
	Construct, for each INPUT-gate $g$ from $C$, the CQ as given next:
	\begin{align}
		\begin{split}
			\exists x,y,z \, \mn{gate}_g(x,y) \wedge \mn{hBit}_{i_g}(x,z) \wedge \mn{neq}(y,z) \label{cq:input_gates}
		\end{split}
	\end{align}
	For all NOT-gates $g$ with their parent $g'$, construct
	\begin{align}
		\begin{split}
			\exists x,y \, \mn{gate}_g(x,y) \wedge \mn{gate}_{g'}(x,y) \label{cq:not_gate}
		\end{split}
	\end{align}
	For all AND-gates $g$ with parents $g_1,g_2$, construct
	\begin{align}
		\begin{split}
			\exists x,y,z \, & \mn{gate}_g(x,y) \wedge \mn{Val}_\mn{t}(y) \wedge \mn{gate}_{g_1}(x,z) \\
			& \wedge \mn{Val}_\mn{f}(z)
		\end{split} \\
		\begin{split}
			\exists x,y,z \, & \mn{gate}_g(x,y) \wedge \mn{Val}_\mn{t}(y) \wedge \mn{gate}_{g_2}(x,z) \\
			& \wedge \mn{Val}_\mn{f}(z)
		\end{split} \\
		\begin{split}
			\exists x,y,z \, & \mn{gate}_g(x,y) \wedge \mn{Val}_\mn{f}(y) \wedge \mn{gate}_{g_1}(x,z)  \\
			& \wedge \mn{Val}_\mn{t}(z) \wedge \mn{gate}_{g_2}(x,z)
		\end{split}
	\end{align}
	For all OR-gates $g$ with parents $g_1,g_2$, construct
	\begin{align}
		\begin{split}
			\exists x,y,z \, & \mn{gate}_g(x,y) \wedge \mn{Val}_\mn{f}(y) \wedge \mn{gate}_{g_1}(x,z) \\
			& \wedge \mn{Val}_\mn{t}(z)
		\end{split} \\
		\begin{split}
			\exists x,y,z \, & \mn{gate}_g(x,y) \wedge \mn{Val}_\mn{f}(y) \wedge \mn{gate}_{g_2}(x,z) \\
			& \wedge \mn{Val}_\mn{t}(z)
		\end{split} \\
		\begin{split}
			\exists x,y,z \, & \mn{gate}_g(x,y) \wedge \mn{Val}_\mn{t}(y) \wedge \mn{gate}_{g_1}(x,z)  \\
			& \wedge \mn{Val}_\mn{f}(z) \wedge \mn{gate}_{g_2}(x,z)
		\end{split} \label{cq:or_true_despite_both_false}
	\end{align}

	Finally, we rule out the existence of monochromatic edges in countermodels via CQ (\ref{cq:monochromatic_edge}):
	\begin{align}
		\exists x,y,z \, \mn{hCol}(x,y) \wedge \mn{hCol}'(x,y) \wedge \mn{gate}_{\dot g}(x,z)  \wedge \mn{Val}_\mn{t}(z) \label{cq:monochromatic_edge}
	\end{align}
	where we here and in what follows denote by $\dot g$ the output gate of $C$.

	%	Reduction from coSuccinct3COL: ABox encodes 2 Booleans + 3 colors as instances of the minimized predicate $M$. ABox also encode the root of a tree. TBox encodes a tree generating all couples of vertices $(u, v)$. Binary encoding is achieved by having roles pointing to instances of $M$. Picking colors of $u$ and $v$ at each end node is done similarly. Computing the Boolean circuit for each $(u, v)$ is done by having one role per gate of the circuit pointing to an instance of $M$.

	%The query matches if one of the following holds: a color is used as a Boolean or vice-versa; binary encoding is inconsistent with the branchings of the tree; end nodes define a inconsistent 3-coloring (ie there are two end nodes assigning two different colors to a same $u$); the computation of the Boolean circuit is inconsistent; or finally if there is a monochromatic edge.

	\begin{lemma}
		$G$ is 3-colorable iff $\Circ(\Kmc) \not \models q$.
	\end{lemma}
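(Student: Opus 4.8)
The plan is to prove the two directions of the equivalence by the usual canonical-model / countermodel correspondence for circumscription reductions, the point in each direction being a large but routine case analysis over the disjuncts of $q$.

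\emph{From a proper 3-coloring to a countermodel.} Given a proper 3-coloring $\gamma:V\to\{\mn{r},\mn{g},\mn{b}\}$ of $G$, I would build the intended model $\Imc$ whose domain consists of the five asserted individuals $\mn{r},\mn{g},\mn{b},\mn{val}_\mn{t},\mn{val}_\mn{f}$ together with a full binary tree of depth $2n$ rooted at $a_\mn{tree}$, placing the node reached along the branch labeled $w\in\{\mn{t},\mn{f}\}^i$ into $\mn{Index}_i$, sending the $\mn{next}_{i,b}$-edges to the corresponding children, $\mn{hBit}_j$ of a node labeled $w$ (for $j\le\lvert w\rvert$) to $\mn{val}_{w_j}$, and, at each leaf labeled $w=\bar k\cdot\bar l$, sending $\mn{hCol}$ to $\gamma(v_k)$, $\mn{hCol}'$ to $\gamma(v_l)$, and each $\mn{gate}_g$ to the truth value $C_g(w)$ computed at gate $g$ of $C$ on input $w$; the predicates $\mn{Col},\mn{Val}_b,\mn{neq}$ are interpreted exactly as asserted, and $\mn{Min}$ as $\{\mn{r},\mn{g},\mn{b},\mn{val}_\mn{t},\mn{val}_\mn{f}\}$. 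I would then check that (i) $\Imc\models\Kmc$ by inspecting each inclusion and assertion; (ii) $\Imc$ is $<_\CP$-minimal, since $\mn{Min}$ is the only minimized name and $\Amc$ forces $\mn{Min}^\Jmc\supseteq\{\mn{r},\mn{g},\mn{b},\mn{val}_\mn{t},\mn{val}_\mn{f}\}=\mn{Min}^\Imc$ in every model $\Jmc$, so no model of $\Kmc$ can be strictly $<_\CP$-below $\Imc$; and (iii) $\Imc\not\models q$, verified disjunct by disjunct — the ``purity'' CQs fail because $\mn{hBit}_j$- and $\mn{gate}_g$-targets are truth values while $\mn{hCol}$-/$\mn{hCol}'$-targets are colors, the tree-consistency CQs fail because the tree is labeled consistently, the gate CQs fail because the $\mn{gate}_g$-values follow $C$, the color-consistency CQs fail because $\gamma$ depends only on the vertex, and the monochromatic-edge CQ fails precisely because $\gamma$ is \emph{proper} (a leaf with $\mn{gate}_{\dot g}$ hitting $\mn{val}_\mn{t}$ encodes an edge, whose endpoints $\gamma$ colors differently). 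Thus $\Imc$ witnesses $\Circ(\Kmc)\not\models q$.

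\emph{From a countermodel to a proper 3-coloring.} Conversely, let $\Imc\models\Circ(\Kmc)$ with $\Imc\not\models q$. The one genuinely circumscriptive step is to pin down $\mn{Min}^\Imc$: if some $d\in\mn{Min}^\Imc$ were not among the five asserted individuals, I would construct a model $\Jmc$ of $\Kmc$ with $\mn{Min}^\Jmc=\mn{Min}^\Imc\setminus\{d\}$ by rerouting every $\mn{hBit}_j$-, $\mn{hCol}$-, $\mn{hCol}'$- and $\mn{gate}_g$-edge into $d$ onto an appropriate one of those five individuals (this preserves modelhood because $\mn{Min}$ is forced to grow only through inclusions $\exists R^-\sqsubseteq\mn{Min}$, and these edge types do not occur in $\Amc$), whence $\Jmc<_\CP\Imc$, a contradiction. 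Since $\Amc$ gives the reverse inclusion, $\mn{Min}^\Imc=\{\mn{r},\mn{g},\mn{b},\mn{val}_\mn{t},\mn{val}_\mn{f}\}$, and together with the failing purity CQs this forces every $\mn{hBit}_j$- and $\mn{gate}_g$-target into $\{\mn{val}_\mn{t},\mn{val}_\mn{f}\}$ and every $\mn{hCol}$-/$\mn{hCol}'$-target into $\{\mn{r},\mn{g},\mn{b}\}$. Then, starting from $a_\mn{tree}\in\mn{Index}_0^\Imc$ and using the branching inclusions, for each $w\in\{\mn{t},\mn{f}\}^{2n}$ I pick a leaf $n_w\in\mn{Index}_{2n}^\Imc$ reached along $\mn{next}$-edges labeled by $w$; an induction along that branch, using that the new-bit and bit-propagation CQs do not hold, yields $\mn{hBit}_j(n_w,\cdot)=\mn{val}_{w_j}$ uniquely, and an induction over the gates of $C$, using the gate CQs, yields $\mn{gate}_g(n_w,\cdot)=\mn{val}_{C_g(w)}$, so $\mn{gate}_{\dot g}$ hits $\mn{val}_\mn{t}$ at $n_w$ exactly when $w=\bar k\cdot\bar l$ with $(v_k,v_l)\in E$. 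The two color-consistency CQs let me define $\mn{col}(v_k)$ as the common $\mn{hCol}$/$\mn{hCol}'$-value taken in any leaf in which $v_k$ occurs, and the monochromatic-edge CQ gives $\mn{col}(v_k)\ne\mn{col}(v_l)$ whenever $(v_k,v_l)\in E$. Hence $\mn{col}$ is a proper 3-coloring of $G$.

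The bulk of the work is the two case analyses over the roughly fifteen CQ disjuncts, which are individually routine. The main obstacle is the minimality argument that pins $\mn{Min}^\Imc$: it must go through without any control over the junk $\Imc$ may place into the varying predicates $\mn{neq}$, $\mn{Col}$, $\mn{Val}_b$, so one may only invoke the \emph{asserted} $\mn{neq}$-edges and the failure of the CQs in the specific configurations that the tree inclusions force to exist; a secondary subtlety is verifying that the edge-rerouting used there cannot be blocked by ABox assertions, which holds because $\Amc$ contains no $\mn{hBit}_j$, $\mn{hCol}$, $\mn{hCol}'$, $\mn{gate}_g$, or $\mn{next}$ assertion.
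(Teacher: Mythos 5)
Your proposal is correct and follows essentially the same route as the paper's proof: the forward direction builds the same canonical tree countermodel from the coloring and argues minimality from $\mn{Min}$ being exactly the five asserted individuals, and the backward direction first pins down $\mn{Min}^\Imc$ via a rerouting/minimality argument, then uses the failing CQs to recover the tree, the bit assignments, the gate values, and finally a well-defined proper coloring. The paper merely packages the backward direction into explicit claims (ranges, tree induction on sets $Y_i$ of size $2^i$, the bijection $\vartheta$, soundness of $\pi$, gate correctness), but the content matches yours step for step.
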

	``$\Rightarrow$''.
	Let $G$ be 3-colorable.
	Then there exists a 3-coloring $\pi : V \rightarrow \{\mn{r},\mn{g},\mn{b}\}$ such that $\pi(v) \neq \pi(u)$ for all $(v,u) \in V$.
	To show that $\Circ(\Kmc) \not \models q$, we construct a countermodel \Imc of $\Circ(\Kmc)$ as follows.
	Let $X \coloneqq \bigcup_{j = 1, \cdots, 2n} \{a_{j,1}, \cdots, a_{j,2^j}\}$; then set

	\begin{align*}
		\Delta^\Imc \coloneqq          & \Ind(\Amc) \cup X                                             \\
		A^\Imc \coloneqq          & \{ a \mid A(a) \in \Amc\}                                     \\
		\mn{neq}^\Imc \coloneqq        & \{ (a,b) \mid \mn{neq}(a,b) \in \Amc\}                        \\
		\mn{Index}^\Imc_i \coloneqq    & \{a_{i,1}, \cdots, a_{i,2^i}\}                                \\
		\mn{next}^\Imc_{i,\mn{t}} \coloneqq & \bigcup_{j = 1, \cdots, 2^{i-1}} \{(a_{i-1,j}, a_{i,2j})\}     \\
		\mn{next}^\Imc_{i,\mn{f}} \coloneqq & \bigcup_{j = 1, \cdots, 2^{i-1}} \{(a_{i-1,j}, a_{i,2j - 1})\} \\
		\begin{split}
			\mn{hBit}^\Imc_i \coloneqq &  \bigcup_{j = 1, \cdots, 2^{i-1}} \big( \{(a,\mn{val}_\mn{t}) \mid a \in  R(a_{i,2j})\} \\
			& \phantom{\bigcup_{1 \leq j \leq 2^{i-1}}\big(} \cup \{(a,\mn{val}_\mn{f}) \mid a \in  R(a_{i,2j-1})\} \big)
		\end{split}
	\end{align*}
	for all concept names $A \in \{\mn{Col}, \mn{Val}_\mn{t},\mn{Val}_\mn{f},\mn{Min},\mn{Index}_0\}$ and $i\in \{1, \cdots, 2n\}$, where we use
	\begin{itemize}
		\item $a_{0,1}$ to denote $a_\mn{tree}$, and
		\item $R(a_{k,l})$ to denote the set of instances that are reachable in $\Imc$ from $a_{k,l} \in X$ via the family of role names $\mn{next}_{i,\mn{t}}$ and $\mn{next}_{i,\mn{f}}$ (including $a_{k,l}$ itself).
	\end{itemize}
	We furthermore set  $\mn{hCol}^\Imc, {\mn{hCol}'}^\Imc$ as follows.
	Given some $a_{2n,i} \in X$ with $i \in \{1, \cdots 2^{2n}\}$, we denote by $\vartheta(a_{2n,i})$ the encoded pair $(v_k,u_l) \in V^2$, i.e., such that for all $j \in \{1, \cdots, 2n\}$, we have $(a_{2n,i}, \mn{val}_b) \in \mn{hBit}_j^\Imc$ iff the $j$-th bit in $\bar k \cdot \bar l$ is $b$ with $b \in \{\mn{t},\mn{f}\}$.
	\begin{align*}
		\mn{hCol}^\Imc \coloneqq    & \left\{ \big(a_{2n,i}, \pi(v)\big) \mid \vartheta(a_{2n,i}) = (v,u) \right\} \\
		{\mn{hCol}'}^\Imc \coloneqq & \left\{ \big(a_{2n,i}, \pi(u)\big) \mid \vartheta(a_{2n,i}) = (v,u) \right\}
	\end{align*}
	Finally, we set $\mn{gate}_g$ for all gates $g$ in $G_C$.
	We denote the sub-circuit of $G_C$ that contains exactly $g$ and all its ancestors with $G_C^g$.
	Note that the output gate of $G_C^g$ is $g$, i.e., $G_C^g(\bar k, \bar l)$ computes the value of $g$ in $G_C$ given the input $(\bar k, \bar l)$.
	Then, for all gates $g$, set:
	\begin{align*}
		\begin{split}
			\mn{gate}_g^\Imc \coloneqq & \{(a_{2n,i}, \mn{val}_b) \mid \vartheta(a_{2n,i}) = (v_k,u_l)  \\
			&\phantom{\{(a_{2n,i}, \mn{val}_\mn{t}) \mid} \text{ and } G_C^g(\bar k, \bar l) = b \} \\
		\end{split}
	\end{align*}

	It is obvious that \Imc is a model of \Kmc.
	As $a \in \mn{Min}^\Imc$ iff $\mn{Min}(a)$ for all $a \in \Delta^\Imc$, and $\mn{Min}$ is the only minimized concept, \Imc must furthermore be minimal.

	To see that $\Imc \not \models q$, it is readily checked that
	\begin{itemize}
		\item \Imc by definition has no answer to CQs (\ref{cq:hasBit_not_to_color})--(\ref{cq:v_u_color_consistent}),
		\item additionally, \Imc has no answer to CQs (\ref{cq:input_gates})--(\ref{cq:or_true_despite_both_false}), as otherwise, the computation of $G_C$ would be inconsistent, and
		\item finally, \Imc has no answer to CQ (\ref{cq:monochromatic_edge}), as otherwise, $\pi$ would not be a 3-coloring of $G$.
	\end{itemize}

%	\textcolor{red}{Robin: Not sure whether the above suffices or if I need to dig into all the details}

	\smallskip

	``$\Leftarrow$''.
	Let $\Circ(\Kmc)\not \models q$.
	Then there is a model \Imc of $\Circ(\Kmc)$ such that $\Imc \not \models q$.
	We aim to extract a function $\pi : V \rightarrow \{\mn{r},\mn{g},\mn{b}\}$ and to show that $\pi$ is indeed a 3-coloring of $G$.
	This is rather laborious, as we have to prepare the final argument in five steps. In Claim (\ref{claim:relation_ranges})--(\ref{claim:output_gate_edge}), respectively, we
	\begin{enumerate}
		\item briefly argue that $\mn{hBit}$ and $\mn{gate}$ have the range $\{\mn{val}_\mn{t}, \mn{val}_\mn{f}\}$ and that $\mn{hCol}$ and $\mn{hCol}'$ have the range $\{\mn{r},\mn{g},\mn{b}\}$, which will be convenient later on,
		\item identify within \Imc the intended tree-structure,
		\item show that each leaf indeed encodes a pair $(v_k,v_l) \in V^2$, setting precisely the bits from $\bar k \cdot \bar l$,
		\item extract $\pi$ from \Imc and show that it is sound in the sense that any two leaves that encode the same vertex also assign it the same color, and
		\item proof that the leaves correctly compute all gates.
	\end{enumerate}
	We start with the ranges of $\mn{hBit}, \mn{gate}$ and $\mn{hCol}$:
	\begin{claim} \label{claim:relation_ranges}
		The following properties hold.
		\begin{itemize}
			\item both $(x,y) \in \mn{hBit}^\Imc_i$ and $(x,y) \in \mn{gate}^\Imc_g$ entail $y \in \{\mn{val}_\mn{t}, \mn{val}_\mn{f}\}$ for all $i \in \{1, \cdots, 2n\}$ and gates $g$ in $C$, and
			\item both $(x,y) \in \mn{hCol}^\Imc$ and $(x,y) \in {\mn{hCol}'}^\Imc$ entail $y \in \{\mn{r},\mn{g},\mn{b}\}$.
		\end{itemize}
	\end{claim}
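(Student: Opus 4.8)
The plan is to work with the model $\Imc$ of $\Circ(\Kmc)$ satisfying $\Imc \not\models q$ that was fixed at the start of the ``$\Leftarrow$'' direction, write $S = \{\mn{r},\mn{g},\mn{b},\mn{val}_\mn{t},\mn{val}_\mn{f}\}$ for the set of individuals forced into $\mn{Min}$ by Assertion~(\ref{ax:min_value_col}), and prove the claim in two steps. First I would show that whenever $(x,y)$ lies in the extension of one of the role names $\mn{hBit}_i$, $\mn{hCol}$, $\mn{hCol}'$ or $\mn{gate}_g$, then $y \in S$; this uses only the minimality of $\Imc$ and the fact that $\mn{Min}$ is the single minimized concept name (so the preference relation is empty). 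Second I would sharpen ``$y \in S$'' to ``$y \in \{\mn{val}_\mn{t},\mn{val}_\mn{f}\}$'' for $\mn{hBit}_i$ and $\mn{gate}_g$, and to ``$y \in \{\mn{r},\mn{g},\mn{b}\}$'' for $\mn{hCol}$ and $\mn{hCol}'$, by invoking the query disjuncts that forbid colours from being used as truth values and vice versa.

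For the first step I would argue by contradiction: suppose $y \notin S$ while $(x_0,y)$ lies in one of the above role names. The only concept inclusions with $\mn{Min}$ on the right are those of the form $\exists r^- \sqsubseteq \mn{Min}$ for $r$ ranging over $\mn{hBit}_1,\dots,\mn{hBit}_{2n}$, $\mn{hCol}$, $\mn{hCol}'$ and the gate roles (Axioms~(\ref{ax:hBit_min}),~(\ref{ax:hCol_min}),~(\ref{ax:gate_min})), and $\mn{Min}$ never occurs on the left of an inclusion. Hence I can build $\Imc'$ from $\Imc$ by keeping the domain and all concept names except $\mn{Min}$, setting $\mn{Min}^{\Imc'} = \mn{Min}^\Imc \setminus \{y\}$, and rerouting every edge of the form $(x,y)$ in one of those role names to $(x,s_r)$, where $s_r = \mn{val}_\mn{t}$ for the bit and gate roles and $s_r = \mn{r}$ for $\mn{hCol}$ and $\mn{hCol}'$, leaving all other role edges untouched. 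Verifying $\Imc' \models \Kmc$ reduces to two observations: the existential inclusions $\mn{Index}_i \sqsubseteq \exists \mn{hBit}_j$ and the analogous~(\ref{ax:hCol_index}) and~(\ref{ax:index_gate}) still hold because rerouting preserves, for every element, the existence of a successor; and the inclusions $\exists r^- \sqsubseteq \mn{Min}$ still hold because the new targets lie in $S \subseteq \mn{Min}^{\Imc'}$ (which also keeps Assertion~(\ref{ax:min_value_col}) satisfied), while after rerouting $y$ has no incoming edge of any of these roles and so nothing forces $y \in \mn{Min}^{\Imc'}$. Since only $\mn{Min}$ is minimized and $\mn{Min}^{\Imc'} \subsetneq \mn{Min}^\Imc$, we obtain $\Imc' <_\CP \Imc$, contradicting $\Imc \models \Circ(\Kmc)$; hence $y \in S$.

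For the second step, given $(x,y) \in \mn{hBit}_i^\Imc$ I already have $y \in S$, and since $\Imc \not\models q$ the disjunct~(\ref{cq:hasBit_not_to_color}) has no match, so $y \notin \mn{Col}^\Imc$; as $\{\mn{r},\mn{g},\mn{b}\} \subseteq \mn{Col}^\Imc$ by~(\ref{ax:colors_def}), this leaves $y \in \{\mn{val}_\mn{t},\mn{val}_\mn{f}\}$, and the $\mn{gate}_g$ case is identical via~(\ref{cq:gate_not_to_color}). Dually, for $(x,y) \in \mn{hCol}^\Imc$ (resp.\ in ${\mn{hCol}'}^\Imc$) I combine $y \in S$ with the absence of a match for~(\ref{cq:hasCol_not_to_val}) (resp.~(\ref{cq:hasCol2_not_to_val})) and the fact that $\{\mn{val}_\mn{t},\mn{val}_\mn{f}\} \subseteq \mn{Val}_\mn{t}^\Imc \cup \mn{Val}_\mn{f}^\Imc$ by~(\ref{ax:values_def}) to conclude $y \in \{\mn{r},\mn{g},\mn{b}\}$. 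The main obstacle I anticipate lies entirely in the first step: making sure the surgery genuinely yields a model of $\Kmc$ strictly below $\Imc$, i.e.\ that rerouting all edges into $y$ and deleting $y$ from $\mn{Min}$ violates neither an existential inclusion feeding one of the rerouted roles nor any inclusion mentioning $\mn{Min}$. This is exactly where the design of the reduction is used — all role names vary, and $\mn{Min}$ occurs only positively and only on right-hand sides — and it deserves to be written out carefully.
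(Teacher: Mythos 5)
Your proof is correct and follows essentially the same route as the paper: minimality of $\Imc$ forces the targets of $\mn{hBit}_i$, $\mn{hCol}$, $\mn{hCol}'$ and $\mn{gate}_g$ into $\{\mn{r},\mn{g},\mn{b},\mn{val}_\mn{t},\mn{val}_\mn{f}\}$, and the query disjuncts forbidding colours as truth values (and vice versa) then finish the job. The only difference is presentational: the paper first establishes $\mn{Min}^\Imc = \{\mn{r},\mn{g},\mn{b},\mn{val}_\mn{t},\mn{val}_\mn{f}\}$ and dismisses the construction of a smaller model as ``straightforward'', whereas your rerouting surgery is precisely that construction, written out in full.
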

	\noindent The proof is simple: As \Imc is a model of \Amc, axioms (\ref{ax:min_value_col}) imply $\{\mn{r},\mn{g},\mn{b},\mn{val}_\mn{t},\mn{val}_\mn{f}\} \subseteq \mn{Min}^\Imc$.
	Furthermore, it is obvious that there cannot be any $a \in \mn{Min}^\Imc$ with $\mn{Min}(a) \notin \Amc$, since \Imc would otherwise not be $<_\CP$ minimal (it would then be straightforward to construct a smaller model).
	Ergo, $\mn{Min}^\Imc = \{\mn{r},\mn{g},\mn{b},\mn{val}_\mn{t},\mn{val}_\mn{f}\}$.

	The rest of the claim follows from axioms (\ref{ax:colors_def}), (\ref{ax:values_def}), (\ref{ax:hBit_min}), (\ref{ax:hCol_min}), (\ref{ax:gate_min}), CQs (\ref{cq:hasBit_not_to_color})--(\ref{cq:hasCol2_not_to_val}), and $\Imc \not \models q$:
	For example, in the case of $\mn{hCol}$, consider some $(x,y) \in \mn{hCol}^\Imc$.
	By axiom (\ref{ax:hCol_min}), we have that $y \in\mn{Min}^\Imc$.
	CQ (\ref{cq:hasCol_not_to_val}) and the fact that $\Imc \not \models q$ together yield $y \in\mn{Min}^\Imc \setminus (\mn{Val}_\mn{t} \cup \mn{Val}_\mn{f})$.
	As $\{\mn{val}_\mn{t}, \mn{val}_\mn{f}\} \subseteq \mn{Val}_\mn{t} \cup \mn{Val}_\mn{f}$ by axioms (\ref{ax:values_def}), we thus have that $y \in \{\mn{r},\mn{g},\mn{b}\}$.
	The other cases are analogous.

	This finishes the proof of Claim (\ref{claim:relation_ranges}).

	\smallskip

	We go on by identifying within \Imc the intended tree structure:
	\begin{claim}
		For all $i \in \{1, \cdots, 2n\}$ there is a set $Y_i \subseteq \mn{Index}^\Imc_i$ that satisfies the following properties:

		\begin{enumerate} \label{claim:tree_induction}
			\item for all $y \in Y_i$, there is exactly one \emph{parent $p \in Y_{i-1}$ of $y$} with $(p,y) \in \mn{next}_{i-1,b}^\Imc$ for some $b \in \{\mn{t}, \mn{f}\}$ (we set $Y_0 = \{a_\mn{tree}\}$),
			\item for all $y \in Y_i$ and $j \in \{1, \cdots, i \}$, there is exactly one $b \in \{\mn{t}, \mn{f}\}$ such that $(y, \mn{val}_n) \in \mn{hBit}_j$,
			\item for all $x, y \in Y_i$ with $x \neq y$, there is some $j \in \{1, \cdots, i\}$ such that $(x, \mn{val}_\mn{t}) \in \mn{hBit}_j$ iff $(y, \mn{val}_\mn{f}) \in \mn{hBit}_j$, and
			\item $|Y_i| = 2^i$.
		\end{enumerate}
	\end{claim}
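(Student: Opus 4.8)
The plan is to prove Claim~\ref{claim:tree_induction} by induction on $i$, constructing the sets $Y_i$ level by level. The base case $i=0$ is immediate: $Y_0 = \{a_\mn{tree}\}$ satisfies all four properties vacuously or trivially, using axiom~(\ref{ax:tree_root}) for membership in $\mn{Index}_0^\Imc$. For the induction step, suppose $Y_{i-1}$ has been constructed with the stated properties. For each $p \in Y_{i-1}$, axiom~(\ref{ax:tree_branch}) forces the existence of $\mn{next}_{i,\mn{t}}$- and $\mn{next}_{i,\mn{f}}$-successors; I would pick one such successor $s_{p,b}$ for each $b \in \{\mn{t},\mn{f}\}$ and set $Y_i = \{ s_{p,b} \mid p \in Y_{i-1}, b \in \{\mn{t},\mn{f}\}\}$. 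Property~1 (unique parent in $Y_{i-1}$) needs a small argument: an element could a priori have two parents, but CQ~(\ref{cq:next_same_bits}) propagated along both parent edges combined with CQ~(\ref{cq:v_color_consistent})-style reasoning — more precisely, the fact that distinct elements of $Y_{i-1}$ differ in some $\mn{hBit}_j$ value by the inductive Property~3 — would produce a match for CQ~(\ref{cq:next_correct_additional_bit}) or CQ~(\ref{cq:next_same_bits}), contradicting $\Imc \not\models q$. Membership $Y_i \subseteq \mn{Index}_i^\Imc$ follows from axiom~(\ref{ax:next_index}).

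Next I would establish Property~2. For $y = s_{p,b} \in Y_i$ and $j < i$: by the induction hypothesis $p$ has a unique bit value $(p,\mn{val}_{b_j}) \in \mn{hBit}_j^\Imc$, and axiom~(\ref{ax:index_hBit}) forces $y$ to have \emph{some} $\mn{hBit}_j$-successor; uniqueness of the value and agreement with $p$ come from CQ~(\ref{cq:next_same_bits}) (two distinct values would match it, given $\mn{neq}$ is asserted among $\mn{val}_\mn{t},\mn{val}_\mn{f}$ by~(\ref{ax:neq_values})) together with the range restriction from Claim~\ref{claim:relation_ranges}. For $j = i$: axiom~(\ref{ax:index_hBit}) again forces an $\mn{hBit}_i$-successor, CQ~(\ref{cq:next_correct_additional_bit}) forces its value to be exactly $\mn{val}_b$ (the label of the edge $\mn{next}_{i-1,b}$ taken from $p$), and CQ~(\ref{cq:next_same_bits}) forces uniqueness. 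Thus the $\mn{hBit}$-values of $y$ are determined: they agree with $p$ on coordinates $1,\dots,i-1$ and equal $b$ on coordinate $i$.

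Property~3 then follows: if $x = s_{p,b}$ and $y = s_{p',b'}$ are distinct elements of $Y_i$, then either $p = p'$ and $b \neq b'$, in which case $x$ and $y$ differ on coordinate $i$; or $p \neq p'$, in which case by the inductive Property~3 they differ on some coordinate $j \le i-1$, and by Property~2 (agreement with parents) $x$ and $y$ inherit that difference. Property~4, $|Y_i| = 2^i$, is then a counting consequence of Properties~1 and~3: the map $y \mapsto (\text{parent},\text{edge-label})$ is injective by Property~3 (distinct $y$'s have distinct bit-vectors, hence cannot coincide) and surjective onto $Y_{i-1} \times \{\mn{t},\mn{f}\}$ by construction, so $|Y_i| = 2|Y_{i-1}| = 2 \cdot 2^{i-1} = 2^i$.

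The main obstacle I anticipate is Property~1 — ensuring each node has a \emph{unique} parent, i.e. that the structure is genuinely a tree and not a DAG with merging. Nothing in the axioms directly prevents an element $y$ from being a $\mn{next}$-successor of two different elements $p, p' \in Y_{i-1}$; the exclusion must come entirely from the CQ side. The key insight is that $p \neq p'$ in $Y_{i-1}$ implies (by inductive Property~3) they disagree on some $\mn{hBit}_j$, $j \le i-1$; if $y$ were a child of both, then propagating via CQ~(\ref{cq:next_same_bits}) from each parent would force $y$ to carry \emph{both} bit values on coordinate $j$, which is exactly a match for CQ~(\ref{cq:next_same_bits}) (two $\mn{hBit}_j$-successors $z_1 \neq z_2$ of $y$ with $\mn{neq}(z_1,z_2)$), contradicting $\Imc \not\models q$. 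I would need to state this carefully, perhaps proving Properties~1--3 simultaneously within the induction step rather than sequentially, since they are mutually reinforcing.
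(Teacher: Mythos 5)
Your proposal is correct and follows essentially the same route as the paper's proof: induction on $i$, with $Y_i$ built from chosen $\mn{next}_{i,\mn{t}}$- and $\mn{next}_{i,\mn{f}}$-successors of $Y_{i-1}$, uniqueness of parents derived from the inductive Property~3 plus a match for CQ~(\ref{cq:next_same_bits}), bit values fixed by CQ~(\ref{cq:next_correct_additional_bit}) at level $i$ and by downward propagation below it, and the cardinality obtained by showing the two children-sets of distinct parents are disjoint. The paper proves the four properties in the same order within the induction step (so your closing worry about needing a simultaneous induction is unfounded), and otherwise differs only in starting the base case at $i=1$ rather than $i=0$.
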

	\noindent The proof of Claim (\ref{claim:tree_induction}) goes via induction over $i$:
	\begin{description}
		\item[Base case ($i=1)$.]
			Axioms (\ref{ax:tree_root})--(\ref{ax:next_index}) require the existence of some $x,y \in \mn{Index}_1^\Imc$ such that $(a_\mn{tree},x)\in\mn{next}_{1,\mn{t}}^\Imc$ and $(a_\mn{tree},y)\in\mn{next}_{1,\mn{f}}^\Imc$.
			Set $Y_1 = \{x,y\}$.
			Property 1.\ of Claim (\ref{claim:tree_induction}) obviously holds as $a_\mn{tree}$ is, on the one hand, the only element in $Y_0$, and, on the other hand, the parent of both $x$ and $y$.
			Together with Claim (\ref{claim:relation_ranges}), axioms (\ref{ax:neq_values}), and (\ref{ax:index_hBit}), and CQs (\ref{cq:next_correct_additional_bit}) (and the fact that $\Imc \not \models q$), we have that $(x, \mn{val}_\mn{t}) \in \mn{hBit}_1^\Imc$ and $(y, \mn{val}_\mn{f}) \in \mn{hBit}_1^\Imc$, while $(x, \mn{val}_\mn{f}) \notin \mn{hBit}_1^\Imc$ and $(y, \mn{val}_\mn{t}) \notin \mn{hBit}_1^\Imc$, i.e., also properties 2.\ and 3.\ hold.
			The above furthermore entails $x \neq y$, which is why also property 4.\ must hold.
		\item[Induction step ($i > 1$).]
			By the induction hypothesis, there is a set $Y_{i-1}$ with the claimed properties.
			For all $y \in Y_{i-1}$, axioms (\ref{ax:tree_root})--(\ref{ax:next_index}) entail the existence of elements $a_y,a_y' \in \mn{Index}_i^\Imc$ such that $(y,a_y)\in\mn{next}_{i,\mn{t}}^\Imc$ and $(y,a'_y)\in\mn{next}_{i,\mn{f}}^\Imc$.
			Similar as in the base case, we deduct that $(a_y, \mn{val}_\mn{t}) \in \mn{hBit}_i^\Imc$ and $(a'_y, \mn{val}_\mn{f}) \in \mn{hBit}_i^\Imc$, while $(a_y, \mn{val}_\mn{f}) \notin \mn{hBit}_i^\Imc$ and $(a'_y, \mn{val}_\mn{t}) \notin \mn{hBit}_i^\Imc$ $(\dag)$ via Claim (\ref{claim:relation_ranges}), axioms (\ref{ax:neq_values}) and (\ref{ax:index_hBit}), and CQs (\ref{cq:next_correct_additional_bit}) (and the fact that $\Imc \not \models q$).
			Set $Y_i = \bigcup_{y \in Y_{i-1}} \{a_y,a_y'\}$.

			To see that $Y_i$ satisfies property 1., assume to the contrary that there is some $a \in Y_i$ with two parents from $Y_{i-1}$, i.e., that there are two elements $y,y' \in Y_{i-1}$ such that $(y,a) \in \mn{next}_{i-1,b}^\Imc$ and $(y',a) \in \mn{next}_{i-1,b'}^\Imc$ for some $b,b' \in \{\mn{t},\mn{f}\}$.
			By property 3.\ of the induction hypothesis, there must be some $j \in \{1,\cdots,i-1\}$ such that $(x, \mn{val}_\mn{t}) \in \mn{hBit}_j^\Imc$ iff $(x', \mn{val}_\mn{f}) \in \mn{hBit}_j^\Imc$.
			Furthermore observe that Claim (\ref{claim:relation_ranges}) and axioms (\ref{ax:index_hBit}) require $a$ to assign the $j$-th bit, i.e., we have $(a, \mn{val}_\mn{t}) \in \mn{hBit}_j^\Imc$ or $(a, \mn{val}_\mn{f}) \in \mn{hBit}_j^\Imc$.
			In either case, due to axioms (\ref{ax:neq_values}), we can find a homomorphism from CQ (\ref{cq:next_same_bits}) to \Imc, which contradicts the fact that \Imc is a countermodel to $q$.
			Ergo, $Y_i$ must satisfy property 1.

			Property 2.\ follows, in case of $j = i$, from $(\dag)$, and, if otherwise $j \in \{1,\cdots,i-1\}$, from $Y_{i-1}$ satisfying property 2.\ and the fact that Claim (\ref{claim:relation_ranges}), axioms (\ref{ax:neq_values}) and (\ref{ax:index_hBit}), and CQs (\ref{cq:next_same_bits}) require all $a \in Y_i$ to assign the bits $1, \cdots, i-1$ exactly as their parents.

			For property 3, consider some $a, a' \in Y_i$ with $a \neq a'$.
			If $a$ and $a'$ have the same parent $x$, property 3.\ follows from $(\dag)$ and property 1.
			Otherwise, if $a$ has a parent $x$ and $a'$ has a parent $y$ such that $x \neq y$, observe that property 3.\ of the induction hypothesis implies the existence of some $j \in \{1,\cdots,i-1\}$ such that $(x, \mn{val}_\mn{t}) \in \mn{hBit}_j$ iff $(y, \mn{val}_\mn{f}) \in \mn{hBit}_j$.
			Property 3.\ then follows from the fact that Claim (\ref{claim:relation_ranges}), axioms (\ref{ax:neq_values}) and (\ref{ax:index_hBit}), and CQs (\ref{cq:next_same_bits}) require $a$ and $a'$ to assign the bits $1, \cdots, i-1$ exactly as their parents.

			Regarding property 4., observe that by $(\dag)$, $Y_i$ contains two elements $a_y,a_y'$ with $a_y \neq a_y'$ for each $y \in Y_{i-1}$.
			Given any two elements $x,y \in Y_{i-1}$, it is furthermore easy to see that $\{a_x,a_x'\}$ and $\{a_y,a_y'\}$ are disjoint, as by property 2.\ and 3.\ of the induction hypothesis, there must be some bit $j \in \{1,\cdots,i-1\}$ that $x$ and $y$ assign differently, which subsequently must then also be assigned differently by $\{a_x,a_x'\}$ and $\{a_y,a_y'\}$ due to Claim (\ref{claim:relation_ranges}), axioms (\ref{ax:neq_values}) and (\ref{ax:index_hBit}), and CQs (\ref{cq:next_same_bits}).
			As, thus, $|Y_i| = 2 |Y_{i-1}|$, property 4.\ of the induction hypothesis implies $|Y_i| = 2^i$.
	\end{description}
	This finishes the proof of Claim (\ref{claim:tree_induction}).

	\smallskip

	We go on to identify the instances in \Imc that encode pairs of vertices:
	\begin{claim} \label{claim:identify_encoding_pairs}
		We can identify a set $Y \subseteq \mn{Index}_{2n}^\Imc$ and a bijective function $\vartheta : Y \rightarrow V^2$ such that $\vartheta(x) = (v_k,u_l)$ implies $(x, \mn{val}_b) \in \mn{hBit}_i^\Imc$ iff the $i$-th bit in $\bar k \cdot \bar l$ is $b$ for all $b \in \{\mn{t},\mn{f}\}$ and $i \in \{1, \cdots, 2n\}$.
	\end{claim}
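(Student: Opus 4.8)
The plan is to set $Y := Y_{2n}$, where $Y_{2n} \subseteq \mn{Index}_{2n}^\Imc$ is the set obtained from Claim~\ref{claim:tree_induction} for $i = 2n$, and to define $\vartheta$ by reading off, for each $x \in Y$, the bit string that $x$ writes through the role names $\mn{hBit}_1, \dots, \mn{hBit}_{2n}$. Indeed, by property~2 of Claim~\ref{claim:tree_induction} (whose index range is all of $\{1,\dots,2n\}$ since $i = 2n$ here), for every $x \in Y$ and every $i \in \{1,\dots,2n\}$ there is a unique $b_i(x) \in \{\mn{t},\mn{f}\}$ with $(x,\mn{val}_{b_i(x)}) \in \mn{hBit}_i^\Imc$. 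Identifying $\mn{t}$ with $1$ and $\mn{f}$ with $0$, the word $b_1(x)\cdots b_{2n}(x)$ decomposes uniquely as $\bar k\cdot\bar l$ with $k,l \in \{1,\dots,2^n\}$, because every $n$-bit word is the binary encoding $\bar i$ of exactly one vertex $v_i$. I then put $\vartheta(x) := (v_k, u_l)$, where $u_l$ denotes the vertex $v_l$. With this definition the required equivalence is immediate: by uniqueness of $b_i(x)$, for every $i$ and every $b \in \{\mn{t},\mn{f}\}$ we have $(x,\mn{val}_b) \in \mn{hBit}_i^\Imc$ iff $b = b_i(x)$ iff the $i$-th bit of $\bar k\cdot\bar l$ equals $b$.

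It remains to check that $\vartheta \colon Y \to V^2$ is a bijection. For injectivity, if $\vartheta(x) = \vartheta(y)$ then $b_i(x) = b_i(y)$ for all $i \in \{1,\dots,2n\}$, that is, $x$ and $y$ agree on all $2n$ bits; the contrapositive of property~3 of Claim~\ref{claim:tree_induction} then forces $x = y$. For surjectivity, a cardinality count suffices: $|Y| = |Y_{2n}| = 2^{2n}$ by property~4 of Claim~\ref{claim:tree_induction}, while $|V^2| = |V|^2 = (2^n)^2 = 2^{2n}$, and an injection between finite sets of the same size is onto. Hence $\vartheta$ is a bijection with the stated property.

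There is no real obstacle in this step; it is bookkeeping on top of Claim~\ref{claim:tree_induction}. The only points that need a little care are making the identification of $2n$-bit words with elements of $V^2$ precise (uniqueness of the decomposition of $b_1(x)\cdots b_{2n}(x)$ into $\bar k$ and $\bar l$) and observing that property~2 of Claim~\ref{claim:tree_induction} indeed ranges over all indices $1,\dots,2n$ at level $2n$, so that every $x \in Y$ assigns a full $2n$-bit word. Once $Y$ and $\vartheta$ are in place, the remaining parts of the argument — extracting a candidate $3$-colouring from the edges $\mn{hCol}, \mn{hCol}'$ and verifying that the gate edges $\mn{gate}_g$ correctly evaluate the circuit — can be stated entirely in terms of $Y$ and $\vartheta$.
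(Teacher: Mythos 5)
Your proposal is correct and follows essentially the same route as the paper: take $Y = Y_{2n}$, read off $\vartheta$ from the unique bit assignments guaranteed by property~2 of Claim~\ref{claim:tree_induction}, and use property~3 for distinctness and property~4 for the cardinality count. The only (immaterial) difference is in bookkeeping — the paper derives surjectivity directly from properties~3 and~4 and then checks injectivity, whereas you establish injectivity first and get surjectivity from the equal-cardinality pigeonhole argument.
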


	\noindent Claim (\ref{claim:identify_encoding_pairs}) builds upon Claim (\ref{claim:tree_induction}); set $Y = Y_{2n}$.
	By property 2.\ of Claim (\ref{claim:tree_induction}), each $y \in Y$ assigns a unique $b \in \{\mn{t},\mn{f}\}$ to every $i$-th bit with  $i \in \{1, \cdots, 2n\}$, encoded via $(y, \mn{val}_b) \in \mn{hBit}_i^\Imc$.
	Ergo, we can find a function $\vartheta : Y \rightarrow V^2$ such that $\vartheta(y) = (v_k,u_l)$ implies $(y, \mn{val}_b) \in \mn{hBit}_i^\Imc$ iff the $i$-th bit in $\bar k \cdot \bar l$ is $b$ for all $b \in \{\mn{t},\mn{f}\}$ and $i \in \{1, \cdots, 2n\}$.

	Note that $\vartheta$ must be surjective, as by $|Y| = 2^{2n}$ (property 4\ of Claim (\ref{claim:tree_induction})) and the fact that all $x,y \in Y$ pairwise assign at least one bit differently (property 3.\ of Claim (\ref{claim:tree_induction})).

	To see that $\vartheta$ is injective, consider some $x,y \in Y$ with $\vartheta(x) = (v_k,u_l) = \vartheta(y)$.
	By definition of $\vartheta$, we have that $(x, \mn{val}_b) \in \mn{hBit}_i^\Imc$ iff $(y, \mn{val}_b) \in \mn{hBit}_i^\Imc$ for all with $b \in \{\mn{t},\mn{f}\}$ and $i \in \{1, \cdots, 2n\}$, and, together with property 2 \ of Claim (\ref{claim:tree_induction}), even that $(x, \mn{val}_\mn{t}) \in \mn{hBit}_i^\Imc$ iff $(y, \mn{val}_\mn{f}) \notin \mn{hBit}_i^\Imc$.
	Property 3.\ of Claim (\ref{claim:tree_induction}) then entails $x = y$.

	This finishes the proof of Claim (\ref{claim:identify_encoding_pairs}).

	\smallskip

	We go on to extract $\pi$ from \Imc and show that \Imc is sound in the sense that any two elements $x,y \in Y$ that encode the same vertex also assign it the same color as defined by $\pi$.

	\begin{claim} \label{claim:sound_colors}
		The relation $\pi : V \rightarrow \{\mn{r},\mn{g},\mn{b}\}$ with
		\begin{align*}
			\begin{split}
				\pi(v) \coloneqq \{c \in \{\mn{r},\mn{g},\mn{b}\} \mid & \exists y \in Y \text{ such that } \vartheta(y) = (v,u) \\
				& \text{ and } (y, c) \in \mn{hCol}^\Imc \}.
			\end{split}
		\end{align*}
		is a function such that $(y,c_1) \in \mn{hCol}^\Imc$ and $(y,c_2) \in {\mn{hCol}'}^\Imc$ hold for all $y \in Y$ with $\vartheta(y) = (v,u)$, $\pi(v) = c_1$ and $\pi(u) = c_2$.
	\end{claim}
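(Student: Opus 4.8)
The plan is to unpack the three assertions bundled into the claim and verify each in turn: that $\pi$ is total (every $\pi(v)$ is nonempty), that $\pi$ is single-valued (every $\pi(v)$ is a singleton, so $\pi$ really is a function $V\to\{\mn{r},\mn{g},\mn{b}\}$), and that every leaf $y\in Y$ with $\vartheta(y)=(v,u)$ carries the color $\pi(v)$ along $\mn{hCol}$ and the color $\pi(u)$ along $\mn{hCol}'$. Throughout I would lean on the facts already established: $\vartheta\colon Y\to V^2$ is a bijection and bit-position $i$ of $y$ encodes bit $i$ of the pair $\vartheta(y)$ (Claim~(\ref{claim:identify_encoding_pairs})); the $\mn{hCol}$- and $\mn{hCol}'$-successors of any $y\in Y$ lie in $\{\mn{r},\mn{g},\mn{b}\}$ (Claim~(\ref{claim:relation_ranges})); every $y\in Y\subseteq\mn{Index}_{2n}^\Imc$ has at least one $\mn{hCol}$- and one $\mn{hCol}'$-successor by axioms~(\ref{ax:hCol_index}); distinct elements of $\{\mn{r},\mn{g},\mn{b}\}$ are $\mn{neq}$-related by axiom~(\ref{ax:neq_colors}); and $\Imc\not\models q$, so CQs~(\ref{cq:v_color_consistent}) and~(\ref{cq:v_u_color_consistent}) have no match in $\Imc$.

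For totality, fix $v\in V$; by surjectivity of $\vartheta$ there is $y\in Y$ with $\vartheta(y)=(v,u)$ for some $u$, axioms~(\ref{ax:hCol_index}) give it an $\mn{hCol}$-successor $c$, and Claim~(\ref{claim:relation_ranges}) puts $c\in\{\mn{r},\mn{g},\mn{b}\}$, so $c\in\pi(v)$. For single-valuedness, suppose $c_1,c_2\in\pi(v)$ with $c_1\neq c_2$, witnessed by $y_1,y_2\in Y$ with $\vartheta(y_i)=(v,u_i)$ and $(y_i,c_i)\in\mn{hCol}^\Imc$. By Claim~(\ref{claim:identify_encoding_pairs}), $y_1$ and $y_2$ set bits $1,\dots,n$ identically (to the encoding of $v$), so taking $z_j=\mn{val}_{b_j}$ with $b_j$ the $j$-th bit of $v$ makes $\mn{hBit}_j(y_1,z_j)\wedge\mn{hBit}_j(y_2,z_j)$ hold for $1\le j\le n$; and $(c_1,c_2)\in\mn{neq}^\Imc$ by axiom~(\ref{ax:neq_colors}). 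This is a match for CQ~(\ref{cq:v_color_consistent}), contradicting $\Imc\not\models q$. (The same argument with $y_1=y_2$ shows each leaf has a unique $\mn{hCol}$-successor.) The $\mn{hCol}$ half of the last assertion is then immediate: if $\vartheta(y)=(v,u)$, the unique $\mn{hCol}$-successor of $y$ belongs to $\pi(v)$, hence equals $\pi(v)$.

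For the $\mn{hCol}'$ half, fix $y\in Y$ with $\vartheta(y)=(v,u)$ and, using surjectivity of $\vartheta$ once more, pick $y''\in Y$ with $\vartheta(y'')=(u,v)$; by the part just proved, the $\mn{hCol}$-successor of $y''$ equals $\pi(u)$. Let $d\in\{\mn{r},\mn{g},\mn{b}\}$ be the $\mn{hCol}'$-successor of $y$. The crucial observation is that, by Claim~(\ref{claim:identify_encoding_pairs}), the last $n$ bits of $y''$ and the first $n$ bits of $y$ both encode $v$, so I can choose $z_1,\dots,z_n$ with $\mn{hBit}_j(y,z_j)$ and $\mn{hBit}_{j+n}(y'',z_j)$ for $1\le j\le n$. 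If $d\neq\pi(u)$ then $(\pi(u),d)\in\mn{neq}^\Imc$ by axiom~(\ref{ax:neq_colors}), and mapping the query variables of CQ~(\ref{cq:v_u_color_consistent}) by $x\mapsto y''$, $y\mapsto y$ produces a match, contradicting $\Imc\not\models q$. Hence $d=\pi(u)$, i.e.\ $(y,\pi(u))\in{\mn{hCol}'}^\Imc$, which finishes the claim.

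The main obstacle is this last step: CQ~(\ref{cq:v_u_color_consistent}) compares the \emph{first} component of one leaf's pair with the \emph{second} component of another's, so the auxiliary leaf to invoke must be the one encoding the \emph{reversed} pair $(u,v)$, not an arbitrary leaf whose first component is $u$; lining up the bit positions ($j$ versus $j+n$) and the roles of the query variables $x$ and $y$ correctly is where the care is needed. (I also read $\mn{hBit}'_{j+n}$ in CQ~(\ref{cq:v_u_color_consistent}) as $\mn{hBit}_{j+n}$, since no relation $\mn{hBit}'$ is introduced.)
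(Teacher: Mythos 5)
Your proof is correct and follows the same decomposition as the paper's: totality and single-valuedness of $\pi$ via CQ~(\ref{cq:v_color_consistent}), then the $\mn{hCol}$ statement, then the $\mn{hCol}'$ statement via an auxiliary leaf and CQ~(\ref{cq:v_u_color_consistent}). The one genuine difference is in the last step: the paper invokes an arbitrary $y'$ with $\vartheta(y')=(u,u')$ and tacitly reads the bit atoms of CQ~(\ref{cq:v_u_color_consistent}) with the roles of $x$ and $y$ exchanged (as the surrounding prose intends), whereas you take the leaf encoding the reversed pair $(u,v)$, whose bit pattern satisfies the matching condition both as the query is literally printed and as it is intended; your reading of $\mn{hBit}'_{j+n}$ as $\mn{hBit}_{j+n}$ is also the right one. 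This makes your argument robust to the index and prime typos in that query, and no gap remains.
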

	\noindent Indeed, $\pi$ is a function:
	On one hand, Claim (\ref{claim:relation_ranges}) and axioms (\ref{ax:hCol_index}) ensure the existence of some $c,d \in \{\mn{r},\mn{g},\mn{b}\}$ such that $(y, c) \in \mn{hCol}^\Imc$ and $(y, d) \in \mn{hCol}'^\Imc$ for all $y \in Y$.
	On the other hand, $\pi(y)$ is unique for all $y \in Y$:
	Assume to the contrary that there are two elements $y,y' \in Y$ such that $\vartheta(y) = (v,u)$ with $(y, c) \in \mn{hCol}^\Imc$ and $\vartheta(y') = (v,u')$ with $(y', c') \in \mn{hCol}^\Imc$ and $c' \neq c$.
	By definition of $\vartheta$, $y$ and $y'$ assign the bits with $1,\cdots,n$ the same, i.e., $(y,z) \in \mn{hBit}_i^\Imc$ iff $(y',z) \in \mn{hBit}_i^\Imc$.
	Furthermore, $c' \neq c$ yields $(c',c) \in \mn{neq}^\Imc$ via axioms (\ref{ax:neq_colors}).
	Then, however, there must be a homomorphism from CQ (\ref{cq:v_color_consistent}) to \Imc, which contradicts the fact that $\Imc \not \models q$.
	
	Now, consider some $y \in Y$ with $\vartheta(y) = (v,u)$, $\pi(v) = c_1$ and $\pi(u) = c_2$.
	We first show that $(y,c_1) \in \mn{hCol}^\Imc$.
	The definition of $\pi$ requires the existence of some $y' \in Y$ such that $\vartheta(y') = (v,u')$ and $(y',c_1) \in \mn{hCol}^\Imc$, while there must also be some $c \in \{\mn{r},\mn{g},\mn{b}\}$ such that $(y,c) \in \mn{hCol}^\Imc$ via Claim (\ref{claim:relation_ranges}) and axioms (\ref{ax:hCol_index}).
	By $\vartheta(y) = (v,u)$ and $\vartheta(y') = (v,u')$, Claim (\ref{claim:identify_encoding_pairs}) implies that $(y, \mn{val}_b) \in \mn{hBit}_i^\Imc$ iff $(y', \mn{val}_b) \in \mn{hBit}_i^\Imc$ for all $i \in \{1, \cdots, n\}$ and $b \in \{\mn{t},\mn{f}\}$.
	Then, we have that $(c_1,c) \notin \mn{neq}^\Imc$ via CQs (\ref{cq:v_color_consistent}) and the fact that $\Imc \not \models q$.
	Ergo, by Claim (\ref{claim:relation_ranges}), and axioms (\ref{ax:neq_colors}) and (\ref{ax:hCol_index}), we have that $c = c_1$, i.e., $(y,c_1) \in \mn{hCol}^\Imc$.

	We go on to show that $(y,c_2) \in {\mn{hCol}'}^\Imc$.
	Again, the definition of $\pi$ requires the existence of some $y' \in Y$ such that $\vartheta(y') = (u,u')$ and $(y',c_2) \in \mn{hCol}^\Imc$, while there must also be some $c \in \{\mn{r},\mn{g},\mn{b}\}$ such that $(y,c) \in {\mn{hCol}'}^\Imc$ via Claim (\ref{claim:relation_ranges}) and axioms (\ref{ax:hCol_index}).
	By $\vartheta(y) = (v,u)$ and $\vartheta(y') = (v,u')$, Claim (\ref{claim:identify_encoding_pairs}) implies that $(y, \mn{val}_b) \in \mn{hBit}_{i+n}^\Imc$ iff $(y', \mn{val}_b) \in \mn{hBit}_i^\Imc$ for all $i \in \{1, \cdots, n\}$ and $b \in \{\mn{t},\mn{f}\}$.
	Then, we have that $(c_2,c) \notin \mn{neq}^\Imc$ via CQs (\ref{cq:v_u_color_consistent}) and the fact that $\Imc \not \models q$.
	Ergo, by Claim (\ref{claim:relation_ranges}), and axioms (\ref{ax:neq_colors}) and (\ref{ax:hCol_index}), we have that $c = c_1$, i.e., $(y,c_1) \in {\mn{hCol}'}^\Imc$.

	This finishes the proof of Claim (\ref{claim:sound_colors}).

	\smallskip

	As a last step before proving that $\pi$ indeed is a 3-coloring, we need to argue that the computation of the logical gates within \Imc is sound.

	\begin{claim} \label{claim:output_gate_edge}
		For all $y \in Y$, we have that $\vartheta(y) \in E$ implies $(y, \mn{var}_\mn{t}) \in \mn{gate}_{\dot g}^\Imc$, where $\dot g$ is the output gate of $C$.
	\end{claim}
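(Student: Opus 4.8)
The plan is to obtain Claim~(\ref{claim:output_gate_edge}) as the instance $g=\dot g$ of a stronger statement, proved by structural induction over the circuit $C$, from its INPUT-gates up to the output gate $\dot g$. For $y\in Y$ with $\vartheta(y)=(v_k,v_l)$, let $b_g\in\{\mn{t},\mn{f}\}$ denote the value computed at gate $g$ when $C$ is fed the input $\bar k\cdot\bar l$ (i.e.\ $b_g=G_C^g(\bar k,\bar l)$ in the notation of the ``$\Rightarrow$'' direction), and write $\overline b$ for the truth value distinct from $b$. The stronger statement is: for every gate $g$ of $C$ and every $y\in Y$, the set $\{z\mid (y,z)\in\mn{gate}_g^\Imc\}$ equals $\{\mn{val}_{b_g}\}$. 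Granting this, $\vartheta(y)\in E$ means precisely that $C$ evaluates to True on $\bar k\cdot\bar l$, i.e.\ $b_{\dot g}=\mn{t}$, so the unique $\mn{gate}_{\dot g}$-successor of $y$ is $\mn{val}_\mn{t}$, which is the claim.

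Before the induction I would record a few facts used throughout. From the proof of Claim~(\ref{claim:relation_ranges}) we have $\mn{Min}^\Imc=\{\mn{r},\mn{g},\mn{b},\mn{val}_\mn{t},\mn{val}_\mn{f}\}$; together with axioms~(\ref{ax:index_gate}), Claim~(\ref{claim:relation_ranges}) and $Y\subseteq\mn{Index}_{2n}^\Imc$ (Claim~(\ref{claim:identify_encoding_pairs})), this shows that every $y\in Y$ has at least one $\mn{gate}_g$-successor and that every such successor lies in $\{\mn{val}_\mn{t},\mn{val}_\mn{f}\}$. Moreover $\mn{val}_\mn{t}\in\mn{Val}_\mn{t}^\Imc$ and $\mn{val}_\mn{f}\in\mn{Val}_\mn{f}^\Imc$ by axioms~(\ref{ax:values_def}), and $(\mn{val}_\mn{t},\mn{val}_\mn{f}),(\mn{val}_\mn{f},\mn{val}_\mn{t})\in\mn{neq}^\Imc$ by axioms~(\ref{ax:neq_values}). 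Hence, to prove the stronger statement for a gate $g$, it suffices to rule out that $\mn{val}_{\overline{b_g}}$ is a $\mn{gate}_g$-successor of $y$. For the base case, let $g$ be an INPUT-gate reading bit $i_g$; then $b_g$ is exactly the $i_g$-th bit of $\bar k\cdot\bar l$, and Claim~(\ref{claim:identify_encoding_pairs}) gives $(y,\mn{val}_{b_g})\in\mn{hBit}_{i_g}^\Imc$. If $(y,\mn{val}_{\overline{b_g}})\in\mn{gate}_g^\Imc$, then, using $(\mn{val}_{\overline{b_g}},\mn{val}_{b_g})\in\mn{neq}^\Imc$, the map sending the query's first variable to the leaf $y$, its second variable to $\mn{val}_{\overline{b_g}}$ and its third to $\mn{val}_{b_g}$ is a homomorphism from CQ~(\ref{cq:input_gates}) to $\Imc$, contradicting $\Imc\not\models q$.

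For the inductive step I would distinguish the gate type. If $g$ is a NOT-gate with input $g'$, the induction hypothesis gives that the unique $\mn{gate}_{g'}$-successor of $y$ is $\mn{val}_{b_{g'}}=\mn{val}_{\overline{b_g}}$; should $(y,\mn{val}_{\overline{b_g}})\in\mn{gate}_g^\Imc$, then sending $(x,y)$ to $(y,\mn{val}_{\overline{b_g}})$ matches CQ~(\ref{cq:not_gate}), a contradiction. If $g$ is an AND-gate with inputs $g_1,g_2$ and $b_g=\mn{t}$, then $b_{g_1}=b_{g_2}=\mn{t}$, so by the induction hypothesis $(y,\mn{val}_\mn{t})\in\mn{gate}_{g_1}^\Imc\cap\mn{gate}_{g_2}^\Imc$; if $(y,\mn{val}_\mn{f})\in\mn{gate}_g^\Imc$, then mapping the query variables to $y$, $\mn{val}_\mn{f}$, $\mn{val}_\mn{t}$ matches the third of the three conjunctive queries associated with $g$, a contradiction. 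If instead $b_g=\mn{f}$, then $b_{g_i}=\mn{f}$ for some $i$, and by the induction hypothesis $(y,\mn{val}_\mn{f})\in\mn{gate}_{g_i}^\Imc$; if $(y,\mn{val}_\mn{t})\in\mn{gate}_g^\Imc$, then mapping the query variables to $y$, $\mn{val}_\mn{t}$, $\mn{val}_\mn{f}$ matches the $i$-th of those queries, a contradiction. OR-gates are handled dually, using the three OR-queries (the last of which is CQ~(\ref{cq:or_true_despite_both_false})). In every case, combining the exclusion of $\mn{val}_{\overline{b_g}}$ with the preliminary facts forces the $\mn{gate}_g$-successor set of $y$ to be exactly $\{\mn{val}_{b_g}\}$, completing the induction.

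The argument is essentially careful bookkeeping; the one subtle point is that the conjunctive query handling the ``both inputs agree'' sub-case of an AND-/OR-gate reuses a single existential variable across the two atoms $\mn{gate}_{g_1}(x,z)$ and $\mn{gate}_{g_2}(x,z)$, so one genuinely needs the strengthened induction hypothesis — that the two inputs have a common, uniquely determined truth-value successor — rather than merely that $\mn{val}_{b_{g_i}}$ is among their successors. The other recurring care-point is verifying, in each query-match argument, that the chosen witness individuals satisfy the right $\mn{Val}$-concepts, which is immediate from the ABox assertions~(\ref{ax:values_def}).
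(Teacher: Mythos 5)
Your proof is correct and takes essentially the same approach as the paper's: a structural induction over the gates of $C$, using Claim~(\ref{claim:relation_ranges}), axioms~(\ref{ax:index_gate}), and the gate CQs~(\ref{cq:input_gates})--(\ref{cq:or_true_despite_both_false}) to force the $\mn{gate}_g$-successor of $y$ to carry the value computed by $C^g$ on input $\bar k \cdot \bar l$. One minor remark: the strengthened induction hypothesis (uniqueness of the successor) is harmless but not actually needed for the AND/OR sub-case you flag as subtle, since when $b_{g_1}=b_{g_2}=\mn{t}$ the element $\mn{val}_\mn{t}$ itself already serves as the shared witness for the variable $z$ in CQ~(\ref{cq:or_true_despite_both_false}) and its AND analogue under the weaker, membership-only hypothesis that the paper uses.
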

	\noindent As $\vartheta(y) \in E$ implies $C^{\dot g}(\bar k,\bar l) =  \mn{t}$, it suffices to show that $C^g(\bar k,\bar l) =  b$ implies $(y, \mn{var}_b) \in \mn{gate}_g^\Imc$ for all $y \in Y$ with $\vartheta(y) = (u_k,v_l)$, gates $g$ in $C$ and $b \in \{\mn{t}, \mn{f}\}$.
	We do this per induction over the structure of $C^g$:
	\begin{description}
		\item[$g$ is an INPUT gate.]
			Let $C^g(\bar k,\bar l) =  b$.
			Then, as $g$ is an INPUT gate, the $i_g$-th bit in $\bar k \cdot \bar l$ must be $b$.
			Thus, $(y, \mn{var}_b) \in \mn{hBit}_{i_g}^\Imc$ by definition of $\vartheta$.
			Claim (\ref{claim:relation_ranges}) and axioms (\ref{ax:index_gate}) entail $(y, \mn{var}_{b'}) \in \mn{gate}_g^\Imc$ for some $b' \in \{\mn{t}, \mn{f}\}$.
			Finally, $(y, \mn{var}_b) \in \mn{hBit}_{i_g}^\Imc$, axioms (\ref{ax:neq_values}), CQs (\ref{cq:input_gates}), and the fact that $\Imc \not \models q$ entail $b' = b$, i.e., $(y, \mn{var}_b) \in \mn{gate}_g^\Imc$.
		\item[$g$ is a NOT gate with parent $g'$.]
			Let $C^g(\bar k,\bar l) =  b$.
			As $g$ is a NOT gate with parent $g'$, we have that $C^{g'}(\bar k,\bar l) =  b'$ for some $b' \in \{\mn{t}, \mn{f}\}$ such that $b' \neq b$.
			By the induction hypothesis, we then have that $(y,\mn{var}_{b'}) \in \mn{gate}_{g'}^\Imc$.
			Claim (\ref{claim:relation_ranges}) and axioms (\ref{ax:index_gate}) entail $(y, \mn{var}_{b''}) \in \mn{gate}_g^\Imc$ for some $b'' \in \{\mn{t}, \mn{f}\}$.
			Due to $b' \neq b$, CQs (\ref{cq:not_gate}) and the fact that $\Imc \not \models q$, it must be so that $b'' = b$, i.e., $(y, \mn{var}_b) \in \mn{gate}_g^\Imc$.
		\item[$g$ is a AND or OR gate.]
			We omit the details, as these cases are very similar to the NOT case, except that the respective semantics of the logical operators need to be applied.
	\end{description}

	We are now in the position to show that $\pi$ indeed is a 3-coloring.
	Assume to the contrary that there is an edge $(v,u) \in E$ with $\pi(v) = \pi(u) \coloneqq c$.
	Since $\theta$ is a bijection as of Claim (\ref{claim:identify_encoding_pairs}), there exists a unique $y \in Y$ such that $\vartheta(y) = (v,u)$.
	We have $(y,\pi(v)) \in \mn{hCol}^\Imc$ and $(y,\pi(u)) \in {\mn{hCol}'}^\Imc$ by Claim (\ref{claim:sound_colors}), i.e., $(y,c) \in \mn{hCol}^\Imc$ and $(y,c) \in {\mn{hCol}'}^\Imc$.
	Furthermore, by Claim (\ref{claim:output_gate_edge}), $(v,u) \in E$ entails $(y, \mn{var}_\mn{t}) \in \mn{gate}_{\dot g}^\Imc$.
	Together with axioms (\ref{ax:values_def}), there must be a homomorphism from CQ (\ref{cq:monochromatic_edge}) to $\Delta^\Imc$.
	However, this contradicts the fact that \Imc is a countermodel to $q$.
	Ergo, $\pi$ must indeed be a 3-coloring, i.e., $G$ is 3-colorable.
\end{proof}

\section{Proofs for Section~\ref{subsection-dllite-data}}

\lemmacriteriadata*
\begin{proof}
	Before proving the lemma, we recall the set $E$ gathers elements $e_{t_1, t_2}$ which will serve as references for others with
	 same types.
	 To lighten notation, we introduce the mapping $\refmap : e \mapsto e_{\atypeinof{\abox}{e}, \typeinof{\I}{e}}$ defined on $\domain{\I}$.
	 
	``$1 \Rightarrow 2$''. 
	Assume $\Imc \models \circkb$.
	Consider $\Pmc \subseteq \domain{\I}$ s.t.\ $\sizeof{\Pmc} \leq 2\sizeof{\tbox} +1$.
	We first verify that $\I_\Pmc$ models $\kb_\Pmc$:
	\begin{itemize}
		\item $\I_\Pmc$ models $\abox_\Pmc$ as it inherits interpretations of concepts on $\indsof{\abox'}$ from $\I$, which is a model of $\abox$.
		\item Axioms in $\tbox$ with shape $\axtop$, $\axand$, $\axnotright$ or $\axnotleft$ are satisfied since $\I_\Pmc$ inherits interpretations of concept names from $\I$, which is a model of $\tbox$.
		\item Axioms in $\tbox$ with shape $A \incl \exists r$ are witnessed with an $r$-edge pointing to $w_r$.
		\item Axioms in $\tbox$ with shape $\exists r \incl A$ are satisfied since every element in $\I_\Pmc$ receiving some $r$-edge already received some in $\I$ (and interpretations of concept names are preserved).
		\item Role inclusions $r \incl s$ are satisfied directly from the definition of $s^{\Imc_\Pmc}$.
	\end{itemize}
	It remains to prove that $\I_\Pmc$ also complies with $\CP$.
	Assume to the contrary that there exists a model $\J'$ of $\kb_\Pmc$ with $\J' <_\CP \I_\Pmc$.
	Notice that, for each role $r$ s.t.\ $r^{\J'} \neq \emptyset$, there must be an element $w_r' \in (\exists r^-)^{\J'}$.
	We can chose such an element $w_r'$ for each such role $r$, and build the following interpretation $\J$:
	\begin{align*}
		\domain{\J} = \; & \domain{\I}
		\\
		\cstyle{A}^{\J} = \; & \cstyle{A}^{\J'} \cup \{ e \in \domain{\I} \setminus \domain{\J'} \mid \refof{e} \in A^{\J'} \}
		\\
		\rstyle{r}^{\J} = \; & \{ (a, b) \mid \kb \models r(a, b) \} \\
		&
		\cup \{ (e, w_s') \mid \refof{e} \in (\exists s)^{\J'}, \tbox \models s \incl r \}
		\\
		& \cup \{ (w_s', e) \mid \refof{e} \in (\exists s)^{\J'}, \tbox \models s \incl r^- \}
	\end{align*}
	It is easily verified that $\J$ is a model of $\kb$, the most interesting cases being axioms in $\tbox$ with shape either $A \sqsubseteq \exists r$ or $\exists r \sqsubseteq A$.
	The former are satisfied using the witness $w_r$.
	The latter, when triggered on $a \in \domain{\I} \setminus \domain{\J'}$ due to a pair $(a, b)$ s.t.\ $\kb \models r(a, b)$, hold thanks to $\refof{a}$ having same ABox-type as $a$.
%	{\color{red}(Quentin: Maybe I should elaborate? This is where we rely on $\abox_\Pmc$ and ABox-types.)}, 
%	\textcolor{blue}{Robin: I think maybe for axioms of the form $A \sqsubseteq \exists r$ and $\exists r \sqsubseteq A$; the others are obvious.}
	We now prove $\J <_\CP \I$, which will contradict $\I \models \circkb$.
	It suffices to prove that for each concept name $A$ and $\odot \in \{\subseteq, \supseteq\}$, we have $A^{\J} \odot A^\Imc$ iff $A^{\Jmc'} \odot A^{\I_\Pmc}$.
	Let $A$ be a concept name.
	We prove the claim for $\odot =\; \subseteq$ only; for $\odot =\; \supseteq$, the arguments are similar.
	
	``$\Rightarrow$''.
	Let $A^{\J} \subseteq A^\I$ and $d \in A^{\J'}$.
	In particular, $d \in A^{\J}$ by definition of $A^{\J}$.
	Thus, by hypothesis, $d \in A^{\I}$, and since $d \in \domain{\J'} = \domain{\I_\Pmc}$ we obtain $d \in A^{\I_\Pmc}$ by definition of $A^{\I_\Pmc}$.
	
	``$\Leftarrow$''.
	Let $A^{\J'} \subseteq A^{\I_\Pmc}$ and $d \in A^{\J}$; we need to show $d \in A^{\I}$.
	It is straightforward if $d \in \domain{\Jmc'}$.
	Otherwise, we know that $\refof{d} \in A^{\J'}$ by definition of $A^{\J}$.
	Therefore $\refof{d} \in A^{\I_\Pmc}$ by hypothesis, that is $A \in \typeinof{\I}{\refof{d}}$.
	Since $d$ and $\refof{d}$ have same type, it yields $A \in \typeinof{\I}{d}$, that is $d \in A^{\I}$.

	``$2 \Rightarrow 1$''. 
%	{\color{red} (Quentin: Spoiler alert, this is the very same proof as in Lemma~\ref{lemma-criteria-combined}.)}
	Assume $\I_\Pmc \models \Circ(\kb_\Pmc)$ for all $\Pmc \subseteq \domain{\I}$ s.t.\ $\sizeof{\Pmc} \leq 2\sizeof{\tbox} +1$.
	By hypothesis, we already have $\I \models \kb$.
	By contradiction, assume now there exists $\J < \I$.
	Based on $\J$, we build a subset $\Pmc \subseteq \domain{\J}$ containing:
	\begin{itemize}
		\item for each role $r$ s.t.\ $r^\J \neq \emptyset$, an element $w_r' \in (\exists r^-)^\J$;
		\item for each $A \in \Msf$ s.t.\ $A^\J \not\subseteq A^\I$, an element $e_A \in B^\J \setminus B^\I$ for some $B \prec A$ (Condition~3 in the definition of $\J <_\CP \I$ ensures existence of such $B$ and $e_A$);
		\item an element $e_\Msf \in A^\J \setminus A^\I$ for some $A \in \Msf$ s.t.\ $A^\J \subsetneq A^\I$ and for all $B \prec A$, $B^\J = B^\I$
		(Condition~4 in the definition of $<_\CP$ ensures the existence of such $A$ and $e_\Msf$).
	\end{itemize}
	Notice $\Pmc$ has size at most $2\sizeof{\tbox} +1$.
	We now build $\J'$ as:
	\begin{align*}
		\domain{\J'} = \; & \domain{\I_\Pmc}
		\\
		\cstyle{A}^{\J'} = \; & \cstyle{A}^{\J} \cap \domain{\J'}
		\\
		\rstyle{r}^{\J'} = \; & \{ (e, w_s') \mid e \in (\exists s)^\J  \cap \domain{\I_\Pmc}, \tbox \models s \incl r \}
		\\
		& \cup \{ (w_s', e) \mid e \in (\exists s)^\J  \cap \domain{\I_\Pmc}, \tbox \models s \incl r^- \}
	\end{align*}
%	It remains to verify $\J_\Pmc <_\CP \Imc_\Pmc$.
	It is easily verified that $\J'$ is a model of $\kb_\Pmc$, and we now prove $\J' <_\CP \I_\Pmc$, which will contradict $\I_\Pmc \models \Circ(\kb_\Pmc)$.
	%	We first notice a useful property: for all concept name $A$, we have $A^\J \cap (\domain{\I} \setminus \domain{\I_\Pmc}) \subseteq  A^\I \cap (\domain{\I} \setminus \domain{\I_\Pmc})$ ($\star$).
	%	Indeed, if $e \in A^\J \cap (\domain{\I} \setminus \domain{\I_\Pmc})$, then $A \in \ftypeinof{\I}{e}$, and therefore $e \in A^\I$.
	%	The converse does not holds in general.
	We now check that all four conditions from the definition of $<_\CP$ are satisfied:
	\begin{enumerate}
		\item 
		By definition, we have $\domain{\J'} = \domain{\I_\Pmc}$.
		\item 
		Let $A \in \Fsf$. Definitions of $\I_\Pmc$ and $\J'$ ensure $A^\I \cap {\domain{\I_\Pmc}} = A^{\I_\Pmc}$ and $A^\J \cap {\domain{\I_\Pmc}} = A^{\J'}$.
		From $\J <_\CP \I$, we get $A^{\I} = A^{\J}$, which thus yields $A^{\I_\Pmc} = A^{\J'}$.
		\item 
		Let $A \in \Msf$ such that $A^{\J'} \not\subseteq A^{\I_\Pmc}$.
		Therefore $A^{\J} \not\subseteq A^{\I}$, and recall we kept in $\Pmc$ an element $e_A \in B^\J \setminus B^\I$ for some $B \prec A$ to belong to $\Pmc$.
		Joint with $B^{\J'} \subseteq B^{\I_\Pmc}$ being trivial, $e_A$ additionally witnesses that $B^{\J'} \subsetneq B^{\I_\Pmc}$.
		\item
		Recall we kept in $\Pmc$ an element $e_\Msf \in A^\J \setminus A^\I$ for some $A \in \Msf$ s.t.\ $A^\J \subsetneq A^\I$ and for all $B \prec A$, $B^\J = B^\I$.
		It gives immediately that $A^{\J'} \subsetneq A^{\I_\Pmc}$ and for all $B \prec A$, $B^{\J'} = B^{\I_\Pmc}$.
	\end{enumerate}
	This proves $\J' <_\CP \I_\Pmc$, contradicting $\I_\Pmc \models \Circ(\kb_\Pmc)$ as desired.
      \end{proof}

\thmdataupperhornh*
\begin{proof}
	We exhibit an $\NPclass$ procedure to decide the complement of our problem, that is existence of a countermodel for UCQ $q$ over $\dlliteboolh$ cKB $\circkb$.
	Our procedure starts by guessing an interpretation $\I$ whose domain has size at most $|\Amc|+(2^{|\Tmc|+2}+1)^{3|q|}$.
	This can be done in linearly many non-deterministic steps (w.r.t.\ data complexity)
	It further checks whether $\I$ is a model of $\kb$ that does not entail $q$ and rejects otherwise.
	This can essentially be done naively in $\sizeof{\domain{\I}}^{\sizeof{\tbox}\sizeof{q}}$ deterministic steps, that is still polynomial
	% \textcolor{blue}{Really? Isn't it just polynomial?} 
	w.r.t.\ data complexity.
	The procedure finally checks whether $\I$ complies with $\CP$ by
	iterating over each $\Pmc$ with $\sizeof{\Pmc} \leq 2 \sizeof{\tbox} + 1$ and checking whether there exists a model $\J_\Pmc$ of $\kb$ s.t.\ $\J_\Pmc < \I_\Pmc$.
	If, for a choice of $\Pmc$ and we can find such a $\J_{\Pmc}$, then our procedure rejects.
	Notice that iterating over such $\Pmc$ can be done in $\sizeof{\domain{\I}}^{2 \sizeof{\tbox} + 1}$ iterations since we require $\sizeof{\Pmc} \leq 2 \sizeof{\tbox} + 1$.
	Computing $\I_\Pmc$ follows directly from the choice of $\Pmc$.
	Since each $\I_\Pmc$ has constant size w.r.t.\ data complexity, iterating over each possible $\J_\Pmc$ can be done naively with constantly many steps.
	Overall, the procedure uses a linear number of non-deterministic steps at the beginning and further performs several checks using a polynomial number of 
	%\textcolor{blue}{what about the deterministic steps that are linear/polynomial w.r.t.\ data complexity, e.g., the check whether \Imc is a model of \Kmc that does not entail $q$?} 
	additional deterministic steps.
	
	It remains to prove that an accepting run exists if there is a countermodel for $q$ over $\circkb$.
	If there exists an accepting run, then the corresponding guessed interpretation $\I$ is a model of $\kb$ that does not entail $q$, and the ``$2 \Rightarrow 1$'' direction from Lemma~\ref{lemma-criteria-data} ensures it also comply+ies with $\CP$.
	Conversely, if there exists a countermodel, then Lemma~\ref{lem-quotient} ensures existence of a countermodel $\I$ whose domain has size at most $|\Amc|+(2^{|\Tmc|+2}+1)^{3|q|}$.
	The procedure can guess this $\I$, and as \Imc is indeed a model of $\kb$ not entailing $q$, it will not be rejected.
	The ``$1 \Rightarrow 2$'' direction from Lemma~\ref{lemma-criteria-combined} further ensures $\I$ also passes the remaining checks performed by the procedure.
      \end{proof}

      \thmdatalowerdllitepos*

\begin{proof}
	\newcommand{\redge}{\mathsf{edge}}
	\newcommand{\cvertex}{\mathsf{Vertex}}
	\newcommand{\rhascol}{\mathsf{hasCol}}
	\newcommand{\ccolor}{\mathsf{Color}}
	We reduce the complement of the graph 3-colorability problem (3Col) to evaluating the Boolean CQ:
	\[
	q = \exists y_1 \, \exists y_2 \, \exists y ~ \redge(y_1, y_2) \land \rhascol(y_1, y) \land \rhascol(y_2, y)
	\]
	over the \dllitepos TBox:
	\[
	\tbox = \{ \cvertex \incl \exists \rhascol, \exists \rhascol^- \incl \ccolor \},
	\]
	where $\ccolor$ is minimized while all other predicates vary.
	Let $\CP$ be the resulting circumscription pattern.
	Given an instance $\graph = (\vertices, \edges)$ of 3Col, we build an ABox $\abox$ containing the following assertions:
	\begin{align}
		\cvertex(v) & \text{ for all } v \in \vertices
		\\
		\redge(v_1, v_2) & \text{ for all } \{v_1, v_2\} \in \edges
		\\
		\ccolor(c) & \text{ for all } c \in \{ r, g, b \}
	\end{align}
	We now prove the following claim:
	\[
	\graph \notin \text{3Col} \iff \circkb \models q
	\]
	First notice that every model $\I$ of $\circkb$ verifies $\ccolor^\I = \{ r, g, b \}$.
	Indeed, by contradiction, if $\I$ contains $e \in \ccolor^\I \setminus \{ r, g, b \}$, then we can build $\Jmc$ a model of $\kb$ with $\Jmc <_\CP \I$ by modifying $\I$ as follows:
	\begin{itemize}
		\item remove $e$ from $\ccolor^\I$, that is $\ccolor^\Jmc = \ccolor^\Imc \setminus \{ e \}$;
		\item reroute $\rhascol$ as: $\rhascol^\Jmc = \{ (v, r) \mid v \in (\exists \rhascol)^\I \}$.
	\end{itemize}
	It is then straightforward that $\Jmc$ satisfies the desired properties, contradicting $\I$ being a model of $\circkb$.
		
	``$\Rightarrow$''. Assume $\graph \notin \text{3Col}$ and consider a model $\I$ of $\Jmc$.
	By the above remark, we have $\ccolor^\I = \{ r, g, b \}$ and from $\I$ being a model of $\kb$, we can find a mapping $\tau : \vertices \rightarrow \{ r, g, b\}$ of $\graph$ such that for all $v \in \vertices$, if $\tau(v) = c$ then $(v, c) \in \rhascol^\I$.
	Since, by assumption, $\graph \notin \text{3Col}$, there exists an edge $\{ v_1, v_2 \} \in \edges$ such that $\tau(v_1) = \tau(v_2) = c$ for some $c \in \{ r, g, b \}$, which ensures $y_1 \mapsto v_1, y_2 \mapsto v_2, y \mapsto c$ is an homomorphism of $q$ in $\I$.
	
	``$\Leftarrow$''. Assume $\graph \in \text{3Col}$ and consider a 3-coloring $\tau : \vertices \rightarrow \{ r, g, b\}$.
	We build a model $\I_\tau$ of $\circkb$ by interpreting all concepts and roles as in $\abox$ except for $\rhascol$, which is interpreted as:
	\[
	\rhascol^{\I_\tau} = \{ (v, c) \mid v \in \vertices, \tau(v) = c \}.
	\]
	It is straightforward that $\I$ models $\circkb$ and does not embed $q$ (since $\tau$ is a 3-coloring).
      \end{proof}

\section{Proofs for Section~\ref{subsection-instance-alchi}}

We briefly discuss the variation of Proposition~\ref{prop:nonom} mentioned in the main part of the paper. 
Assume given a circumscribed $\mathcal{ALCHIO}$ KB $\circkb$ with $\kb = (\tbox, \abox)$, an AQ $A_0(x)$ and an individual $a_0 \in \mathsf{ind}(\mathcal{A})$.
We again replace all occurrences of a nominal $a$ with fresh concept $A_a$, add $A_a(a), B_a(a)$ to $\abox$, minimize $B_a$ and set $B_a \prec A$ for all $A \in \Msf$. 
However, in contrast with Proposition~\ref{prop:nonom}, we also add axioms $A_a \sqcap \neg B_a \sqsubseteq \exists r.(X \sqcap A_0)$ and $X \sqsubseteq Y$ to $\tbox$ for some fresh concept names $X$ and $Y$ and fresh role name $r$, and $Y(a_0)$ to $\abox$.
Finally, we also minimize $Y$ with higher preference than concepts from $\Msf$: We set $Y \prec A$ for all $A \in \Msf$.
Denoting $\mn{Circ}_{\CP'}(\kb')$ the resulting circumscribed \ALCHI KB, it can then be verified that 
	$\Circ(\Kmc) \models A_0(a_0)$ iff 
	$\mn{Circ}_{\mn{CP}'}(\Kmc') \models A_0(a_0)$.

We now move to a proof of Theorem~\ref{thm-combined-lower-el-aq}.

\thmcombinedlowerelaq*
\begin{proof}
Let $\mn{Circ}_\CP(\Kmc)$ be a circumscribed \ALC KB, with $\Kmc=(\Tmc,\Amc)$, $A_0$ an AQ, and $a_0 \in \Ind(\Amc)$.  We construct a
        circumscribed $\EL$ KB $\mn{Circ}_\CP(\Kmc')$, with
        $\Kmc'=(\Tmc',\Amc')$, such that
        $\mn{Circ}_\CP(\Kmc) \models A_0(a_0)$ iff $\mn{Circ}_\CP(\Kmc') \models A_0(a_0)$.

        With $\ELU_\bot$, we mean the extension of \EL with disjunction and $\bot$.
        It is well-known that every \ALC TBox $\Tmc$ can be rewritten in polynomial time into an $\ELU_\bot$ TBox $\Tmc^\ast$ that is a conservative extension of $\Tmc$ in the sense that every
        model of $\Tmc^\ast$ is a model of $\Tmc$ and every model of $\Tmc$ can be extended to a model of $\Tmc^\ast$ by interpreting the fresh concept and role names in $\Tmc^\ast$ \cite{DBLP:conf/ijcai/BaaderBL05}. It follows that $\mn{Circ}_\CP(\Kmc)
        \models A_0(a_0)$ iff $\mn{Circ}_\CP(\Tmc^\ast,\Amc) \models A_0(a_0)$, assuming that all the fresh concept names in $\Tmc^\ast$ vary in \CP.
        In the following, we recall the rewriting, which proceeds in three steps:
        \begin{enumerate}

        \item \emph{Remove value restrictions} by replacing every subconcept
          $\forall r . C$ with $\neg \exists r . \neg C$.

        \item \emph{Remove negation of compound concepts} by replacing
          every subconcept $\neg C$, with $C$ compound, by $\neg X$
          where $X$ is a fresh concept name, and adding the CIs
          $$
             C \sqsubseteq X \qquad X \sqsubseteq C.
          $$
          
        \item \emph{Remove negation entirely} by replacing every subconcept
          $\neg A$ with the fresh concept name $\overline A$, and adding
          the CIs
          $$
             \top  \sqsubseteq A \sqcup \overline{A} \qquad
             A \sqcap \overline{A} \sqsubseteq \bot.
          $$

        \end{enumerate}
        We may thus assume w.l.o.g.\ that the TBox \Tmc in the given circumscribed KB
        \Kmc is
        an $\ELU_\bot$ TBox. We may further assume that disjunction occurs in \Tmc
        only in CIs of the form
        $$
           B \sqsubseteq B_1 \sqcup B_2
        $$
        where $B,B_1,B_2$ are concept names, by replacing every disjunction
        $C_1 \sqcup C_2$ with a fresh concept name $X$ and adding the CIs
        $$
        \begin{array}{r@{\;}c@{\;}lcr@{\;}c@{\;}lcr@{\;}c@{\;}l}
          X &\sqsubseteq& Y_1 \sqcup Y_2 && Y_1 &\sqsubseteq& X &&
                                                                   Y_2 &\sqsubseteq& X \\[1mm]
          C_i & \sqsubseteq & Y_i && Y_i &\sqsubseteq& C_i && \text{for } i \in \{1,2\}
        \end{array}
        $$
        where also $Y_1,Y_2$ are fresh concept names. Finally, we may assume
        that \Tmc contains only a single CI $B \sqsubseteq B_1 \sqcup B_2$
        by replacing every such CI with
        $$
        B \sqsubseteq \exists r_{B_1,B_2} . D \quad
        \exists r_{B_1,B_2} . D_1 \sqsubseteq B_1 \quad 
        \exists r_{B_1,B_2} . D_2 \sqsubseteq B_2 
        $$
        where $r_{B_1,B_2}$ is a fresh role name and $D,D_1,D_2$ are fresh concept names,
        and adding the CI
        $$
            D \sqsubseteq D_1 \sqcup D_2.
        $$
        We may clearly also assume that $\bot$ occurs in \Tmc only in the form
        $C \sqsubseteq \bot$ with $C$ an \EL-concept.
        Note that the TBoxes resulting from these transformations are conservative
        extensions of the original ones. In \CP, all of the freshly introduced
        concept names vary.

        \smallskip
        We are now ready for the actual reduction, which combines ideas from
        the proofs of Theorems~\ref{thm:combined-lower-el} and~\ref{thm-data-lower-el}. To construct the TBox $\Tmc'$,
        We start with relativized versions of the \EL-CIs in \Tmc. 
        For an \EL
        concept $C$,
        inductively define the concept $C_\X$ as follows:
        \begin{align*}
                A_\X    & \coloneqq \X \sqcap A & (\exists r.D)_\X & \coloneqq \X \sqcap \exists r.D_\X \\
                \top_\X & \coloneqq \X          & (D \sqcap D')_\X & \coloneqq D_\X \sqcap D'_\X
        \end{align*}
        We then include in $\Tmc'$ all CIs 
        \begin{align}
                C_\X \sqsubseteq D_\X 
        \end{align}
        such that $C \sqsubseteq D \in \Tmc$ is neither the unique disjunctive
        CI $B \sqsubseteq B_1 \sqcup B_2$
        nor of the form $C \sqsubseteq \bot$. To construct the ABox $\Amc'$,
        we start with the extension of \Amc by
        \begin{align}
          X(a) \quad \text{ for all } a \in \Ind(\Amc).
          \label{Xmarking}
        \end{align}

        Of course, we need to compensate for removing the mentioned CIs.
        To simulate the disjunctive CI $B \sqsubseteq B_1 \sqcup B_2$, we
        extend $\Tmc'$ with
        \begin{align}
        B &\sqsubseteq \exists r_D . (D \sqcap \mn{Pos}) \qquad
            \exists r_D . D_1 \sqsubseteq B_1
            \label{blablaone}
          \\
                                                             B &\sqsubseteq \exists r_D . (D \sqcap \mn{Neg}) 
                                                          \qquad
                                                                 \exists r_D . D_2 \sqsubseteq B_2
                                                                 \label{blablatwo}
        \end{align}
        % \begin{array}{rclcrcl}
        % B &\sqsubseteq& \exists r_D . (D \sqcap \mn{Pos}) &&
        % \exists r_D . D_1 &\sqsubseteq& B_1 
        %   \\[1mm]
        %                                                      B &\sqsubseteq& \exists r_D . (D \sqcap \mn{Neg}) 
        %                                                   &&
        %                                                      \exists r_D . D_2 &\sqsubseteq& B_2
        % \end{array}
%        $$
        %
        where $D,D_1,D_2,\mn{Pos}$, and $\mn{Neg}$ are fresh concept names and $r_D$ is a fresh role name.
        We update \CP so that $D$ is minimized. We further extend \Amc with the
        assertions
\begin{align}
	D(d_1^+) 	& & D(d_1^-) 	& &D(d_2^+) 	& & D(d_2^-) \\
	D_1(d_1^+) 	& & D_1(d_1^-) 	& &
                                                            D_2(d_2^+)
                                                                                & & D_2(d_2^-)  	 \\
	\mn{Pos}(d_1^+) 	& & \mn{Neg}(d_1^-) 	& & \mn{Pos}(d_2^+) 	& &\mn{Neg}(d_2^-).
\end{align}
where $d^+_1,d^-_1,d^+_2,d^-_2$ are fresh individuals. Note that these individuals are \emph{not} marked with $X$, exempting them from $\Tmc$ is why the CIs from \Tmc are relativized in $\Tmc'$. Note that the way we simulate disjunction is similar to how we choose truth values in the proof of Theorem~\ref{thm-data-lower-el}, and also the reason for using a positive and a negative witness is the same.
We need to make sure that the positive and negative witnesses are distinct, which is achieved by otherwise making the query true.
%
% Further introduce fresh role names $\mn{freeze}$ %, $\mn{freeze}_{D_2}$,
% and \mn{checks} and for every $a \in \Ind(\Amc)$, fresh individuals $a_{D_1}$ and $a_{D_2}$.  Then extend $\Amc'$ with
% %
% $$
% \begin{array}{rcl}
%   \exists \mn{freeze}. D_1 \sqsubseteq D
% \end{array}
% $$
To this end, we introduce a fresh role name $u$ and fresh concept
names $G, A_0'$. We update \CP so that $G$ is minimized and extend $\Tmc'$ with:%
\begin{align}
	\mn{Pos} \sqcap \mn{Neg} & \sqsubseteq \exists u.(G \sqcap A'_0)
	\label{barlabel} \\
	A_0 &\sqsubseteq A'_0
	\label{barbarlabel}
\end{align}
We add
$
G(a_0)
$
to $\Amc'$ and the new query is $A'_0$.

To compensate for removing the CIs \mbox{$C \sqsubseteq \bot$}, we consider another fresh concept name $M$ and update \CP so that $M$ is minimized. We further extend $\Tmc'$ with
%
%$$
\begin{align}
	C \sqsubseteq M \sqcap \exists u . (G \sqcap A'_0) \quad \text{ for all } C \sqsubseteq \bot \in \Tmc.
	\label{lastone}
\end{align}
So if a concept $C$ with $C \sqsubseteq \bot \in \Tmc$ is satisfied, then we force the minimized concept name $M$ to be non-empty and make the (new) query true.

%\iffalse
%We introduce fresh a role name \mn{check} and extend $\Amc'$ with
%%
%\begin{align}
%  \mn{check}(a_0,a)  \qquad \text{for all } a \in \Ind(\Amc)
%  \label{foolabel}
%\end{align}
%%
%and $\Tmc'$ with
%%
%\begin{align}
%\exists \mn{check} . \exists r_D . (\mn{Pos} \sqcap \mn{Neg}) &\sqsubseteq A'_0
%  \label{barlabel} \\
%  A_0 &\sqsubseteq A'_0
%        \label{barbarlabel}
%\end{align}
%%
%where $A'_0$ is a fresh concept name that is the new query.
%
%
%Now for the compensation of the removal of the CIs \mbox{$C \sqsubseteq \bot$}. Let $u$ be a fresh role name and $G,M$ fresh concept names. We update \CP so that $G,M$ are both minimized and extend $\Tmc'$ with
%%
%%$$
%\begin{align}
%  C \sqsubseteq M \sqcap \exists u . (G \sqcap A'_0) \quad \text{ for all } C \sqsubseteq \bot \in \Tmc.
%  \label{lastone}
%\end{align}
%%$$
%%
%and add 
%%
%$
%G(a_0)
%$
%to $\Amc'$. So if a concept $C$ with $C \sqsubseteq \bot \in \Tmc$ is
%satisfied, then we force the minimized concept name $M$ to be non-empty
%and make the (new) query true.
%\fi
Let $\CP'$ be the extended circumscription pattern.
It remains to establish the correctness of the reduction.
        \\[2mm]
        {\bf Claim.} 
                $\Circ(\Kmc) \models A_0(a_0)$ iff $\Circ(\Kmc') \models A'_0(a_0)$.
                \\[2mm]
                To prepare for the proof of the claim, we first observe that every
                model \Imc of \Kmc gives rise to a model $\Imc'$ of $\Kmc'$
                in a natural way. More precisely, we set $\Delta^{\Imc'} = \Delta^\Imc \uplus \{ d^+_1,d^-_1,d^+_2,d^-_2\}$,
                define the extension of all concept and role names that occur in \Kmc
                exactly as in \Imc, and interpret the fresh concept and role names as
                follows:
                $$
                \begin{array}{rcl}
                  X^{\Imc'} &=& \Delta^\Imc \\[1mm]
                  D^{\Imc'} &=& \{ d^+_1,d^-_1,d^+_2,d^-_2\} \\[1mm]
                  D_i^{\Imc'}&=& \{d^+_i, d^-_i \} \text{ for } i \in \{1,2\} \\[1mm]
                  \mn{Pos}^{\Imc'} &=& \{ d^+_1,d^+_2 \}\\[1mm]
                  \mn{Neg}^{\Imc'} &=& \{ d^-_1,d^-_2 \}\\[1mm]
                  r_D^{\Imc'} &=& \{ (d,d^+_1),(d,d^-_1) \mid d \in (B \sqcap B_1)^\Imc \} \, \cup \\[1mm]
                         && \{ (d,d^+_2),(d,d^-_2) \mid d \in (B \sqcap B_2)^\Imc \} \\[0.5mm]
                  \mn{check}^{\Imc'} &=& \{ (a_0,a) \mid a \in \mn{ind}(\Amc) \\[1mm]
                  {A'_0}^{\Imc'} &=& A_0^\Imc \\[1mm]
                  u^{\Imc'} &=& \emptyset \\[1mm]
                  M^{\Imc'} &=& \emptyset \\[1mm]
                  G^{\Imc'} &=& \{ a_0 \}\\[1mm]
                \end{array}
                $$
                It is easily checked that ${\Imc'}$ is indeed a model of $\Kmc'$.
        Now for the actual proof of the claim
        \smallskip
                
        ``$\Leftarrow$''.  Assume that $\Circ(\Kmc) \not\models A_0(a_0)$. Then there is a model \Imc of \Kmc that is minimal w.r.t.\ \CP and satisfies $\Imc\not\models A_0(a_0)$. Let ${\Imc'}$ be the corresponding model of $\Kmc'$ defined above. We
                clearly have
            ${\Imc'} \not\models A'_0(a_0)$. To show that $\Circ(\Kmc') \not\models A'_0(a_0)$, it thus remains to show that $\Imc'$ is minimal w.r.t.\ $\CP'$.

            Assume to the contrary that there is a model $\Jmc' <_{\CP'} \Imc'$ of~$\Kmc'$.
            As $\Imc'$ interprets $D,M,$ and $G$ minimally among all models of $\Kmc'$,
            we must have $D^{\Jmc'} = D^{\Imc'} = \{ d^+_1,d^-_1,d^+_2,d^-_2\}$, $M^{\Jmc'} = \emptyset$, and
            $G^{\Jmc'} = G^{\Imc'} = \{ a_0 \}$. We show that the restriction $\Jmc$
            of $\Jmc'$ to domain $X^{\Jmc'}$ is a model
            of $\Kmc$. Moreover, $\Jmc' <_{\CP'} \Imc'$, $D^{\Jmc'} = D^{\Imc'}$, $M^{\Jmc'} = M^{\Imc'}$, and
            $G^{\Jmc'} = G^{\Imc'}$ imply $\Jmc <_\CP \Imc$, which then yields a contradiction against the minimality of \Imc.

            Based on the facts that $\Jmc'$ is a model of $\Amc'$, $\Ind(\Amc) \subseteq X^{\Jmc'}$ due to (\ref{Xmarking}), and $\Amc \subseteq \Amc'$, it is clear that $\Jmc$ is a model of \Amc. Moreover, since $\Jmc'$ is a model of $\Tmc'$, by
            construction of $\Jmc$ it is easy to see that $\Jmc$ satisfies all CIs in \Tmc
            with the possible
            exceptions of the disjunctive CI $B \sqsubseteq B_1 \sqcup B_2$ and the
            CIs of the form $C \sqsubseteq \bot$. The former, however, is satisfied due to
            CIs~(\ref{blablaone}) and~(\ref{blablatwo}) and since
            $D^{\Jmc'} = \{ d^+_1,d^-_1,d^+_2,d^-_2\}$. And the latter is satisfied
            due to the CIs~(\ref{lastone}) and since $M^{\Jmc'}=\emptyset$.
            
            %
            % Call an element $d \in \Delta^{\Jmc'}$ \emph{reachable} if there is a sequence
            % $d_0,r_0,d_1,r_1,\dots,r_{n-1},d_n$ such that $d_0 \in \Ind(\Amc)$, $(d_i,d_{i+1}) \in r^{\Jmc'}$ for $0 \leq i < n$, and $d_n=d$. We may assume w.l.o.g.\ that domain
            % elements in $\Jmc'$ that are not reachable do not occur in the extension of
            % concept and role names. NOT TRUE!!!!

\smallskip
            
``$\Rightarrow$''.  Assume that $\Circ(\Kmc') \not\models A'_0(a_0)$. Then there is a model $\Imc'$ of $\Kmc'$ that is minimal w.r.t.\ $\CP'$ and satisfies $\Imc'\not\models A'_0(a_0)$. 
We first argue that $G^{\Imc'}=\{ a_0 \}$ as, otherwise, we can find a model $\Jmc' <_{\CP'} \Imc'$ of~$\Kmc'$, contradicting the minimality of $\Imc'$.
We have $G^{\Imc'} \subseteq \{ a_0 \}$ since $\Imc'$ is a model of $\Amc'$. Assume to the contrary of what we want to show that the converse fails.  Then we construct $\Jmc'$ by modifying $\Imc'$ as follows:
\begin{itemize}
	
	\item set $G^{\Jmc'} = \{ a_0 \}$;
	
	\item add $a_0$ to ${A'_0}^{\Jmc'}$;
	
	\item reroute $u$ as $u^{\Jmc'} = \{ (e, a_0) \mid e \in (\exists u)^{\Imc'} \}$.
	
\end{itemize}
It is easy to see that, indeed, $\Jmc'$ is still a model of $\Kmc'$ and $\Jmc' <_{\CP'} \Imc'$.

Since $\I'$ satisfies CIs~(\ref{blablaone}) and~(\ref{blablatwo}), for every $e \in B^{\I'}$ we can choose two elements $e^+ \in (D \sqcap \mn{Pos})^{\Imc'}$ and $e^- \in (D \sqcap \mn{Neg})^{\Imc'}$ such that $(e,e^+), (e, e^-) \in r_D^{\I'}$. If $e^+ = e^-$ for some $e$, then by~(\ref{barlabel}) and $G^{\Imc'} = \{ a_0\}$, we have $a_0 \in A_0^{\Imc'}$,
which is impossible since $\Imc'\not\models A'_0(a_0)$. Thus  $e^+ \neq e^-$ for
all $e$.

We next argue that the following conditions are satisfied:
\begin{enumerate}

\item for all $e \in B^{\Imc'}$:
$\{ e^+, e^- \} \cap \{ d^+_1, d^-_1,d^+_2, d^-_2\}
\neq \emptyset$;

%\item $G^{\Imc'}=\{ a_0 \}$;

\item $M^{\Imc'}=\emptyset$.

\end{enumerate}
If, in fact, any of the above conditions are violated, then we can again find a model $\Jmc' <_{\CP'} \Imc'$ of~$\Kmc'$, contradicting the minimality of $\Imc'$.
We start with Condition~1.

Assume to the contrary that for some $e \in B^{\Imc'}$, $e^+, e^- \notin \{ d^+_1, d^-_1,d^+_2, d^-_2\}$.  Then we construct
$\Jmc'$ by modifying \Jmc as follows:
\begin{itemize}

\item remove $e^-$ from $D^{\Jmc'}$;

\item add $e^+$ to $\mn{Neg}^{\Jmc'}$;
  
\item add $a_0$ to ${A'_0}^{\Jmc'}$;

\item add $(e^+, a_0)$ to $u^{\Jmc'}$.

\end{itemize}
It is easy to see that, indeed, $\Jmc'$ is still a model of $\Kmc'$ and $\Jmc' <_{\CP'} \Imc'$.  

%Next for Condition~2. We have $G^{\Imc'} \subseteq \{ a_0 \}$ since $\Imc'$ is a model of $\Amc'$. Assume to the contrary of what we want to show that the converse fails.  Then we construct $\Jmc'$ by modifying \Jmc as follows:
%%
%\begin{itemize}
%
%\item set $G^{\Jmc'} = \{ a_0 \}$;
%
%\item add $a_0$ to ${A'_0}^{\Jmc'}$. 
%
%\end{itemize}
%%
%It is easy to see that, indeed, $\Jmc'$ is still a model of $\Kmc'$ and $\Jmc' <_{\CP'} \Imc'$.  

In the case of Condition~2, we can construct the model $\Jmc'$ simply as \Jmc modified by setting $M^{\Jmc'}=\emptyset$. The resulting $\Jmc'$ is a model of $\Kmc'$.  The CIs~(\ref{lastone}) are the only potentially problematic items. But since $\Imc'\not\models A'_0(a_0)$ and $G^{\Imc'}=\{a_0\}$, we can infer from~(\ref{lastone}) that $C^{\Imc'} = \emptyset$ for all $C \sqsubseteq \bot \in \Tmc$, and thus the same holds for $\Jmc'$. Consequently, the CIs~(\ref{lastone}) are never triggered in $\Jmc'$ and it is safe to set $M^{\Jmc'}=\emptyset$.

Let $\Imc$ be the restriction of $\Imc'$ to $X^{\Imc'}$. It can be verified that $\Imc'$ is a model of \Amc and that it satisfies all CIs in \Tmc
with the possible
            exceptions of the disjunctive CI $B \sqsubseteq B_1 \sqcup B_2$ and the
            CIs of the form $C \sqsubseteq \bot$. The former, however, is satisfied by
            Condition~1 and due to
            CIs~(\ref{blablaone}) and~(\ref{blablatwo}). And the latter is satisfied
            by Condition~2 and 
            due to CI~(\ref{lastone}).

            Since $\Imc' \not\models A'_0(a_0)$ and by (\ref{barbarlabel}), we have $\Imc \not\models A_0(a_0)$. To show that $\Kmc \not\models A_0(a_0)$, it thus remains
            to show that $\Imc$ is minimal w.r.t.\ $<_\CP$.
            Assume to the contrary that there is a model $\Jmc <_\CP \Imc$ of $\Kmc$
            and let $\Jmc'$ be the corresponding model of $\Kmc'$ constructed at
            the beginning of the correctness proof. Since $\Jmc'$ interprets
            the minimized concept names $D$, $M$, and $G$ in a minimal way among
            the models of $\Kmc'$, it follows from $\Jmc <_\CP \Imc$ that
            $\Jmc' <_{\CP'} \Imc'$, in contradiction to $\Imc'$ being minimal.
\end{proof}

\section{Proofs for Section~\ref{subsection-instance-dllite-combined}}

We now move to a proof of Theorem~\ref{thm-combined-lower-bool-instance}, which strongly combines \dllitebool expressiveness with fixed concept names.
Interestingly, fixed concept names are actually not needed.
Indeed, given a circumscribed $\dllitebool$ KB \circkb, each $A \in \Fsf$ can be replaced by the two minimized predicates $A$ and $\overline{A}$, where $\overline{A}$ is a fresh concept name set to be the complement of $A$ via the \dllitebool axioms $\overline{A} \incl \lnot A$ and $ \lnot A \incl \overline{A}$ that can be added in the TBox.
Concept names $A$ and $\overline{A}$ are then given higher preference than any other concept name from the original set of minimized concept names $\Msf$.
It is then straightforward to verify that the resulting circumscribed \dllitebool KB $\mn{Circ}_{\CP'}(\kb')$ has the same models as $\circkb$ (up to the interpretation of $\overline{A}$, which always equals the interpretation of $\lnot A$).

\thmcombinedlowerboolinstance*
\begin{proof}
	\newcommand{\ccenter}{\mathsf{Center}}
	\newcommand{\cbit}{\mathsf{Bit}}
	\newcommand{\cgoal}{\mathsf{Goal}}
	\newcommand{\callpairs}{\mathsf{AllPairs}}
	\newcommand{\cgoodcol}{\mathsf{GoodCol}}
	\newcommand{\cbadcol}{\mathsf{BadCol}}
	\newcommand{\cred}{\mathsf{R}}
	\newcommand{\cgreen}{\mathsf{G}}
	\newcommand{\cblue}{\mathsf{B}}
	\newcommand{\cgate}{G}
	\newcommand{\ctest}{\mathsf{T}}
	\newcommand{\istested}{\mathsf{Test}}

	The proof proceeds by reduction from the complement of the $\NExpTime$-complete \succinct\tcol\ problem. %, known to be $\NEXP$-complete due to \cite{papadimitriou-succinct}.
	An instance of \succinct\tcol\ consists of a Boolean circuit $\circuit$ with $2n$ input gates.
	The graph $\graph_\circuit$ encoded by $\circuit$ has $2^n$ vertices, identified by binary encodings on $n$ bits.
	Two vertices $u$ and $v$, with respective binary encodings $u_1 \dots u_n$ and $v_1 \dots v_n$, are adjacent in $\graph_\circuit$ iff $\circuit$ returns True when given as input $u_1 \dots u_n$ on its first $n$ gates and  $v_1 \dots v_n$ on the second half.
	The problem of deciding if $\graph_\circuit$ is 3-colorable has been proven to be $\NExpTime$-complete in \cite{Papadimitriou1986}.
	
	Let $\circuit$ be a Boolean circuit with $2n$ input gates.
	We build a \dllitebool cKB $\circkb$ with $\kb = (\tbox, \abox)$ s.t.\ the AQ $\cgoal(x)$ admits the individual $a$ as a certain answer iff $\circuit$ does not belong to \succinct\tcol.
	Let us already clarify that $\CP$ uses three minimized concept names, namely $\ccenter$, $\callpairs$ and $\cgoodcol$, with the following preference relation $\prec$:
	\begin{align}
		\ccenter \prec \callpairs \prec \cgoodcol
	\end{align}
	It also uses fixed concept names: each $\cbit_i$ for $1 \leq i \leq 2n$ and the six concept names $\cred_1$, $\cgreen_1$, $\cblue_1$, $\cred_2$, $\cgreen_2$ and $\cblue_2$, which will be useful to represent binary encodings and color assignments respectively.
	All other predicates vary.
	
	Models of $\kb$ contain a central element $a$, that we enforce via the ABox $\abox = \{ \ccenter(a) \}$ and which will be unique
%	, and we add the following axiom in $\tbox$:
%	\begin{align}
%		\ccenter \equiv \; & \lnot \cpair
%	\end{align}
	by virtue of the preference relation.
	This central element will further be used to detect undesired behaviors of the models via the AQ.
%	To this end, we ask $\ccenter$ to be a minimized concept.
%	Let us already clarify that our AQ consists of the concept $\cgoal$ and that we will check whether $\cgoal$ holds on $a$.	
	In a model, each element represents a pair $(v_1, v_2)$ of vertices by its combination of fixed concepts $\cbit_i$ with $1 \leq i \leq 2n$ that corresponds to the binary encodings of $v_1$ and $v_2$.
	To detect models that do not represent all possible pairs, we require each element representing a pair $(v_1, v_2)$ to ``send'' at the center a binary sequence of $2n$ bits that must be different from the encoding of $(v_1, v_2)$.
	This is achieved by the following axioms in $\tbox$:
	\begin{align}
		\top \incl \; & \exists r_i \sqcup \exists \overline{r_i}
		\quad \text{ for each } 1 \leq i \leq 2n
		\\
		\exists r_i^- \incl \; & \ccenter
		\qquad \text{ for each } 1 \leq i \leq 2n
		\\
		\exists \overline{r_i}^- \incl \; & \ccenter
		\qquad \text{ for each } 1 \leq i \leq 2n
		\\
		\top \incl \; & \bigsqcup_{0 \leq i \leq 2n} (\exists r_i \sqcap \lnot \cbit_i) \sqcup (\exists \overline{r_i} \sqcap \cbit_i)
		\label{eq:send-bits}
	\end{align}
	We now require the number of sent sequences to be unique, if possible, via the minimized concept name $\callpairs$, triggered by the following axioms in $\tbox$:
	\begin{align}
		\exists r_i^- \sqcap \exists \overline{r_i}^- \incl \; & \callpairs \quad \text{ for each } 1 \leq i \leq 2n
	\end{align}
	Therefore, if (at least) a pair is not represented in a model, then \emph{all} elements can send the binary encoding of a same missing pair to avoid instantiating $\callpairs$.
	If, on the other hand, each pair is represented (at least once), then the number of sent sequences cannot be unique, and $\callpairs$ must be satisfied on the central element $a$.
	To detect this with the AQ, we add the following axiom to $\tbox$:
	\begin{align}
		 \lnot\callpairs \incl \; & \cgoal
	\end{align}
	Now that we can detect whether all pairs are represented, we move to the encoding of colors.
	The respective color of $v_1$ and $v_2$ is chosen locally via the following axioms in $\tbox$:
	\begin{align}
		\top \incl \; & \cred_1 \sqcup \cgreen_1 \sqcup \cblue_1
		\label{eq:exists-color-1}
		\\
		\top \incl \; & \cred_2 \sqcup \cgreen_2 \sqcup \cblue_2
		\label{eq:exists-color-2}
	\end{align}
%	and each $v_1$, $v_2$ can only be assigned a single color via the following disjointness axioms in $\tbox$ (defined for $i = 1, 2$):
%	\begin{align}
%		\cred_i \sqcap \cgreen_i \incl \; & \bot 
%		&
%		\cgreen_i \sqcap \cblue_i \incl \; & \bot
%		&
%		\cblue_i \sqcap \cred_i \incl \; & \bot
%	\end{align}
	Recall concept names $\cred_1, \cgreen_1, \cblue_1$ and $\cred_2, \cgreen_2, \cblue_2$ are fixed.
	We additionally enforce that if an element represents a pair $(v_1, v_2)$ corresponding to an edge $\{ v_1, v_2 \}$ in $\graph_\circuit$, then it must assign different colors to $v_1$ and $v_2$.
	This is achieved by computing the output of $\circuit$ on input $v_1$ and $v_2$ via a fresh concept $\cgate$ for each gate $\cgate$ of $\circuit$ and adding the following axioms in $\tbox$:
	\begin{align}
		\cgate \equiv \; & \cbit_i \quad\qquad \text{ if } \cgate \text{ is the } i^{\text{th}} \text{ input gate of } \circuit
		\label{eq:consistent-circuits}
		\\
		\cgate \equiv \; & \cgate_1 \sqcap \cgate_2 \quad \text{ if } \cgate \text{ is a $\land$-gate with inputs } \cgate_1, \cgate_2
		\\
		\cgate \equiv \; & \cgate_1 \sqcup \cgate_2 \quad \text{ if } \cgate \text{ is a $\lor$-gate with inputs } \cgate_1, \cgate_2
		\\
		\cgate \equiv \; & \lnot \cgate' \quad\qquad \text{if } \cgate \text{ is a $\lnot$-gate with input } \cgate'
	\end{align}
	Assuming the output gate of $\circuit$ is $\cgate_{o}$, it now suffices to add the following axioms in $\tbox$:
	\begin{align}
		\cgate_{o} \sqcap \cred_1 \sqcap \cred_2 \incl \; & \bot
		\label{eq:consistent-edges-red}
		\\
		\cgate_{o} \sqcap \cgreen_1 \sqcap \cgreen_2 \incl \; & \bot
		\label{eq:consistent-edges-green}
		\\
		\cgate_{o} \sqcap \cblue_1 \sqcap \cblue_2 \incl \; & \bot
		\label{eq:consistent-edges-blue}
	\end{align}
	While the above allows ensuring that each element $e$ representing an edge $\{ v_1, v_2 \}$ assigns different colors to $v_1$ and $v_2$, there might still be inconsistent colorings in the sense that $v_1$ might have been assigned another color by some other element $e'$ representing an edge $\{ v_1, v_3 \}$.
	To detect such inconsistent colorings, the central element ``scans'' all colors assigned to a tested vertex, whose encoding corresponds to a binary sequence of $n$ bits represented by dedicated concepts $\ctest_i$.
	To do so, the interpretation of each $\ctest_i$ on the center element is first copied by all elements via the following axioms in $\tbox$, defined for each $1 \leq i \leq n$:
	\begin{align}
		\top \incl \; & \exists t_i \sqcup \exists \overline{t_i}
		\\
		\exists t_i^- \incl \; & \ccenter \sqcap \ctest_i
		&
		\exists \overline{t_i}^- \incl \; & \ccenter \sqcap \lnot \ctest_i
		\\
		\exists t_i \incl \; & \ctest_i
		&
		\exists \overline{t_i} \incl \; & \lnot\ctest_i
	\end{align}
	Each element representing a pair $(v_1, v_2)$ then compares whether the encoding of either $v_1$ or $v_2$ matches the sequence $\ctest_i$, via the following axioms:
	\begin{align}
		\bigsqcap_{1 \leq i \leq n} \left( (\cbit_i \sqcap \ctest_i) \sqcup (\lnot \cbit_i \sqcap \lnot \ctest_i) \right) \equiv \; & \istested_1
		\label{eq:test-1}
		\\
		\bigsqcap_{1 \leq i \leq n} \left( (\cbit_{n + i} \sqcap \ctest_i) \sqcup (\lnot \cbit_{n + i} \sqcap \lnot \ctest_i) \right) \equiv \; & \istested_2
	\end{align}
	It then sends the corresponding assigned color back to the central element via the following axioms in $\tbox$, defined for each $C \in \{ \cred, \cgreen, \cblue \}$:
	\begin{align}
	 	\istested_1 \sqcap C_1 \equiv \; & \exists s_C
	 	\\
	 	\istested_2 \sqcap C_2 \equiv \; & \exists s_C
	 	\\
	 	\exists s_C^- \incl \, & \ccenter
 	\end{align}
 	The center now detects whether it receives two different colors for the tested vertex:
 	\begin{align}
 		\bigsqcup_{\substack{ C, C' \in \{ \cred, \cgreen, \cblue \} \\ C \neq C'}} (\exists s_C^- \sqcap \exists s_{C'}^-) \equiv \; & \lnot \cgoodcol
% 		(\exists s_\cred^- \sqcap \exists s_\cblue^-) \sqcup (\exists s_\cblue^- \sqcap \exists s_\cgreen^-) \sqcup (\exists s_\cgreen^- \sqcap \exists s_\cred^-) \incl  \equiv \, & \lnot \cgoodcol
	\label{eq:inconsistent-coloring}
 	\end{align}
	where $\cgoodcol$ is a minimized concept name, which forces the center element to detect an inconsistent coloring if it exists. 
	This is further translated in terms of the AQ with:
	 	\begin{align}
		\lnot \cgoodcol \incl \; & \cgoal
		\label{eq:bad-color-gives-goal}
	\end{align}
 	
 	We are now done with the construction of $\circkb$.
	It remains to prove that:
	\[
	\circkb \models \cgoal(a) \iff \circuit \notin \succinct\tcol.
	\]
	
	``$\Rightarrow$''.
	Assume $\circuit \in \succinct\tcol$ and let $\rho : \{ 0, \dots, 2^n - 1 \} \rightarrow \{ \cred, \cgreen, \cblue \}$ be a (legal) 3-coloring of the encoded graph $\graph_\circuit$.
	We build a model $\I$ of $\circkb$ based on $\rho$, whose domain $\domain{\I}$ contains $a$ and one fresh element $e_{v_1, v_2}$ per $(v_1, v_2) \in \{ 0, \dots, 2^n - 1\} \times \{ 0, \dots, 2^n -1 \}$.
	Model $\I$ further interprets concept and role names as follows:
	\begin{align*}
		{\ccenter}^\I & = \cgoodcol^\I = \callpairs^\I = \{ a \}
		\\
		\cbit_i^\I & = \{ e_{v_1, v_2} \mid i^\text{th} \text{ bit in } (v_1, v_2) \text{ encoding is } 1 \}
		\\
		C_i^\I & = \{ e_{v_1, v_2} \mid \rho(v_i) = C \}
		\\
		G & = \{ e_{v_1, v_2 } \mid G \in \circuit \text{ evaluates to } 1 \text{ on input } (v_1, v_2) \}
		\\
		\istested_i & = \{ e_{v_1, v_2} \mid v_i = 0 \}
		\\	
		\cgoal^\I & = \ctest_i^\I = \emptyset
		\\	
		r_i^\I & = \{ (e_{v_1, v_2}, a) \mid i^\text{th} \text{ bit in } (v_1, v_2) \text{ encoding is } 0 \}
		\\	
		\overline{r_i}^\I & = \{ (e_{v_1, v_2}, a) \mid i^\text{th} \text{ bit in } (v_1, v_2) \text{ encoding is } 1 \}
		\\
		t_i^\I & = \emptyset
		\\
		\overline{t_i}^\I & = \{ (e_{v_1, v_2}, a) \mid 0 \leq v_1, v_2 \leq 2^n -1 \}
		\\
		\begin{split}
		s_C & = \{ (e_{v_1, v_2}, a) \mid \rho(0) = C \\
		&\phantom{= \{ (e_{v_1, v_2}, a) \mid }\text{ and either } v_1 = 0 \text{ or } v_2 = 0 \}
		\end{split}
	\end{align*}
	Notice that we have arbitrarily chosen each element representing a pair to send back to the center it's own encoding in which all bits have been flipped (see interpretations of roles $r_i$ and $\overline{r_i}$), and the tested vertex to be $0$ (interpretations of each $\ctest_i$ being empty and $t_i$, $\overline{t_i}$ and $\ctest_i$ being set accordingly).
	With the two above remarks, it is easily verified $\I$ models $\kb$, and, by definition $a \notin \cgoal^{\I}$.
	It remains to verify $\I$ complies with $\CP$.
	By contradiction, assume there exists $\J <_\CP \I$.
	Notice that from the fixed predicates, $\J$ still encodes one instance of each possible pair and the very same coloring $\rho$.
	From $\J$ being a model of $\abox$ , we have $a \in \ccenter^\J$.
	From $\ccenter^\I = \{ a \}$ and $\ccenter$ being the most preferred minimized predicate, it follows that $\ccenter^\J = \{ a\}$ as otherwise we would have $\I <_\CP \J$.
	Therefore, since each possible pair is represented at least once, the $r_i$ and $\overline{r_i}$ mechanism ensures $a \in \callpairs^\J$.
	Together with $\callpairs^\I = \{ a\}$ and $\callpairs$ being second most preferred minimized predicates, we obtain $\callpairs^\J = \{ a \}$ (as otherwise, again, we would have $\I <_\CP \J$).
	Finally, since the coloring encoded in $\J$ must be exactly $\rho$, whatever the interpretations of concepts $\ctest_i$, the concept $\exists s_{C}^- \sqcap \exists s_{C'}^-$ will always be empty for all $C, C' \in \{ \cred, \cgreen, \cblue \}$ s.t.\ $C \neq C'$.
	Therefore, $a \in \cgoodcol^\J$.
	Now, from $\cgoodcol$ being also minimized and $\cgoodcol^\I = \{ a \}$, we obtain $\cgoodcol^\J = \{ a\}$.
	Overall $\J$ interprets the minimized predicates as $\I$ does, hence the desired contradiction of $\J <_\CP \I$.
		
	``$\Leftarrow$''.
	Assume $\circuit \notin \succinct\tcol$.
	Consider a model $\I$ of $\circkb$.
	First notice that $\ccenter^\I = \{ a \}$.
	Indeed, $\I$ models $\abox$ hence $a \in \ccenter^\I$, and, if there were $b \in \ccenter^\I$ with $b \neq a$, then we could find a model $\J$ of $\kb$ with $\J <_\CP \I$ by:
	\begin{enumerate}
		\item take $\domain{\J} = \domain{\I}$ and preserve all interpretations of fixed concept names;
		\item setting $\ccenter^\J = \{ a \}$;
		\item replace every $(e, e') \in r_i^\I$ by $(e, a) \in r_i^\J$ and same for $\overline{r_i}$;
		\item define accordingly $\callpairs^\J$;
		\item set for all $e \in \domain{\J}$: $e\in \ctest_i^\J$ iff $a \in \ctest_i^\I$;
		\item set $t_i^\J = \{ (e, a) \mid e \in \ctest_i^\J \}$ and $\overline{t_i}^\J = \{ (e, a) \mid e \notin \ctest_i^\J \}$;
		\item define accordingly interpretations of concepts evaluating the circuit and $\istested_1^\J$, $\istested_2^\J$;
		\item set $s_C^\J = \{ (e, a) \mid e \in ((\istested_1 \sqcap C_1) \sqcup (\istested_2 \sqcap C_2))^\J \}$;
		\item define accordingly $\cgoodcol^\J$ and $\cgoal^\J$.
	\end{enumerate}
	In particular, notice Step~1 removes $b$ from the interpretation of $\ccenter$.
	
	We now want to prove that if a pair $(v_1, v_2)$ is not represented in $\I$, that is:
	\[
	\bigcap_{i \in \text{Ones}(v_1, v_2)} \cbit_i^\J \cap \bigcap_{i \in \text{Zeros}(v_1, v_2)} (\lnot \cbit_i)^\J = \emptyset,
	\]
	where $\text{Ones}(v_1, v_2)$ is the set of $0 \leq i \leq 2n -1$ s.t.\ the $i^\text{th}$ bit in the binary encoding of $(v_1, v_2)$ is $1$ and $\text{Zeros}(v_1, v_2)$ is its complement, then the query must be satisfied.
	Indeed, if such a $(v_1, v_2)$ exists, then we must have $\callpairs^\J = \emptyset$, thus $a \in \cgoal^\J$, as otherwise we can obtain a model $\J$ of $\kb$ s.t.\ $\J <_\CP \I$ by:
		\begin{enumerate}
		\item take $\domain{\J} = \domain{\I}$ and preserve all interpretations of fixed concept names and $\ccenter$;
		\item set $r_i^\J = \domain{\I} \times \{ a \}$ if $i \in \text{Ones}(v_1, v_2)$ and $r_i^\J = \emptyset$ otherwise; set $\overline{r_i}^\J = \domain{\I} \times \{ a \}$ if $i \in \text{Zeros}(v_1, v_2)$ and $\overline{r_i}^\J = \emptyset$ otherwise;
		\item follow steps 3 to 8 from the previous construction of $\J$.
	\end{enumerate}
	In particular, Step~2 of the above complies with Equation~\ref{eq:send-bits} as $(v_1, v_2)$ is not represented in $\I$.
	It further implies that $\callpairs^\J = \emptyset$, which ensures the desired contradiction $\J <_\CP \I$.
	
	We are thus left with the case in which each possible pair $(v_1, v_2)$ is represented at least once in $\I$.
	From Equations~\ref{eq:exists-color-1} and \ref{eq:exists-color-2}, we know each element representing a pair assigns colors to its $v_1$ and $v_2$.
	From Equations~\ref{eq:consistent-circuits} to \ref{eq:consistent-edges-blue}, these colors cannot be the same if there is an edge $\{ v_1, v_2 \}$ in the graph $\graph_\circuit$.
	Thus, since we assume $\circuit \notin \succinct\tcol$, the overall choices of colors cannot be consistent: there must exist a vertex $v_0$ which is assigned a color $C_1$ by an element $e_1$ representing a pair $p_1$ and a different color $C_2$ by an element $e_2$ representing a pair $p_2$.
	Let us assume $p_1$ has shape $(v_0, v_1)$ and $p_2$ has shape $(v_0, v_2)$ (other cases works similarly).
	We can now prove that $\cgoodcol^\I = \emptyset$, thus $a \in \cgoal$, as otherwise we could find a model $\J$ of $\kb$ with $\J <_\CP \I$ by:
	\begin{enumerate}
		\item take $\domain{\J} = \domain{\I}$ and preserve all interpretations of fixed concept names, of $\ccenter$ and $\callpairs$, and of roles $r_i$ and $\overline{r_i}$;
		\item set for all $e \in \domain{\J}$: $e \in \ctest_i^\J$ iff $i \in \text{Ones}(v_0)$;
		\item set $t_i^\J = \{ (e, a) \mid e \in \ctest_i^\J \}$ and $\overline{t_i}^\J = \{ (e, a) \mid e \notin \ctest_i^\J \}$;
		\item define accordingly interpretations of concepts evaluating the circuit and $\istested_1^\J$, $\istested_2^\J$;
		\item set $s_C^\J = \{ (e, a) \mid e \in ((\istested_1 \sqcap C_1) \sqcup (\istested_2 \sqcap C_2))^\J \}$;
		\item define accordingly $\cgoodcol^\J$ and $\cgoal^\J$.
	\end{enumerate}
	In particular, Equation~\ref{eq:test-1} and Step~4 in the above ensure $e_1, e_2 \in \istested_1^\J$.
	Therefore Step~5 above yields $(e_1, a) \in s_{C_1}^\J$ and $(e_2, a) \in s_{C_2}^\J$, which triggers Equation~\ref{eq:inconsistent-coloring} in Step~6 and ensures $\cgoodcol^\J = \emptyset$.
	This yields the desired contradiction $\J <_\CP \I$.
	Thus, $\cgoodcol^\I = \emptyset$, in particular Equation~\ref{eq:bad-color-gives-goal} yields $a \in \cgoal^\I$.

      \end{proof}

We work towards a proof of Theorem~\ref{thm-combined-upper-hornh-instance}.
Assume given a model $\I$ of a $\dllitehornh$ cKB $\circkb$.
We ``forget'' about some parts of $\I$, only retaining which roles may be needed due to a combination of fixed predicates in the forgotten part.
To  this end, we define the fixed-type $\ftypeinof{\I}{e}$ of an element $e \in \domain{\I}$ as the set of \dllite concepts $C$ such that $\tbox \models ( \bigsqcap_{e \in F^\I, \ F \in \Fsf} F ) \incl C$.
Given a role $r \in \posroles$, we say that $r$ is \emph{forced} in $\I$ if there exists an element $f_r \in \domain{\I}$ such that $\exists r \in \ftypeinof{\I}{f_r}$.
For each forced role in $\I$, we assume chosen such an element $f_r$.
Similarly, for each role $r \in \posroles$, we assume chosen an element $w_r \in (\exists r^-)^\I$ if $(\exists r^-)^\I \neq \emptyset$.
We now construct an interpretation ${\I_0}$ whose domain $\domain{\I_0}$ consists in $\indsof{\abox}$, all chosen $f_r$ and all chosen $w_r$.
The interpretation $\I_0$ is now defined as:
\begin{align*}
	\cstyle{A}^{\I_0} = \; & \cstyle{A}^{\I} \cap \domain{\I_0}
	\\
	\rstyle{r}^{\I_0} = \; & \rstyle{r}^{\I} \cap (\indsof{\abox} \times \indsof{\abox}) 
	\\
	& \cup \{ (e, w_s) \mid e \in (\exists s)^\I \cap \domain{\I_0}, \tbox \models s \incl r \}
	\\
	& \cup \{ (w_s, e) \mid e \in (\exists s)^\I \cap \domain{\I_0}, \tbox \models s \incl r^- \}
\end{align*}

\begin{remark}
	Notice $\I_0$ is a special case of the models $\I_\Pmc$ considered in Section~\ref{subsection-dllite-combined}, and that $\sizeof{\I_0} \leq \sizeof{\abox} + 2\sizeof{\tbox}$.
\end{remark}

The key result is now the following:

\begin{lemma}
	\label{lemma-linear-countermodel}
	Let $q$ be an AQ.
	If $\I$ is a countermodel for $q$ over $\circkb$, then so is $\I_0$.
\end{lemma}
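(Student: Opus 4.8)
\textbf{Proof plan for Lemma~\ref{lemma-linear-countermodel}.}

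The plan is to show that $\I_0$ is (i) a model of $\kb$, (ii) a countermodel for $q=A_0(a_0)$, and (iii) minimal w.r.t.\ $<_\CP$, with (iii) being the heart of the argument and relying on the machinery already developed for the $\I_\Pmc$ construction in Section~\ref{subsection-dllite-combined}.

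For (i), since $\I_0$ is a special case of the $\I_\Pmc$ interpretations (take $\Pmc$ to be the chosen $f_r$'s together with enough of the $w_r$'s), I would simply invoke Lemma~\ref{lemma-portion-are-models}, or re-check directly: assertions hold because $\I_0$ inherits concept/role membership on $\indsof{\abox}$ from $\I$; Horn CIs and disjointness assertions (no disjunction in $\dllitehornh$) hold because concept extensions are inherited and $\tbox$ is Horn; CIs $A \incl \exists r$ are witnessed by the edge to $w_r$ (which exists in $\I_0$ whenever $(\exists r)^\I \neq \emptyset$, hence whenever some element of $\domain{\I_0}$ is in $(\exists r)^\I$); CIs $\exists r \incl A$ hold because any element of $\domain{\I_0}$ with an outgoing $r$-edge already had one in $\I$; role inclusions hold by the definition of $r^{\I_0}$. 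For (ii), $A_0(a_0)$ fails in $\I_0$ since $a_0 \in \indsof{\abox} \subseteq \domain{\I_0}$ and $A_0^{\I_0} = A_0^\I \cap \domain{\I_0}$, so $a_0 \notin A_0^\I$ gives $a_0 \notin A_0^{\I_0}$.

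The main obstacle is (iii): minimality of $\I_0$. The natural route is Lemma~\ref{lemma-criteria-combined} applied with $\I_0$ in place of $\I$: I would suppose for contradiction that there is a model $\J_0 <_\CP \I_0$ of $\kb$ and then lift it to a model $\J <_\CP \I$ of $\kb$, contradicting that $\I$ is a model of $\circkb$. To build $\J$ I would keep $\domain{\J} = \domain{\I}$, interpret concept names on $\domain{\I_0}$ as in $\J_0$ and elsewhere as in $\I$, and re-route roles: for $e \in \domain{\I}\setminus\domain{\I_0}$ with an outgoing $s$-edge in $\I$, I would use the witness $w_s \in \domain{\I_0}$ (now playing the role of a surrogate successor), and for incoming $r$-edges forced by fixed predicates I would use $f_r$. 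The point of having retained \emph{exactly} the forced roles (via the fixed-type $\ftypeinof{\I}{\cdot}$) is that the only CIs $\exists r \incl A$ that could be violated at an element outside $\domain{\I_0}$ after this surgery are ones triggered by $r$-edges that $\J_0$ is forced to keep by fixed predicates, and such $r$ is forced, hence $f_r$ with the same fixed-type is present in $\domain{\I_0}$ to absorb them. One then checks that $\J$ is a model of $\kb$ and that $\J <_\CP \I$: since the fixed concept names are interpreted identically on $\domain{\I_0}$ by $\I_0$ and $\J_0$ (Condition~2 of $<_\CP$) and outside $\domain{\I_0}$ by $\I$ and $\J$ (by construction), Condition~2 for $\J <_\CP \I$ holds; Conditions~3 and~4 transfer because $\J_0 <_\CP \I_0$ already exhibits the required strict decrease and $\prec$-justification on a minimized concept name, and these witnesses lie in $\domain{\I_0}$, where $\J$ agrees with $\J_0$ and $\I$ with $\I_0$ on all concept names. (This is the same bookkeeping as in the ``$2 \Rightarrow 1$'' direction of Lemma~\ref{lemma-criteria-combined} / the proof of Lemma~\ref{lemma-criteria-data}.)

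Conversely, I should also verify the easy direction implicitly needed: if $\I$ is already minimal then no model $\J_0 <_\CP \I_0$ exists, which is exactly the contrapositive just argued. Putting (i)--(iii) together, $\I_0$ is a countermodel for $q$ over $\circkb$, as desired. The only genuinely delicate point is making the role-rerouting in the lift of $\J_0$ to $\J$ respect all CIs of the form $\exists r \incl A$ at elements outside $\domain{\I_0}$; this is where the precise definition of ``forced role'' and the chosen $f_r$ are used, and I would write that verification out carefully, treating the three kinds of role edges in $\J$ (ABox edges, edges to $w_s$, edges to $f_r$-type surrogates) separately.
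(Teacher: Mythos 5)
Your overall strategy coincides with the paper's: establish that $\I_0$ is a model of $\kb$ (as a special case of the $\I_\Pmc$ construction), observe that the AQ still fails since concept memberships on $\domain{\I_0}$ are inherited from $\I$, and prove minimality by contradiction, lifting a hypothetical $\J_0 <_\CP \I_0$ to a $\J <_\CP \I$. Parts (i) and (ii) are fine.

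There is, however, a genuine gap in your lift of $\J_0$ to $\J$. You propose to interpret concept names on $\domain{\I}\setminus\domain{\I_0}$ ``as in $\I$''. This breaks the model property of $\J$: if $e \notin \domain{\I_0}$ satisfies $e \in A^\I$ for some varying concept name $A$ with $A \sqsubseteq \exists s \in \Tmc$, and $\exists s$ is \emph{not} in the fixed type of $e$ (so $s$ need not be forced), then $e$ still requires an $s$-successor in $\J$, and there is no legitimate target: $\J_0$ may interpret $s$ as empty, so no element of $(\exists s^-)^{\J_0}$ is guaranteed to exist, and pointing the edge to the witness $w_s$ chosen in $\I$ (as you suggest) is unsound because $w_s \in \domain{\I_0}$ carries the concept memberships of $\J_0$ in $\J$, and $\J_0$ need not make $w_s$ satisfy the consequences of $\exists s^-$ (e.g.\ a CI $\exists s^- \sqsubseteq B$ could then fail at $w_s$). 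The paper avoids exactly this by \emph{shrinking} the concept memberships of every $e \notin \domain{\I_0}$ down to its fixed type $\ftypeinof{\I}{e}$, so that the only existential obligations arising outside $\domain{\I_0}$ are those entailed by fixed predicates, i.e.\ forced roles $r$; and for a forced role $r$, the element $f_r$ was retained in $\domain{\I_0}$, its fixed predicates are preserved in $\J_0$ by Condition~2 of $<_\CP$, hence $f_r \in (\exists r)^{\J_0}$ and a witness $w_r' \in (\exists r^-)^{\J_0}$ provably exists to serve as the target. The minimality bookkeeping then goes through via the inclusion $A^\J \cap (\domain{\I}\setminus\domain{\I_0}) \subseteq A^\I \cap (\domain{\I}\setminus\domain{\I_0})$ (equality for fixed names), which your version would also satisfy, but model-hood is the step your construction does not deliver. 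To repair your proof you must adopt the fixed-type truncation outside $\domain{\I_0}$ and route all rerouted edges to witnesses chosen \emph{inside $\J_0$}, not to the $w_s$ chosen in $\I$.
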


\begin{proof}
	Assume $\I$ is a countermodel for $q$ over $\circkb$.
	Setting $\Pmc = \{ f_r \mid r \text{ is forced in } \I \}$, we have $\I_\Pmc = \I_0$ (see Section~\ref{subsection-dllite-combined}) and thus Lemma~\ref{lemma-portion-are-models} ensures $\I_0 \models \kb$.
	Since concept interpretations in $\I_0$ are inherited from $\I$, it is clear $\I_0$ does not entail $q$.
	It remains to prove $\I_0$ also complies with $\CP$.
	By contradiction assume there exists $\J_0$ a model of $\kb$ with $\J_0 <_\CP \I_0$.
	Notice that, for each forced role $r$ in $\I$, there must exists an element $w_r' \in (\exists r^-)^{\J_0}$ as we kept element $f_r \in \domain{\I_0}$ and $r$ is a consequence of the fixed predicates on $f_r$ that must have been preserved in $\J_0$ (see Condition~2 from the definition of $<_\CP$).
	We assume chosen such an element $w_r'$ per forced role and build an interpretation $\Jmc$:
	\begin{align*}
			\domain{\J} = \; & \domain{\I}
			\\
			\cstyle{A}^{\J} = \; & \cstyle{A}^{\J_0} \cup \{ e \in \domain{\I} \setminus \domain{\J_0} \mid A \in \ftypeinof{\I}{e} \}
			\\
			\rstyle{r}^{\J} = \; & \rstyle{r}^{\J_0} 
			\cup \left\{ (e, w_s') \; \middle| \begin{array}{l} e \in \domain{\I} \setminus \domain{\J_0} \\ \exists s \in \ftypeinof{\I}{e}, \tbox \models s \incl r \end{array} \right\}
			\\
			\; & \phantom{\rstyle{r}^{\J_0} } \cup \left\{ (w_s', e) \; \middle| \begin{array}{l} e \in \domain{\I} \setminus \domain{\J_0} \\ \exists s \in \ftypeinof{\I}{e}, \tbox \models s \incl r^- \end{array} \right\}
		\end{align*}
	It is easily verified that $\J$ is a model of $\kb$, and we now prove $\J <_\CP \I$, which will contradict $\I \models \circkb$.
	We first notice a useful property: for all concept name $A$, we have $A^\J \cap (\domain{\I} \setminus \domain{\I_0}) \subseteq  A^\I \cap (\domain{\I} \setminus \domain{\I_0})$ ($\star$).
	Indeed, if $e \in A^\J \cap (\domain{\I} \setminus \domain{\I_0})$, then $A \in \ftypeinof{\I}{e}$, and therefore $e \in A^\I$.
	The converse does not hold in general.
	We now check that all four conditions from the definition of $<_\CP$ are satisfied:
	\begin{enumerate}
			\item 
			By definition, we have $\domain{\J} = \domain{\I}$.
			\item 
			Let $A \in \Fsf$. Definitions of $\I_0$ and $\J$ ensure $A^\I \cap {\domain{\I_0}} = A^{\I_0}$ and $A^\J \cap {\domain{\I_0}} = A^{\J_0}$.
			From $\J_0 <_\CP \I_0$, we get $A^{\I_0} = A^{\J_0}$, which yields $A^\I \cap {\domain{\I_0}} = A^\J \cap {\domain{\I_0}}$.
			For an element $e \in \domain{\I} \setminus \domain{\I_0}$, we remark: $e \in A^\I$ iff $A \in \ftypeinof{\I}{e}$ iff $e \in A^\J$.
			Altogether, we obtain $A^\I = A^\J$.
			\item 
			Let $A \in \Msf$ such that $A^\J \not\subseteq A^\I$.
			Let thus $e \in A^\J \setminus A^\I$.
			Notice $e$ must belong to $\domain{\I_0}$, as otherwise $\star$ yields $e \in A^\I$.
			It follows that $e \in A^{\J_0} \not\subseteq A^{\I_0}$.
			Since $\J_0 <_\CP \I_0$, there exists $B \prec A$ s.t.\ $B^{\J_0} \subsetneq B^{\I_0}$.
			By $\star$, this extends into $B^{\J} \subsetneq B^{\I}$ and we found $B$ as desired.
			\item
			From $\J_0 <_\CP \I_0$, there exists $A \in \Msf$ s.t.\ $A^{\J_0} \subsetneq A^{\I_0}$ and for all $B \prec A$, $B^{\J_0} = B^{\I_0}$.
			Joint with $\star$, it follows that $A^{\J} \subsetneq A^{\I}$ and for all $B \prec A$, $B^{\J} \subseteq B^{\I}$.
			If ever there exists such a $B \prec A$, such that additionally $B^{\J} \subsetneq B^{\I}$, then we select a minimal such $B$ w.r.t.\ $\prec$, which provides the desired minimized concept.
			Otherwise, $A$ fits.
		\end{enumerate}
	It $\J <_\CP \I$, contradicting $\I \models \circkb$ as desired.
\end{proof}

\thmcombinedupperhornhinstance*
\begin{proof}
	Guess a candidate interpretation $\I$ with size $\sizeof{\abox} + 2\sizeof{\tbox}$.
	Check whether it is a model of $\kb$ and if the AQ of interest is \emph{not} satisfied in $\I$. 
	Using a $\NPclass$ oracle, check whether $\I$ is a model of $\circkb$.
	If all the above tests succeed, then accept; otherwise, reject.
	From Lemma~\ref{lemma-linear-countermodel}, it is immediate that there exists an accepting run if there exists a countermodel for the AQ over $\circkb$.
	Conversely, if a run accepts, then the guessed model $\I$ is a countermodel.
\end{proof}

%Before properly proving Lemma~\ref{lemma-data-indep}, we first highlight a useful property that allows to swap elements with same ABox types in a given interpretation.
% relies on the intuition that having kept (up to) $4^\sizeof{\tbox} +1$ witnesses for each ABox type is sufficient to faithfully capture every combinations any possible behavior (that is a realized type) of elements 

\lemmadataindep*

\begin{proof}
	\newcommand{\typeref}{\mathsf{ref}}
	``$\Rightarrow$''.
	Assume $\circkb \models A_0(a_0)$ and let $\I'$ be a model of $\Circ(\kb')$.
	
	We extend $\I'$ into an interpretation $\I$ whose domain is $\domain{\I} = \domain{\I'} \uplus (\indsof{\abox} \setminus \indsof{\abox'})$.
	To define a suitable interpretation of concepts on an individual $a$ from $\Delta = \indsof{\abox} \setminus \indsof{\abox'}$, we consider its ABox-type $t = \atypeinof{\abox}{a}$.
	Notice $m_{t} = 4^\sizeof{\tbox}$ as otherwise we would
        have $a \in  \indsof{\abox'}$.
	Consider now the types in $\I$ of elements $a_{t, 1}, \dots a_{t, 4^\sizeof{\tbox}}$.
	Since there are $4^\sizeof{\tbox}$ such elements for only $2^{\sizeof{\tbox}}$ possible types, there exists a type $t'$ realized in $\I$ at least $2^\sizeof{\tbox}$ times.
	We now chose an element $\typeref(t) \in \domain{\I'}$ s.t.\ $\atypeinof{\abox}{\typeref(t)} = t$ and $\typeinof{\I'}{\typeref(t)} = t'$.
	Such an element $\typeref(t)$ serves as a reference to interpret concepts and roles on individuals $a \in \Delta$ with ABox type $t$, as the following construction shows:
	\[
	\begin{array}{r@{\;}c@{\;}l}
		A^\I & = & A^{\I'} \; \cup \; \{ a \mid a \in \Delta, \typeref(\atypeinof{\abox}{a}) \in A^{\I'} \} 
		\smallskip \\
		p^\I & = & p^{\I'} \; \cup \; \{ (a, b) \mid \kb \models p(a, b) \}
		\smallskip \\
		& &  
		\cup \; \{ (a, e) \mid a \in \Delta, (\typeref(\atypeinof{\abox}{a}, e) \in p^{\I'} \}
		\smallskip \\
		& &
		\cup \; \{ (e, a) \mid a \in \Delta, (e, \typeref(\atypeinof{\abox}{a}) \in p^{\I'} \}.
	\end{array}
	\]
	It is easily verified that $\I$ is a model of $\kb$,
        particularly that it satisfies all role assertions from $\abox$. 
%	Furthermore, 
	We now prove $\I \models \circkb$, which, by hypothesis, gives $\I \models A_0(a_0)$, and thus $\I' \models A_0(a_0)$ by definition of $A_0^\I$.
	Assume by contradiction one can find a model $\J$ of $\kb$ s.t.\ $\J <_\CP \I$.
	We construct $\J' <_\CP \I'$ which will contradict $\I'$ being a model of $\Circ(\kb')$.
	Intuitively, we proceed as in Lemma~\ref{lem-lemma5} and simulate $\J$ in $\J'$.
	For each ABox type $t$ s.t.\ there exists $a \in \Delta$ with $\atypeinof{\abox}{a} = t$, we set:
	\begin{align*}
	D_t = & \; \{ d \in \domain{\I'} \mid \atypeinof{\abox}{d} = t, \atypeinof{\I'}{d} = \typeinof{\I'}{\typeref(t)} \}
	\smallskip \\
	S_t = & \; \{ \typeinof{\J}{d} \mid \atypeinof{\abox}{d} = t, \atypeinof{\I'}{d} = \typeinof{\I'}{\typeref(t)} \}
	\end{align*}
	From the definition of $\typeref(t)$, we have $\sizeof{D_t} \geq 2^\sizeof{\tbox}$, while, clearly, $\sizeof{S_t} \leq 2^\sizeof{\tbox}$.
	Since $S_t \neq \emptyset$ (it contains e.g.\ $\typeinof{\J}{\typeref(t)}$), we can find a surjective function $\pi_t : D_t \rightarrow S_t$.
	We consider $\pi$ the union of all such $\pi_t$, that is with $t$ s.t.\ there exists $a \in \Delta$ with $\atypeinof{\abox}{a} = t$.
	We further extend the domain of definition of $\pi$ to $\domain{\I'}$ by setting $\pi(d) = \typeinof{\J}{d}$ for the remaining elements of $\domain{\I'}$.
	We now define the interpretation $\J'$ as:
	\[
	\begin{array}{r@{\;}c@{\;}l}
		\domain{\J'} & = & \domain{\I'}
		\smallskip \\
		A^{\J'} & = & \{ d \in \domain{\I'} \mid A \in \pi(d) \} 
		\smallskip \\
		\rstyle{r}^{\J'}      & =	& \{ (d,e) \in \Delta^{\Imc'}\times \Delta^{\Imc'} \mid \pi(d) \rightsquigarrow_\rstyle{r} \pi(e) \}.
	\end{array}
	\]
	It can now be verified that $\J'$ is a model of $\kb'$ such that $\J' <_\CP \I'$.
%	In particular, for the latter, any minimization making $\J <_\CP \I$ that involve elements from $\Delta$ is now witnessed on some element from a corresponding $D_t$.
	
	 ``$\Leftarrow$''.
	 Assume $\Circ(\kb') \models A_0(a_0)$ and let $\I$ be a model of $\circkb$.
	 Here again, we denote $\Delta = \indsof{\abox} \setminus \indsof{\abox'}$.
	 By definition of $W$, all elements from $\Delta$ have an ABox type $t$ that is already realized at least $4^\sizeof{\tbox}$ times in $W$.
	 Therefore, we can define a permutation $\sigma$ on $\domain{\I}$ such that for all $d \in \domain{\I}$, the following conditions are respected:
	 \begin{enumerate}
	 	\item 
	 	$\sigma(a_0) = a_0$;
	 	\item 
	 	$\atypeinof{\abox}{\sigma(d)} = \atypeinof{\abox}{d}$;
	 	\item 
	 	If $d \in \Delta$, then there exists an element $\typeref(d) \in W$ with $\atypeinof{\abox}{\typeref(d)} = \atypeinof{\abox}{d}$ and $\typeinof{\I}{\typeref(d)} = \typeinof{\I}{d}$.
	 \end{enumerate} 
	 We can then define an interpretation $\I_\sigma$ as follows:
	 \[
	 \begin{array}{r@{\;}c@{\;}l}
	 	\domain{\I_\sigma} & = & \domain{\I}
	 	\smallskip \\
	 	A^{\I_\sigma} & = & \sigma(A^\I)
	 	\smallskip \\
	 	\rstyle{r}^{\I_\sigma}     & =	& \{ (d,e) \mid \kb \models r(d, e) \} \; \cup \; (\sigma \times \sigma)(r^\I).
	 \end{array}
	 \]
	 It is straightforward that $\I_\sigma$ models $\kb$ (notably relying on Condition~2), that preserves interpretation of concepts on the individual $a_0$ of interest (from Condition~1).
	 % TODO (from Robin): I do not understand the above sentence grammatically
	 Additionally, $\I_\sigma$ models $\circkb$ as otherwise a model $\J$ of $\kb$ with $\J <_\CP \I_\sigma$ immediately yields a model $\J_{\sigma^{-1}}$ of $\kb$ with $\J_{\sigma^{-1}} <_\CP \I$, contradicting $\I$ being a model of $\circkb$.
	 
	 We extend the mapping $\typeref$ from Condition~3 to the identity for elements from $\domain{\I} \setminus \Delta$, and define an interpretation $\I'$ as follows:
	\[
	\begin{array}{r@{\;}c@{\;}l}
		\domain{\I'} & = & \domain{\I} \setminus \Delta
		\smallskip \\
		A^{\I'} & = & A^{\I_\sigma} \cap \domain{\I'}
		\smallskip \\
		\rstyle{r}^{\I'}    & = & (\typeref \times \typeref)(r^{\I_\sigma} \cap (\domain{\I'} \times \domain{\I'})).
	\end{array}
	\]
	 It can be verified that $\I'$ is a model of $\kb'$.
	 We now prove it is also a model of $\Circ(\kb')$, which, by hypothesis, gives $\I' \models A_0(a_0)$, and thus $\I \models A_0(a_0)$ as unfolding definitions gives $a_0 \in A_0^\I$ iff $a_0 \in A_0^{\I_\sigma}$ iff $a_0 \in A_0^{\I'}$ (notably via Condition~1 on $\sigma$).
	 Assume by contradiction one can find a model $\J'$ of $\kb'$ with $\J' <_\CP \I'$.
	 We now construct $\J <_\CP \I$ which will contradict $\I$ being a model of $\circkb$.
	 Relying on the mapping $\typeref$, we set:
	 \[
	 \begin{array}{r@{\;}c@{\;}l}
	 	\domain{\J} & = & \domain{\I}
	 	\smallskip \\
	 	A^{\J} & = & \typeref^{-1}(A^{\J'})
	 	\smallskip \\
	 	\rstyle{r}^{\J}    & = & \{ (d,e) \mid \kb \models r(d, e) \} \; \cup \; (\typeref^{-1} \times \typeref^{-1})(r^{\J'}).
	 \end{array}
	 \]
	 It is easily checked that $\J$ models $\kb$.
	 From Condition~3 on $\sigma$, it then follows $\J <_\CP \I$, ie the desired contradiction.
\end{proof}

\section{Proofs for Section~\ref{subsection-negative-ri}}

\thmdatalowercoreh*
\begin{proof}
	
	\renewcommand{\cstyle}[1]{{\mathsf{#1}}}
	\renewcommand{\rstyle}[1]{{\mathsf{#1}}}
	\newcommand{\ixvar}{{x}} % individual x-variable
	\newcommand{\iyvar}{{y}} % individual y-variable
	\newcommand{\itrue}{t} % individual true
	\newcommand{\itrueof}[1]{\itrue_{#1}} % individual true of some variable
	\newcommand{\ifalse}{f}% individual false
	\newcommand{\ifalseof}[1]{\ifalse_{#1}} % individual false of some variable
	\newcommand{\iclause}{c} % individual clause
	
	\newcommand{\cxvar}{\cstyle{XVar}} % concept x variable
	\newcommand{\cyvar}{\cstyle{YVar}} % concept y variable
	\newcommand{\cclause}{\cstyle{Clause}} % concept clause
	\newcommand{\cformula}{\cstyle{Formula}} % concept formula
	
	\newcommand{\rxeval}{\rstyle{eval}_{\cstyle{X}}}
	\newcommand{\ryeval}{\rstyle{eval}_{\cstyle{Y}}}
	\newcommand{\rceval}{\rstyle{eval}_{\cstyle{C}}}
	\newcommand{\reval}{\rstyle{eval}}
	\newcommand{\rnegeval}{\overline{\reval}}
	
	\newcommand{\cxval}{\cstyle{XVal}}
	\newcommand{\cyval}{\cstyle{YVal}}
	\newcommand{\ccval}{\cstyle{CVal}}
	
	\newcommand{\cvval}{\cstyle{VVal}}
	\newcommand{\cfval}{\cstyle{FVal}}
	
	\newcommand{\iformula}{f}
	\newcommand{\rfeval}{\rstyle{eval}_{\cstyle{F}}}
	
	\newcommand{\cgoal}{\cstyle{Goal}}

	We give a polynomial time reduction from
        $\forall\exists\mn{3SAT}$, c.f.\ the proof
        of Theorem~\ref{thm-data-lower-el}. Let a
        $\forall\exists$-3CNF sentence
        $\forall \bar x \exists \bar y \, \vp$ be given where
        $\bar x=x_1 \cdots x_m$, $\bar y = y_1 \cdots y_n$, and
        $\varphi=\bigwedge^\ell_{i=1} \bigvee_{j=1}^3 L_{ij}$ with
        $L_{ij} = v$ or $L_{ij} = \lnot{v}$ for some
        $v \in \{ x_1, \dots x_m, y_1, \dots, y_n \}$.  We construct a
        circumscribed \dllitecoreh KB $\circkb$ and an
        atomic query $\query$ such that $\circkb \models \query$ iff
        $\forall \bar x \exists \bar y \, \varphi$ is true.

        The circumscription pattern \CP involves four minimized
        concept names with the preference
        $$\cvval \prec \cxval \prec \ccval \prec \cfval,$$ and no
        fixed concept names. %  The
        % instance query is $$\query = \cgoal(\iformula).$$  
        We now
        describe the construction of the KB $\kb=(\Tmc,\Amc)$, not
        strictly
        separating \Tmc from \Amc. We first introduce an ABox
        individual for each variable, marking the existential
        variables
        with the concept name $\cxvar$ and the universal ones with
        $\cyvar$:
	\begin{align}
		\cxvar(\ixvar)
		& \quad \text{for all } x \in \bar x
		\label{eq:xvars} \\
		\cyvar(\iyvar)
		& \quad \text{for all } y \in \bar y.
		\label{eq:yvars}
	\end{align}
        To choose truth values for variables, we use the minimized concept name $\cvval$.
		We introduce two instances of $\cvval$ for each variable, one representing true and the other false:
        	\begin{align}
		\cvval(\itrueof{v})
		& \quad \text{for all } v \in \bar x \cup \bar y
		\label{eq:vvalstrue} \\
		\cvval(\ifalseof{v})
		& \quad \text{for all } v \in \bar x \cup \bar y.
		\label{eq:vvalsfalse}
	\end{align}
        Each variable must choose an instance of $\cvval$ via the role
        names $\rxeval$ an $\ryeval$, depending on whether it is
        existential
        or universal:
	\begin{align}
		\cxvar & \sqsubseteq \, \exists \rxeval
		\label{eq:xvariables_have_value} \\
		\cyvar & \sqsubseteq \, \exists \ryeval
		\label{eq:yvariables_have_value}\\
		\exists {\rxeval}^- & \sqsubseteq \, \cvval
		\label{eq:xvalues_are_vvalues} \\
		\exists {\ryeval}^- & \sqsubseteq \, \cvval.
		\label{eq:yvalues_are_vvalues}
	\end{align}
        There is no guarantee yet, however, that a variable $v$ chooses one of the instances $\itrueof{v}$ and $\ifalseof{v}$ of $\cvval$ reserved for it.
		To ensure this, we first install
        the following role inclusions:
		\begin{align}
		\rxeval  & \sqsubseteq \, \reval
		\label{eq:xeval_is_eval} \\
		\ryeval  & \sqsubseteq \, \reval
		\label{eq:yeval_is_eval} \\
		\reval & \sqsubseteq \, \lnot \rnegeval.
		\label{eq:eval_is_not_negeval}
	\end{align}
	The negative role inclusion allows us to flexibly control the targets for the existential restrictions in CIs~(\ref{eq:xvariables_have_value}) and~(\ref{eq:yvariables_have_value}), as follows:
	\begin{align}
		\rnegeval(v, \itrueof{v'})
		& \quad \text{for all } v, v' \in \bar x \cup \bar y
           \text{ with }  v \neq v'
		\label{eq:varblock} \\
		\rnegeval(v, \ifalseof{v'})
		& \quad \text{for all } v, v' \in \bar x \cup \bar y \text{ with }  v \neq v'.
		\label{eq:varblocktwo}
	\end{align}
	To force the truth values of the variables in $\bar x$ to be
        identical in all models that are smaller w.r.t.\
        `$<_\CP$,' we mark those truth values with the minimized
        concept name
        $\cxval$:
	\begin{align}
          \exists {\rxeval}^- & \sqsubseteq \, \cxval.
                                \label{eq:implxval}
	\end{align}
	We next introduce an individual for each clause:
	\begin{align}
		\cclause(\iclause_i)
		& \quad \text{for } 1 \leq i \leq \ell.
		\label{eq:clauses}
	\end{align}
	We also assign a truth value to every clause via the role name $\rceval$, another subrole of $\reval$.
	Every element with an incoming $\rceval$-edge must make the concept name $\ccval$ true, which is minimized, but with lower priority than $\cvval$ and $\cxval$.
	This reflects the fact that the truth values of variables determine the truth values of clauses. We put:
	\begin{align}
		\cclause & \sqsubseteq \, \exists \rceval
		\label{eq:clauses_have_value'} \\
		\rceval  & \sqsubseteq \, \reval
		\label{eq:ceval_is_eval} \\
		\exists {\rceval}^- & \sqsubseteq \, \ccval
		\label{eq:rceval_is_ccval}
	\end{align}
	The truth values of clauses are represented by the same
        individuals
        as the truth values of variables:
	\begin{align}
	\ccval(\itrueof{v})
	& \quad \text{for all } v \in \bar x \cup \bar y
	\label{eq:cvalstrue} \\
	\ccval(\ifalseof{v})
	& \quad \text{for all } v \in \bar x \cup \bar y
	\label{eq:vvalsfalse'} 
	\end{align}
	We again use the role $\rnegeval$ to control which instance of
        $\cclause$ can be used as truth values for which clause. This
        is based on the literals that occur in the clause.  If a
        variable $v$ occurs positively in the $i^{th}$ clause, then
        $\iclause_i$ has access to $\ifalseof{v}$, while if $v$ occurs
        negatively, then $\iclause_i$ has access to $\itrueof{v}$. And
        those are the only instances of $\cvval$ to which $\iclause_i$
        has access: for $1 \leq i \leq \ell$ and all $v \in \bar x
        \cup \bar y$,
        put
	\begin{align}
		\rnegeval(\iclause_i, \itrueof{v})
		& \quad \text{if } \lnot v \notin \{ L_{i, 1}, L_{i, 2}, L_{i, 3} \}
		\label{eq:clause_block_but_positive} \\
		\rnegeval(\iclause_i, \ifalseof{v})
		& \quad \text{if }  v \notin \{ L_{i, 1}, L_{i, 2}, L_{i, 3} \}.
		\label{eq:clause_block_but_negative}
	\end{align}
        Note that clause individual %$\itrueof{v}$, $\ifalseof{v}$
        % represents not only a truth value of variable $v$, but also a
        % literal for variable $v$. From this perspective, every
        $\iclause_i$ has access via $\rceval$ to $\itrueof{v}$ if
        making $v$ \emph{false} leads to satisfaction of the $i^{th}$
        clause, and likewise for $\ifalseof{v}$ and making $v$
        \emph{true}. Moreover, $v$ being made true is represented by
        an $\rxeval$- or $\ryeval$-edge from $v$ to $\itrueof{v}$ and
        likewise for $v$ being made false and $\ifalseof{v}$. Also
        recall that $\itrueof{v}$ and $\ifalseof{v}$ cannot have such
        incoming edges from elsewhere. Consequently, the $i^{th}$
        clause evaluates to true if we can find a target for the
        $\rceval$-edge that does \emph{not} have an incoming
        $\rxeval$- or $\ryeval$-edge:
	\begin{align}
		\exists {\rxeval}^- & \sqsubseteq \, \lnot \exists {\rceval}^-
		\label{eq:disjone} \\	
		\exists {\ryeval}^- & \sqsubseteq \, \lnot \exists {\rceval}^-.
		\label{eq:disjtwo}
	\end{align}
	In the case that a clause evaluates to false, it cannot reuse
        one of the admitted instances of $\ccval$. We introduce an
        extra instance $f$ of $\ccval$ that can be used instead:
		\begin{align}
		\ccval(\iformula).
		\label{eq:iformula_is_cval}
	\end{align}
	To ensure that $f$ is indeed used as an $\rceval$-target only if at least one clause is falsified, we use the fourth minimized concept name $\cfval$.
	Note that it is minimized with the least priority.
	We make sure that $f$ must have an
        outgoing $\rfeval$-edge leading to an instance of $\cfval$,
        that $f$ itself is an instance of $\cfval$, and that having an
        incoming $\rfeval$-edge precludes having an incoming
        $\rceval$-edge. If all clauses evaluate to true, then there is
        no need to use $f$ as a target for $\rceval$ and $f$ can use
        itself as the $\rfeval$-target. But if a clause evaluates to
        false, then we must use $f$ as a last resort $\rceval$-target
        and cannot use it as an $\rfeval$-target. Since $\cfval$ is
        minimized with least priority, the latter will simply lead to a fresh instance of $\rfeval$
       to be created. In summary, all clauses evaluate to true
        if and only of $f$ has an incoming $\rfeval$-edge.  We put:
	\begin{align}
		\cformula(\iformula)
%		\label{eq:iformula_is_formula}
%		\\
		& \qquad \cfval(\iformula)
		\label{eq:iformula_is_fval} \\
%	\end{align}
%	And axioms:
%	\begin{align}
		\cformula & \sqsubseteq \, \exists \rfeval
%		\label{eq:clauses_have_value'} 
		\\
		\exists {\rfeval}^- & \sqsubseteq \, \cfval
		\\
		\exists {\rceval}^- & \sqsubseteq \, \lnot \exists {\rfeval}^-.
		\label{eq:rcevalfcevaldisj}
	\end{align}
        By what was said above, to finish the reduction, it suffices to add
        \begin{align}
          \exists {\rfeval}^- & \sqsubseteq \, \cgoal.
	\end{align}
        to choose as the query $q=\cgoal(x)$, and to ask whether $f$ is an answer.

        To prove correctness, we thus have to show the following.
        \\[2mm]
        {\bf Claim.} $\circkb \models q(f)$ iff
        $\forall \bar x \exists \bar y \, \varphi$ is true.
        \\[2mm]
        To prepare for the proof of the claim, we first observe that
        every valuation $V$ for $\bar x \cup \bar y$ gives rise
        to a corresponding model $\Imc_V$ of \Kmc. We use domain 
        $$
          \Delta^{\Imc_V} = \Ind(\Amc) 
        $$
        and set
        $$
        A^{\Imc_V} = \{ a \mid A(a) \in \Amc \}
        $$
        for all concept names $$A \in \{ \cxvar, \cyvar, \cvval,
        \ccval, \cclause, \cformula \}.$$
        We interpret $\rxeval$ and $\ryeval$ according to $V$,
        that is
        $$
        \begin{array}{rcl}
          \rxeval^{\Imc_V} &=& \{ (x,\itrueof{v}) \mid x \in \bar x \text{
                           and }V(x)=1\} \,\cup\\[1mm]
                            && \{ (x,\ifalseof{v}) \mid x \in \bar x \text{
                           and }V(x)=0\} 
        \end{array}
        $$
        and analogously for $\ryeval$. Next we put
        $$
        \cxval^{\Imc_V} = \{ e \mid (e,d) \in \rxeval^{\Imc_V} \}.
        $$
        If $V$ makes the $i^{th}$ clause true, then there is a literal
        in it that is true. Choose such a literal $L$ and set
        $w_i = \itrueof{v}$ if $L=\neg v$ an $w_i = \ifalseof{v}$ if
        $L=v$. If $V$ makes the $i^{th}$ clause false, then set
        $w_i=f$. We proceed with the definition of $\Imc_V$:
        $$
        \begin{array}{rcl}
          \rceval^{\Imc_V} &=& \{ (c_i,w_i) \mid 1 \leq i \leq \ell \}
          \\[1mm]
          \reval^{\Imc_V} &=& \rxeval^{\Imc_V} \cup \ryeval^{\Imc_V} \cup
                          \rceval^{\Imc_V} \\[1mm]
          \rnegeval^{\Imc_V} &=& \{(a,b) \mid \rnegeval(a,b) \in \Amc \}
          \\[1mm]
          \cfval^{\Imc_V} &=& \{ f,c_1 \}  \\[1mm]
          \rfeval^{\Imc_V} &=& \{ (f,c_10) \} \\[1mm]
          \cgoal^{\Imc_V} &=& \{ c_1 \}.
        \end{array}
        $$
        The choice of $c_1$ is somewhat arbitrary in the last three
        statements, any other individual without an incoming
        $\rceval$-edge would also do.
        % $V$ also assigns a truth value
        % to each clause, let $V(c_i)$ denote the value of the $i^{th}$
        % clause.
        It is straightforward to verify that $\Imc_V$ is a model of
        \Kmc with $\Imc_V \not\models q(f)$.

        We can define a variation $\Imc'_V$ of $\Imc_V$ that is still
        a model of~\Kmc provided that all clauses are satisfied by $V$
        (which guarantees that $f$ has no incoming $\rceval$-edge), and
        that satisfies $\Imc'_V \models q(f)$. To achieve this, we
        replace the last three lines from the definition of $\Imc_V$
        with
        $$
        \begin{array}{rcl}
          \cfval^{\Imc_V} &=& \{ f \}  \\[1mm]
          \rfeval^{\Imc_V} &=& \{ (f,f) \} \\[1mm]
          \cgoal^{\Imc_V} &=& \{ f \}.
        \end{array}
        $$

        \medskip
        Now for the actual proof of the claim

        \smallskip
        
	``$\Rightarrow$''.  Assume that
        $\forall \bar x \exists \bar y \, \varphi$ is false, and thus
        $\exists \bar x \forall \bar y \, \lnot \varphi$ is true and
        there is a valuation $V_{\bar x}: \bar x \rightarrow \{0,1\}$
        such that $\forall \bar y \, \lnot \varphi'$ holds where
        $\varphi'$ is obtained from $\varphi$ by replacing every
        variable $x \in \bar x$ with the truth constant
        $V_{\bar x}(x)$. Consider any extension $V$ of $V_{\bar x}$ to
        the variables in $\bar y$ and the interpretation
        $\Imc_V$. Since $\Imc_V$ is a model of \Kmc with
        $\Imc_V \not\models q(f)$, to show that
        \mbox{$\circkb \not\models q(f)$} it remains to prove
        that there is no model $\Jmc <_\CP \I$ of \Kmc.

        Assume to the contrary that there is such a $\Jmc$. We first
        observe that $A^\Jmc=A^{\Imc_V}$ for all
        $A \in \{ \cvval, \cxval, \ccval \}$. This holds for $\cvval$
        since $\cvval^\Jmc \subsetneq \cvval^{\Imc_V}$ means that \Jmc
        does not satisfy \Amc, and likewise for $\ccval$. It also
        holds for $\cxval$ since every model of \Kmc must make
        $\cxval$ true at at least one of $\itrueof{x}$ and
        $\ifalseof{x}$ for all $x \in \bar x$. This is because of
        CIs~(\ref{eq:xvariables_have_value}),
        (\ref{eq:xvalues_are_vvalues}), (\ref{eq:xeval_is_eval}), and
        (\ref{eq:implxval}), the role
        disjointness~(\ref{eq:eval_is_not_negeval}), the
        assertions~(\ref{eq:varblock}) and~(\ref{eq:varblocktwo}), and
        the fact that minimization of $\cvval$ is preferred over
        minimization of $\cxval$.  But no subset of $\cxval^{\Imc_V}$
        satisfies this condition, so we must have  $\cxval^{\Imc_V}
        \subseteq \cxval^{\Jmc}$,
        consequently $\cxval^{\Jmc}=\cxval^{\Imc_V}$.

The above and $\Jmc <_\CP \I$ means that $\cfval^\Jmc \subsetneq
\cfval^{\Imc_V}$. Since \Jmc is a model of \Kmc, this implies $\cfval^\Jmc
= \{ f \}$.

Since $\cxval^\Jmc = \cxval^{\Imc_V}$ and \Jmc is a model of \Kmc,
$V(x)=1$ implies $(x,\itrueof{x}) \in \reval^\Jmc$ and $V(x)=0$
implies $(x,\ifalseof{x}) \in \reval^\Jmc$. Moreover, since
$\cvval^\Jmc = \cvval^{\Imc_V}$ and \Jmc is a model of \Kmc, for every
$y \in \bar y$ we have $(y,\itrueof{y}) \in \reval^\Jmc$ or
$(y,\ifalseof{y}) \in \reval^\Jmc$.  We thus find an extension $V'$ of
$V_{\bar x}$ to $\bar x \cup \bar y$ that is \emph{compatible with}
\Jmc in the sense that for all $v \in \bar x \cup \bar y$,
\begin{itemize}
\item 
$V'(v)=1$ implies $(v,\itrueof{v}) \in \reval^\Jmc$ and 
\item $V'(v)=0$ implies $(v,\ifalseof{v}) \in \reval^\Jmc$.
\end{itemize}
Choose some such $V'$. Since $\forall \bar y \, \lnot \varphi'$ holds,
at least one of the clauses must be made false by $V'$. As explained
alongside the construction of \Kmc, that clause can only have an
$\rceval$-edge to $f$. By~(\ref{eq:rcevalfcevaldisj}), the outgoing
$\rfeval$-edge from $f$ cannot end at $f$. But $\cfval^\Jmc = \{ f \}$
and thus~(\ref{eq:clauses_have_value'}) implies that $f$ is the only
point where that edge may end.  We have arrived at a contradiction.

\smallskip ``$\Leftarrow$''.  Assume that
$\forall \bar x \exists \bar y \, \varphi(\bar x, \bar y)$ is true and
let \Imc be a model of $\circkb$. We have to show that
$\Imc \models q(f)$.
To this end, we first observe that
\begin{equation*}
\cvval^\Imc = \{ a \mid \cvval(a) \in \Amc \}.
\tag{$*$}
\end{equation*}
If this is not the case, in fact, then we find a model
$\Jmc <_\CP \Imc$ of \Kmc, contradicting the minimality of \Jmc.
This model \Jmc is essentially $\Imc_V$, except that we might have to extend the domain with additional elements that do not occur in the extension of any concept or role name to make sure that the domains of \Imc and $\Imc_V$ are identical.

It follows from ($*$) and the fact that \Imc is a model of \Kmc that
for every $x \in \bar x$, we have $(x,\itrueof{x}) \in \reval^\Jmc$ or
$(x,\ifalseof{x}) \in \reval^\Jmc$.  Consequently, we find a valuation
$V_{\bar x}$ for $\bar x$ that is compatible with \Imc as defined in
the ``$\Rightarrow$'' direction of the proof. Since
$\forall \bar x \exists \bar y \, \varphi(\bar x, \bar y)$ is true, we
can extend $V_{\bar x}$ to a valuation $V$ for $\bar x \cup \bar y$
that satisfies $\varphi$. Consider the model $\Imc'_V$, extended to
domain $\Delta^\Imc$. By construction of $\Imc'_V$ and ($*$), we have
$\cvval^\Imc = \cvval^{\Imc'_V}$.
Using the facts that \Imc is a minimal model of \Kmc and the construction of $\Imc'_V$, it can now be observed that $\cxval^\Imc=\cxval^{\Imc'_V}$.
Moreover, by construction, $\Imc'_V$ interprets $\ccval$ and $\cfval$ in the minimal possible way among all models of \Kmc.
Since \Imc is a minimal
model, this implies $\ccval^\Imc=\ccval^{\Imc'_V}$ and
$\cfval^\Imc=\cfval^{\Imc'_V}$. In particular,
$\cfval^{\Imc}= \{ f \}$. But this implies $(f,f) \in \rfeval^\Imc$ as
all objects with an incoming $\rfeval$-edge must satisfy $\cfval$.  It
follows that $f \in \cgoal^\Imc$ and thus $\Imc \models q(f)$, as
desired.
\end{proof}

\end{document}